\documentclass{article}
\PassOptionsToPackage{numbers, sort&compress}{natbib}
\usepackage[final]{neurips_2024}

\usepackage[utf8]{inputenc} 
\usepackage[T1]{fontenc}    
\usepackage{hyperref}       
\usepackage{url}            
\usepackage{booktabs}       
\usepackage{amsfonts}       
\usepackage{nicefrac}       
\usepackage{microtype}      
\usepackage{xcolor}         

\usepackage{amsmath}
\usepackage{amssymb}
\usepackage{mathtools}
\usepackage{amsthm}
\usepackage{enumitem}
\usepackage{math_commands}
\usepackage{multirow}
\usepackage{nicefrac}
\usepackage{subcaption}
\usepackage{algorithm}
\usepackage{algorithmic} 
\usepackage{multicol}
\usepackage{tablefootnote}
\usepackage{threeparttable}

\usepackage[capitalize]{cleveref}

\theoremstyle{plain}
\newtheorem{thm}{Theorem}[section]
\newtheorem{prop}[thm]{Proposition}
\newtheorem{lem}[thm]{Lemma}

\theoremstyle{definition}

\newtheorem{asmp}[thm]{Assumption}
\theoremstyle{remark}
\newtheorem{rmk}[thm]{Remark}

\allowdisplaybreaks[1]

\newcommand{\inprod}[2]{\left\langle{#1},{#2}\right\rangle}
\newcommand{\commentout}[1]{}

\newcommand{\sgdrr}{\textsf{SGD-RR}}
\newcommand{\sgdaus}{\textsf{SGDA-US}}
\newcommand{\sgdarr}{\textsf{SGDA-RR}}
\newcommand{\segus}{\textsf{SEG-US}}
\newcommand{\segrr}{\textsf{SEG-RR}}
\newcommand{\segff}{\textsf{SEG-FF}}
\newcommand{\segffa}{\textsf{SEG-FFA}}

\newcommand{\krasmann}{Krasnosel'ski\u{\i}-Mann}

\newcounter{symbolmarkcounter} 
\DeclareRobustCommand{\symbcntr}[1]{{\setcounter{symbolmarkcounter}{#1}\normalshape\textsuperscript{\fnsymbol{symbolmarkcounter}}}}
\DeclareRobustCommand{\normalsymbcntr}[1]{\setcounter{symbolmarkcounter}{#1}\fnsymbol{symbolmarkcounter}}

\usepackage{pifont} 
\definecolor{tabgreen}{rgb}{0.17, 0.63, 0.17}
\definecolor{tabred}{rgb}{0.84, 0.15, 0.16}
\definecolor{lightblue}{rgb}{0.16, 0.15, 0.84}
\newcommand{\cmark}{\ding{51}}%
\newcommand{\xmark}{\ding{55}}%
\newcommand{\yesmark}{{\color{tabred}\cmark}}
\newcommand{\nomark}{{\color{tabgreen}\xmark}}

\definecolor{revisionblue}{rgb}{0.0,0.36,1}

\definecolor{tabgray}{rgb}{0.5,0.5,0.5}

\makeatletter
\DeclareRobustCommand{\thmprime}{%
  \begingroup
  \expandafter\in@\expandafter b\expandafter{\f@series}%
  \ifin@ \boldmath \fi
  $\m@th{}^{\prime}$%
  \endgroup
}
\makeatother
\newcommand{\titleref}[2]{%
    \texorpdfstring{\Cref{#2}}{#1~\ref{#2}}%
}
\newcommand{\titleEqref}[1]{%
    \texorpdfstring{\eqref{#1}}{(\ref{#1})}%
}
\Crefname{algocf}{Algorithm}{Algorithms}
  
\title{Stochastic Extragradient with Flip-Flop Shuffling \& Anchoring: Provable~Improvements}

\author{ 
  Jiseok Chae \\
  Department of Mathematical Sciences\\
  KAIST\\
  Daejeon, Republic of Korea\\
  \texttt{jsch@kaist.ac.kr} \\
  \And
  Chulhee Yun \\
  Kim Jaechul Graduate School of AI \\
  KAIST\\
  Seoul, Republic of Korea\\
  \texttt{chulhee.yun@kaist.ac.kr} \\
  \AND
  Donghwan Kim \\
  Department of Mathematical Sciences\\
  KAIST\\
  Daejeon, Republic of Korea\\
  \texttt{donghwankim@kaist.ac.kr} \\
}

\begin{document}

\maketitle

\begin{abstract}
In minimax optimization, the extragradient (EG) method has been extensively studied because it outperforms the gradient descent-ascent method in convex-concave (C-C) problems.
Yet, stochastic EG (SEG) has seen limited success in C-C problems, especially for unconstrained cases.
Motivated by the recent progress of shuffling-based stochastic methods, we investigate the convergence of \emph{shuffling-based SEG} in unconstrained \emph{finite-sum} minimax problems, in search of %
convergent
shuffling-based SEG.  
Our analysis reveals that both random reshuffling and the recently proposed flip-flop shuffling %
alone can suffer divergence in C-C problems. 
However, with an additional simple trick called anchoring, we develop the \emph{SEG with flip-flop anchoring} (\segffa{}) method which successfully converges in C-C problems.  
We also show upper and lower bounds in the strongly-convex-strongly-concave setting, demonstrating that \segffa{} has a provably faster convergence rate compared to other shuffling-based methods.
\end{abstract}

\section{Introduction}

\looseness=-1
Minimax problems with a finite-sum structure, which are optimization problems of the form \begin{equation} \label{eqn:minimax_problem}
  \min_{\vx} \max_{\vy} \, f(\vx, \vy) \coloneqq \frac{1}{n} \sum_{i=1}^n f_i(\vx, \vy),
\end{equation} can be found in many interesting applications, such as generative adversarial networks \citep{Good14}, refining diffusion models \citep{Kim24CTM}, adversarial training \citep{Madr18}, %
optimal transport based generative models \citep{Rout22}, multi-agent reinforcement learning \citep{Wai18}, and so on. 
Deterministic methods for minimax problems, such as \emph{gradient descent-ascent}~(GDA) \citep{Arro56} and \emph{extragradient}~(EG) \citep{Korp76}, have been extensively studied in the literature.
It is though known that, unlike \emph{gradient descent} (GD) for minimization problems, GDA may diverge
even when $f$ is convex on $\vx$ and concave on $\vy$. 
On the other hand, EG employs a two-step update procedure, named \emph{extrapolation} and \emph{update} steps (see Section~\ref{sec:related-works} for details), which allows it to find
an optimum %
under this \emph{convex-concave} setting \citep{Korp76, Solo99}, and moreover, attains a convergence rate faster than GDA \citep{Aziz20} when $f$ is strongly convex on $\vx$ and strongly concave on $\vy$. %

In contrast, attempts to construct stochastic variants of these algorithms have not been so fruitful. 
When $f$ is convex-concave, \emph{stochastic gradient descent-ascent} (SGDA) clearly %
may diverge, just as in the deterministic GDA.
To make matters worse, \emph{stochastic extragradient} (SEG) methods have also had limited success on unconstrained convex-concave problems. %
As we elaborate in \Cref{sec:related-works} in more detail, existing versions of SEG and their analyses have limitations that hinder its application to general unconstrained 
finite-sum
convex-concave problems, 
requiring additional assumptions
such as 
bounded domain, increasing batch size, %
convex-concavity of each component $f_i$,
uniformly bounded gradient variance,
and/or absence of convergence rates.

In the context of 
finite-sum optimization, most %
of the
theoretical studies on stochastic methods %
have long been based on the \emph{with-replacement} sampling scheme, 
where
an index $i(t)$ is independently and uniformly sampled among $\{1, \dots, n\}$ 
at each iteration $t$. 
Such a sampling scheme is relatively easy to theoretically analyze, because the sampled $f_{i(t)}$ is an unbiased estimator of the full objective function~$f$.
In practice, however, inspired by the empirical observations of faster convergence in finite-sum minimization \citep{Bott09, Rech13}, the \emph{without-replacement} sampling schemes have been the de facto standard. Among them, the most popular is the \emph{random reshuffling} (RR) scheme, where in every \emph{epoch} consisting of $n$ iterations, the indices are chosen exactly once in a randomly shuffled order. %

This gap between theory and practice in minimization problems is being closed 
by the recent breakthroughs in stochastic gradient descent (SGD), namely that SGD with RR leads to a provably faster convergence compared to with-replacement SGD when the number of epochs is large enough~\citep{Naga19, Ahn20, Mish20SGD, Nguy21, Yun21, Yun22}.
This has motivated further studies on finding other shuffling-based sampling schemes that can improve upon RR, resulting in the discoveries such as the \emph{flip-flop} scheme \citep{Rajp22} and \emph{gradient balancing}~(GraB)~\citep{Lu22,Cha23}.
The flip-flop scheme is a particularly simple yet interesting modification of RR with improved rates in \emph{quadratic} problems: 
a random permutation is used twice in a single epoch (i.e., two passes over $n$ components in an epoch), but the order is reversed in the second pass.
 
The aforesaid progress in minimization also triggered the study of stochastic minimax methods with shuffling. 
Similar to minimization problems, SGDA with RR indeed converges faster than the with-replacement SGDA, under assumptions such as strongly-convex-strongly-concave objectives~\citep{Das22} or $f$ satisfying the Polyak-{\L}ojasiewicz condition \citep{Cho22}.
Despite the superiority of EG over GDA, the \emph{SEG with shuffling} %
{has not been shown to have a solid theoretical advantage over
the SGDA with shuffling yet. This motivated} %
us to study the following question: %

\begingroup
\renewenvironment{quote}{%
  \list{}{%
    \leftmargin0.5cm   %
    \rightmargin\leftmargin
  }
  \item\relax
}
{\endlist}
\begin{quote} 
\textbf{\textit{Can shuffling schemes provide 
convergence guarantees for SEG, 
improved upon SGDA with shuffling,
in
unconstrained finite-sum
(strongly-)convex-(strongly-)concave
settings?}}
\end{quote} 
\endgroup
There are two types of SEG: \emph{same-sample} SEG, where a sample chosen is used both for the extrapolation step and the update step, and \emph{independent-sample} SEG, where two independently chosen samples are used in %
each step.
We will particularly focus on the same-sample SEG because it combines more naturally with shuffling-based schemes than independent-sample SEG.
Therefore, to be more specific,
we are interested in developing shuffling-based variants of \emph{same-sample} SEG
in unconstrained finite-sum minimax problems with minimal modifications to the algorithm.
We 
show %
that \textbf{(a)} in convex-concave settings, our new method %
reaches an optimum %
with a guarantee on the rate of convergence, overcoming the 
limitations of existing results; \textbf{(b)} in strongly-convex-strongly-concave settings, the method converges faster than other SGDA/SEG variants.

\subsection{Our Contributions}

In this paper, %
we 
study various same-sample SEG algorithms under different 
shuffling schemes, and
propose the 
\emph{stochastic extragradient with flip-flop anchoring} (\segffa) method, which is SEG amended with the techniques of \emph{flip-flop} %
shuffling scheme and \emph{anchoring}.
Here, by \emph{anchoring} we refer to a step of taking a convex combination between the initial and final iterates of an epoch, resembling the celebrated \emph{\krasmann{} iteration} \citep{Kras55, Mann53} as we discuss in \Cref{section:method}.
With such minimal modifications to SEG,
we show that \segffa{} achieves provably improved convergence guarantees.
More precisely, our contributions can be listed as follows (see \Cref{tab:results} for a summary). For clarity, %
we use \segus{} to refer to with-replacement SEG, where US stands for uniform sampling.
\begin{itemize}
    \item We first study the same-sample versions of \segus{}, SEG with RR (\segrr), and SEG with flip-flop (\segff). %
    We show that they all can diverge when $f$ is convex-concave,\footnote{ 
    This does not contradict the result in \citep{Hsie20}, which shows that the \emph{independent-sample} %
    SEG %
    with carefully designed step sizes rule
    converges
    to optima for convex-concave settings, albeit without a convergence rate.}
    by constructing an explicit counterexample (\Cref{thm:rr-ff-bad}). 
    This shows that shuffling alone \emph{cannot} fix the divergence issue of \segus. %
    \item %
    We next investigate the underlying cause
    for the nonconvergence of \segus{}, \segrr{}, and \segff{}.
    In particular, we identify
    that either they fail to match the update equation of the reference method EG
    beyond \emph{first-order} Taylor expansion terms, or attempting to match both the \emph{first-} and \emph{second-order} Taylor expansion terms results in divergence (\Cref{thm:ff-alone}).
    \item
    By adopting a simple technique of \emph{anchoring} on top of flip-flop shuffling, we devise our algorithm \segffa{}, whose epoch-wise update deterministically matches EG up to second-order Taylor expansion terms~(\Cref{thm:ffa-cubic-error}).
    We prove that \segffa{} enjoys improved convergence guarantees, as anticipated by our design principle. %
    Most importantly, we show that \segffa{} achieves a convergence
    rate of $\tilde{\gO}(\nicefrac{1}{K^{1/3}})$ when $f$ is convex-concave, where $K$ denotes the number of epochs.
    This is in stark contrast to other baseline algorithms that diverge under this setting (see the last column of \Cref{tab:results}).
    \item Moreover, we show that when $f$ is strongly-convex-strongly-concave, \segffa{} achieves a convergence rate of $\tilde{\gO}(\nicefrac{1}{nK^{4}})$~(\Cref{thm:scsc-ffa}). 
    In addition, by proving $\Omega(\nicefrac{1}{nK^3})$ lower bounds for the convergence rates of \sgdarr{} and \segrr{} under the same setting~(\Cref{thm:scsc-lower-bound}), we show that \segffa{} has a \emph{provable advantage} over these baseline algorithms.
\end{itemize}

\begin{table*}
\centering
\begin{threeparttable}[t]
\caption{\small Summary of upper/lower convergence rate bounds
of \emph{same-sample} SEG for unconstrained finite-sum minimax problems,
without requiring increasing batch size, convex-concavity of each component, and uniformly bounded gradient variance. Pseudocode of algorithms can be found in \Cref{sec:pseudocode}. We only display terms that become dominant for sufficiently large $T$ and $K$. To compare the with-replacement versions \hbox{(\textsf{-US})} against shuffling-based versions, one can substitute $T = nK$. The optimality measure used for SC-SC problems is $\E[\norm{\hat \vz - \vz^*}^2]$ for the last iterate $\hat \vz$. For C-C problems, we consider $\min_{t=0,\dots,T} \E [\norm{\mF \vz_t}^2]$ for with-replacement methods and $\min_{k=0,\dots,K} \E [\norm{\mF \vz_0^k}^2]$ for shuffling-based methods. %
}
\label{tab:results} 
\renewcommand*{\arraystretch}{1.05}
\setlength{\tabcolsep}{3pt}
\begin{small}
\begin{sc}
\begin{tabular}{c|c|c|c|c}
\toprule
& \multicolumn{2}{c|}{Strongly-Convex-Strongly-Concave} & \multicolumn{2}{c}{Convex-Concave}\\  \cmidrule{2-5} Method & Upper Bound & Lower Bound & Upper Bound & Lower Bound \\
\midrule
\sgdaus{} & $\gO ( \frac{1}{T} )$ {\scriptsize \citep{Loiz21}} & $\Omega ( \frac{1}{T} )$ {\scriptsize \citep{Cho22}} & N/A & $\Omega(1)$ {\scriptsize (as GDA)}\; \\
\segus{} & $\gO ( \frac{1}{T} )$ {\tiny \citep{Gorb22SEG}} & $\Omega ( \frac{1}{T} )$ {\tiny \citep{Bezn20}} & N/A\symbcntr{2}\symbcntr{3}\!\! & $\Omega(1)$ {\scriptsize (Thm.~\ref{thm:rr-ff-bad})}  \\ 
\midrule
\sgdarr{} & $\tilde \gO ( \frac{1}{nK^2} )$ {\scriptsize \citep{Das22}} & $\Omega ( \frac{1}{nK^3} )$ {\scriptsize (Thm.~\ref{thm:scsc-lower-bound})} & N/A & $\Omega(1)$ {\scriptsize (as GDA)}\;\\
\segrr{} & $\tilde \gO ( \frac{1}{nK^2} )$ {\scriptsize \citep{Emma24}} & $\Omega ( \frac{1}{nK^3} )$ {\scriptsize (Thm.~\ref{thm:scsc-lower-bound})} & {\color{lightgray} $\gO(\frac{1}{(nK)^{1/3}})$? {\scriptsize\citep{Emma24}}}\symbcntr{4} & $\Omega(1)$ {\scriptsize (Thm.~\ref{thm:rr-ff-bad})}\\
\segff{} & {$\tilde{\gO} ( \frac{1}{nK^2} )$} {\scriptsize (Thm.~\ref{thm:scsc-convergence-result})} & -- & N/A & $\Omega(1)$ {\scriptsize (Thm.~\ref{thm:rr-ff-bad})}\\
\midrule
\segffa{} & {$\tilde{\gO} ( \frac{1}{nK^4} )$} {\scriptsize (Thm.~\ref{thm:scsc-ffa})} & -- & $\tilde{\gO}(\frac{1}{K^{1/3}})$ {\scriptsize (Thm.~\ref{thm:cc-ffa})} & --\\
\bottomrule
\end{tabular}
\end{sc}
\end{small} 
\scriptsize 
\begin{tablenotes}
\item[\dag] \citep{Diak21,Gorb22SEG} show upper bounds for \segus{}, but they require increasing batch sizes as well as other assumptions (see \Cref{appx:comparison-table}).
\item[\ddag] \citep{Hsie20} shows that \emph{independent-sample} \segus{} converges for stepsizes $\alpha_t,\beta_t$ decaying at different rates, but gives no conv.\@ rate.
\item[\textsection] Unfortunately, the proof of this convergence bound in this recent AISTATS 2024 paper seems to be incorrect: see \Cref{sec:segrr-flaw}.
\end{tablenotes}
 \end{threeparttable}
\end{table*}

\section{Related Works}  \label{sec:related-works}
 
\paragraph{Extragradient and EG+} 
Extragradient (EG) method \citep{Korp76} is a widely used minimax optimization method, well-known for resolving the nonconvergence issue of GDA on convex-concave problems. 
In this paper, we also consider EG+ \citep{Diak21}, which is a generalization of EG. The update rule of EG+ is defined, for stepsizes $\{\eta_{1,k}\}_{k\geq0}$ and $\{\eta_{2,k}\}_{k\geq0}$,  as
\begin{align}  \label{eqn:detEG}
    &\left\{ \begin{aligned}
        \vu^{k} &\gets \vx^{k} - \eta_{1, k} \gradx f(\vx^{k}, \vy^{k})  \\ 
        \vv^{k} &\gets \vy^{k} + \eta_{1, k} \grady f(\vx^{k}, \vy^{k})
    \end{aligned}  \right.{,} %
    &\left\{ \begin{aligned}
        \vx^{k+1} &\gets \vx^{k} - \eta_{2, k} \gradx f(\vu^{k}, \vv^{k})   \\ 
        \vy^{k+1} &\gets \vy^{k} + \eta_{2, k} \grady f(\vu^{k}, \vv^{k}) 
    \end{aligned} \right.{.} %
\end{align} 
The first step is called the \emph{extrapolation step}, and the second step is called the \emph{update step}.
If $f$ is convex-concave, \citet{Diak21} show that EG+ reaches an optimum when $\eta_{1, k} \geq \eta_{2, k}$.
In particular, when $\eta_{1, k} = \eta_{2, k}$, we recover the standard EG by \citet{Korp76}. 

\paragraph{Stochastic Variants of Extragradient}
 
 In \eqref{eqn:detEG}, if the \emph{stochastic estimators} of $\gradx f$ and $\grady f$ are used instead of the gradients themselves, we get the standard SEG. 
 If an estimator chosen is used for both the extrapolation and the update steps, we get the \emph{same-sample} SEG, 
which we focus on
in this paper; see \Cref{sec:pseudocode} for the pseudocode. 

While EG improves upon GDA, unfortunately, SEG has not been able to show a clear advantage over SGDA. 
  On one hand, analyses of SEG on strongly-convex-strongly-concave problems have shown some success; see, \textit{e.g.,} \cite{Emma24, Gorb22SEG}.    
Yet, on the other hand, for general unconstrained convex-concave problems, to the best of our knowledge, the existing stochastic variants of EG and their analyses face several limitations.\footnote{Most of these results are carried out assuming access to a stochastic oracle of $f$, which indeed subsumes the finite-sum setting as a special case. 
However, it seems unlikely that %
these %
limitations of the existing studies will
be easily resolved by simply narrowing the focus down to the finite-sum setting; see \Cref{appx:finite-sum-is-general}. %
}\footnote{
Recently, %
\citet{Emma24} claimed
the
convergence of \segrr{} in the %
convex-concave
setting. 
Unfortunately, however, there seems to be a flaw in their 
proof. %
We defer a %
discussion on this to \Cref{sec:segrr-flaw}.
}
Assumptions commonly imposed in the existing literature include: %
\textbf{(i)} the domain is bounded, either explicitly or implicitly~\citep{Judi11,Mish20SEG},
\textbf{(ii)} one must increase the batch size to achieve convergence~\citep{Diak21, Cai22, Gorb22SEG},\footnote{In fact, for the methods studied in \citep{Diak21, Gorb22SEG} it is possible to show that increasing the batch size is \emph{strictly necessary and unavoidable} for convergence; see \Cref{sec:appx:segus-monotone-lb}.}
and 
\textbf{(iii)} each component $f_i$ is convex-concave~\citep{Mish20SEG, Gorb22SEG}, 
and \textbf{(iv)} the components have uniformly bounded gradient variance~\citep{Diak21, Cai22, Peth23}.
For further details, see \Cref{appx:comparison-table} and \Cref{tab:related-works} therein.
Notably, \citet{Hsie20} prove convergence of the \textit{independent-sample} SEG without these four restrictions, but the result lacks an explicit convergence rate. 

Our proposed \segffa{} overcomes all the aforementioned limitations, and reaches an optimum
with an explicit rate in unconstrained convex-concave problems, under relatively mild conditions.
The readers may also refer to \citep{Bezn23a} for a comprehensive overview on this topic.

Meanwhile, %
under
the finite-sum setting, variance reduction %
schemes
have also been considered, achieving some promising results \citep{Carm19, Alac22}. 
Yet, 
although theoretically appealing, variance reduction 
is %
less widely used in practice due to their curiously inferior performance in training neural networks \citep{Defa19}. 
On top of this practical issue, variance reduction techniques share the aforementioned limitation~\text{(ii)}, as accessing full gradients can be viewed as increasing the batch size. In contrast, our main goal in this paper is to study how a carefully chosen sampling scheme,
with minimal modifications to the algorithm,
can improve the convergence of SEG without the need for increased batch size; therefore, we believe that our work is not directly comparable to variance reduction-based EG.

\paragraph{Taylor Expansion Matching and Convergence Guarantees} 

It has been repeatedly reported that the convergence %
of an optimization method is deeply related to the degree to which the Taylor expansion (with respect to the step size) of its update equation matches with %
that
of an already known convergent method.
For example, \citet{Mokh20} observed that the advantage of EG over GDA comes from the Taylor expansion of update equations of EG matching that of the \emph{proximal point}~(PP) method~\citep{Mart70} up to second-order terms, 
whereas GDA matches PP only up to first-order terms. %

The advantages of the shuffling scheme over the with-replacement sampling can be explained in a similar way. %
One key property of shuffling-based methods is that, while the individual estimators are biased as they are dependent to other estimators within the same epoch, the overall stochastic error across the epoch decreases dramatically compared to using $n$ independent unbiased estimators. 
For instance, in SGD with RR \citep{Ahn20} and in SGDA with RR \citep{Das22}, the overall progress made within each epoch exactly matches their deterministic counterparts up to the first-order, leaving an error as small as $\gO(\eta^2)$, where $\eta$ is the stepsize. 
\citet{Rajp22} observed that, when each component functions are convex quadratics, then using flip-flop on SGD can reduce the error further to $\gO(\eta^3)$, resulting in an even faster convergence. 
As we further elaborate in \Cref{section:method}, the motivation behind our design principle of \segffa{} is also based on this line of observations.

\section{Notations and Problem Settings} \label{sec:problem-settings}
Let $[n] \subset \sZ$ denote the set $\{1, \dots, n\}$. 
The set of all permutations on $[n]$ will be denoted by $\gS_n$. 
For the finite-sum minimax problem \eqref{eqn:minimax_problem}, 
we denote the \emph{saddle gradient} operators by \[
\mF (\,\cdot\,) \coloneqq \begin{bmatrix} \gradx f (\,\cdot\,)  \\ -\grady f (\,\cdot\,)  \end{bmatrix}, \enspace \mF_i (\,\cdot\,) \coloneqq \begin{bmatrix} \gradx f_i (\,\cdot\,)  \\ -\grady f_i (\,\cdot\,)  \end{bmatrix}, \quad\! i \in [n] .  
\]
The derivative of an operator will be denoted with a prefix~$D$. For example, the derivative of $\mF$ is denoted by $D\mF$.
Often a single vector will be used to denote the minimization and the maximization variable at once. 
For instance, for $\vz \in \rr^{d_1 + d_2}$ which is a concatenation of $\vx \in \rr^{d_1}$ and $\vy \in \rr^{d_2}$, we simply write $\mF \vz$ to denote $\mF(\vx, \vy)$. 

It is well known that, if $f$ is $\mu$-strongly convex on $\vx$ and $\mu$-strongly concave on $\vy$ for some $\mu > 0$ (respectively, $\mu = 0$), then its saddle gradient $\mF$ is $\mu$-strongly monotone (respectively, monotone), in the following sense. For a proof of this standard fact, see, \textit{e.g.}, \citep{Grim23}.
\begin{asmp}[Monotonicity \& Strong Monotonicity]\label{asmp:monotone} For $\mu > 0$, we say that an operator $\mF$ is $\mu$-strongly monotone if, for any $\vz, \vw \in \rr^{{d_1}+{d_2}}$, it holds that \begin{equation}\label{eqn:def-strongly-monotone}
    \inprod{\mF \vz - \mF \vw}{\vz - \vw} \geq \mu\norm{\vz - \vw}^2. 
\end{equation} 
If \eqref{eqn:def-strongly-monotone} holds for $\mu = 0$, then we say that $\mF$ is monotone. 
\end{asmp}

Thus, from now on, we will use the term \emph{strongly monotone} (respectively, \emph{monotone}) problems rather than strongly-convex-strongly-concave (respectively, convex-concave) problems. 
Notice that we only assume that the full saddle gradient $\mF$ is (strongly) monotone, not the individual $\mF_i$'s.

In addition, we remark that
our convergence analysis under the monotonicity of $\mF$, \Cref{thm:cc-ffa},
in fact requires only a relaxed version of monotonicity, 
known as
\emph{star}-monotonicity.
This condition imposes the inequality~\eqref{eqn:def-strongly-monotone}
with $\mu=0$, but only when $\vw=\vz^*$, where $\vz^*$ is a point such that $\mF\vz^*=\zero$.
This relaxation allows for a certain degree of nonconvex-nonconcavity in $f$. 
For a more detailed discussion on the star-monotonicity condition, see~\Cref{appx:star-monotone}.

Other three underlying assumptions we make on the problem \eqref{eqn:minimax_problem} can be listed as follows. 

\begin{asmp}[Existence of an Optimal Solution]\label{asmp:solution}
    An optimal solution of the problem \eqref{eqn:minimax_problem}, which is a point we denote by $\vz^* = (\vx^*, \vy^*)$ that satisfies 
 \[%
        f(\vx^*, {\vy}) \leq f(\vx^*, \vy^*) \leq f({\vx}, \vy^*)
 \]%
 for any ${\vx} \in \rr^{d_1}$ and ${\vy} \in \rr^{d_2}$, exists in $\rr^{d_1+ d_2}$. 
\end{asmp}
    Because the problem is unconstrained and $f$ is convex-concave, a point $\vz^*$ is an optimum if and only if $\mF \vz^* = \zero$. 
For strongly monotone problems, \Cref{asmp:solution} is not explicitly required, %
as it is
guaranteed \textit{a priori} \citep[Proposition~22.11]{Baus17}. 
For monotone problems, we explicitly impose \Cref{asmp:solution} in order to exclude pathological problems such as $f(x,y) = x - y$.

\begin{asmp}[Smoothness]\label{asmp:smoothness} Each $f_i$ is $L$-smooth, and each $\mF_i$ is $M$-smooth. That is, for any $\vz, \vw \in \rr^{{d_1}+{d_2}}$, 
\begin{enumerate}[label=(\roman*)]
    \item $\norm{\mF_i \vz - \mF_i \vw} \leq L \norm{\vz - \vw}$, %
    \item $\norm{D\mF_i \vz - D\mF_i \vw} \leq M \norm{\vz - \vw}$.
\end{enumerate}
\end{asmp}  
It is worth mentioning that the gradient operator $\mF_i$ arising from a quadratic function $f_i$ is $M$-smooth with $M = 0$.
Notice also that, by the finite-sum structure $\mF = \frac{1}{n} \sum_{i=1}^n \mF_i$, it is clear that \Cref{asmp:smoothness} implies $f$ being $L$-smooth and $\mF$ being $M$-smooth. 

The $L$-smoothness assumption on the objective functions is standard in the optimization literature, while the $M$-smoothness assumption on the saddle gradients may look %
less standard.  
This smoothness assumption on the saddle gradient, 
in other words the Lipschitz Hessian condition,
for analyzing~\segffa{}
stems from the analysis of the \mbox{flip-flop} sampling scheme~\citep{Rajp22}.
In particular, 
this is needed for bounding the high-order error terms
between the (deterministic) EG and \segffa{}
in \Cref{sec:why-segffa}.
The existing analysis of flip-flop sampling~\citep{Rajp22}
is limited to quadratic functions that trivially have $0$-Lipschitz Hessians ($M=0$),
so our analysis is a step forward.

\begin{asmp}[Component Variance]\label{asmp:bounded-variance} There exist constants $\rho \geq 0$ and $\sigma \geq 0$ such that %
\begin{equation}\label{eqn:variance-control}
    \frac{1}{n}\sum_{i=1}^n\, \norm{\mF_i \vz - \mF\vz}^2 \leq (\rho \norm{\mF\vz} + \sigma)^2 \qquad \forall \vz. 
\end{equation}
\end{asmp} 
For strongly monotone problems, \Cref{asmp:bounded-variance}
is not explicitly required, because it can be obtained as %
a consequence of the preceding assumptions: see \Cref{appx:variance-redundant}.
Nevertheless, %
for convenience, we will keep the notations $\rho$ and $\sigma$ as in \eqref{eqn:variance-control} for the strongly monotone setting as well.

In many existing works on stochastic optimization methods for minimax problems, \Cref{asmp:bounded-variance} with $\rho = 0$ is imposed. This \emph{uniform} bound on the variance simplifies the convergence analyses,  
but it is also fairly restrictive especially in the unconstrained settings.
Already for bilinear finite-sum minimax problems $f(\vx,\vy) =\frac{1}{n}\sum_{i=1}^n \vx^\top \mB_i \vy$, one can easily check that setting $\rho = 0$ forces the matrices $\mB_i$ to be exactly equal to each other.
For machine learning applications, %
it has been also reported that the assumption with $\rho = 0$ often fails to hold \citep{Bezn23a}. 
Therefore, allowing the variance to grow with the gradient $\mF \vz$ makes the assumption much more realistic. %

The Lipschitz Hessian condition and the component variance assumption for monotone problems may still look rather strong. 
We leave the study on how one can relax such assumptions to prove \emph{upper} bounds for convergence rates as an interesting future direction.
On the other hand, while our lower bound results in Theorems~\ref{thm:rr-ff-bad} and \ref{thm:scsc-lower-bound} are derived under those strong assumptions, they still serve as lower bound results also for larger function classes that do not have those assumptions. In other words, the value of those results are not limited because of those assumptions being imposed. 

\section{Shuffling Alone Is Not Enough} \label{sec:shuffling-not-enough} %
Under the settings we have discussed, 
we study the SEG with shuffling-based sampling schemes. %
First we describe the precise methods of our consideration, namely the \segrr{} and \segff{}. %

For $k \geq 0$, in the beginning of %
an epoch, a random permutation $\tau_k$ is sampled from a uniform distribution over $\gS_n$.
Then, for $n$ iterations, we use each of the component functions once, in the order determined by $\tau_k$. 
That is, for $i=0, 1, \dots, n-1$ we do 
\begin{equation} \label{eqn:segrr} 
\begin{aligned}
\vw_i^k &\gets \vz_i^k - \alpha_k \mF_{\tau_k(i+1)} \vz_i^k, \\
\vz_{i+1}^{k} &\gets \vz_i^k - \beta_k \mF_{\tau_k(i+1)} \vw_i^k,
\end{aligned} 
\end{equation} for some stepsizes $\alpha_k$ and $\beta_k$. In case of \segrr{}, the epoch is completed here, and we set $\vz_{0}^{k+1} \gets \vz_n^k$ as the initial point for the next epoch. 
 
In case of \segff{}, we additionally perform $n$ more iterations in the epoch, as proposed in \citet{Rajp22}. 
In these additional iterations, the component functions are each used once more, but in the reverse order. 
That is, for $i = n, n+1, \dots, 2n-1$, we do \begin{equation} \label{eqn:segff} 
\begin{aligned}
\vw_i^k &\gets \vz_i^k - \alpha_k \mF_{\tau_k(2n-i)} \vz_i^k, \\
\vz_{i+1}^{k} &\gets \vz_i^k - \beta_k \mF_{\tau_k(2n-i)} \vw_i^k. 
\end{aligned} 
\end{equation}
Then we set $\vz_{0}^{k+1} \gets \vz_{2n}^k$ as the initial point for the next epoch. 
The full pseudocode of these methods can be found in \Cref{sec:pseudocode}. 

When $\mF$ is strongly monotone, it is possible to show that both \segrr{} and \segff{} indeed provide speed-up over \segus{}.
The well-known %
rate of \segus{} under strong monotonicity of $\mF$ is $\Theta(\nicefrac{1}{T})$, where $T$ is the total number of iterations \citep{Bezn20,Gorb22SEG}.  
Translating this rate to our shuffling-based setting, where there are $\Theta(n)$ iterations per epoch, this rate amounts to $\Theta(\nicefrac{1}{nK})$. 
Recently, \citet{Emma24} have shown that \segrr{}, under the same setting as ours, attains a convergence rate of $\tilde{\gO}(\nicefrac{1}{nK^2})$,
on par with the rate of \sgdarr{} \citep{Das22}.
In \Cref{sappx:sm-analysis}, we also show that \segff{} attains a similar rate of convergence. %

However, it turns out that the benefit of shuffling does not extend further beyond the %
strongly monotone setting. In fact, when $\mF$ is merely monotone, then in the worst case, \segrr{} and \segff{} %
suffers from nonconvergence, just as in the case of 
\segus{}.

\begin{thm}\label{thm:rr-ff-bad}
    For $n = 2$, there exists a minimax problem with $f(x,y) = \frac{1}{2} \sum_{i=1}^2 f_i(x,y)$ having a monotone $\mF$, consisting of $L$-smooth quadratic $f_i$'s satisfying Assumption~\ref{asmp:bounded-variance} with $(\rho, \sigma) = (1,0)$, such that \segus{}, \segrr{} and \segff{} diverge in expectation for any positive stepsizes.
\end{thm}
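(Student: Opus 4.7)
The plan is to exhibit a single explicit two-component minimax problem with $n=2$ and $d_1=d_2=1$, built from quadratic $f_1, f_2$, whose iterates under each of \segus, \segrr, and \segff{} have expected squared norm that strictly grows over each step or epoch, for every positive stepsize pair $(\alpha,\beta)$. Since the components are quadratic, the saddle gradients are linear: writing $f_i(\vz) = \tfrac{1}{2}\vz^\top Q_i \vz$ we get $\mF_i\vz = B_i\vz$ with $B_i = J Q_i$. I parametrize $B_1 = B + C$ and $B_2 = B - C$ so that $\mF = B$. Monotonicity of $\mF$ becomes $B + B^\top \succeq 0$, and \Cref{asmp:bounded-variance} with $(\rho,\sigma)=(1,0)$ translates to the matrix inequality $C^\top C \preceq B^\top B$ (so that $\mF$ vanishes only where every $\mF_i$ vanishes).

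The key structural observation is that, because the $\mF_i$ are linear, one SEG step with component $i$ acts on $\vz$ as the linear operator
\[
U_i \coloneqq I - \beta B_i + \alpha\beta B_i^2,
\]
and hence an epoch of \segrr{} or \segff{} acts as a product of $U_1, U_2$. Thus the conditional second-moment recurrences reduce to purely algebraic quantities: for \segus{},
\[
\E\bigl[\|\vz^{k+1}\|^2 \bigm| \vz^k\bigr] = \tfrac{1}{2}\bigl(\|U_1\vz^k\|^2 + \|U_2\vz^k\|^2\bigr);
\]
for \segrr{},
\[
\E\bigl[\|\vz_0^{k+1}\|^2 \bigm| \vz_0^k\bigr] = \tfrac{1}{2}\bigl(\|U_2 U_1\vz_0^k\|^2 + \|U_1 U_2\vz_0^k\|^2\bigr);
\]
and for \segff{} the analogous expression with the four-fold products $U_1 U_2 U_2 U_1$ and $U_2 U_1 U_1 U_2$. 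I would then show that, for a judicious choice of $B$ and $C$, the symmetric operators on the right-hand sides exceed $I$ (i.e.\ have smallest eigenvalue strictly above $1$) for every $\alpha,\beta > 0$. Iterating the recurrence then yields $\E[\|\vz^k\|^2] \to \infty$.

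The hard part is choosing $B, C$ that defeat all three methods simultaneously at every positive stepsize. For \segus{} the variance contribution $\tfrac{1}{2}(\|U_1\vz\|^2 + \|U_2\vz\|^2) - \|\tfrac{1}{2}(U_1+U_2)\vz\|^2$ is automatic from any nonzero $C$, and one only has to ensure that the mean update $\tfrac{1}{2}(U_1+U_2) = I - \beta B + \alpha\beta \cdot \tfrac{1}{2}(B_1^2 + B_2^2)$ cannot contract enough to offset it. For \segrr{} and \segff{} the obstacle is that shuffling is designed to cancel the mean error across an epoch; to defeat this I would pick $B$ and $C$ that do \emph{not} commute, so that $U_1 U_2 \neq U_2 U_1$ and neither ordering — nor their average — matches deterministic EG beyond trivially low order. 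A natural concrete choice is to take $B$ close to a pure rotation (so that the deterministic EG update itself has spectral radius barely below or equal to $1$) with a diagonal, non-commuting perturbation $C$ saturating $C^\top C \preceq B^\top B$. Verifying divergence then reduces to checking a low-degree polynomial inequality in the two scalars $\alpha,\beta > 0$, which I would dispatch by a straightforward case analysis (splitting, e.g., on whether $\alpha\beta$ is small or large relative to the spectral data of $B$), exploiting that the quadratic-in-$(\alpha,\beta)$ lower-order contributions coming from $C$ dominate the higher-order cancellation terms uniformly in the stepsize.
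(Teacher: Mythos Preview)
Your proposal is essentially the paper's approach: the paper takes exactly $B = \tfrac{L}{2}\begin{pmatrix}0&1\\-1&0\end{pmatrix}$ (a pure rotation, giving the bilinear objective $f(x,y)=\tfrac{L}{2}xy$) and $C = \tfrac{L}{2}\begin{pmatrix}-1&0\\0&1\end{pmatrix}$ (diagonal, saturating $C^\top C = B^\top B$), which is precisely your ``natural concrete choice.'' The one thing you have not spotted is that this choice makes both $B_1 = B+C$ and $B_2 = B-C$ \emph{nilpotent}: $B_1^2 = B_2^2 = 0$. Consequently $U_i = I - \beta B_i + \alpha\beta B_i^2 = I - \beta B_i$, and the extrapolation stepsize $\alpha$ disappears from the analysis entirely. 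No case analysis in $(\alpha,\beta)$ is needed; the averaged second-moment operators turn out to be scalar multiples of the identity --- $(1+\tfrac{1}{2}\beta^2 L^2)I$ for \segus{}, $(1+\tfrac{1}{2}\beta^4 L^4)I$ for \segrr{}, and $(1+2\beta^6 L^6)I$ for \segff{} --- so strict growth of $\E[\|\vz\|^2]$ is immediate for every $\beta>0$. Your plan would have uncovered this once you actually computed $U_i$ for the concrete $B_i$, but framing the endgame as a two-variable polynomial inequality to be ``dispatched by case analysis'' overcomplicates what is in fact a one-line verification.
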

    We provide the explicit counterexample and the proof of divergence in \Cref{ssec:divergence}. 
    Note that \Cref{thm:rr-ff-bad} and its proof in \Cref{ssec:divergence} imply that $\min_{t=0,\dots,T} \E [\norm{\mF \vz_t}^2]=\Omega(1)$ for \segus{} and $\min_{k=0,\dots,K} \E [\norm{\mF \vz_0^k}^2] = \Omega(1)$ for \segrr{} and \segff{}, as summarized in \Cref{tab:results}.

\section{\segffa{}: SEG with Flip-Flop Anchoring} \label{section:method}

In this section, we investigate the underlying cause for nonconvergence of \segrr{} and \segff{} from the perspective of how accurately they match the convergent EG or PP methods
in terms of the Taylor expansions of updates.
We then propose adding a simple \emph{anchoring} step at the end of each epoch of \segff{}. 
It turns out that adding the \emph{anchoring} step, which is a step of taking a convex combination of an iterate with a previously computed iterate, reduces the stochastic noise and leads to a method with improved convergence properties.

\subsection{Design Principle: Second-Order Matching} \label{sec:why-segffa}

As observed by \cite{Mokh20}, the key feature of EG behind its superior convergence properties compared to GDA is its update rule closely resembling PP, while the ``error'' of GDA as an approximation of PP is so large that it hinders convergence. 
The difference between the updates of EG and PP, in the Taylor expansion, is as small as $\gO(\eta^3)$ per iteration, where $\eta$ is the stepsize. On the other hand, GDA and PP show a difference of $\gO(\eta^2)$, and this greater ``error'' explains why GDA diverges while EG and PP converge. 
Of course, EG and PP are not the only two algorithms that converge in the monotone setting;
let us recall the update rule of EG+ method \citep{Diak21}, and Taylor-expand it as the following:
\begin{equation} \label{eqn:egplus}
    \begin{aligned} 
        \vz^{+} &\coloneqq \vz - \eta_2 \mF (\vz - \eta_1 \mF \vz) \\
        &\phantom{:}= \vz - \eta_2 \mF \vz + \eta_1 \eta_2  D \mF (\vz) \mF \vz + O(\eta_1^2 \eta_2).     
    \end{aligned} 
\end{equation}
EG+ is known to converge for unconstrained monotone problems if $\eta_1 \geq \eta_2$. When $\eta_1 = \eta_2$, it recovers EG and matches PP up to second-order terms. 

Based on these observations, we now state our key principle for designing a convergent version of SEG: \emph{second-order matching}. 
We would like to choose proper stepsizes, sampling scheme, and anchoring scheme so that our without-replacement SEG can \emph{deterministically} match the update equation of a convergent algorithm (EG/PP or EG+) up to the $O(\eta^2)$ terms (i.e., \emph{second-order} terms in the Taylor expansion), thereby satisfying a small $\gO(\eta^3)$ approximation error. 
We show that \textbf{(a)} this \emph{second-order matching} can be achieved with \emph{flip-flop anchoring}, 
but not solely by permutation-based sampling such as RR and flip-flop (without anchoring), 
and \textbf{(b)} second-order matching indeed grants convergence for monotone problems. 
In particular, we demonstrate that 
\begin{enumerate}
    \item \segrr{} suffers a poor approximation error of $\gO(\eta^2)$ as an approximation of EG/EG+. 
    \item \segff{} can match EG+ up to second-order terms, but it results in a choice of stepsizes ($\eta_2 = 2\eta_1$) that make EG+ diverge (\Cref{thm:ff-alone}).
    \item \segffa, the method we propose, matches EG up to second-order terms to get an error of $\gO(\eta^3)$ (\Cref{thm:ffa-cubic-error}), achieving convergence in monotone problems (\Cref{thm:cc-ffa}). 
\end{enumerate}

To this end, let us consider a general form of SEG that incorporates %
any arbitrary sampling scheme. 
More precisely, in the $k$-th ``epoch'' consisted of $N$ iterations, the components are chosen in the order of $\mT_{0}^k, \mT_{1}^k, \cdots, \mT_{N-1}^k$, where $\mT_i^k \in \{\mF_1, \dots, \mF_n\}$ for each~$i$. For our purpose, we assume that $N$~is some multiple of $n$ (e.g., $N = n$ for \segrr{}, $N= 2n$ for \segff{}).
Then, given $\alpha$ and $\beta$ we perform SEG updates,
for $i = 0, 1, \dots, N-1$,
\begin{equation} \label{eqn:update_rule1-m} 
\begin{aligned}
    \vw_i^k     &\gets \vz_i^k - \alpha \mT_{i}^k \vz_i^k, \\ \qquad 
    \vz_{i+1}^k &\gets \vz_i^k - \beta \mT_{i}^k \vw_i^k.
\end{aligned} 
\end{equation}

\subsubsection{Necessity of Flip-Flop Sampling}
The general method in \eqref{eqn:update_rule1-m}
that sets the initial point for the next epoch as $\vz_0^{k+1} \gets \vz_N^k$ satisfies the following property.
\begin{prop}\label{prop:unravelling-m} 
Suppose that \Cref{asmp:smoothness} holds.
For some $\veps_N^k = o\left((\alpha+\beta)^2\right)$, it holds that \begin{equation} 
    \vz_0^{k+1} =  \vz_0^k - {\beta} \sum_{j=0}^{N-1} \mT_j^k \vz_0^k + {\alpha\beta} \sum_{j=0}^{N-1} D\mT_j^k (\vz_0^k) \mT_j^k \vz_0^k +   {\beta^2} \sum_{ i < j }  D\mT_j^k (\vz_0^k) \mT_i^k \vz_0^k +  {\veps_N^k}. \label{eqn:approx-m-no-anchor}
 \end{equation} 
\end{prop}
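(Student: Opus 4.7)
The plan is a two-step Taylor expansion of each SEG inner update, followed by telescoping. Throughout, I would fix $k$ and write $\vz_0 := \vz_0^k$, and note that because $N$ is fixed, every implicit constant in $O(\cdot)$ and $o(\cdot)$ may depend on $N$, $L$, $M$, and $\max_j \|\mT_j^k \vz_0\|$, but not on $\alpha$ or $\beta$. The asymptotics are understood as $(\alpha, \beta) \to 0$.

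First I would Taylor-expand $\mT_i^k$ about $\vz_i^k$ to evaluate $\mT_i^k(\vz_i^k - \alpha \mT_i^k \vz_i^k)$. Since $\mT_i^k$ is $M$-smooth by \Cref{asmp:smoothness}, this gives
\[
\mT_i^k(\vz_i^k - \alpha \mT_i^k \vz_i^k) = \mT_i^k \vz_i^k - \alpha\, D\mT_i^k(\vz_i^k)\, \mT_i^k \vz_i^k + O(\alpha^2),
\]
so multiplying by $-\beta$ rewrites one inner update as
\[
\vz_{i+1}^k - \vz_i^k = -\beta\, \mT_i^k \vz_i^k + \alpha\beta\, D\mT_i^k(\vz_i^k)\, \mT_i^k \vz_i^k + O(\alpha^2 \beta). \qquad (\ast)
\]

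Next, using the $L$-Lipschitzness of each $\mT_i^k$, a routine induction on $i$ based on $(\ast)$ shows $\vz_i^k - \vz_0 = O(\alpha+\beta)$ uniformly for $i \le N$, and moreover, taking only the leading term of $(\ast)$,
\[
\vz_i^k - \vz_0 = -\beta \sum_{l < i} \mT_l^k \vz_0 + o(\alpha + \beta).
\]
Taylor-expanding once more, this time about $\vz_0$, I obtain
\[
\mT_i^k \vz_i^k = \mT_i^k \vz_0 + D\mT_i^k(\vz_0)(\vz_i^k - \vz_0) + O(\|\vz_i^k - \vz_0\|^2),
\]
\[
D\mT_i^k(\vz_i^k) = D\mT_i^k(\vz_0) + O(\|\vz_i^k - \vz_0\|).
\]
Substituting these back into $(\ast)$ and collecting terms yields
\[
\vz_{i+1}^k - \vz_i^k = -\beta\, \mT_i^k \vz_0 + \beta^2\, D\mT_i^k(\vz_0)\!\sum_{l < i}\! \mT_l^k \vz_0 + \alpha\beta\, D\mT_i^k(\vz_0)\, \mT_i^k \vz_0 + o\bigl((\alpha+\beta)^2\bigr).
\]
Summing over $i = 0, 1, \dots, N-1$ telescopes the left-hand side to $\vz_N^k - \vz_0 = \vz_0^{k+1} - \vz_0$, and, after relabelling $(i, l) \mapsto (j, i)$ in the double sum, reproduces \eqref{eqn:approx-m-no-anchor} with $\veps_N^k$ equal to a sum of $N$ terms, each of which is $o((\alpha+\beta)^2)$.

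The main obstacle is purely the bookkeeping of the remainders: one must verify that the residues of the two Taylor expansions (of orders $\alpha^2\beta$, $\alpha\beta^2$, and $\beta^3$, all of which are individually $o((\alpha+\beta)^2)$) do not accumulate into something larger when summed over $i$. Since $i \le N$ with $N$ fixed and the smoothness constants $L, M$ in \Cref{asmp:smoothness} are global, the induction closes with constants independent of $\alpha$ and $\beta$, so the aggregate error $\veps_N^k$ is indeed $o((\alpha+\beta)^2)$ as claimed.
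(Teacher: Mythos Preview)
Your proposal is correct and follows essentially the same approach as the paper: Taylor-expand each inner update around $\vz_0^k$ using the $M$-smoothness of $\mT_i^k$, control the drift $\vz_i^k - \vz_0^k$ by an induction/telescoping argument based on the $L$-Lipschitzness, and then sum. The only organizational difference is that the paper carries out the induction with an \emph{explicit} closed-form expression for $\veps_i^k$ (its Proposition~E.2), rather than working with $O/o$ notation; this exact formula is later reused for the quantitative within-epoch error bounds, but for the present proposition your asymptotic bookkeeping is entirely sufficient.
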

See \Cref{sappx:unravelling} for the proof.
To make \eqref{eqn:egplus} and \eqref{eqn:approx-m-no-anchor}
match up to the second-order, both the equations
\begin{align} 
    \label{eqn:first-order}
    & \frac{\eta_2}{n} \sum_{j=1}^n \mF_i \vz_0^k
        = {\beta} \sum_{j=0}^{N-1} \mT_j^k \vz_0^k
        \qquad \text{ and} \\ 
   & \frac{\eta_1 \eta_2}{n^2} \Big(\sum_{j=1}^n  D \mF_j (\vz_0^k) \mF_j \vz_0^k + 
    \sum_{i \neq j}  D \mF_j (\vz_0^k) \mF_i \vz_0^k\Big) %
    = {\alpha\beta} \sum_{j=0}^{N-1} D\mT_j^k (\vz_0^k) \mT_j^k \vz_0^k + {\beta^2} \sum_{ i < j }  D\mT_j^k (\vz_0^k) \mT_i^k \vz_0^k   \label{eqn:second-order}
\end{align}  
must hold. %
Clearly, without-replacement sampling will make \eqref{eqn:first-order} hold. 
However, it is easy to check 
that random reshuffling falls short of making \eqref{eqn:second-order} hold. 
This is because, if RR is used, then $\mT_0^k, \mT_1^k, \dots, \mT_{n-1}^k$ is nothing but a reordering of $\mF_1, \dots, \mF_n$ into $\mF_{\tau(1)}, \dots, \mF_{\tau(n)}$, so the RHS of \eqref{eqn:second-order} can only contain terms $D\mF_{\tau(j)}(\vz_0^k) \mF_{\tau(i)} \vz_0^k$ with $i \leq j$.
This observation %
motivates the use of flip-flop sampling, because choosing $\mT_i^k = \mT_{2n-1-i}^k$ lets all the required terms $D\mF_j(\vz_0^k) \mF_i \vz_0^k$ to appear in the RHS of \eqref{eqn:second-order}. 

\subsubsection{Designing \segffa} 
Flip-flop does resolve the aforesaid issue, but still {another} complication remains for plain \segff{}. 

\begin{prop} \label{thm:ff-alone}
    Suppose we use flip-flop sampling (without anchoring).
    In order to make \eqref{eqn:first-order} and \eqref{eqn:second-order} hold, we must choose $\beta = \nicefrac{\eta_1}{n}$ and $\alpha = \nicefrac{\beta}{2}$. However, this leads to $\eta_2 = 2 \eta_1$, which is the set of parameters that fails to make EG+ converge. 
\end{prop}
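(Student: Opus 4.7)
The plan is to explicitly evaluate the sums that appear in \Cref{prop:unravelling-m} when $N = 2n$ under flip-flop sampling, and then read off the coefficient-matching constraints imposed by \eqref{eqn:first-order} and \eqref{eqn:second-order}. Since the flip-flop order uses each index in $[n]$ exactly twice, one immediately has $\sum_{j=0}^{2n-1} \mT_j^k \vz_0^k = 2 \sum_{a=1}^n \mF_a \vz_0^k$ and $\sum_{j=0}^{2n-1} D\mT_j^k(\vz_0^k) \mT_j^k \vz_0^k = 2 \sum_{a=1}^n D\mF_a(\vz_0^k) \mF_a \vz_0^k$. Matching the first-order equation \eqref{eqn:first-order} therefore forces $\beta = \eta_2/(2n)$ without any combinatorial work.

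The crux is evaluating $\sum_{i<j} D\mT_j^k \mT_i^k \vz_0^k$. For any sampled permutation $\tau$, each index $a \in [n]$ occupies positions $\tau^{-1}(a) - 1$ and $2n - \tau^{-1}(a)$ in the flip-flop sequence. A direct case analysis shows, regardless of $\tau$: the diagonal term $D\mF_a \mF_a$ contributes exactly once, from the unique ordered pair formed by the two positions of $a$; and for any $a \neq b$, the two positions of $a$ and the two positions of $b$ interleave (using that both $\tau^{-1}(a), \tau^{-1}(b) \leq n$) in a pattern producing exactly two ordered pairs $(i,j)$ with $i<j$, $\mT_i^k = \mF_b$, and $\mT_j^k = \mF_a$. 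This yields the pathwise identity
\[
\sum_{i<j} D\mT_j^k(\vz_0^k) \mT_i^k \vz_0^k = \sum_{a=1}^n D\mF_a(\vz_0^k) \mF_a \vz_0^k + 2 \sum_{a \neq b} D\mF_a(\vz_0^k) \mF_b \vz_0^k,
\]
which holds at the deterministic level, independent of the permutation.

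Plugging these expressions into \eqref{eqn:second-order} and matching the coefficients of the two aggregates $\sum_{a} D\mF_a \mF_a$ and $\sum_{a \neq b} D\mF_a \mF_b$ (which are linearly independent in general, as the $f_i$ can be chosen adversarially) produces the two-equation system $2\alpha\beta + \beta^2 = \eta_1\eta_2/n^2$ and $2\beta^2 = \eta_1\eta_2/n^2$. Combined with the first-order constraint $\beta = \eta_2/(2n)$, the second equation gives $\eta_2^2/(4n^2) = \eta_1\eta_2/(2n^2)$, hence $\eta_2 = 2\eta_1$ and $\beta = \eta_1/n$. Substituting back into the first equation leaves $2\alpha\beta = \beta^2$, so $\alpha = \beta/2$.

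To finish, I would exhibit a bilinear counterexample confirming that $\eta_2 = 2\eta_1$ is outside the EG+ convergence regime. On $f(x,y) = xy$ the EG+ iteration is a linear map whose matrix has eigenvalues $(1-\eta_1\eta_2) \pm i\eta_2$, hence spectral radius $\sqrt{(1-\eta_1\eta_2)^2 + \eta_2^2}$. The contraction condition simplifies to $\eta_2(1+\eta_1^2) < 2\eta_1$, and substituting $\eta_2 = 2\eta_1$ yields $1 + \eta_1^2 < 1$, which is impossible for any $\eta_1 > 0$, so EG+ with these parameters diverges on a monotone problem. The only genuinely nontrivial step in the proof is the combinatorial identity in the second paragraph; the rest is bookkeeping.
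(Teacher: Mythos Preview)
Your proposal is correct and follows essentially the same route as the paper: the paper packages your combinatorial identity as a separate lemma (\Cref{lem:ff-rearranging}), obtains the identical three-equation system, and uses the same bilinear problem $f(x,y)=xy$ for the divergence part, computing $\|\vz^+\|^2 = (1+4\eta_1^4)\|\vz\|^2$ directly rather than via the spectral radius. The only cosmetic difference is that the paper derives the identity by splitting the index range and reindexing, whereas you argue via the positions $\tau^{-1}(a)-1$ and $2n-\tau^{-1}(a)$; both give the same pathwise formula.
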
 
    For the proof, see \Cref{sec:ff-alone}. %
    This shows that a modification is necessary to develop a stochastic method that achieves second-order matching to %
\emph{convergent} EG/EG+ methods. 

We thus propose %
to add an \emph{anchoring} step: 
\begin{equation} \label{eqn:anchoring}
\vz_0^{k+1}  \gets \tfrac{1}{2} \left(\vz_{N}^k + \vz_0^k \right), %
\end{equation}
after finishing the $N$ updates \eqref{eqn:update_rule1-m},
instead of $\vz_0^{k+1} \gets \vz_N^k$.
This is our \emph{Stochastic ExtraGradient with Flip-Flop Anchoring} (\segffa) method, named after the design of combining the flip-flop sampling scheme and the anchoring step. 
We note that this idea of taking a convex combination %
has originally appeared in the \krasmann{} iteration \citep{Kras55, Mann53}, and also under the name of \emph{Lookahead} methods \citep{Chav21, Peth23LA}. 
This slightly differs from the more widely used \emph{Halpern iteration}~\citep{Halp67} based anchoring~(\textit{cf.\@}~\citep{yoon21}), which would have used the initial point $\vz_0^0$ instead of $\vz_0^k$ in \eqref{eqn:anchoring}. 

This anchoring step changes \eqref{eqn:approx-m-no-anchor} accordingly, and essentially amounts to dividing the right-hand sides of \eqref{eqn:first-order} and \eqref{eqn:second-order} each by $2$ (see \Cref{sec:proof-segffa} for the detailed derivations). 
We show that choosing $\alpha = \nicefrac{\beta}{2}$ in fact leads to the second-order matching to EG, i.e., EG+ with $\eta_1 = \eta_2$. 
\begin{prop} \label{thm:ffa-cubic-error}
  Suppose that Assumptions~\ref{asmp:smoothness} and \ref{asmp:bounded-variance} hold.
  Then, for $\beta_k = \eta$ and $\alpha_k = \nicefrac{\beta_k}{2}$, 
  \segffa{} becomes 
  an approximation of EG with error at most $\gO(\eta^3)$. In other words, we achieve \[
   \norm{ \vz_0^k - \eta n \mF (\vz_0^k - \eta n \mF \vz_0^k)  - \vz_0^{k+1} } = \gO(\eta^3). 
  \]
\end{prop}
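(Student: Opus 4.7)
The plan is to Taylor-expand both the reference EG update and the SEG-FFA epoch update around $\vz_0^k$ in powers of $\eta$ and verify that they agree through second order, so that the discrepancy is $\gO(\eta^3)$. For the EG side, a single Taylor step using the $M$-smoothness of $\mF$ (\Cref{asmp:smoothness}(ii)) gives $\mF(\vz_0^k - \eta n \mF \vz_0^k) = \mF \vz_0^k - \eta n\, D\mF(\vz_0^k) \mF \vz_0^k + R_{\mathrm{EG}}$ with $\|R_{\mathrm{EG}}\| = \gO(\eta^2)$, hence
\begin{equation*}
\vz_0^k - \eta n \mF(\vz_0^k - \eta n \mF \vz_0^k) = \vz_0^k - \eta n \mF \vz_0^k + (\eta n)^2 D\mF(\vz_0^k) \mF \vz_0^k + \gO(\eta^3).
\end{equation*}

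For the SEG-FFA side, I would apply \Cref{prop:unravelling-m} with $N = 2n$ to $\vz_{2n}^k$ and then the anchoring rule \eqref{eqn:anchoring} to obtain
\begin{align*}
\vz_0^{k+1} &= \vz_0^k - \tfrac{\beta}{2} \sum_{p=0}^{2n-1} \mT_p^k \vz_0^k + \tfrac{\alpha\beta}{2} \sum_{p=0}^{2n-1} D\mT_p^k(\vz_0^k) \mT_p^k \vz_0^k \\
&\quad + \tfrac{\beta^2}{2} \sum_{0 \leq p < q \leq 2n-1} D\mT_q^k(\vz_0^k) \mT_p^k \vz_0^k + \tfrac{1}{2}\varepsilon_{2n}^k.
\end{align*}
Next I would evaluate these three combinatorial sums for the flip-flop sequence of component indices $\tau_k(1), \ldots, \tau_k(n), \tau_k(n), \ldots, \tau_k(1)$. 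Since each $\mF_i$ appears exactly twice in this sequence, we have $\sum_p \mT_p^k = 2n\mF$ and $\sum_p D\mT_p^k \mT_p^k = 2 \sum_{i=1}^n D\mF_i \mF_i$. For the cross sum $\sum_{p<q} D\mT_q^k \mT_p^k$, I would split into three cases based on which half of the epoch contains $p$ and $q$. The ``both first half'' and ``both second half'' cases together symmetrize into the $\tau_k$-independent expression $\sum_{i\neq j} D\mF_i \mF_j$, since the reversal in the second pass contributes exactly the opposite-ordered terms to those from the first pass; the ``$p$ first, $q$ second'' case decouples the indices into a double sum $\sum_{i,j} D\mF_i \mF_j$. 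In total, $\sum_{p<q} D\mT_q^k \mT_p^k = 2\sum_{i\neq j} D\mF_i \mF_j + \sum_i D\mF_i \mF_i$.

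Substituting $\alpha = \beta/2 = \eta/2$ and collecting, the diagonal contributions combine as $(\alpha\beta + \beta^2/2)\sum_i D\mF_i \mF_i = \eta^2 \sum_i D\mF_i \mF_i$, which together with $\beta^2 \sum_{i\neq j} D\mF_i \mF_j$ yields exactly $\eta^2 \sum_{i,j} D\mF_i \mF_j = \eta^2 n^2 D\mF(\vz_0^k) \mF \vz_0^k$, matching the EG second-order term verbatim. The remaining discrepancy is then the EG Taylor remainder $\gO(\eta^3)$ plus $\tfrac{1}{2}\varepsilon_{2n}^k = o((\alpha+\beta)^2)$, which I would refine to $\gO(\eta^3)$ by revisiting the proof of \Cref{prop:unravelling-m} using the full $L$- and $M$-smoothness from \Cref{asmp:smoothness} together with \Cref{asmp:bounded-variance} to bound $\|\mF_i \vz_0^k\|$ in terms of $\|\mF \vz_0^k\|$ when estimating the third-order Taylor remainders of the individual SEG sub-iterations. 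The hard part will be the case-analysis identity for $\sum_{p<q} D\mT_q^k \mT_p^k$: it is the precise algebraic manifestation of why flip-flop is the right scheme, as plain RR would produce only one orientation of the cross terms and the reversed second pass supplies exactly the missing orientation, while the tuning $\alpha = \beta/2$ sets the diagonal coefficient so that it merges cleanly into the full double sum $D\mF \cdot \mF$.
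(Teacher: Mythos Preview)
Your proposal is correct and follows essentially the same route as the paper. Your flip-flop cross-sum identity $\sum_{p<q} D\mT_q \mT_p = 2\sum_{i\ne j} D\mF_i \mF_j + \sum_i D\mF_i \mF_i$ is exactly the content of the paper's \Cref{lem:ff-rearranging}, and your plan to upgrade $\veps_{2n}^k$ from $o(\eta^2)$ to $\gO(\eta^3)$ via the $L$- and $M$-smoothness together with \Cref{asmp:bounded-variance} is precisely what the paper carries out in detail in \Cref{appx:within-epoch} (culminating in \Cref{thm:rk1-ffa}).
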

In other words, adding the anchoring step allows us to get a method that well approximates the convergent EG
with an error as small as $\gO(\eta^3)$. For a more in-depth discussion, see \Cref{appx:within-epoch}.

\subsection{Convergence Analysis of \segffa}

As a result of the second-order matching, we obtain \segffa, a stochastic method that has an error of $\gO(\eta^3)$ as an approximation of EG. 
Achieving this order of magnitude for the approximation error turns out to be the key to the exact convergence to an optimum under the monotone setting. 
\begin{thm}\label{thm:cc-ffa}
    Suppose that $\mF$ is (star-)monotone, 
    Assumptions~\ref{asmp:solution},~\ref{asmp:smoothness}, and \ref{asmp:bounded-variance} hold, and we are running \segffa. 
    Then, for any $K \geq 1$, by choosing the stepsizes sufficiently small and decaying 
    as $\beta_k = {\gO}(\nicefrac{1}{k^{1/3} \log k})$ and $\alpha_k = \nicefrac{\beta_k}{2}$,
    the iterates generated by \segffa{} achieves the bound \[
        \min_{k= 0, 1, \dots, K} \expt \norm{\mF \vz_0^k}^2  = \gO\left( \frac{(\log K)^2}{K^{1/3}} \right) . 
    \]
\end{thm}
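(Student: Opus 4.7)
The plan is to reduce one epoch of \segffa{} to an \emph{inexact} EG step, apply the standard monotone EG descent to the exact part, and telescope with decreasing step sizes tuned so that the cubic residual is asymptotically dominated by EG progress. Concretely, set $\tilde\eta_k \coloneqq n\beta_k$ and let $\mathcal{T}_{\tilde\eta}(\vz) \coloneqq \vz - \tilde\eta\,\mF(\vz - \tilde\eta\,\mF\vz)$ be the exact EG operator, so that \Cref{thm:ffa-cubic-error} lets me write $\vz_0^{k+1} = \mathcal{T}_{\tilde\eta_k}(\vz_0^k) + \veps_k$. The main technical obstacle is to upgrade \Cref{thm:ffa-cubic-error} into a conditional variance bound
\[
  \E\bigl[\,\|\veps_k\|^2 \,\bigm|\, \vz_0^k\,\bigr] \;\leq\; C_{\mathrm r}\,\beta_k^6\bigl(\|\mF\vz_0^k\|^2 + \sigma^2\bigr),
\]
with $C_{\mathrm r}$ polynomial in $n,L,M,\rho$. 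Establishing this requires carefully tracking the cubic Taylor remainder across all $2n$ micro-steps of the flip-flop pass, using \Cref{asmp:smoothness}(ii) to control iterate drift and invoking \Cref{asmp:bounded-variance} to convert occurrences of $\mF_i - \mF$ into factors of $(\rho\|\mF\vz_0^k\| + \sigma)$.

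Next I derive the standard exact-EG one-step bound from star-monotonicity and $L$-smoothness,
\[
  \|\mathcal{T}_{\tilde\eta_k}(\vz_0^k) - \vz^*\|^2 \;\leq\; \|\vz_0^k-\vz^*\|^2 - (1 - L^2\tilde\eta_k^2)\,\tilde\eta_k^2\,\|\mF\vz_0^k\|^2,
\]
by expanding $\|\vz_0^k - \tilde\eta_k\mF\vu^k - \vz^*\|^2$ with $\vu^k \coloneqq \vz_0^k-\tilde\eta_k\mF\vz_0^k$, using $\inprod{\mF\vu^k}{\vu^k-\vz^*}\geq 0$, and bounding $\|\mF\vu^k - \mF\vz_0^k\|\leq L\tilde\eta_k\|\mF\vz_0^k\|$. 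Combining this with the splitting $\|\mathcal{T}_{\tilde\eta_k}(\vz_0^k)+\veps_k-\vz^*\|^2 \leq (1+\gamma_k)\|\mathcal{T}_{\tilde\eta_k}(\vz_0^k)-\vz^*\|^2 + (1+\gamma_k^{-1})\|\veps_k\|^2$ at $\gamma_k \propto \beta_k^3$, and then taking conditional expectations, yields
\[
  \E\bigl[\|\vz_0^{k+1}-\vz^*\|^2 \bigm| \vz_0^k\bigr] \;\leq\; (1 + C_1\beta_k^3)\|\vz_0^k-\vz^*\|^2 - c\,\tilde\eta_k^2\,\|\mF\vz_0^k\|^2 + C_2\,\beta_k^3\,\sigma^2,
\]
once the $\beta_k^6\|\mF\vz_0^k\|^2$ portion of the residual is absorbed into the EG progress $\tilde\eta_k^2\|\mF\vz_0^k\|^2 = n^2\beta_k^2\|\mF\vz_0^k\|^2$, which is valid for $\beta_k$ sufficiently small.

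Finally, the stepsize schedule $\beta_k = \gO(1/(k^{1/3}\log k))$ makes $\sum_k\beta_k^3$ convergent, so the multiplicative factor $A_K \coloneqq \prod_{k<K}(1+C_1\beta_k^3)$ stays uniformly bounded. Dividing the recursion by $A_k$ and telescoping yields
\[
  c'\sum_{k=0}^{K}\tilde\eta_k^2\,\E\|\mF\vz_0^k\|^2 \;\leq\; \|\vz_0^0-\vz^*\|^2 + C_2\,\sigma^2\sum_{k=0}^{K}\beta_k^3 \;=\; \gO(1).
\]
An integral estimate gives $\sum_{k=1}^{K}\beta_k^2 \asymp K^{1/3}/(\log K)^2$, and therefore $\sum_k\tilde\eta_k^2 \asymp n^2 K^{1/3}/(\log K)^2$; dividing finally yields $\min_{0\leq k\leq K}\E\|\mF\vz_0^k\|^2 \leq \gO((\log K)^2/K^{1/3})$, as claimed.
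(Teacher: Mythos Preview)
Your overall architecture matches the paper's proof almost exactly: decompose an epoch as exact EG plus a cubic residual, apply the standard star-monotone EG descent to the exact part, use Young's inequality with $\gamma_k \propto \beta_k^3$, and telescope with $\beta_k = \gO(1/(k^{1/3}\log k))$ so that $\sum_k \beta_k^3 < \infty$ while $\sum_k \beta_k^2 \asymp K^{1/3}/(\log K)^2$.

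There is, however, one genuine gap in the residual step. The conditional variance bound you target,
\[
  \E\bigl[\|\veps_k\|^2 \bigm| \vz_0^k\bigr] \leq C_{\mathrm r}\,\beta_k^6\bigl(\|\mF\vz_0^k\|^2 + \sigma^2\bigr),
\]
is not what careful tracking actually yields. The second-order Taylor remainder $\mF_j\vw_j - \mF_j\vz_0 - D\mF_j(\vz_0)(\vw_j-\vz_0)$ is controlled via \Cref{asmp:smoothness}(ii) by $\tfrac{M}{2}\|\vw_j-\vz_0\|^2$, and since $\|\vw_j - \vz_0\| \asymp \beta_k j\,\|\mF\vz_0^k\|$ (plus noise), this contributes a term of order $\beta_k^3 n^3 M\,\|\mF\vz_0^k\|^2$ to $\|\veps_k\|$ itself, hence $\beta_k^6 n^6 M^2\,\|\mF\vz_0^k\|^4$ to $\E[\|\veps_k\|^2 \mid \vz_0^k]$. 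The paper's bound (\Cref{thm:rk2-ffa}) is accordingly
\[
  \E\bigl[\|\veps_k\|^2 \bigm| \vz_0^k\bigr] \leq \eta_k^6 n^6 C_{\textsf{2A}}\|\mF\vz_0^k\|^2 + \eta_k^6 n^6 D_{\textsf{2A}}\|\mF\vz_0^k\|^4 + \eta_k^6 n^{5} V_{\textsf{2A}},
\]
with $D_{\textsf{2A}} \propto M^2$. Your absorption step only handles the $\beta_k^6\|\mF\vz_0^k\|^2$ piece; the quartic term cannot be absorbed into $\tilde\eta_k^2\|\mF\vz_0^k\|^2$ without an \emph{a priori} uniform bound on $\|\mF\vz_0^k\|$.

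The paper closes this gap with a separate ingredient you are missing: a \emph{deterministic} (not merely in-expectation) residual bound $\|\veps_k\| \leq \eta_k^3 n^3(C_{\textsf{1A}}\|\mF\vz_0^k\| + D_{\textsf{1A}}\|\mF\vz_0^k\|^2 + V_{\textsf{1A}})$ is first established, and then an inductive argument (\Cref{prop:bounded-iterates-m}) uses it together with $\sum_k \eta_k^3 < \infty$ to show $\sup_k \|\mF\vz_0^k\|^2$ is finite. Only after this boundedness is in hand can the quartic term be converted to a quadratic one and your absorption go through. You should add this step to your plan.
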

    For the full statement of the theorem and its proof, see \Cref{sappx:meta-analysis}.
We note that, although \Cref{thm:cc-ffa},
and also \Cref{thm:scsc-ffa} below, are stated specifically for \segffa,
our analyses show that
both theorems can be applied to
any method that achieves the second-order matching in terms of \Cref{thm:ffa-cubic-error}.

The reduced error also shows a gain in the rate of convergence under the strongly monotone setting. This aligns with the intuition that error hinders convergence, hence having a smaller error 
is beneficial.

\begin{thm}\label{thm:scsc-ffa}
    Suppose that $\mF$ is $\mu$-strongly monotone with $\mu>0$ and 
    \Cref{asmp:smoothness} holds. 
    Then, there exists a choice of $\eta > 0$ such that, 
    when \segffa{} is run for $K$ epochs with constant stepsizes $\beta_k = \eta$ and $\alpha_k =\nicefrac{\eta}2$, 
    for some constant $\omega$ independent of $\eta$, the iterates generated by \segffa{} achieves the bound 
\[ 
    \expt \norm{\vz_0^K - \vz^*}^2 \leq \exp\left( - \frac{1}{2} \mu \omega n K \right)  \norm{\vz_0^{0} - \vz^*}^2 + \gO\left(\frac{\left(\log(n^{1/4} K)\right)^{4}}{ n  K^{4}} \right) .  
\] \end{thm}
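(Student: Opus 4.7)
The strategy is to leverage \Cref{thm:ffa-cubic-error}, which shows that one epoch of \segffa{} approximates a single step of deterministic EG with effective stepsize $h := \eta n$ up to a cubic error. Concretely, I would write the per-epoch dynamics as
\[
\vz_0^{k+1} = T_{\mathrm{EG}}(\vz_0^k) + e_k,
\]
where $T_{\mathrm{EG}}(\vz) := \vz - h\,\mF(\vz - h\mF\vz)$ and the residual $e_k$ is controlled by $C\eta^3$ (modulo problem constants), with $e_k$ inheriting its randomness from the sampling permutation $\tau_k$.

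The first move is to invoke the standard contraction property of deterministic EG under strong monotonicity: for $h \le c_0/L$, one has $\norm{T_{\mathrm{EG}}(\vz) - \vz^*}^2 \le (1 - c_1 \mu h)\norm{\vz - \vz^*}^2$. Combining this with the above decomposition and applying Young's inequality with parameter $s = \Theta(\mu h)$ to absorb the cross term yields the one-step recursion
\[
D_{k+1} \le (1 - c_2 \mu h)\,D_k + \frac{C' \eta^6}{\mu h},
\]
where $D_k := \expt \norm{\vz_0^k - \vz^*}^2$ and the $\eta^6$ arises from $\expt \norm{e_k}^2$, controlled via the $M$-smoothness in \Cref{asmp:smoothness} together with the variance bound \eqref{eqn:variance-control} (which is automatic in the strongly monotone regime).

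Unrolling this linear recursion and summing the resulting geometric series gives
\[
D_K \le \exp(-c_2 \mu h K)\,D_0 + \frac{C'' \eta^4}{\mu^2 n^2}.
\]
Finally I would tune $\eta$: forcing the noise term $\eta^4/n^2$ to lie at the order $\log^4(n^{1/4}K)/(nK^4)$ dictates $\eta \sim n^{1/4}\log(n^{1/4}K)/K$, after which $h = \eta n \lesssim 1/L$ is verified for $K$ sufficiently large (the small-$K$ regime is absorbed into the constants), and the claimed bound is read off, with $\omega$ identified as the resulting contraction constant.

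The hard part will be producing a clean bound of the form $\expt \norm{e_k}^2 \le C\eta^6$ from \Cref{thm:ffa-cubic-error}. The residual $e_k$ absorbs all cubic-and-higher Taylor remainders together with the permutation-induced biases, so I must carefully reuse the Lipschitz-Hessian bound on each $\mF_i$ and the variance control \eqref{eqn:variance-control} to ensure the bound is genuinely $\gO(\eta^6)$, rather than $\gO(\eta^6)$ times a potentially unbounded quantity such as $\norm{\vz_0^k - \vz^*}^2$ or $\norm{\mF \vz_0^k}^2$ that would contaminate the contraction. Once this noise bound is in place, the stability argument for the linear recursion is routine and the optimization of $\eta$ follows.
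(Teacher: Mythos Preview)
Your high-level strategy matches the paper's: decompose an epoch as an EG step plus a residual $\vr^k$, use strong-monotonicity to get a contraction on $\norm{\vz_0^k-\vz^*}^2$, and absorb $\norm{\vr^k}^2$ via Young with parameter $\Theta(\mu h)$. Two concrete issues, however, make the proposal as written fail.

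First, the ``clean'' bound $\expt\norm{e_k}^2 \le C\eta^6$ that you hope for is not what holds, and the paper does not attempt to prove it. The actual bound (see \Cref{thm:meta-error-bound} with $a=3$) is
\[
\expt\!\left[\norm{\vr^k}^2 \,\middle|\, \vz_0^k\right] \le \eta^6 n^6\bigl(C_2\norm{\mF\vz_0^k}^2 + D_2\norm{\mF\vz_0^k}^4\bigr) + \eta^6 n^{5} V_2,
\]
so the residual is genuinely iterate-dependent (through $\norm{\mF\vz_0^k}^2$ and $\norm{\mF\vz_0^k}^4$). The paper handles this not by removing the dependence but by a separate a~priori boundedness argument: via \Cref{prop:sufficient-norm-decrease} and \Cref{lem:appx-bounded-iterates} one shows inductively that $\norm{\mF\vz^k}$ stays below a fixed constant for all $k$, which turns $\norm{\mF\vz^k}^4$ into $\text{const}\cdot\norm{\mF\vz^k}^2$. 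The surviving $\norm{\mF\vz^k}^2$-term is then absorbed not into $D_k$ but into the $-\eta^2 n^2\norm{\mF\vz^k}^2$ term that the descent inequality (\Cref{prop:descent-lemma}) already provides on the left. Your plan to ``ensure the bound is genuinely $\gO(\eta^6)$'' will not go through; you need this two-step detour.

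Second, your $n$-accounting is off, which propagates to the wrong stepsize. Reading \Cref{thm:ffa-cubic-error} as $\norm{e_k}=\gO(\eta^3)$ hides the $n$-dependence in the constant: the precise bound is $\gO((\eta n)^3)$, and the \emph{variance} of the squared residual carries an extra $1/n$ factor from shuffling (the $n^{5}$ rather than $n^{6}$ above). After the geometric sum the noise floor is $\eta^{4} n^{3}/\mu^2$, not $\eta^4/(n^2\mu^2)$. Balancing $\eta^4 n^3$ against $\log^4(n^{1/4}K)/(nK^4)$ gives $\eta \asymp \log(n^{1/4}K)/(\mu n K)$, the paper's choice \eqref{eqn:log-over-lin-stepsize}, not your $\eta \sim n^{1/4}\log/K$. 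With the latter, $h=\eta n$ is not $\gO(1/L)$ unless $K$ is very large in $n$, and the exponential term no longer matches the stated form $\exp(-\tfrac12\mu\omega n K)$.
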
  
    
\Cref{thm:scsc-ffa} actually stems from a unified analysis that encompasses all the shuffling-based SEG methods introduced in this paper, including \segrr{} and \segff{}. See \Cref{sappx:sm-analysis} for the details. 

Notice the exponent $4$ of the number of epochs $K$ in the convergence rate, which is twice as large as the exponent $2$ of \sgdarr{} and %
\segrr{}.
In fact, this gain in the rate of convergence turns out to be fundamental. 
As we show in the following theorem, the theoretical lower bounds of convergence for \sgdarr{} and \segrr{} with constant stepsize are both $\Omega(\nicefrac{1}{nK^3})$. This exhibits that there is a \emph{provable gap} between those methods and \segffa{}, which attains $\tilde{\gO}(\nicefrac{1}{nK^{4}})$.

\begin{thm}
\label{thm:scsc-lower-bound}
    Suppose $n \geq 2$. For both \sgdarr{} with constant stepsize $\alpha_k = \alpha > 0$ and \segrr{} with constant stepsize $\alpha_k = \alpha > 0$, $\beta_k = \beta > 0$,
    there exists a $\mu$-strongly monotone minimax problem $f(\vz) = \frac{1}{n} \sum_{i=1}^n f_i(\vz)$ with $\mu>0$ such that regardless of stepsizes, we have
    \begin{equation*}
        \E \left [\norm{\vz^K_0 - \vz^*}^2 \right ] 
        = 
        \begin{cases}
        \Omega \left ( 
        \frac{\sigma^2}{L \mu n K}
        \right )    & \text{ if }K \leq L/\mu,  \\
        \Omega \left ( 
        \frac{L\sigma^2}{\mu^3 n K^3}
        \right )    & \text{ if }K > L/\mu. \\
        \end{cases}
    \end{equation*}
\end{thm}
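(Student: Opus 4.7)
My plan is to exhibit a single quadratic minimax instance on which the iterations of \sgdarr{} and \segrr{} are affine in the iterate, and then derive a closed-form expression for $\E[\|\vz_0^K - \vz^*\|^2]$ that can be analyzed uniformly over all stepsizes. Concretely, take the scalar minimax problem with components
\[
f_1(x,y) = \tfrac{\mu}{2}(x^2-y^2) + s x,\quad f_2(x,y) = \tfrac{\mu}{2}(x^2-y^2) - s x,\quad f_i(x,y) = \tfrac{\mu}{2}(x^2-y^2) \ (i \geq 3),
\]
so that $\mF(\vz) = \mu\vz$, $\vz^* = \zero$, each $f_i$ is $\mu$-smooth (giving $L = \mu$), and \Cref{asmp:bounded-variance} holds with $\sigma^2 = 2s^2/n$. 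The $y$-component evolves deterministically (contracting as $(1-\alpha\mu)$ per step), so $\E[\|\vz_0^K - \vz^*\|^2] \geq \E[(x_0^K)^2]$ and it suffices to track the $x$-component. For the regime $L > \mu$ needed by the first case of the theorem, I would replace the Hessians of $f_1,f_2$ by $L$ and $2\mu-L$: this preserves $\mu$-strong monotonicity of $\mF$ and $L$-smoothness of each $f_i$, at the cost of introducing a non-zero fixed point into the epoch-mean map.

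By linearity, one \sgdarr{} epoch with permutation $\tau_k$ reduces (in $x$) to $x_0^{k+1} = (1-\alpha\mu)^n x_0^k + \xi_k$, where
\[
\xi_k \;=\; -\alpha s\bigl[(1-\alpha\mu)^{n-p_1} - (1-\alpha\mu)^{n-p_2}\bigr]
\]
depends only on the positions $p_j = \tau_k^{-1}(j)$ of the two non-trivial components. Swap symmetry gives $\E[\xi_k]=0$, and a direct computation together with a Taylor expansion in $\alpha\mu$ yields $\E[\xi_k^2] = \Theta(\alpha^4 \mu^2 \sigma^2 n^3)$ for $\alpha\mu n$ small. Chaining across independent epochs,
\[
\E[(x_0^K)^2] \;=\; (1-\alpha\mu)^{2nK}(x_0^0)^2 \;+\; \E[\xi^2]\cdot\frac{1 - (1-\alpha\mu)^{2nK}}{1 - (1-\alpha\mu)^{2n}}.
\]
Setting $(x_0^0)^2 = \Theta(\sigma^2/(\mu^2 n K^3))$, I split the analysis by stepsize. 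If $\alpha\mu n K \leq 1$, the factor $(1-\alpha\mu)^{2nK}$ is bounded below by an absolute constant (e.g.\ $e^{-2}$), so the initial-condition term alone gives $\Omega(\sigma^2/(\mu^2 n K^3))$. If $\alpha\mu n K > 1$ (so $\alpha \geq 1/(\mu n K)$), then $1-(1-\alpha\mu)^{2n} \leq 2n\alpha\mu$ and the noise sum is at least a constant multiple of $\alpha^3 \mu n^2 \sigma^2 \geq \sigma^2/(\mu^2 n K^3)$. Boundary cases $\alpha\mu \geq 1$ are easy: either the iterates diverge ($\alpha\mu \geq 2$), or one epoch already contributes a variance dominating the target bound. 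For $L = \mu$, this matches the claimed $\Omega(L\sigma^2/(\mu^3 n K^3))$ in the second regime.

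For \segrr{} on the same problem, the linear recursion becomes $x_{j+1} = \gamma x_j - \beta(1-\alpha\mu) c_{\tau(j+1)}$ with $\gamma = 1 - \beta\mu(1-\alpha\mu)$, which is structurally identical to \sgdarr{} after the substitutions $(1-\alpha\mu) \mapsto \gamma$ and $\alpha \mapsto \beta(1-\alpha\mu)$; the same case analysis, now carried out jointly over $(\alpha,\beta)$, yields the same lower bound. The main obstacle is that the lower bound must hold \emph{uniformly} over all positive stepsize choices, which is precisely what forces the two-case split and the particular scaling of the initial point. The first regime $K \leq L/\mu$ is vacuous when $L = \mu$, so to prove $\Omega(\sigma^2/(L\mu n K))$ non-trivially one must switch to the asymmetric $(L,2\mu-L)$ construction; the most delicate technical step is then to carry the non-zero bias of the epoch-mean map through the unrolled recursion and balance it against the variance sum and the initial-condition decay.
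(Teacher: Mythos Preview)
Your outline captures the right mechanism (an affine per-epoch recursion on a noisy coordinate, with variance accumulation balanced against contraction), but there are three genuine gaps that prevent it from proving the theorem as stated.

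\textbf{(1) $K$-dependent initialization.} You set $(x_0^0)^2 = \Theta(\sigma^2/(\mu^2 n K^3))$ to cover the small-stepsize regime $\alpha\mu nK \leq 1$. The theorem, however, asks for a \emph{single} instance (including initial point) on which the bound holds for all $K$; a $K$-dependent initial point proves only the weaker ``for each $K$, there exists an instance'' version. The paper avoids this by using several decoupled coordinates with $K$-independent initializations: a $y$-coordinate with curvature $\mu$ and $y_0^0 = \sigma/\sqrt{L\mu}$ handles the case $\beta \leq 1/(\mu nK)$ by contracting too slowly; an $x_1$-coordinate with curvature $L$ and $x_{0,1}^0 = \sigma/\sqrt{L\mu}$ handles $\alpha > 1/L$ by divergence; and a noise-accumulating coordinate $x_2$ with $x_{0,2}^0 = 0$ handles the remaining regimes. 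Your construction already has a $y$-coordinate, so you could have used it for the small-stepsize case instead of a $K$-dependent $x_0^0$ --- but then you would also need a separate $L$-curvature coordinate, which brings you to the paper's construction.

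\textbf{(2) The $L=\mu$ restriction.} Since all your components share Hessian $\mu I$, you have $L=\mu$ and the regime $K \leq L/\mu$ collapses to $K=1$. You note this and sketch an asymmetric $(L, 2\mu-L)$ fix, but do not carry it out; in particular the ``most delicate technical step'' you flag (bias in the epoch map) is precisely where the paper's multi-coordinate construction pays off by keeping each coordinate's recursion clean. The paper's construction has separate coordinates with curvatures $L$, $L/2$, and $\mu$, and for \sgdarr{} simply invokes the existing \sgdrr{} lower bound of Safran--Shamir directly.

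\textbf{(3) The variance estimate only covers the small-$\alpha$ subregime.} Your $\E[\xi_k^2] = \Theta(\alpha^4\mu^2\sigma^2 n^3)$ relies on a Taylor expansion valid only for $\alpha\mu n$ small, and your case split then jumps from $\alpha\mu nK \leq 1$ to ``boundary cases $\alpha\mu \geq 1$''. This leaves the entire range $1/(\mu nK) < \alpha$ with $\alpha\mu n \gtrsim 1$ (and $\alpha\mu < 1$) uncovered --- exactly the regime that yields the $\Omega(\sigma^2/(L\mu nK))$ bound when $K \leq L/\mu$. The paper handles both subregimes at once via Lemma~1 of Safran (2020), which gives the two-sided bound $\E[\Phi^2] \geq c\min\{1+1/\nu,\, n^3\nu^2\}$ and is applied separately in its Cases~3 and~4. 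With only two sign-flipping components out of $n$ (rather than the $n/2$ vs.\ $n/2$ split), you would need to redo that lemma; it is not obvious the same $1/\nu$ branch survives.
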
 

\begin{proof}
    The full statement and the proof are presented in \Cref{sec:appx:lower-bounds}. 
\end{proof}

\section{Experiments} \label{sec:experiments-shortlist}
We consider 
randomly generated quadratic problems of the form \begin{equation}\label{eqn:experiment-cc}
\min_{\vx \in \sR^{d_x}}\!\max_{\vy \in \sR^{d_y}} \: 
\frac{1}{n} 
\sum_{i=1}^{n} \ 
\begin{bmatrix}
    \vx \\ \vy
\end{bmatrix}^{\!\top}\! \!
\begin{bmatrix}
    \mA_i \! & \! \mB_i \\ \mB_i^\top \! & \! -\mC_i
\end{bmatrix} \!
\begin{bmatrix}
    \vx \\ \vy
\end{bmatrix} - \vt_i^\top \!
\begin{bmatrix}
    \vx \\ \vy
\end{bmatrix}\!.
\end{equation} 
In particular, we sample the random components 
so that the full objective is 
either monotone %
or strongly monotone, respectively, while each of the components may be nonmonotone. For the exact descriptions on how we constructed the problems, see \Cref{appx:problem-sampling}. 

\begin{figure}[tb]
    \centering
    \phantom{.} \hfill
    \includegraphics[height=5cm]{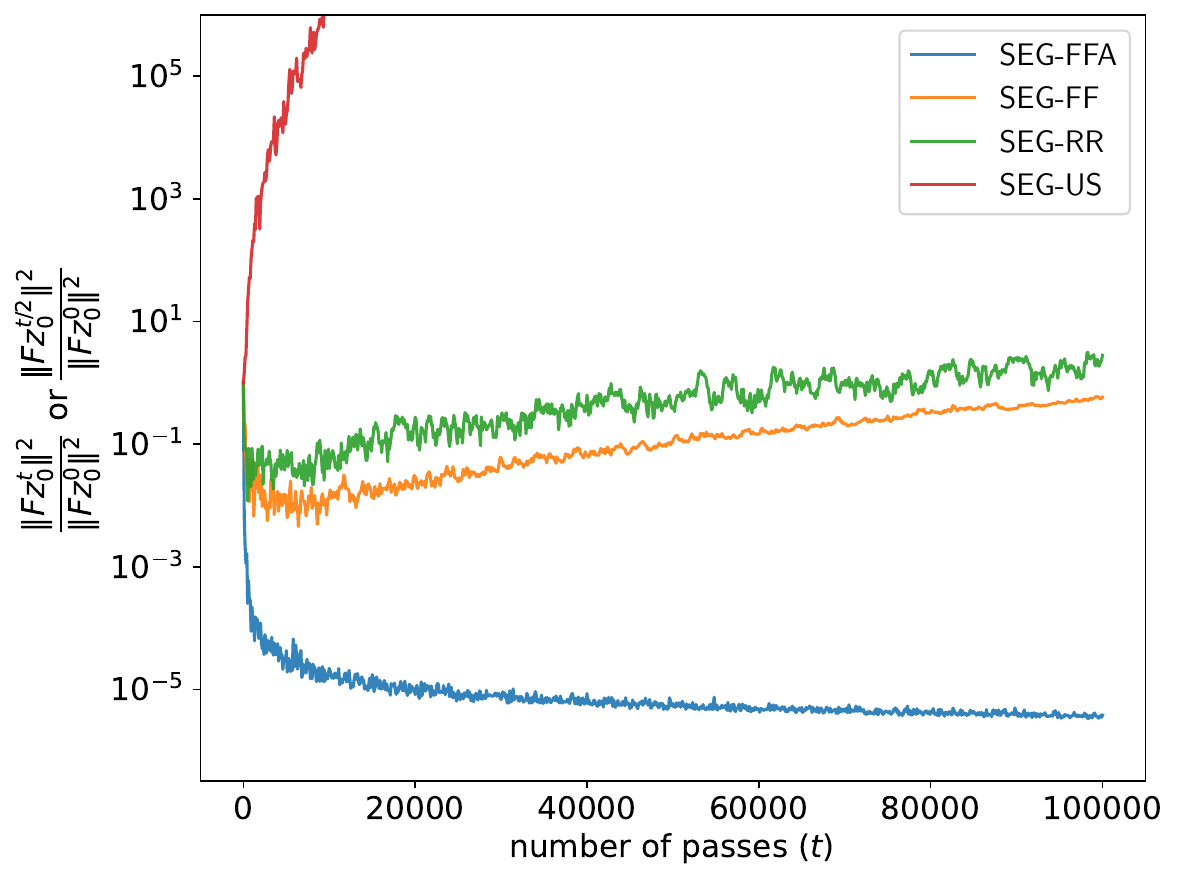}
    \hfill
    \includegraphics[height=5cm]{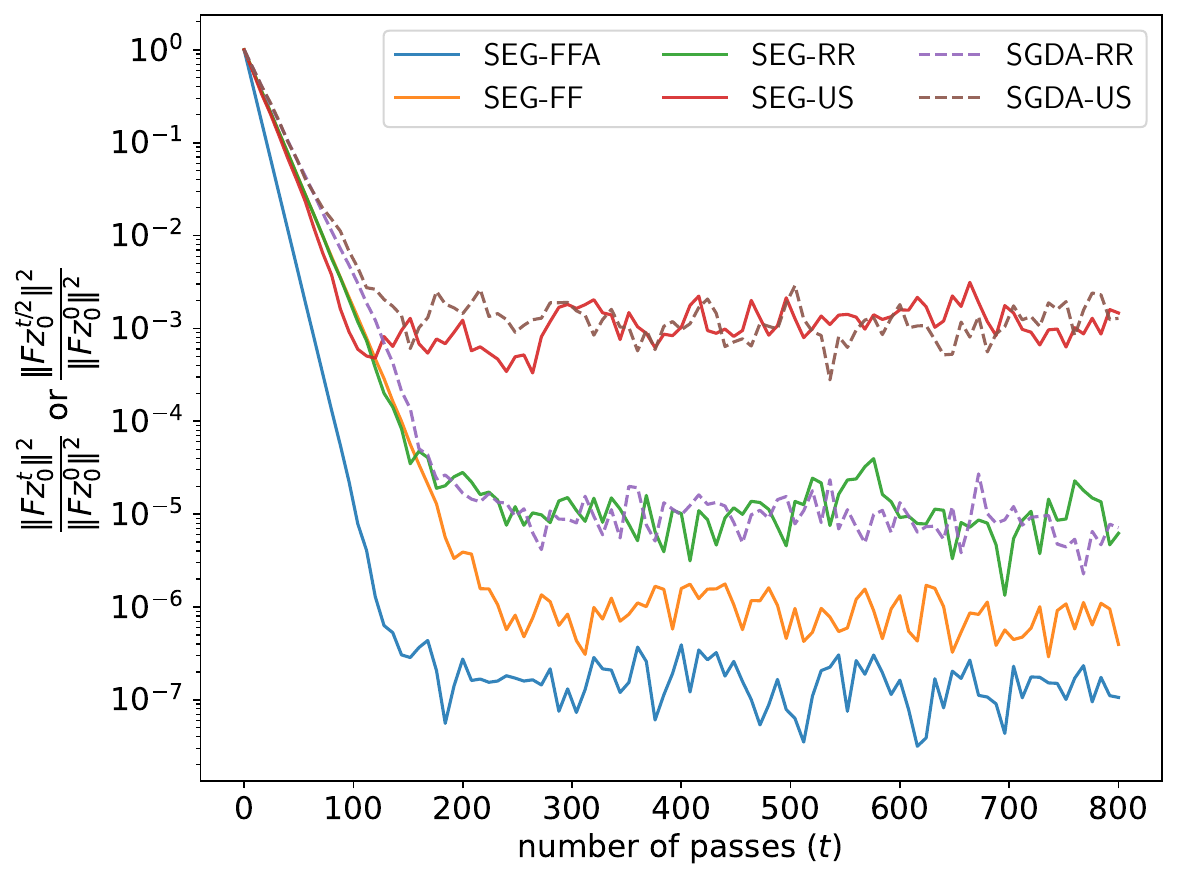}
    \hfill \phantom{.}
    \caption{Experimental results on the (left) monotone and (right) strongly monotone examples, comparing the variants of SEG. For a fair comparison, we take the number of passes over the full dataset as the abscissae. 
    In other words, we plot $\nicefrac{{\|{\mF \vz_0^{t/2}}\|^2}}{\|{\mF \vz_0^0}\|^2}$ for \segffa{} and \segff{}, as they pass through the whole dataset twice every epoch, and $\nicefrac{{\|{\mF \vz_0^{t}}\|^2}}{\|{\mF \vz_0^0}\|^2}$ for the other methods, as they pass once every epoch. }
    \label{fig:cc-shortlist}
\end{figure}

\paragraph{Monotone Case} 

\looseness=-1
We ran the experiment on $5$ random instances of \eqref{eqn:experiment-cc} with the stepsizes scheduled as $\eta_k = \nicefrac{\eta_0}{(1+k/10)^{0.34}}$ where $\eta_0 = \min\{0.01, \frac{1}{L}\}$ for \segffa, and $\alpha_k = \beta_k = \eta_k$ for \segus, \segrr, and \segff. 
The exponent $0.34$ is to ensure a sufficient decay rate required by \Cref{thm:cc-ffa}, and the convergence of \segffa{} under such a stepsize scheduling is validated in \Cref{rmk:poly-stepsize-validation}. 
The value of $\eta_0$ is, however, a heuristically determined small number. 
The results of the geometric mean over the $5$ runs
are plotted in \Cref{fig:cc-shortlist}.
As expected by our theory,
\segffa{} successfully shows convergence, while all of \segff, \segrr, and \segus{} diverge in the long~run. 

\paragraph{Strongly monotone case}

Along with the variants of SEG, we also compare the performances of \sgdarr{} and \sgdaus{}. 
We ran the experiment on $5$ random instances of \eqref{eqn:experiment-cc} with stepsizes $\eta_k = 0.001$, and the results
are plotted in \Cref{fig:cc-shortlist}. Additional results obtained from using other stepsizes can be found in \Cref{appx:ssec-scsc-longlist}. 
We again observe an agreement between the empirical results and our theory; 
\segffa{} eventually finds the point with the smallest gradient norm among the methods that are considered. 

Further additional experiments and ablation studies we have conducted can be found in~\Cref{appx:experiments}. 

\section{Conclusion}
We proposed \segffa{}, a new stochastic variant of EG that uses flip-flop sampling and anchoring. 
While being a minimal modification from the vanilla SEG, \segffa{} attains the crucial ``second-order matching'' property to the deterministic EG, leading to a two-fold improved convergence. 
On one hand, \segffa{} reaches
an optimum in the monotone setting, unlike many baseline methods such as \segus{}, \segrr{}, and \segff{} that diverge.
Moreover, in the strongly monotone setting, \segffa{} shows a faster convergence with a provable gap from the other methods.  

An interesting future direction would be to extend our work to %
more general nonconvex-nonconcave
problems, further exploring the potentials of the second-order matching technique.
It would also be appealing to further study whether it is possible to devise a new method that achieves second-order (or higher) matching without the anchoring step, potentially enhancing our understanding of the effectiveness of the matching technique.

\acksection

This work was supported in part by the National Research Foundation of Korea (NRF) grant funded by the Korea government (MSIT) (No.\@~RS-2019-NR040050). 
JC and DK acknowledge support from the NRF grant (No.\@~RS-2022-NR071715) funded by the Korea government (MSIT), and the Samsung Science \& Technology Foundation grant (No.\@~SSTF-BA2101-02). 
CY acknowledges support from the NRF grant (No.\@~RS-2023-00211352) funded by the Korea government (MSIT).

\bibliographystyle{plainnat}
\bibliography{biblio.bib}

\medskip

\appendix

\newpage 
\newpage 

\tableofcontents
\newpage

\section{Pseudocode of the Algorithms} \label{sec:pseudocode}

We present the pseudocode of the algorithms we consider in this paper in \Cref{alg:segrr,,alg:segff,,alg:segffa}, with the pseudocode of the with-replacement stochastic methods in \Cref{alg:segus}.  

\begin{algorithm}[H]
\caption{\segus{} / {}\sgdaus}
\label{alg:segus}
\begin{algorithmic}
    \STATE {\bfseries Input:} {The number of components $n$; stepsize sequences $\{\alpha_t\}_{t \geq 0}$ and $\{\beta_t\}_{t \geq 0}$}
    \STATE {\bfseries Initialize:} {$\vz_0 \in \sR^{d_1+d_2}$}
    \FOR{$t= 0, 1, \dots$} 
        \STATE {sample $i(t)$ uniformly from $\{1, \dots, n\}$}\;
        \IF{\sgdaus} 
            \STATE $\vz_{t+1} \gets \vz_t - \alpha_t \mF_{i(t)} \vz_t$\;
        \ELSIF{\segus} 
           \STATE $\vw_t \gets \vz_t - \alpha_t \mF_{i(t)} \vz_t$\;
           \STATE $\vz_{t+1} \gets \vz_t - \beta_t \mF_{i(t)} \vw_t$\;
        \ENDIF
    \ENDFOR
    \end{algorithmic}
\end{algorithm} 

\begin{algorithm}[H]
    \caption{\segrr{} / {}\sgdarr}
    \label{alg:segrr}
    \begin{algorithmic} 
    \STATE {\bfseries Input:} {The number of components $n$; stepsize sequences $\{\alpha_k\}_{k \geq 0}$ and $\{\beta_k\}_{k \geq 0}$}
    \STATE {\bfseries Initialize:} {$\vz_0^0 \in \sR^{d_1+d_2}$} 
    \FOR{$k= 0, 1, \dots$}
        \STATE {sample $\tau_k$ uniformly from $\gS_n$}\;
        \FOR {$i = 0$ {\bfseries to} $n-1$} 
            \IF{\sgdarr} 
                \STATE $\vz_{i+1}^k \gets \vz_i^k - \alpha_k \mF_{\tau_k(i+1)} \vz_i^k$\;
            \ELSIF{\segrr} 
                \STATE $\vw_i^k \gets \vz_i^k - \alpha_k \mF_{\tau_k(i+1)} \vz_i^k$\;
                \STATE $\vz_{i+1}^k \gets \vz_i^k - \beta_k \mF_{\tau_k(i+1)} \vw_i^k$\;
            \ENDIF
        \ENDFOR
        \STATE $\vz_{0}^{k+1} \gets \vz_{n}^k$\;
    \ENDFOR 
    \end{algorithmic}
\end{algorithm} 
     
\begin{algorithm}[H]
    \caption{\segff}
    \label{alg:segff}
    \begin{algorithmic} 
    \STATE {\bfseries Input:} {The number of components $n$; stepsize sequences $\{\alpha_k\}_{k \geq 0}$ and $\{\beta_k\}_{k \geq 0}$}
    \STATE {\bfseries Initialize:} {$\vz_0^0 \in \sR^{d_1+d_2}$} 
    \FOR{$k= 0, 1, \dots$}
        \STATE {sample $\tau_k$ uniformly from $\gS_n$}\;
        \FOR {$i = 0$ {\bfseries to} $n-1$} 
            \STATE $\vw_i^k \gets \vz_i^k - \alpha_k \mF_{\tau_k(i+1)} \vz_i^k$\;
            \STATE $\vz_{i+1}^k \gets \vz_i^k - \beta_k \mF_{\tau_k(i+1)} \vw_i^k$\;
        \ENDFOR
        \FOR {$i = n$ {\bfseries to} $2n-1$} 
            \STATE $\vw_i^k \gets \vz_i^k - \alpha_k \mF_{\tau_k(2n-i)} \vz_i^k$\;
            \STATE $\vz_{i+1}^k \gets \vz_i^k - \beta_k \mF_{\tau_k(2n-i)} \vw_i^k$\; 
        \ENDFOR
        \STATE $\vz_{0}^{k+1} \gets \vz_{2n}^k$\;
    \ENDFOR 
    \end{algorithmic}
\end{algorithm} 

\begin{algorithm}[H]
    \caption{\segffa}
    \label{alg:segffa}
    \begin{algorithmic} 
    \STATE {\bfseries Input:} {The number of components $n$; stepsize sequences $\{\eta_k\}_{k \geq 0}$}
    \STATE {\bfseries Initialize:} {$\vz_0^0 \in \sR^{d_1+d_2}$} 
    \FOR{$k= 0, 1, \dots$} 
        \STATE {sample $\tau_k$ uniformly from $\gS_n$}\;
        \FOR {$i = 0$ {\bfseries to} $n-1$} 
            \STATE $\vw_i^k \gets \vz_i^k - \frac{\eta_k}{2} \mF_{\tau_k(i+1)} \vz_i^k$\;
            \STATE $\vz_{i+1}^k \gets \vz_i^k - \eta_k \mF_{\tau_k(i+1)} \vw_i^k$\;
        \ENDFOR
        \FOR {$i = n$ {\bfseries to} $2n-1$} 
            \STATE $\vw_i^k \gets \vz_i^k - \frac{\eta_k}{2} \mF_{\tau_k(2n-i)} \vz_i^k$\;
            \STATE $\vz_{i+1}^k \gets \vz_i^k - \eta_k \mF_{\tau_k(2n-i)} \vw_i^k$\;
        \ENDFOR
        \STATE $\vz_{0}^{k+1} \gets \frac{\vz_0^k + \vz_{2n}^k}{2}$\;
    \ENDFOR 
    \end{algorithmic}
\end{algorithm} 

\section{Further Details and Discussions on the Related Works} \label{appx:more-related-works}

\subsection{A Summary of the Limitations of the Existing Works in the Monotone Setting} \label{appx:comparison-table}
In \Cref{tab:related-works}, we have summarized the settings considered in each of the previous works on stochastic variants of EG discussed in \Cref{sec:related-works}, and compare them with our settings.
Please note that we focus on the monotone $\mF$ setting in the table.
Entries that are worth further discussions are marked, with the corresponding explanations below.

\begin{table}[thpb] 
      \centering 
      \caption{Comparison of the underlying settings between ours and the existing works.}
\label{tab:related-works} 
\centerline{
\begin{tabular}{|c|c|c|c|c|c|} 
\hline %
    & \begin{tabular}[c]{@{}c@{}}{same} \\{sample?} \end{tabular}
    & \begin{tabular}[c]{@{}c@{}}{required} \\{batch size} \end{tabular} %
    & \begin{tabular}[c]{@{}c@{}}{bounded} \\{domain?} \end{tabular} %
    & \begin{tabular}[c]{@{}c@{}}{uniform} \\{gradient variance?} \end{tabular} %
    & \begin{tabular}[c]{@{}c@{}}{monotone} \\ {components?}\end{tabular} \\ \hline
Ours                     & \cmark & {\color{tabgreen} constantly 1}  & \nomark   & \nomark   & \nomark   \\ \hline
\citet{Cai22}\symbcntr{1}  & N/A & {\color{tabred}   increasing}    & \nomark   & \yesmark  & \nomark   \\ \hline
\citet{Chou23}\symbcntr{1}  & N/A & {\color{tabred}   increasing}    & \nomark   & \nomark   & \nomark   \\ \hline
\citet{Diak21}            & \xmark & {\color{tabred}   increasing}\symbcntr{3} & \nomark   & \yesmark  & \nomark   \\ \hline
\multirow{2}{*}{\citet{Gorb22SEG}}         & \cmark & {\color{tabred}   increasing}\symbcntr{3} & \nomark  & \nomark   & (star-)\yesmark\symbcntr{2} \\ \cline{2-6}
& \xmark & {\color{tabred}   increasing}\symbcntr{3} & \nomark  & \yesmark   & \nomark \\ \hline
\citet{Hsie20}\symbcntr{4}  & \xmark & {\color{tabgreen} constant}  & \nomark   & \nomark   & \nomark   \\ \hline
\citet{Judi11}            & \xmark & {\color{tabgreen} constant}      & \yesmark  & \yesmark  & \nomark   \\ \hline
\citet{Mish20SEG}         & \cmark & {\color{tabgreen} constant}      & \yesmark\symbcntr{5} & \nomark\symbcntr{5}  & \yesmark  \\ \hline
\citet{Peth23}            & \xmark & {\color{tabgreen} constant}      & \nomark   & \yesmark  & \nomark   \\ \hline
\end{tabular} 
}
\end{table}

\textbf{(\normalsymbcntr{1})}\quad 
The methods proposed in these works are not stochastic variants of EG in a strict sense. 
The method introduced by~\citet{Cai22} is rather a hybrid of EG and the Halpern iteration~\citep{Halp67}, while the method by~\citet{Chou23} is a stochastic version of the so-called \emph{optimistic gradient} method~\citep{Popo80}.
Hence, determining whether these methods fall into the category of same-sample methods or not is unnecessary.
Nonetheless, as these works focus on solving a similar problem to ours, we include them as references. 
        
\textbf{(\normalsymbcntr{2})}\quad {Under the assumptions that \citet{Gorb22SEG} make in their paper, one can show that each of the components must necessarily be (star-)monotone when the full $\mF$ is (star-)monotone. For further explanations on why this is the case, %
         see the following \Cref{sect:gorb22a-asmp}.} 
      
\textbf{(\normalsymbcntr{3})}\quad Yet, to be precise, what \citet{Gorb22SEG} have shown in the monotone case is that \segus{} can find an optimal solution if we \emph{increase} the batch size each iteration. If the batch size is fixed as a constant, then they were only able to show that the iterates will be {\emph{bounded} in the (star-)monotone setting. In particular, they did not provide a guarantee that the iterates will be necessarily convergent.} 
      
In fact, as we demonstrate with an explicit counterexample in \Cref{sec:appx:segus-monotone-lb}, if we do not increase the batch size each iteration, then it is possible to show that 
      \segus{} in the worst case will \emph{never} converge to an optimal point. 
      This nonconvergence result in fact holds for any \segus{} whose extrapolation and update stepsizes differ by a constant factor. Hence, it not only applies to \citep{Gorb22SEG}, but also to \citep{Diak21}.

\textbf{(\normalsymbcntr{4})}\quad 
\citet{Hsie20} show 
that \emph{independent-sample} \segus{} converges 
for stepsizes $\alpha_t,\beta_t$ decaying at certain different rates, 
but gives no convergence rates.
      
\textbf{(\normalsymbcntr{5})}\quad \citet{Mish20SEG} assume a uniformly bounded gradient variance in the strongly monotone case. In the monotone case, the bound they derived depends on the supremum of the gradient variance over the domain that is under consideration. Hence, in the monotone case, either the domain has to be (implicitly) bounded, or the uniform gradient variance assumption should be imposed.

\subsection{On the Assumptions Made by \texorpdfstring{\citet{Gorb22SEG}}{Gorbunov et al.}} \label{sect:gorb22a-asmp}
We would like to first clarify that in \citep{Gorb22SEG}, the requirement to increase the batch size is utilized only in the monotone setting: see, \textit{e.g.}, Corollary~E.4 therein.  

\citet{Gorb22SEG} use a generalized notion of $\mu$-strong monotonicity, namely the \emph{$\mu$-quasi strong monotonicity}, which requires the operator $\mF$ to satisfy \begin{equation}\label{eqn:def-quasi-monotone}
    \inprod{\mF \vz}{\vz - \vz^*} \geq \mu\norm{\vz - \vz^*}^2. 
\end{equation}  In the notion of $\mu$-quasi strong monotonicity they also allow $\mu \leq 0$. 
In particular, if \eqref{eqn:def-quasi-monotone} holds with $\mu = 0$, then $\mF$ is called a \emph{star-monotone} operator. 
In \Cref{appx:star-monotone} we further discuss on star-monotone operators.

Meanwhile, let us further elaborate on why in the (star-)monotone setting, the assumptions made by the authors of \citep{Gorb22SEG} lead to each component being star-monotone. In their work the authors require, as equation $\text{(10)}$ therein, that \begin{equation}\label{eqn:gorbunov-is-monotone} 
    \frac{1}{n} \sum_{i : \mu_i \geq 0} \mu_i + \frac{4}{n} \sum_{i : \mu_i < 0} \mu_i \geq 0.
\end{equation} Observe that this amounts to \begin{equation}\label{eqn:gorbunov-alter} 
\mu \coloneqq \frac{1}{n} \sum_{i=1}^n \mu_i \geq -\frac{3}{n} \sum_{i:\mu_i < 0} \mu_i = \frac{3}{n} \sum_{i:\mu_i < 0} \card{\mu_i}.
\end{equation} However, if any of $\mu_i$ is strictly negative, then the rightmost sum in \eqref{eqn:gorbunov-alter} becomes strictly positive, hence cannot be less than or equal to $\mu$ if $\mu = 0$. Therefore, the only possible case is when the rightmost sum is an empty sum. In other words, \eqref{eqn:gorbunov-is-monotone} can hold with $\mu = 0$ only when $\mu_i \geq 0$ for all $i$, so that each $\mF_i$ is star-monotone. We would like to remind the readers that our analyses, on the other hand, do not have any restrictions on the individual components.

\subsection{Finite Sum Structure vs. General Stochastic Setting} \label{appx:finite-sum-is-general}

The works mentioned in \Cref{sec:related-works} usually assume that we have access to a stochastic oracle that returns a stochastic estimator of $\mF$. 
Indeed, having a finite sum structure is a special case of having a stochastic oracle, as each $\mF_i$ can be seen as an estimator of $\mF$.   
One might then ask whether assuming the finite sum structure can help the works mentioned in \Cref{sec:related-works} overcome the mentioned limitations. 
We strongly believe that this is not the case. %
Recall \Cref{thm:rr-ff-bad}, where we have constructed an explicit counterexample that \segus, \segrr, and \segff{} all diverge. 
Because the set of problems with a finite sum structure is a subset of the set of problems with a stochastic oracle, the (counter-)example in \Cref{thm:rr-ff-bad} also works as an example that displays the nonconvergence of \segus, \segrr, and \segff{} in the general stochastic setting.
That is, a variant of SEG that only modifies the stepsizes and/or the sampling scheme into a without-replacement based one will suffer from nonconvergence, due to the counterexample in \Cref{thm:rr-ff-bad}.
It is also true that there are some methods that cannot exactly be classified as one of \segus, \segrr, or \segff, but this counterexample demonstrates that, unless explicitly proven otherwise, there is not a good reason to believe that the existing convergence analyses will be easily extended beyond the assumptions they are each based on.

\subsection{On the Claimed Convergence of \segrr{} in the Monotone Setting by \texorpdfstring{\citet{Emma24}}{Emmanouilidis et al.}} \label{sec:segrr-flaw}
Recently, a paper focusing on the study of \segrr{} \citep{Emma24} has been published. 
As we have briefly introduced in \Cref{tab:results} with a discussion in \Cref{sec:shuffling-not-enough}, the authors have established a convergence rate $\tilde{\gO}(\nicefrac{1}{nK^2})$ of \segrr{} in the strongly monotone setting, using an independent analysis of ours. 

On the other hand, the authors of \citep{Emma24} furthermore claim that \segrr{} is capable of finding an optimum in the monotone setting, which is seemingly contradictory to our analyses. 
We assert that this is not the case, as their proof, at least in their AISTATS 2024 version, seems to have a flaw. 

In establishing equation (85) in \citep{Emma24}, the authors claim that the inequality 
\[
\frac{1}{K} \sum_{k=0}^K \frac{1}{G^k} \expt\left[\lVert \mF(\vz_0^k)\rVert\right] \geq \expt\left[\left\lVert \mF\left(\frac{1}{K}\sum_{k=0}^{K} \frac{1}{G^k} \vz_0^k\right)\right\rVert^2\right]
\]
holds by Jensen's inequality, where $G \geq 6$ is a fixed constant. 
However, Jensen's inequality cannot be applied here, because not only $\lVert \mF(\cdot)\rVert^2$ is possibly nonconvex, but also the weights multiplied to the iterates, namely $\nicefrac{1}{KG^k}$, do not sum up to $1$. 
Hence, the “averaged” iterate is not in the form of a convex combination.
So, even if $\lVert \mF(\cdot)\rVert^2$ was convex, if we were to properly apply Jensen's inequality, at least the averaged iterate should be multiplied by $\frac{1}{\sum_{k=0}^K 1/G^k}$ instead of $\frac{1}{K}$. 
Yet then, the sum ${\sum_{k=0}^K \frac{1}{G^k}} \leq \frac{G}{G-1}$ is bounded above by a constant independent of $K$, and the right hand side of the equation right above (85) in \citep{Emma24} shall no longer be divided by $K$. Therefore, their claimed convergence is unobtainable. 

We would also like to remark that the linear decay rate of $1/G^k$ can make the series $\sum_{k=0}^\infty \frac{1}{G^k} \expt[\lVert \mF(\vz_0^k)\rVert^2]$ convergent even when $\expt[\lVert \mF(\vz_0^k)\rVert^2]$ grows exponentially as $k \to \infty$, as long as its rate of exponential growth is less than $G$. In particular, once their (85) is corrected, there is no contradiction with our divergence result in \Cref{thm:rr-ff-bad}.

\interdisplaylinepenalty=10000
\section{Useful Lemmata} \label{sec:toolbox}
\begin{lem}[Polarization identity] \label{lem:polarization}
    For any two vectors $\va$ and $\vb$, it holds that \begin{align*}
   2\inprod{\va}{\vb} &= \norm{\va}^2  + \norm{\vb}^2 - \norm{\va - \vb}^2 \\ 
   &= \norm{\va + \vb}^2 - \norm{\va}^2 - \norm{\vb}^2 . 
    \end{align*}
\end{lem}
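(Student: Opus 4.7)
The plan is to derive both identities by directly expanding the squared norm of a sum/difference using bilinearity of the inner product, and then rearranging. Since $\norm{\vu}^2 = \inprod{\vu}{\vu}$ for any vector $\vu$, both halves of the claim reduce to one line of algebra each, with no case analysis or auxiliary lemmas required.

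Concretely, for the first identity I would expand
\[
\norm{\va - \vb}^2 = \inprod{\va - \vb}{\va - \vb} = \norm{\va}^2 - 2\inprod{\va}{\vb} + \norm{\vb}^2,
\]
using bilinearity and symmetry of $\inprod{\cdot}{\cdot}$. Rearranging isolates $2\inprod{\va}{\vb} = \norm{\va}^2 + \norm{\vb}^2 - \norm{\va-\vb}^2$, which is the first claimed equality. For the second identity I would expand $\norm{\va + \vb}^2 = \norm{\va}^2 + 2\inprod{\va}{\vb} + \norm{\vb}^2$ analogously and rearrange to obtain $2\inprod{\va}{\vb} = \norm{\va + \vb}^2 - \norm{\va}^2 - \norm{\vb}^2$.

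There is essentially no obstacle here: the only ingredients are symmetry and bilinearity of the inner product together with the definition of the induced norm. The result holds in any real inner product space, so no assumption beyond what is implicit in the notation $\inprod{\cdot}{\cdot}$ and $\norm{\cdot}$ is needed, and the two displayed equalities can be presented as two independent one-line computations.
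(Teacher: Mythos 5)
Your proposal is correct and follows exactly the paper's own argument, which likewise derives both identities by expanding $\norm{\va \pm \vb}^2 = \norm{\va}^2 \pm 2\inprod{\va}{\vb} + \norm{\vb}^2$ and rearranging. No differences worth noting.
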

\begin{proof}  
    The identities are immediate from $\norm{\va \pm \vb}^2 = \norm{\va}^2 \pm 2\inprod{\va}{\vb} + \norm{\vb}^2$.
\end{proof}

\begin{lem}[Weighted AM-GM inequality] \label{lem:w-amgm-ineq}
    For any $\gamma > 0$ and two vectors $\va$ and $\vb$ in $\mathbb{R}^d$, \[
        2 \card{\inprod{\va}{\vb}} \leq \gamma \norm{\va}^2 + \frac{1}{\gamma} \norm{\vb}^2.     
    \]
\end{lem}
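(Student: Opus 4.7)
The plan is to derive this from the nonnegativity of a squared norm, which is the standard route for Young-type inequalities. Specifically, I would introduce the rescaled vectors $\sqrt{\gamma}\,\va$ and $\tfrac{1}{\sqrt{\gamma}}\,\vb$ (well-defined since $\gamma>0$), and use the identity $\|\vu \pm \vv\|^2 = \|\vu\|^2 \pm 2\inprod{\vu}{\vv} + \|\vv\|^2$, which is exactly the expansion invoked in the preceding \Cref{lem:polarization}.

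First, I would apply this to $\vu = \sqrt{\gamma}\,\va$ and $\vv = \tfrac{1}{\sqrt{\gamma}}\,\vb$ with a minus sign, obtaining
\[
0 \leq \left\|\sqrt{\gamma}\,\va - \tfrac{1}{\sqrt{\gamma}}\,\vb\right\|^2 = \gamma\,\|\va\|^2 - 2\inprod{\va}{\vb} + \tfrac{1}{\gamma}\,\|\vb\|^2,
\]
which rearranges to $2\inprod{\va}{\vb} \leq \gamma\,\|\va\|^2 + \tfrac{1}{\gamma}\,\|\vb\|^2$. Then I would repeat with the plus sign to get $-2\inprod{\va}{\vb} \leq \gamma\,\|\va\|^2 + \tfrac{1}{\gamma}\,\|\vb\|^2$. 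Combining the two bounds yields $2\,|\inprod{\va}{\vb}| \leq \gamma\,\|\va\|^2 + \tfrac{1}{\gamma}\,\|\vb\|^2$, which is the claim.

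There is no real obstacle: the lemma is essentially a two-line consequence of expanding a squared norm, so the only ``care'' needed is to handle both signs of $\inprod{\va}{\vb}$ so that the absolute value appears on the left-hand side. An equivalent one-shot alternative would be to apply the scalar AM-GM inequality $2st \leq s^2 + t^2$ with $s = \sqrt{\gamma}\,\|\va\|$ and $t = \tfrac{1}{\sqrt{\gamma}}\,\|\vb\|$ after invoking Cauchy--Schwarz $|\inprod{\va}{\vb}| \leq \|\va\|\,\|\vb\|$, but the squared-norm approach is self-contained and matches the style of \Cref{lem:polarization} immediately above.
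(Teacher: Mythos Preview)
Your proof is correct. It differs from the paper's argument, which proceeds coordinate-wise: the paper first bounds $2|\inprod{\va}{\vb}| \leq 2\sum_i |a_i b_i|$ via the triangle inequality, then applies the scalar AM-GM $2|a_i b_i| \leq \gamma a_i^2 + \tfrac{1}{\gamma} b_i^2$ term by term and sums. Your route instead works globally by expanding $\|\sqrt{\gamma}\,\va \mp \tfrac{1}{\sqrt{\gamma}}\,\vb\|^2 \geq 0$, avoiding any reference to coordinates. Both are equally short; your version has the mild advantage of being coordinate-free (so it works verbatim in any inner product space), while the paper's version makes the reduction to the scalar AM-GM explicit. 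Either is perfectly adequate here.
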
 
\begin{proof}  
    Notice that \begin{align*}
2 \card{\inprod{\va}{\vb}} &\leq 2\left(|a_1 b_1| + \dots + |a_d b_d|\right) \\ 
&\leq {\left(\gamma a_1^2 + \frac{b_1^2}{\gamma} \right) + \dots + \left(\gamma a_d^2 + \frac{b_d^2}{\gamma} \right)} = {\gamma \norm{\va}^2 + \frac{1}{\gamma} \norm{\vb}^2. } \qedhere
    \end{align*}  
\end{proof}

\begin{lem}[Young's inequality] \label{lem:young-ineq}
    For any $\gamma > 0$ and two vectors $\va$ and $\vb$,  \begin{equation} \label{eqn:young-general}
    \norm{\va+ \vb}^2 \leq (1+\gamma) \norm{\va}^2 + \left( 1+\frac{1}{\gamma} \right) \norm{\vb}^2.     
    \end{equation} In particular, as a special case where $\gamma = 1$, it holds that \begin{align}\label{eqn:young-two}
        \norm{\va+ \vb}^2 \leq 2 \norm{\va}^2 + 2 \norm{\vb}^2.     
    \end{align} 
\end{lem}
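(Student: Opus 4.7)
The plan is to reduce Young's inequality to the already-established weighted AM-GM inequality (Lemma earlier in the toolbox). The starting point will be to expand the squared norm of the sum using the standard bilinear expansion
\[
\norm{\va + \vb}^2 = \norm{\va}^2 + 2\inprod{\va}{\vb} + \norm{\vb}^2,
\]
which follows directly from the inner product being bilinear and symmetric, together with the identity $\norm{\vv}^2 = \inprod{\vv}{\vv}$.

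Next, I would control the cross term $2\inprod{\va}{\vb}$. Since $2\inprod{\va}{\vb} \leq 2|\inprod{\va}{\vb}|$, I can directly invoke the weighted AM-GM inequality with parameter $\gamma > 0$, which gives
\[
2\inprod{\va}{\vb} \leq 2|\inprod{\va}{\vb}| \leq \gamma \norm{\va}^2 + \tfrac{1}{\gamma} \norm{\vb}^2.
\]
Substituting this bound into the expansion and grouping the coefficients of $\norm{\va}^2$ and $\norm{\vb}^2$ immediately yields the claimed general inequality \eqref{eqn:young-general}.

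For the special case \eqref{eqn:young-two}, I would simply specialize the parameter $\gamma = 1$, which collapses both coefficients $(1+\gamma)$ and $(1+\nicefrac{1}{\gamma})$ to $2$. There is no real obstacle here; the only thing to be slightly careful about is citing the weighted AM-GM lemma correctly (which itself was proved by coordinatewise comparison and did not require any assumption on $\va, \vb$ other than being vectors in $\rr^d$), so the resulting chain of implications is fully rigorous and unconditional on $\va, \vb$. The entire proof should occupy only a few lines.
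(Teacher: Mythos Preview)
Your proposal is correct and follows essentially the same approach as the paper: expand $\norm{\va+\vb}^2 = \norm{\va}^2 + 2\inprod{\va}{\vb} + \norm{\vb}^2$, then apply the weighted AM-GM lemma to the cross term. The paper's proof is stated even more tersely, but the content is identical.
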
 \begin{proof}
    The left hand side of \eqref{eqn:young-general} is $\norm{\va}^2 + 2\inprod{\va}{\vb} + \norm{\vb}^2$. Applying \Cref{lem:w-amgm-ineq} then suffices.
\end{proof}

\begin{lem}\label{lem:young-cor} For any two vectors $\va$ and $\vb$, it holds that  \[
        \norm{\va - \vb}^2 \geq \frac{1}{2}\norm{\va}^2 - \norm{\vb}^2. 
    \]
\end{lem}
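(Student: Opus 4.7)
The plan is to deduce this directly from Young's inequality (Lemma~\ref{lem:young-ineq}), which was just proved immediately above. The key observation is to write $\va$ as the sum $(\va-\vb) + \vb$, then apply the special case \eqref{eqn:young-two}: this yields $\norm{\va}^2 \le 2\norm{\va-\vb}^2 + 2\norm{\vb}^2$, and a trivial rearrangement gives exactly the desired bound $\norm{\va-\vb}^2 \ge \tfrac12 \norm{\va}^2 - \norm{\vb}^2$.

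Concretely, first I would set up the decomposition $\va = (\va - \vb) + \vb$, then invoke \eqref{eqn:young-two} with the roles of the two vectors being $\va-\vb$ and $\vb$. After subtracting $2\norm{\vb}^2$ from both sides and dividing by $2$, the statement follows in one line. There is no real obstacle here: the lemma is a direct corollary of Young's inequality with $\gamma=1$, and it is essentially a named rearrangement that will be used later (presumably to control cross terms of the form $\norm{\va-\vb}^2$ from below by a \emph{positive} multiple of $\norm{\va}^2$, at the cost of the additive penalty $\norm{\vb}^2$).

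One might equivalently apply the general form \eqref{eqn:young-general} with an arbitrary $\gamma>0$ to obtain the sharper family of inequalities $\norm{\va-\vb}^2 \ge \tfrac{1}{1+\gamma}\norm{\va}^2 - \tfrac{1}{\gamma}\norm{\vb}^2$, but since the statement only claims the $\gamma=1$ version, the shortest route is to use \eqref{eqn:young-two} directly. No additional machinery, no case analysis, and no sign considerations are needed.
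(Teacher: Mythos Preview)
Your proposal is correct and is essentially identical to the paper's own proof: the paper also writes $\va = (\va - \vb) + \vb$, applies \eqref{eqn:young-two} to get $\norm{\va}^2 \leq 2\norm{\va-\vb}^2 + 2\norm{\vb}^2$, and rearranges.
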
 \begin{proof}
    From \eqref{eqn:young-two} it follows that \begin{align*}
        \norm{\va}^2 = \norm{(\va-\vb) + \vb}^2 \leq 2\norm{\va - \vb}^2  + 2\norm{\vb}^2.
    \end{align*} Simply rearranging the terms gives us the result.
\end{proof}

\begin{lem}[Generalized Young's inequality]
    For any nonnegative scalars $p_1, \dots, p_n$ such that $p_1 + \dots + p_n = 1$ and vectors $\va_1, \dots, \va_n$, it holds that  \[
    \norm{p_1 \va_1 + \dots + p_n \va_n}^2 \leq p_1 \norm{\va_1}^2 + \dots + p_n \norm{\va_n}^2.     
    \] In particular, setting $p_1 = \dots = p_n = \frac{1}{n}$ and multiplying both sides by $n^2$ yields \[
        \norm{\va_1 + \dots + \va_n}^2 \leq n\left( \norm{\va_1}^2 + \dots + \norm{\va_n}^2 \right) .     
        \] 
\end{lem}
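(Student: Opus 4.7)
The plan is to prove the weighted inequality directly by expanding the squared norm via bilinearity of the inner product, then controlling every cross term using the $\gamma = 1$ specialization of \Cref{lem:w-amgm-ineq}. The ``In particular'' statement will then follow by simple substitution.

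First, I would expand
\[
\Bigl\|\sum_{i=1}^n p_i \va_i\Bigr\|^2 = \sum_{i=1}^n p_i^2 \norm{\va_i}^2 + 2\sum_{1 \leq i < j \leq n} p_i p_j \inprod{\va_i}{\va_j}.
\]
Applying $2\inprod{\va_i}{\va_j} \leq \norm{\va_i}^2 + \norm{\va_j}^2$ (the $\gamma = 1$ case of \Cref{lem:w-amgm-ineq}) to each inner product and symmetrizing the sum over unordered index pairs, the cross-term contribution is at most $\sum_{i \ne j} p_i p_j \norm{\va_i}^2$. Combining with the diagonal $\sum_i p_i^2 \norm{\va_i}^2$ produces the full double sum
\[
\sum_{i,j = 1}^n p_i p_j \norm{\va_i}^2 = \Bigl(\sum_{i=1}^n p_i \norm{\va_i}^2\Bigr) \Bigl(\sum_{j=1}^n p_j\Bigr) = \sum_{i=1}^n p_i \norm{\va_i}^2,
\]
where the last equality uses $\sum_j p_j = 1$. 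This gives the weighted inequality. For the ``In particular'' part, I would set $p_i = \tfrac{1}{n}$ and multiply both sides by $n^2$, producing the unweighted form as stated.

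A much shorter alternative, which I would include as a remark, is to observe that $\vv \mapsto \norm{\vv}^2$ is convex (its Hessian is $2\mI \succeq \zero$) and to invoke Jensen's inequality directly. I would still prefer the expansion argument as the main proof, because it is completely self-contained within the elementary toolkit already developed in \Cref{sec:toolbox} and does not require external appeal to Jensen.

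There is no substantive obstacle here: the only bookkeeping points are the factor-of-two conversion between the sum over $i < j$ and the sum over $i \ne j$, and the clean factorization $\sum_{i,j} p_i p_j \norm{\va_i}^2 = \bigl(\sum_i p_i \norm{\va_i}^2\bigr) \bigl(\sum_j p_j\bigr)$ enabled by $\sum_j p_j = 1$.
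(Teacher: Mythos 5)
Your proof is correct, but it takes a genuinely different route from the paper. The paper proves the weighted inequality by induction on $n$: it peels off the last vector using \Cref{lem:young-ineq} with the specific parameter $\gamma = \frac{p_{n+1}}{1-p_{n+1}}$, renormalizes the remaining weights by $1-p_{n+1}$, and invokes the induction hypothesis; this forces a separate treatment of the degenerate case $p_{n+1}=1$. Your argument instead expands the squared norm in one pass, bounds each cross term with the symmetric $\gamma=1$ case of \Cref{lem:w-amgm-ineq}, and closes via the factorization $\sum_{i,j} p_i p_j \norm{\va_i}^2 = \bigl(\sum_i p_i \norm{\va_i}^2\bigr)\bigl(\sum_j p_j\bigr) = \sum_i p_i \norm{\va_i}^2$. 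Both bookkeeping points you flag (the conversion from the sum over $i<j$ to the sum over $i\neq j$, and the need for $p_ip_j \geq 0$ when multiplying the AM-GM bound) check out. What each approach buys: the paper's induction reuses its two-vector Young's inequality as a black box and keeps each step minimal, at the cost of the edge case and a renormalization step; your expansion is non-inductive, handles all weight configurations uniformly, and makes the role of $\sum_j p_j = 1$ completely transparent. The Jensen remark is also valid and is the standard one-line proof, though, as you note, it steps outside the paper's self-contained elementary toolkit.
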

\begin{proof}
    We use induction on $n$. If $n = 1$ then $p_1 = 1$, so there is nothing to show. For the inductive step, suppose that the statement holds for some $n \geq 1$. Say we are given nonnegative scalars $p_1, \dots, p_{n+1}$ such that $p_1+ \dots + p_{n+1}= 1$, and vectors $\va_1, \dots, \va_{n+1}$. For the moment, suppose that $p_{n+1} < 1$. Applying \Cref{lem:young-ineq} with $\gamma = \frac{p_{n+1}}{1-p_{n+1}}$ and using the induction hypothesis, we get
    \begin{align*}%
    \MoveEqLeft \norm{p_1 \va_1 + \dots + p_n \va_n + p_{n+1} \va_{n+1}}^2  \\ 
    &\leq \frac{1}{1-p_{n+1}}  \norm{p_1 \va_1 + \dots + p_n \va_n }^2 + \frac{1}{p_{n+1}} \norm{ p_{n+1} \va_{n+1}}^2 \\ 
    &= (1-p_{n+1})  \norm{\frac{p_1}{1-p_{n+1}} \va_1 + \dots + \frac{p_n}{1-p_{n+1}} \va_n }^2 + {p_{n+1}} \norm{ \va_{n+1}}^2 \\ 
    &\leq  p_1\norm{\va_1 }^2 + \dots + p_n \norm{ \va_n }^2 + {p_{n+1}} \norm{ \va_{n+1}}^2 
    \end{align*}  where in last line we used that $p_1 + \dots + p_n = 1-p_{n+1}$. Now, if $p_{n+1} = 1$, then we must have $p_1 = \dots = p_n = 0$, so the claimed inequality holds in this case also. This completes the proof.
\end{proof}

\begin{lem}\label{lem:hess-bound}
    Suppose that $\mF$ is $M$-smooth. Then for any $\vz$ and $\vw$ it holds that \[
    \norm{\mF\vw - \mF\vz - D\mF(\vz)(\vw-\vz) } \leq \frac{M}{2}\norm{\vw - \vz}^2.   
    \]
\end{lem}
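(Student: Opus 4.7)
The plan is to use the integral form of the fundamental theorem of calculus along the segment from $\vz$ to $\vw$, which is the standard approach for bounding the remainder of a first-order Taylor expansion when the derivative is Lipschitz. Concretely, I would define the auxiliary map $\varphi(t) \coloneqq \mF(\vz + t(\vw - \vz))$ on $t \in [0,1]$, so that $\varphi'(t) = D\mF(\vz + t(\vw-\vz))(\vw-\vz)$ and $\mF\vw - \mF\vz = \int_0^1 \varphi'(t)\,dt$.

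Using the trivial identity $D\mF(\vz)(\vw-\vz) = \int_0^1 D\mF(\vz)(\vw-\vz)\,dt$, I can write the quantity to be bounded as
\[
\mF\vw - \mF\vz - D\mF(\vz)(\vw-\vz) = \int_0^1 \bigl[D\mF(\vz + t(\vw-\vz)) - D\mF(\vz)\bigr](\vw-\vz)\,dt.
\]
Taking norms, pulling the norm inside the integral (the standard vector-valued Minkowski-type estimate), and bounding each factor yields
\[
\norm{\mF\vw - \mF\vz - D\mF(\vz)(\vw-\vz)} \leq \int_0^1 \norm{D\mF(\vz + t(\vw-\vz)) - D\mF(\vz)}\,\norm{\vw-\vz}\,dt.
\]

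Now I would invoke the $M$-smoothness assumption from \Cref{asmp:smoothness}(ii), which gives $\norm{D\mF(\vz + t(\vw-\vz)) - D\mF(\vz)} \leq M\,t\,\norm{\vw-\vz}$ for each $t \in [0,1]$. Substituting this into the integrand produces $M\,t\,\norm{\vw-\vz}^2$, and evaluating $\int_0^1 t\,dt = \tfrac{1}{2}$ delivers the claimed factor of $M/2$.

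There is essentially no serious obstacle here; the only minor subtlety is justifying that the derivative $D\mF$ exists and that the integral representation holds, which is standard since $M$-smoothness of $\mF$ implicitly assumes differentiability of $\mF$ and continuity of $D\mF$ along segments (so the fundamental theorem of calculus applies componentwise). The operator norm used for $D\mF$ in \Cref{asmp:smoothness}(ii) is the one induced by the Euclidean norm on $\rr^{d_1+d_2}$, which is exactly what is needed to pass from $\norm{D\mF(\cdot) - D\mF(\vz)}\,\norm{\vw-\vz}$ to a bound on the vector $[D\mF(\cdot) - D\mF(\vz)](\vw-\vz)$, so no additional care is required.
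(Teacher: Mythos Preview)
Your proposal is correct and essentially identical to the paper's proof: both use the fundamental theorem of calculus along the segment from $\vz$ to $\vw$, subtract the constant linearization $D\mF(\vz)(\vw-\vz)$ inside the integral, apply the $M$-Lipschitz bound on $D\mF$ to get the integrand $Mt\norm{\vw-\vz}^2$, and integrate $\int_0^1 t\,dt = \tfrac12$.
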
 
\begin{proof} The proof closely follows the arguments used for Lemma~1.2.4 in \citep{Nest18}, by replacing the gradients therein by saddle gradients. The fundamental theorem of calculus with the $M$-smoothness of $\mF$ gives us \begin{align*}
    \norm{\mF\vw - \mF\vz - D\mF(\vz)(\vw-\vz)} &= \norm{\int_0^1 D\mF(\vz+t(\vw - \vz)) \,\mathrm{d}t\, (\vw-\vz) - D\mF(\vz)(\vw-\vz)} \\
    &\leq  \norm{\vw-\vz} \int_0^1 \norm{ D\mF(\vz+t(\vw - \vz))  - D\mF(\vz) } \,\mathrm{d}t\,  \\
&\leq  \norm{\vw-\vz} \int_0^1 Mt \norm{\vw - \vz} \,\mathrm{d}t\,  \\
&= \frac{M}{2}\norm{\vw - \vz}^2. \qedhere
\end{align*} \end{proof}

\begin{lem}\label{lem:inprod-lower-bound} Let $\mF$ be a $\mu$-strongly monotone operator. Let $\vz^*$ be a point such that $\mF\vz^* = \zero$, and let $\eta > 0$. Then, for any point $\vz$ in the domain of $\mF$ and $\vw \coloneqq \vz - \eta \mF \vz$, it holds that \[
    \inprod{\mF \vw}{\vw - \vz^*} \geq \frac{\mu}{2}\norm{\vz - \vz^*}^2 - \eta^2 \mu \norm{\mF\vz}^2. 
    \]
\end{lem}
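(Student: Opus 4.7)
The plan is to combine strong monotonicity of $\mF$ with the reverse-type Young inequality already proven as \Cref{lem:young-cor}. These two ingredients should be all that is needed; no clever algebra is required.

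First, since $\mF$ is $\mu$-strongly monotone and $\mF\vz^* = \zero$, applying \Cref{asmp:monotone} to the pair $(\vw, \vz^*)$ gives immediately
\[
\inprod{\mF\vw}{\vw - \vz^*} = \inprod{\mF\vw - \mF\vz^*}{\vw - \vz^*} \geq \mu \norm{\vw - \vz^*}^2.
\]
So it suffices to lower-bound $\norm{\vw - \vz^*}^2$ in terms of $\norm{\vz - \vz^*}^2$ and $\norm{\mF\vz}^2$, with the sign pattern required by the target inequality.

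Next, rewrite $\vw - \vz^* = (\vz - \vz^*) - \eta\mF\vz$. Applying \Cref{lem:young-cor} with $\va = \vz - \vz^*$ and $\vb = \eta \mF\vz$ yields
\[
\norm{\vw - \vz^*}^2 = \norm{(\vz - \vz^*) - \eta\mF\vz}^2 \geq \frac{1}{2}\norm{\vz - \vz^*}^2 - \eta^2 \norm{\mF\vz}^2.
\]
Multiplying through by $\mu$ and chaining with the previous display gives exactly the claimed bound
\[
\inprod{\mF\vw}{\vw - \vz^*} \geq \frac{\mu}{2}\norm{\vz - \vz^*}^2 - \eta^2 \mu \norm{\mF\vz}^2.
\]

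There is no real obstacle here; the only choice is which Young-type inequality to invoke to absorb the cross term $-2\eta \inprod{\mF\vz}{\vz - \vz^*}$ that arises when expanding $\norm{\vw - \vz^*}^2$ directly. Using \Cref{lem:young-cor} is the cleanest route because it produces precisely the coefficients $\tfrac{1}{2}$ and $-1$ that match the statement after multiplying by $\mu$; any other weighting (e.g., general $\gamma$ in \Cref{lem:young-ineq}) would give weaker or differently-shaped constants. Note that this proof only uses strong monotonicity of $\mF$ at the pair $(\vw, \vz^*)$, so no smoothness of $\mF$ is needed for this lemma.
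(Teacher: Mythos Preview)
Your proof is correct and follows essentially the same approach as the paper: apply $\mu$-strong monotonicity at the pair $(\vw,\vz^*)$, then invoke \Cref{lem:young-cor} with $\va=\vz-\vz^*$ and $\vb=\eta\mF\vz$ to lower-bound $\norm{\vw-\vz^*}^2$. The paper's proof is just the same two steps written more tersely.
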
 \begin{proof}
    By the $\mu$-strong monotonicity of $\mF$ and \Cref{lem:young-cor} it holds that \begin{align*}
        \inprod{\mF \vw}{\vw - \vz^*} &\geq \mu \norm{\vw - \vz^*}^2  \\ 
        &= \mu \norm{\vz - \eta \mF \vz - \vz^*}^2  \\
        &\geq \frac{\mu}{2} \norm{\vz - \vz^*}^2 - \mu\norm{\eta \mF \vz}^2
    \end{align*} so we are done.
\end{proof}

The following lemma generalizes Lemma~3.2 in \cite{Gorb22EG} shown for monotone $\mF$ to $\mu$-strongly monotone~$\mF$ with $\mu > 0$.

\begin{lem}\label{lem:norm-decrease}  Let $\mF$ be a $\mu$-strongly monotone $L$-Lipschitz operator, and let $\vz$ be any point in the domain of $\mF$. Then for any $0 < \eta < \frac{1}{L\sqrt{2}}$, it holds that \[
    \norm{\mF(\vz - \eta \mF(\vz - \eta \mF \vz))}^2 \leq \left( 1-\frac{2 \eta \mu}{5} \right)\norm{\mF\vz}^2 . 
    \]
\end{lem}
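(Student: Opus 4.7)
My plan is to generalize the merely-monotone argument of \cite[Lemma 3.2]{Gorb22EG}, which shows nonexpansion $\|\mF\vz^+\|^2 \leq \|\mF\vz\|^2$ under an analogous step-size cap, by tracking the additional $\mu$-contributions produced by strong monotonicity. Let $\va = \mF\vz$, $\vw = \vz - \eta\va$, $\vb = \mF\vw$, $\vz^+ = \vz - \eta\vb$, and $\vc = \mF\vz^+$, so that the elementary relations $\vw-\vz=-\eta\va$, $\vz^+-\vz=-\eta\vb$, and $\vz^+-\vw=\eta(\va-\vb)$ hold.

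First I would invoke $\mu$-strong monotonicity at the three natural pairs of points arising in one EG step, namely
\[
\langle\vb-\va,\va\rangle\leq-\mu\eta\|\va\|^2,\qquad
\langle\vc-\vb,\va-\vb\rangle\geq\mu\eta\|\va-\vb\|^2,\qquad
\langle\vc-\va,\vb\rangle\leq-\mu\eta\|\vb\|^2,
\]
after substituting the three displayed relations above. Next I would telescope $\|\vc\|^2-\|\va\|^2=(\|\vc\|^2-\|\vb\|^2)+(\|\vb\|^2-\|\va\|^2)$, expand each bracket via the polarization identity (\Cref{lem:polarization}), and combine with the Lipschitz bounds $\|\vb-\va\|\leq L\eta\|\va\|$ and $\|\vc-\vb\|\leq L\eta\|\va-\vb\|\leq L^2\eta^2\|\va\|$. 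This yields a bound of the schematic form
\[
\|\vc\|^2 \;\leq\; \|\va\|^2-2\mu\eta\|\va\|^2 + 2\langle\vc-\vb,\va\rangle +O(L^2\eta^2)\|\va\|^2 - 2\mu\eta\|\va-\vb\|^2.
\]
The residual cross term $2\langle\vc-\vb,\va\rangle$ is then controlled via the averaged-Jacobian representation $\vc-\vb=-\eta H_2\vb$ (with $H_2$ monotone and of operator norm $\leq L$), giving $2\langle\vc-\vb,\va\rangle\leq -2\mu\eta\|\va\|^2 + 2L^2\eta^2\|\va\|^2$. Plugging in and invoking $\eta L<1/\sqrt{2}$, i.e.\ $\eta^2 L^2 < 1/2$, I expect to arrive at the advertised contraction factor $1-2\eta\mu/5$.

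The main obstacle is the bookkeeping of the $O(L^2\eta^2)$ and $O(L^4\eta^4)$ Lipschitz errors. In the monotone case of \cite{Gorb22EG} these errors are exactly balanced by the two applications of monotonicity precisely at the boundary $\eta L=1/\sqrt{2}$, producing nonexpansion but no strict decrease; in the $\mu$-strongly monotone case the three additional $-\mu\eta$-sized contributions provide strictly more decrease than is needed to compensate for these errors, and the factor $2/5$ in the lemma is essentially the share of this surplus that survives the absorption step. Tuning the weights of the Young's-inequality applications on $2\langle\vc-\vb,\va\rangle$ and on the cross terms $\langle\va,\vb-\va\rangle$ to produce exactly this $2/5$ constant—rather than a smaller fraction—under the \emph{strict} inequality $\eta L<1/\sqrt{2}$ will require a delicate but routine calculation.
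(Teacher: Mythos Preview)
Your key step is wrong. You claim the averaged-Jacobian identity $\vc-\vb=-\eta H_2\vb$, but since $\vz^+-\vw=\eta(\va-\vb)$ one actually has $\vc-\vb=\eta H_2(\va-\vb)$; the relation $-\eta H'\vb$ (for a different averaged Jacobian $H'$) represents $\vc-\va$, not $\vc-\vb$. More seriously, the resulting inequality $2\inprod{\vc-\vb}{\va}\le-2\mu\eta\norm{\va}^2+2L^2\eta^2\norm{\va}^2$ is false: take the scalar operator $\mF z=\mu z$ (so $L=\mu$) and any $z\neq 0$; a direct computation gives $2\inprod{\vc-\vb}{\va}=2\eta^2\mu^4 z^2>0$, whereas your claimed upper bound equals $2\mu^3\eta(\mu\eta-1)z^2<0$ whenever $\eta<1/L$. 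If you replace this step by the only estimate Lipschitzness actually delivers, $|2\inprod{\vc-\vb}{\va}|\le 2L^2\eta^2\norm{\va}^2$, your telescoped bound becomes $\norm{\vc}^2-\norm{\va}^2\le -2\mu\eta\norm{\va}^2+(3L^2\eta^2+L^4\eta^4)\norm{\va}^2$, and for $\mu\ll L$ with $\eta L$ near $1/\sqrt{2}$ the $3L^2\eta^2$ term swamps the $-2\mu\eta$. Your assertion that the monotone-case errors are ``exactly balanced'' by this particular telescoping is incorrect: the proof of \cite[Lemma~D.4]{Gorb22EG} does \emph{not} telescope $\norm{\vc}^2-\norm{\va}^2$ in this way, and the cancellations it relies on are absent from your decomposition.

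The paper's argument is structurally different. It applies strong monotonicity only at the two pairs $(\vz,\vz^+)$ and $(\vw,\vz^+)$---not at $(\vz,\vw)$---together with the single Lipschitz bound $\norm{\mF\vw-\mF\vz^+}^2\le\eta^2L^2\norm{\mF\vw-\mF\vz}^2$, sums these three inequalities with weights $2/\eta$, $1/(2\eta)$, $3/2$, and then follows the algebraic cancellations of \cite[Lemma~D.4]{Gorb22EG} to obtain
\[
\frac{\mu}{\eta}\Bigl(2\norm{\vz^+-\vz}^2+\tfrac{1}{2}\norm{\vw-\vz^+}^2\Bigr)+\norm{\mF\vz^+}^2\le\norm{\mF\vz}^2.
\]
The point is that the $\mu$-contributions are kept as \emph{iterate}-distance terms rather than immediately converted into multiples of $\norm{\mF\vz}^2$. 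Only at the very end does a single Young's inequality on $\vw-\vz=(\vw-\vz^+)+(\vz^+-\vz)$ give $\eta^2\norm{\mF\vz}^2\le\frac{5}{4}\norm{\vw-\vz^+}^2+5\norm{\vz^+-\vz}^2$, turning the bracket into $\frac{2\eta\mu}{5}\norm{\mF\vz}^2$ and producing the stated constant. This is the mechanism your plan is missing.
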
 \begin{proof}
   For convenience, let us define $\vw \coloneqq \vz - \eta \mF \vz$ and $\vz^+ \coloneqq \vz - \eta \mF(\vz - \eta \mF \vz) = \vz - \eta \mF \vw$. 
   Because $\mF$~is $\mu$-strongly monotone, we have 
   \begin{equation} \label{eqn:norm-decrease-1}
      \begin{aligned}
         \mu \norm{\vz^{+} - \vz}^2 &\leq \inprod{\mF \vz^+ - \mF \vz}{\vz^+ - \vz} \\
         &= \eta \inprod{\mF \vz - \mF \vz^+}{\mF \vw}. 
      \end{aligned}
   \end{equation}
   Also from the $\mu$-strong monotonicity of $\mF$ we get 
   \begin{equation} \label{eqn:norm-decrease-2}
      \begin{aligned}
         \mu \norm{\vw - \vz^+}^2 &\leq \inprod{\mF \vw - \mF \vz^+}{\vw - \vz^+} \\
         &= \eta \inprod{\mF \vw - \mF \vz^+}{\mF \vw - \mF \vz}. 
      \end{aligned}
   \end{equation}
   Meanwhile, %
   from the $L$-Lipschitzness of $\mF$ we have
   \begin{align} 
      \norm{\mF\vw - \mF\vz^+}^2 &\leq \eta^2 L^2 \norm{\mF\vw - \mF\vz}^2. \label{eqn:norm-decrease-3}
   \end{align} 
   Summing up the inequalities \eqref{eqn:norm-decrease-1}, \eqref{eqn:norm-decrease-2}, \eqref{eqn:norm-decrease-3} with weights $\nicefrac{2}{\eta}$, $\nicefrac{1}{2\eta}$, and $\nicefrac{3}{2}$ respectively, we obtain \begin{align*} 
      \MoveEqLeft \frac{\mu}{\eta} \left( 2 \norm{\vz^{+} - \vz}^2 + \frac{1}{2} \norm{\vw - \vz^+}^2 \right) + \frac{3}{2} \norm{\mF\vw - \mF\vz^+}^2 \\
      &\leq 2 \inprod{\mF \vz - \mF \vz^+}{\mF \vw} + \frac{1}{2} \inprod{\mF \vw - \mF \vz^+}{\mF \vw - \mF \vz} + \frac{3\eta^2 L^2}{2} \norm{\mF\vw - \mF\vz}^2. 
   \end{align*} 
   From this inequality, we can exactly follow the arguments used in the proof of Lemma~D.4 in \citep{Gorb22EG} to derive that \begin{equation} \label{eqn:norm-decrease-4}
      \frac{\mu}{\eta} \left( 2 \norm{\vz^{+} - \vz}^2 + \frac{1}{2} \norm{\vw - \vz^+}^2 \right) + \norm{\mF\vz^+}^2 \leq \norm{\mF\vz}^2. 
   \end{equation} 
   Meanwhile, Young's inequality (\Cref{lem:young-ineq}) tells us that \[
      \eta^2 \norm{\mF \vz}^2 = \norm{\vw - \vz}^2 \leq \left( 1+\frac{1}{4} \right) \norm{\vw - \vz^+}^2 + (1+4) \norm{\vz^+ - \vz}^2.
   \] 
   Using this to lower bound the left hand side of \eqref{eqn:norm-decrease-4}, we get that \[
      \frac{2 \eta \mu}{5} \norm{\mF \vz}^2 + \norm{\mF\vz^+}^2 \leq \norm{\mF\vz}^2. 
   \]
   It remains to simply rearrange the terms.
\end{proof}

\begin{lem} \label{appx:variance-redundant}
       Suppose that $\mF_i$ is $L$-Lipschitz for all $i = 1, \dots, n$, and that $\mF \coloneqq \frac{1}{n} \sum_{i=1}^n \mF_i$ is $\mu$-strongly monotone with $\mu > 0$. 
       Define $\kappa \coloneqq \nicefrac{L}\mu$ and $\sigma_*^2 \coloneqq \frac{1}{n}\sum_{i=1}^n \norm{\mF_i \vz^*}^2$. 
       Then, for any $\vz \in \rr^{{d_1}+{d_2}}$ it holds that \[
    \frac{1}{n}\sum_{i=1}^n\, \norm{\mF_i \vz - \mF\vz}^2 \leq \left(\sqrt{3(1+\kappa^2)} \norm{\mF\vz} + \sqrt{3}\sigma_* \right)^2. 
    \] 
\end{lem}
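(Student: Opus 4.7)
The plan is to decompose $\mF_i \vz - \mF \vz$ using the point $\vz^*$ guaranteed by $\mu$-strong monotonicity (with $\mF\vz^* = \zero$), and then separately control the three resulting terms using the $L$-Lipschitzness of each $\mF_i$, the definition of $\sigma_*^2$, and a standard consequence of strong monotonicity.

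First, since $\mF \vz^* = \zero$, we can write
\[
\mF_i \vz - \mF\vz = \bigl(\mF_i \vz - \mF_i \vz^*\bigr) + \mF_i \vz^* + \bigl(\mF \vz^* - \mF\vz\bigr).
\]
Apply the generalized Young's inequality (the three-vector form from \Cref{sec:toolbox}) to get
\[
\norm{\mF_i \vz - \mF\vz}^2 \leq 3\norm{\mF_i \vz - \mF_i \vz^*}^2 + 3\norm{\mF_i \vz^*}^2 + 3\norm{\mF\vz}^2.
\]
Averaging over $i$ and using (i) $L$-Lipschitzness of each $\mF_i$ to bound the first average by $3L^2\norm{\vz - \vz^*}^2$, and (ii) the definition $\sigma_*^2 = \frac1n\sum_i \norm{\mF_i\vz^*}^2$ for the second, produces
\[
\frac{1}{n}\sum_{i=1}^n \norm{\mF_i \vz - \mF\vz}^2 \leq 3L^2\norm{\vz - \vz^*}^2 + 3\sigma_*^2 + 3\norm{\mF\vz}^2.
\]

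Next, I would bound $\norm{\vz - \vz^*}$ in terms of $\norm{\mF\vz}$. Using $\mu$-strong monotonicity of $\mF$ together with $\mF\vz^* = \zero$ and Cauchy--Schwarz,
\[
\mu\norm{\vz - \vz^*}^2 \leq \inprod{\mF\vz - \mF\vz^*}{\vz - \vz^*} = \inprod{\mF\vz}{\vz - \vz^*} \leq \norm{\mF\vz}\,\norm{\vz - \vz^*},
\]
so $\norm{\vz - \vz^*} \leq \norm{\mF\vz}/\mu$, giving $L^2\norm{\vz - \vz^*}^2 \leq \kappa^2 \norm{\mF\vz}^2$. Substituting back yields
\[
\frac{1}{n}\sum_{i=1}^n \norm{\mF_i \vz - \mF\vz}^2 \leq 3(1+\kappa^2)\norm{\mF\vz}^2 + 3\sigma_*^2.
\]

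Finally, since $a^2 + b^2 \leq (a+b)^2$ for any nonnegative reals $a,b$, setting $a = \sqrt{3(1+\kappa^2)}\,\norm{\mF\vz}$ and $b = \sqrt{3}\,\sigma_*$ gives the claimed bound. There is no real obstacle here; the only judgment call is the choice of splitting factor in Young's inequality (factor of $3$ suffices precisely because of the three-term decomposition, and it matches the constants appearing in the final statement).
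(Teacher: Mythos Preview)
Your proof is correct and follows essentially the same approach as the paper: the same three-term decomposition via $\vz^*$, the same use of $L$-Lipschitzness, the strong-monotonicity/Cauchy--Schwarz bound $\norm{\vz - \vz^*} \leq \norm{\mF\vz}/\mu$, and the final $a^2 + b^2 \leq (a+b)^2$ step. The only cosmetic difference is that the paper bounds each $\norm{\mF_i\vz - \mF\vz}^2$ individually before averaging, whereas you average first.
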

\begin{proof}
For any $\vz, \vw\in \rr^{{d_1}+{d_2}}$, as \Cref{asmp:monotone} holds with $\mu > 0$, by Cauchy-Schwarz inequality \[
\mu \norm{\vz - \vw}^2 \leq \inprod{\mF \vz - \mF \vw}{\vz - \vw} \leq \norm{\mF \vz - \mF \vw}\norm{\vz - \vw},
\] and as a consequence, $\norm{\vz - \vw} \leq \nicefrac{1}{\mu}\norm{\mF \vz - \mF \vw}$. 
    Thus, for any $i \in [n]$, it holds that \begin{align*}
         \norm{\mF_i \vz  - \mF \vz }^2 &\leq 3\norm{\mF_i \vz  - \mF_i \vz^* }^2 + 3\norm{\mF \vz  - \mF \vz^* }^2 + 3\norm{\mF_i \vz^*  - \mF \vz^* }^2 \\ 
         &\leq 3 L^2 \norm{\vz  - \vz^* }^2 + 3\norm{\mF \vz  - \mF \vz^* }^2 + 3\norm{\mF_i \vz^*}^2 \\ 
         &\leq 3 \left(\frac{L^2}{\mu^2} + 1\right)  \norm{\mF \vz  - \mF \vz^* }^2 + 3\norm{\mF_i \vz^*}^2 \\ 
         &= 3 (1+\kappa^2)  \norm{\mF \vz }^2 + 3\norm{\mF_i \vz^*}^2. 
    \end{align*} Summing this inequality over $i=1, \dots, n$ and then dividing by $n$ leads to \begin{align*}
        \frac{1}{n} \sum_{i=1}^n \norm{\mF_i \vz  - \mF \vz }^2 &\leq 3 (1+\kappa^2)  \norm{\mF \vz }^2 + \frac{3}{n} \sum_{i=1}^n\norm{\mF_i \vz^*}^2 \\
        &= 3 (1+\kappa^2)  \norm{\mF \vz }^2 + 3 \sigma_*^2.
    \end{align*} 
    The conclusion follows from the basic inequality $a^2 + b^2 \leq (a + b)^2$ which holds for any $a , b \geq 0$.
\end{proof}

\begin{lem}[Nonexpansiveness of the EG operator]\label{lem:nonexpansive} Let $\mF$ be a monotone $L$-Lipschitz operator, and $\vz^*$ be a point such that $\mF\vz^* = \zero$. Then, for any point $\vz$ in the domain of $\mF$ and $\eta > 0$, \[
    \norm{\vz - \eta \mF(\vz - \eta \mF \vz) - \vz^*}^2 \leq \norm{\vz - \vz^*}^2 - \eta^2 (1-\eta^2 L^2)\norm{\mF\vz}^2. 
    \]
\end{lem}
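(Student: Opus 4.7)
The plan is to carry out a standard one-step energy argument centered at $\vz^*$, using the monotonicity only through the point $\vz^*$ and the Lipschitzness only to relate $\mF\vw$ with $\mF\vz$. Let me introduce the shorthands $\vw \coloneqq \vz - \eta\mF\vz$ and $\vz^+ \coloneqq \vz - \eta\mF\vw$, so that the target quantity is $\norm{\vz^+ - \vz^*}^2$.

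First I would expand the squared norm directly:
\begin{align*}
\norm{\vz^+ - \vz^*}^2 = \norm{\vz - \vz^*}^2 - 2\eta\inprod{\mF\vw}{\vz - \vz^*} + \eta^2\norm{\mF\vw}^2.
\end{align*}
Then I would split the inner product using the identity $\vz - \vz^* = (\vw - \vz^*) + \eta\mF\vz$, giving
\begin{align*}
-2\eta\inprod{\mF\vw}{\vz-\vz^*} = -2\eta\inprod{\mF\vw}{\vw-\vz^*} - 2\eta^2\inprod{\mF\vw}{\mF\vz}.
\end{align*}
The first term on the right is nonpositive: since $\mF\vz^*=\zero$, monotonicity yields $\inprod{\mF\vw}{\vw-\vz^*} = \inprod{\mF\vw - \mF\vz^*}{\vw-\vz^*} \geq 0$. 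Discarding this nonpositive contribution is the key place where monotonicity (at $\vz^*$) enters, and it is also why only \emph{star}-monotonicity is really needed here.

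Next I would combine the remaining cross term with $\eta^2\norm{\mF\vw}^2$ via the polarization identity (\Cref{lem:polarization}):
\begin{align*}
-2\eta^2\inprod{\mF\vw}{\mF\vz} + \eta^2\norm{\mF\vw}^2 = \eta^2\norm{\mF\vw - \mF\vz}^2 - \eta^2\norm{\mF\vz}^2.
\end{align*}
Finally I would apply $L$-Lipschitzness of $\mF$ together with the defining identity $\vw - \vz = -\eta\mF\vz$ to upper bound
\begin{align*}
\norm{\mF\vw - \mF\vz}^2 \leq L^2\norm{\vw-\vz}^2 = \eta^2 L^2 \norm{\mF\vz}^2,
\end{align*}
and collect terms to reach the stated bound $\norm{\vz-\vz^*}^2 - \eta^2(1-\eta^2 L^2)\norm{\mF\vz}^2$.

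There is no real obstacle here — every step is a routine one-liner once the correct splitting is chosen. The only mild subtlety worth flagging is the use of the polarization identity on the cross term rather than an AM–GM bound, which is what gives the clean coefficient $(1-\eta^2L^2)$ in front of $\norm{\mF\vz}^2$; an AM–GM would introduce a free parameter and a worse constant. Everything else is algebraic expansion plus one application of monotonicity and one application of the Lipschitz bound.
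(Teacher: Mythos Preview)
Your proof is correct and follows essentially the same route as the paper's: expand the squared distance, split the inner product via $\vz-\vz^* = (\vw-\vz^*)+\eta\mF\vz$, drop the monotonicity term, and use the polarization identity together with $L$-Lipschitzness to handle the remaining cross term. The only cosmetic difference is that the paper applies polarization to $-2\inprod{\mF\vw}{\mF\vz}$ alone and then cancels the resulting $-\norm{\mF\vw}^2$ against $+\eta^2\norm{\mF\vw}^2$, whereas you merge these two terms first; algebraically the two are identical.
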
 \begin{proof}
    This classical result dates back to the original paper on EG by \citet{Korp76}. Here, for completeness, we replicate the proof using our notations. 
    
    Expanding the left hand side of the inequality stated, we obtain \begin{equation}\label{eqn:Gorb23lemma}\begin{aligned}
     \norm{\vz - \eta \mF(\vz - \eta \mF \vz) - \vz^*}^2 &= \norm{\vz  - \vz^*}^2 - 2\inprod{\eta \mF(\vz - \eta \mF \vz)}{\vz-\vz^*} + \norm{\eta \mF(\vz - \eta \mF \vz)}^2 \\ 
        &= \norm{\vz  - \vz^*}^2 - 2\eta \inprod{\mF(\vz - \eta \mF \vz)}{\vz - \eta \mF \vz -\vz^*} \\ 
        &\phantom{= \norm{\vz  - \vz^*}^2 \ }- 2\eta ^2 \inprod{ \mF(\vz - \eta \mF \vz)}{ \mF \vz } + \eta^2 \norm{\mF(\vz - \eta \mF \vz)}^2.
    \end{aligned}\end{equation} 
    For the first inner product term, we have \[
    - 2\eta \inprod{\mF(\vz - \eta \mF \vz)}{\vz - \eta \mF \vz -\vz^*} \leq 0
    \] because $\mF$ is monotone. For the second inner product term, we use the polarization identity (\Cref{lem:polarization}) and the $L$-Lipschitzness of $\mF$ to get \begin{align*}
      -2 \inprod{\mF (\vz - \eta \mF \vz)}{\mF \vz } &= \norm{\mF (\vz - \eta \mF \vz) - \mF \vz }^2 - \norm{\mF (\vz - \eta \mF \vz)}^2 - \norm{\mF \vz }^2 \\ 
      &\leq L^2 \norm{ - \eta \mF \vz }^2 - \norm{\mF (\vz - \eta \mF \vz)}^2 - \norm{\mF \vz }^2 \\ 
      &= -(1-\eta^2 L^2)\norm{\mF \vz}^2  - \norm{\mF (\vz - \eta \mF \vz)}^2. 
   \end{align*} Applying these two bounds on \eqref{eqn:Gorb23lemma} completes the proof.
\end{proof}

\begin{lem}\label{lem:recurrence}
    Let $\{a_k\}_{k\geq 0}$, $\{b_k\}_{k\geq 0}$, $\{c_k\}_{k\geq 0}$, and $\{d_k\}_{k\geq 0}$ be sequences of nonnegative numbers satisfying the recurrence relation \[
    b_k \leq (1+a_k)d_k - d_{k+1} + c_k \qquad \forall k \geq 0. 
    \] Then for any $k\geq 0$ it holds that \[
    d_{k+1} + \sum_{j=0}^k b_j \leq \left( \prod_{j=0}^{k} (1+a_j) \right) \left( d_0 + \sum_{j=0}^k c_j \right). 
    \]
\end{lem}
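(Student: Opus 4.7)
The plan is to prove the claim by induction on $k$, relying only on the given inequality $b_k + d_{k+1} \leq (1+a_k) d_k + c_k$ together with the elementary fact that $1 \leq 1+a_j$ (since $a_j \geq 0$), which lets me absorb slack terms when needed. For compactness I would introduce the shorthand $P_k \coloneqq \prod_{j=0}^{k}(1+a_j)$ and $S_k \coloneqq \sum_{j=0}^{k} c_j$, so the target reads $d_{k+1} + \sum_{j=0}^k b_j \leq P_k(d_0 + S_k)$.

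The base case $k=0$ is immediate: the hypothesis at $k=0$ gives $d_1 + b_0 \leq (1+a_0) d_0 + c_0$, and since $c_0 \geq 0 \leq (1+a_0) c_0$, the right-hand side is bounded above by $(1+a_0)(d_0 + c_0) = P_0(d_0 + S_0)$. For the inductive step, assume the claim at index $k$, i.e.\ $d_{k+1} \leq P_k(d_0+S_k) - \sum_{j=0}^k b_j$. Multiplying this by $(1+a_{k+1}) \geq 0$ and using the hypothesis at $k+1$, namely $d_{k+2} + b_{k+1} \leq (1+a_{k+1}) d_{k+1} + c_{k+1}$, I would chain the two bounds to obtain
\[
d_{k+2} + b_{k+1} + (1+a_{k+1}) \sum_{j=0}^{k} b_j \;\leq\; P_{k+1}(d_0+S_k) + c_{k+1}.
\]
On the left, since $(1+a_{k+1}) \geq 1$ and $b_j \geq 0$, I can discard the factor to get $\sum_{j=0}^{k+1} b_j$; on the right, since $1 \leq 1+a_{k+1} \leq P_{k+1}/P_k \leq P_{k+1}$, I can upper bound $c_{k+1}$ by $P_{k+1} c_{k+1}$, which combines with $P_{k+1}(d_0+S_k)$ to give $P_{k+1}(d_0 + S_{k+1})$, completing the induction.

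There are no real obstacles here; the only subtlety is bookkeeping, making sure that the two slack losses (dropping $(1+a_{k+1})$ on the left and inserting $(1+a_{k+1})$ on the right) are both legitimate under nonnegativity of the four sequences. An alternative presentation that I find equally transparent would be to divide the hypothesis through by $P_k$, observe the telescoping inequality $b_k/P_k \leq D_k - D_{k+1} + c_k/P_k$ with $D_k \coloneqq d_k/P_{k-1}$ (setting $P_{-1} = 1$), sum over $j=0,\dots,k$, and multiply back by $P_k$ to get
\[
d_{k+1} + \sum_{j=0}^{k} b_j \prod_{i=j+1}^{k}(1+a_i) \;\leq\; P_k d_0 + \sum_{j=0}^{k} c_j \prod_{i=j+1}^{k}(1+a_i),
\]
from which the stated bound follows by $\prod_{i=j+1}^{k}(1+a_i) \geq 1$ on the left and $\prod_{i=j+1}^{k}(1+a_i) \leq P_k$ on the right. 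I would present the induction since it is shorter, but the telescoping derivation makes the role of each nonnegativity assumption more explicit.
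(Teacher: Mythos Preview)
Your proof is correct and essentially the same as the paper's. Your alternative telescoping presentation is exactly what the paper does: it establishes the sharper intermediate inequality $d_{k+1} + \sum_{j=0}^{k} b_j \prod_{i=j+1}^{k}(1+a_i) \leq P_k d_0 + \sum_{j=0}^{k} c_j \prod_{i=j+1}^{k}(1+a_i)$ (the paper proves it by induction on $k$ rather than by dividing through and telescoping, but the content is identical) and then relaxes both sides via $1 \leq \prod_{i=j+1}^{k}(1+a_i) \leq P_k$; your primary direct induction is a mild streamlining that folds the relaxation into the inductive step.
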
\begin{proof}
    Because $a_k \geq 0$, it suffices to show that \begin{equation}\label{eqn:unrolled-recurrence}
        \sum_{j=0}^k (b_j-c_j) \prod_{i=j+1}^{k} (1+a_i) \leq -d_{k+1} + d_0 \prod_{j=0}^{k} (1+a_j) , 
    \end{equation} as this implies  \begin{align*}
        \sum_{j=0}^k b_j &\leq \sum_{j=0}^k  b_j \prod_{i=j+1}^{k} (1+a_i)  \\ 
        &\leq \left( \sum_{j=0}^k c_j \prod_{i=j+1}^{k} (1+a_i) \right) -d_{k+1} + d_0 \prod_{j=0}^{k} (1+a_j) \\ 
        &\leq -d_{k+1} + \left( d_0 + \sum_{j=0}^k c_k  \right)   \prod_{j=0}^{k} (1+a_j). 
    \end{align*}
    So, we show that \eqref{eqn:unrolled-recurrence} holds, by induction on $k$. For the base case $k = 0$, the recurrence relation tells us that \[
    b_0 - c_0 \leq (1+a_0) d_0 - d_1  
    \] which is exactly \eqref{eqn:unrolled-recurrence} when $k = 0$. Now suppose that \eqref{eqn:unrolled-recurrence} holds for some $k \geq 0$. Using the induction hypothesis and the recurrence relation we get \begin{align*}
        \sum_{j=0}^{k+1} (b_j-c_j) \prod_{i=j+1}^{k+1} (1+a_i) &= b_{k+1} - c_{k+1} + (1+a_{k+1}) \left( \sum_{j=0}^k (b_j-c_j) \prod_{i=j+1}^{k} (1+a_i) \right) \\ 
        &\leq b_{k+1} - c_{k+1} -(1+a_{k+1})d_{k+1} + d_0 \prod_{j=0}^{k+1} (1+a_j) \\ 
        &\leq - d_{k+2} + d_0 \prod_{j=0}^{k+1} (1+a_j) .
    \end{align*} This shows that \eqref{eqn:unrolled-recurrence} holds also for $k+1$, and we are done.
\end{proof}

The subsequent lemma is technical, but it can be derived from elementary calculus.
 
\begin{lem}\label{lem:cbrt-log-growth-bound} 
For any $K\geq 1$,   \[
     \sum_{k=2}^{K+2} \frac{1}{k^{2/3}(\log k)^2} \geq \frac{(K+3)^{1/3}}{(\log (K+3))^2}. 
   \]
\end{lem}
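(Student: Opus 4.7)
Let $f(x) \coloneqq \frac{1}{x^{2/3}(\log x)^2}$ and $g(x) \coloneqq \frac{x^{1/3}}{(\log x)^2}$, so the claim becomes $\sum_{k=2}^{K+2} f(k) \geq g(K+3)$. The plan is a short induction on $K$.

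The base case $K = 1$ is simpler than it looks: already the single leading term dominates the right-hand side, because
\[
\frac{f(2)}{g(4)} \;=\; \frac{(\log 4)^2}{2^{2/3}\cdot 4^{1/3}(\log 2)^2} \;=\; \frac{4}{2^{4/3}} \;=\; 2^{2/3} \;>\; 1,
\]
whence $\sum_{k=2}^{3} f(k) \geq f(2) > g(4)$.

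For the inductive step, since the sum increases by exactly $f(K+3)$ when $K$ is replaced by $K+1$, it suffices to show $g(K+4) - g(K+3) \leq f(K+3)$. Writing the left side as $\int_{K+3}^{K+4} g'(x)\,dx$ and computing
\[
g'(x) \;=\; f(x)\left(\frac{1}{3} - \frac{2}{\log x}\right),
\]
the key observation is the \emph{uniform} pointwise estimate
\[
g'(x) \;\leq\; \tfrac{1}{3}\,f(x) \qquad \text{for all } x \geq 2,
\]
which is immediate because $f(x) > 0$ and $\frac{2}{\log x} > 0$. Combined with the monotonicity of $f$ on $(1, \infty)$ (readily checked from $(\log f)'(x) = -\frac{2}{3x} - \frac{2}{x \log x} < 0$), this gives
\[
g(K+4) - g(K+3) \;\leq\; \frac{1}{3}\int_{K+3}^{K+4} f(x)\,dx \;\leq\; \frac{1}{3}\,f(K+3) \;<\; f(K+3),
\]
closing the induction.

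The main obstacle is choosing a comparison that is sharp enough to hold for \emph{every} $K \geq 1$, not just asymptotically. A direct integral-test estimate $\sum_{k=2}^{K+2} f(k) \geq \int_2^{K+3} f \geq \frac{3((K+3)^{1/3} - 2^{1/3})}{(\log(K+3))^2}$ already captures the correct rate but is off by a subunit factor whenever $K \leq 3$ and would force a case analysis. The inductive scheme above sidesteps this: the factor-of-three safety margin in $g' \leq \frac{1}{3} f$ comfortably absorbs the per-step slack, so the entire argument reduces to a single clean numerical base case.
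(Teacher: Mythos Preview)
Your proof is correct and takes a genuinely different route from the paper's. The paper first bounds the sum below by the integral $\int_2^{K+3} f(x)\,dx$ via an upper Riemann sum (using that $f$ is decreasing), and then shows that the function $y \mapsto \int_2^{y+3} f - g(y+3)$ is increasing in $y$ and nonnegative at $y=1$; the latter base case requires a separate convexity argument for $f$ plus a numerical check involving $f(5/2) + f(7/2)$. Your induction bypasses the integral comparison entirely: the identity $g'(x) = f(x)\bigl(\tfrac{1}{3} - \tfrac{2}{\log x}\bigr)$ gives the uniform bound $g' \leq \tfrac{1}{3}f$ in one line, and your base case $f(2) > g(4)$ reduces to the exact equality $f(2)/g(4) = 2^{2/3}$, which is much cleaner than the paper's numerical estimate. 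Both approaches ultimately exploit the same derivative relation (the paper's $g'(y)$ computation is equivalent to yours after a change of variable), but your packaging as an induction with a built-in factor-of-three slack is shorter and avoids the convexity detour.
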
  \begin{proof}
    Consider the function $h(x) \coloneqq \frac{1}{x^{2/3}(\log x)^2}$ over the interval $[2, K+3]$. 
    As \[
    h'(x) = -\frac{2}{x^{5/3} (\log x)^3}-\frac{2}{3 x^{5/3} (\log x)^2} < 0,
    \] $h$ is decreasing. 
    Hence, an upper Riemann sum becomes an upper bound for the integral, so we have \begin{equation}\label{eqn:upper-riemann-sum}
    \sum_{k=2}^{K+2} \frac{1}{k^{2/3}(\log k)^2} \geq \int_{2}^{K+3}\frac{1}{x^{2/3}(\log x)^2}\,\mathrm{d}x  .
    \end{equation}
    
    Now consider a function $g:[1, \infty) \to \rr$, defined as \[
        g(y) \coloneqq  \int_{2}^{y+3}\frac{1}{x^{2/3}(\log x)^2}\,\mathrm{d}x - \frac{(y+3)^{1/3}}{(\log (y+3))^2}. 
    \] Differentiating, we get  \[
        g'(y) = \frac{2}{(y+3)^{2/3} (\log(y+3))^3}+\frac{2}{3 (y+3)^{2/3} (\log(y+3))^2} > 0
    \] whenever $y \geq 1$. That is, $g$ is increasing on $ y \geq 1$.
    We then show that $g(1) \geq 0$. To this end, let us begin with observing that \[
        h''(x) = \frac{6}{x^{8/3} (\log x)^4}+\frac{14}{3 x^{8/3} (\log x)^3}+\frac{10}{9 x^{8/3} (\log x)^2} > 0,
    \] from which we get that $h$ is convex.
    In particular, it holds that \[
        \int_2^3 h(x)\,\mathrm{d}x \geq \int_2^3 h'\left(\frac{5}{2}\right) \left(x - \frac{5}{2}\right) + h\left(\frac{5}{2}\right)\,\mathrm{d}x = h\left(\frac{5}{2}\right),
    \] and similarly, $\int_3^4 h(x) \,\mathrm{d}x \geq h(\nicefrac{7}{2})$. 
    Thus we indeed have \begin{align*}
        g(1) &= \int_2^4 h(x)\,\mathrm{d}x - \frac{4^{1/3}}{(\log 4)^2} \\
            &\geq \frac{1}{(5/2)^{2/3}(\log (5/2))^2} + \frac{1}{(7/2)^{2/3}(\log (7/2))^2} - \frac{4^{1/3}}{(\log 4)^2} \ \geq 0 . 
    \end{align*} 
    
        Recalling that $g$ is increasing, we have $g(K) \geq g(1) \geq 0$ for all $K \geq 1$. 
        This, with \eqref{eqn:upper-riemann-sum}, implies that \[
            \sum_{k=2}^{K+2} \frac{1}{k^{2/3}(\log k)^2} \geq \int_{2}^{K+3}\frac{1}{x^{2/3}(\log x)^2}\,\mathrm{d}x  \geq \frac{(K+3)^{1/3}}{(\log (K+3))^2}
        \] holds whenever $K\geq 1$, which is exactly the claimed.  
    \end{proof}

\section{Missing Proofs for \titleref{Section}{section:method}} \label{sec:proof-segffa}
\subsection{Unravelling the Recurrence of the Generalized SEG in~\titleEqref{eqn:update_rule1-m} and~\titleEqref{eqn:anchoring}} \label{sappx:unravelling} 
In \Cref{sec:why-segffa}, we considered the method where, in a single epoch (hence omitting all superscripts that are used to denote the epoch number for convenience), the iterates are generated following the recurrence 
\begin{equation} \label{eqn:appx:update_rule1-m} \begin{aligned}
    \vw_i     &= \vz_i - \alpha \mT_{i} \vz_i \\
    \vz_{i+1} &= \vz_i - \beta \mT_{i} \vw_i 
\end{aligned} \end{equation}
    for $i = 0, 1, \dots, N-1$, where each $\mT_i$ are sampled from the set $\{\mF_1, \dots, \mF_n\}$, and an additional anchoring step \begin{equation}\label{eqn:appx:anchoring}
    \vz^{\sharp} \coloneqq \frac{\vz_{N} + \theta \vz_0}{1+\theta} 
    \end{equation} is performed so that $\vz^{\sharp}$ is used as the initial point of the next epoch. 
    Notice that \eqref{eqn:appx:anchoring} is a generalized anchoring step that incorporates all the settings we are considering, as the versions of SEG where anchoring is not used correspond to taking $\theta = 0$, and the anchoring step \eqref{eqn:anchoring} that is used in \segffa{} corresponds to taking $\theta = 1$. 
In this section we would like to prove the following statement regarding this update rule. 

\begin{prop}[\Cref{prop:unravelling-m}]\label{prop:appx:unravelling-m} It holds that \begin{equation}\label{eqn:appx:approx-m}
        \vz^\sharp =  \vz_0 - \frac{\beta}{1+\theta} \sum_{j=0}^{N-1} \mT_j \vz_0 + \frac{\alpha\beta}{1+\theta} \sum_{j=0}^{N-1} D\mT_j(\vz_0) \mT_j \vz_0 +  \frac{\beta^2}{1+\theta} \sum_{0\leq i < j \leq N-1}  D\mT_j(\vz_0) \mT_i \vz_0 + \frac{\veps_N}{1+\theta}  
    \end{equation} for some $\veps_N=o\left((\alpha+\beta)^2\right)$. \end{prop}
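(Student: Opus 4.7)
}

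The plan is to do a careful Taylor expansion of the SEG recurrence~\eqref{eqn:appx:update_rule1-m} in powers of $\alpha$ and $\beta$, then apply the anchoring step~\eqref{eqn:appx:anchoring}. Throughout, I will use \Cref{asmp:smoothness}, i.e., that each $\mT_i$ is $L$-Lipschitz and each $D\mT_i$ is $M$-Lipschitz.

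First I would show by a short induction on $i$ that $\|\vz_i - \vz_0\| = O(\beta)$ uniformly for $i = 0, 1, \dots, N$, with the implicit constant depending on $N$ and $L$ but not on $\alpha, \beta$ (assumed small). Indeed, $\vz_{i+1} - \vz_i = -\beta \mT_i \vw_i$ and, by $L$-Lipschitzness, $\|\mT_i \vw_i\| \le \|\mT_i \vz_0\| + L\|\vw_i - \vz_0\| \le \|\mT_i \vz_0\| + L(\|\vz_i - \vz_0\| + \alpha\|\mT_i \vz_i\|)$, which yields a recursive bound that unrolls into $\|\vz_i - \vz_0\| = O(\beta)$ and consequently $\|\vw_i - \vz_0\| = O(\alpha + \beta)$.

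Next I would Taylor-expand $\mT_i \vw_i$ around $\vz_0$ using \Cref{lem:hess-bound}:
\[
\mT_i \vw_i = \mT_i \vz_0 + D\mT_i(\vz_0)(\vw_i - \vz_0) + R_i,
\qquad \|R_i\| \le \tfrac{M}{2}\|\vw_i - \vz_0\|^2 = O\bigl((\alpha+\beta)^2\bigr).
\]
To expand $\vw_i - \vz_0 = (\vz_i - \vz_0) - \alpha \mT_i \vz_i$, I first apply the induction hypothesis/unrolling
\[
\vz_i - \vz_0 = -\beta \sum_{k=0}^{i-1} \mT_k \vz_0 + O\bigl((\alpha+\beta)^2\bigr),
\]
which itself is obtained by running the same expansion one step backwards and using $\mT_k \vw_k = \mT_k \vz_0 + O(\alpha+\beta)$. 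Also, $\alpha \mT_i \vz_i = \alpha \mT_i \vz_0 + O(\alpha\beta)$ by Lipschitzness of $\mT_i$. Substituting into the expansion above and using that $D\mT_i$ is bounded on a neighborhood gives
\[
\mT_i \vw_i = \mT_i \vz_0 \;-\; \alpha D\mT_i(\vz_0)\mT_i \vz_0 \;-\; \beta \sum_{k=0}^{i-1} D\mT_i(\vz_0)\mT_k \vz_0 \;+\; O\bigl((\alpha+\beta)^2\bigr).
\]

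Now I would sum the recurrence telescopically: $\vz_N - \vz_0 = -\beta \sum_{i=0}^{N-1} \mT_i \vw_i$. Plugging in the expansion above, the single sum of the leading term contributes $-\beta \sum_j \mT_j \vz_0$, the $\alpha$-term contributes $\alpha\beta \sum_j D\mT_j(\vz_0)\mT_j \vz_0$, and the $\beta$-term contributes $\beta^2 \sum_{k < j} D\mT_j(\vz_0)\mT_k \vz_0$ (renaming $i \to j$). The accumulated error is $N \cdot \beta \cdot O((\alpha+\beta)^2) = o((\alpha+\beta)^2)$ as $\alpha,\beta \to 0$ (in fact $O((\alpha+\beta)^3)$), which becomes the $\veps_N$ in the claim. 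Finally, the anchoring step gives $\vz^\sharp = \vz_0 + (\vz_N - \vz_0)/(1+\theta)$, yielding~\eqref{eqn:appx:approx-m} after dividing the collected terms by $1+\theta$.

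The routine but delicate part is item 2: I must track that in the product $D\mT_i(\vz_0)(\vw_i - \vz_0)$, only the first-order piece of $\vw_i - \vz_0$ contributes to the $(\alpha+\beta)^2$ terms of interest, while all remaining contributions (the Hessian remainders $R_i$, the $O(\alpha\beta)$ pieces from $\alpha \mT_i \vz_i$, and the $O((\alpha+\beta)^2)$ residues in the unrolled $\vz_i - \vz_0$) can be uniformly absorbed into $\veps_N = o((\alpha+\beta)^2)$. The bookkeeping is where small sign or index errors (e.g., swapping $i < j$ vs.\ $i \le j$) are easiest to make, so I would double-check the double sum by relabelling explicitly.
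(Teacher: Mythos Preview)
Your plan is correct and follows essentially the same route as the paper: Taylor-expand $\mT_i\vw_i$ around $\vz_0$, use $\vw_i-\vz_0=(\vz_i-\vz_0)-\alpha\mT_i\vz_i$ together with the first-order unrolling $\vz_i-\vz_0=-\beta\sum_{k<i}\mT_k\vz_0+O((\alpha+\beta)^2)$, sum telescopically, and apply anchoring. The only organizational difference is that the paper packages this as an exact inductive identity for $\vz_i$ with an \emph{explicit} error term $\veps_i$ (rather than big-$O$ bookkeeping), because that explicit formula for $\veps_N$ is reused later in the within-epoch error bounds; for the purpose of proving the proposition alone, your order-of-magnitude version is equally valid.
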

\begin{proof} 
Equation \eqref{eqn:appx:approx-m} immediately follows from \Cref{prop:unravelling-each-step}, with \eqref{eqn:appx-approx-error-m} giving us the precise definition of $\veps_{N}$. To show that $\veps_N=o\left((\alpha+\beta)^2\right)$, we begin with noting that both $\norm{\vz_j - \vz_0}$ and $\norm{\vw_j - \vz_0}$ are of $\gO(\alpha+\beta)$, because both $\vz_j$ and $\vw_j$ are obtained from $\vz_0$ by performing at most $j$ updates following \eqref{eqn:appx:update_rule1-m}. Thus, the first term in the right hand side of \eqref{eqn:appx-approx-error-m} is of $\gO(\beta(\alpha+\beta)^2)$ by \Cref{lem:hess-bound}, and the remaining terms are of $\gO((\alpha+\beta)^3)$ by the $L$-smoothness of the operators $\mF_1, \dots, \mF_n$.  
\end{proof}

    \begin{prop}\label{prop:unravelling-each-step}
        For any $i = 0, 1,\dots, N$, it holds that \begin{equation}\label{eqn:appx-approx-intermediate}
            \vz_i = \vz_0 - \beta \sum_{j=0}^{i-1} \mT_j \vz_0 + \alpha\beta \sum_{j=0}^{i-1} D\mT_j(\vz_0) \mT_j \vz_0 +  \beta^2 \sum_{0\leq k < j \leq i-1}  D\mT_j(\vz_0) \mT_k \vz_0   + \veps_i
        \end{equation} where we denote \begin{equation} \label{eqn:appx-approx-error-m}\begin{aligned}
            \veps_i \coloneqq & - \beta \sum_{j=0}^{i-1} \Bigl(\mT_j\vw_j - \mT_j \vz_0 -  D\mT_j(\vz_0)(\vw_j - \vz_0)\Bigr)    \\
        &\ \ + \alpha \beta \sum_{j=0}^{i-1}D\mT_j(\vz_0)(\mT_j \vz_j - \mT_j \vz_0) + \beta^2\sum_{j=0}^{i-1} D\mT_j (\vz_0) \sum_{k=0}^{j-1}(\mT_k \vw_k - \mT_k \vz_0).
        \end{aligned}\end{equation}
    \end{prop}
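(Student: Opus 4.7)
The plan is to prove Proposition~\ref{prop:unravelling-each-step} by induction on $i$. The base case $i=0$ is immediate: all three sums are empty, $\veps_0 = 0$ (also an empty sum), so \eqref{eqn:appx-approx-intermediate} reduces to $\vz_0 = \vz_0$.

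For the inductive step, assuming \eqref{eqn:appx-approx-intermediate} for some $i \in \{0, 1, \dots, N-1\}$, I derive it for $i+1$ by writing $\vz_{i+1} = \vz_i - \beta \mT_i \vw_i$ and Taylor-expanding $\mT_i \vw_i$ about $\vz_0$:
\[
\mT_i \vw_i = \mT_i \vz_0 + D\mT_i(\vz_0)(\vw_i - \vz_0) + R_i, \qquad R_i \coloneqq \mT_i \vw_i - \mT_i \vz_0 - D\mT_i(\vz_0)(\vw_i - \vz_0).
\]
Next I expand $\vw_i - \vz_0$ using the telescoping identity $\vz_i - \vz_0 = -\beta \sum_{k=0}^{i-1} \mT_k \vw_k$ obtained directly from the recurrence \eqref{eqn:appx:update_rule1-m}, yielding
\[
\vw_i - \vz_0 = -\beta \sum_{k=0}^{i-1} \mT_k \vw_k - \alpha \mT_i \vz_i.
\]
Writing each $\mT_k \vw_k = \mT_k \vz_0 + (\mT_k \vw_k - \mT_k \vz_0)$ and $\mT_i \vz_i = \mT_i \vz_0 + (\mT_i \vz_i - \mT_i \vz_0)$ and applying $D\mT_i(\vz_0)$ gives a clean split into leading terms and residuals.

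Substituting these expansions back, multiplying by $-\beta$, and combining with the induction hypothesis for $\vz_i$ produces three kinds of contributions. The leading parts exactly extend each of the three principal sums in \eqref{eqn:appx-approx-intermediate} from upper limit $i-1$ to $i$: the new $j=i$ summand of $-\beta \sum \mT_j \vz_0$ comes from $-\beta \mT_i \vz_0$; the new $j=i$ summand of $\alpha\beta \sum D\mT_j(\vz_0)\mT_j\vz_0$ comes from $\alpha\beta D\mT_i(\vz_0)\mT_i\vz_0$; and the new summands with $j=i$, $k<i$ of the double sum $\beta^2 \sum_{k<j} D\mT_j(\vz_0)\mT_k\vz_0$ come from $\beta^2 \sum_{k=0}^{i-1} D\mT_i(\vz_0)\mT_k\vz_0$. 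The remaining terms collected are exactly $-\beta R_i$, $\alpha\beta D\mT_i(\vz_0)(\mT_i \vz_i - \mT_i \vz_0)$, and $\beta^2 D\mT_i(\vz_0) \sum_{k=0}^{i-1}(\mT_k \vw_k - \mT_k \vz_0)$, which by the definition \eqref{eqn:appx-approx-error-m} of $\veps_{i+1}$ are precisely $\veps_{i+1} - \veps_i$, closing the induction.

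The main obstacle is purely bookkeeping: I must verify that the index shift in each of the three sums is clean (particularly the nested sum, where incrementing the outer index to $i$ splits off exactly the row $j=i$, $0 \le k \le i-1$), and that the three ``leftover'' residual contributions identified during the Taylor expansion line up term-by-term with the three summands in the definition of $\veps_{i+1}$. No analytic estimates are required inside the proof of Proposition~\ref{prop:unravelling-each-step} itself; the quantitative bounds $\veps_N = o((\alpha+\beta)^2)$ invoked afterwards in Proposition~\ref{prop:appx:unravelling-m} follow separately from the $L$-smoothness of the $\mF_i$'s and Lemma~\ref{lem:hess-bound} applied to $R_i$.
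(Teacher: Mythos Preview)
Your proposal is correct and follows essentially the same approach as the paper's proof: induction on $i$, Taylor-expanding $\mT_i\vw_i$ around $\vz_0$, expressing $\vw_i-\vz_0$ via the telescoping identity $\vz_i-\vz_0=-\beta\sum_{k<i}\mT_k\vw_k$ together with $\vw_i=\vz_i-\alpha\mT_i\vz_i$, and splitting each $\mT_k\vw_k$ and $\mT_i\vz_i$ into their values at $\vz_0$ plus increments. The bookkeeping you outline (extending the three sums and collecting the three residual terms as $\veps_{i+1}-\veps_i$) matches the paper's computation line for line.
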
 
    \begingroup 
    \begin{proof}
    We use induction on $i$. There is nothing to show for the base case $i= 0$. Now, suppose that \eqref{eqn:appx-approx-intermediate} and \eqref{eqn:appx-approx-error-m} hold for some $i < N$, and write \begin{align*}
        \vz_{i+1} &= \vz_i - \beta \mT_{i} \vw_i  \\
                    &= \vz_i - \beta \mT_{i} \vz_0 - \beta D\mT_i(\vz_0)(\vw_i - \vz_0) - \beta \Bigl(\mT_i \vw_i - \mT_i \vz_0 - D\mT_i(\vz_0)(\vw_i- \vz_0)\Bigr). 
        \end{align*} Here, notice that by the update rule we have \begin{align*}
            \vw_i &= \vz_i - \alpha \mT_i \vz_i \\
            &= \vz_0 - \beta \sum_{j=0}^{i-1}\mT_j \vw_j - \alpha \mT_i \vz_i. 
        \end{align*} 
        \begingroup 
        \allowdisplaybreaks
        Using this identity and the induction hypothesis we get \begin{align*}
            \vz_{i+1} &= \vz_0 - \beta \sum_{j=0}^{i-1} \mT_j \vz_0 + \alpha\beta \sum_{j=0}^{i-1} D\mT_j(\vz_0) \mT_j \vz_0 +  \beta^2 \sum_{0\leq k < j \leq i-1}  D\mT_j(\vz_0) \mT_k \vz_0   + \veps_i\\
            &\phantom{= \vz_i} \ - \beta \mT_{i} \vz_0 - \beta D\mT_i(\vz_0)\left( - \beta \sum_{j=0}^{i-1}\mT_j \vw_j - \alpha \mT_i \vz_i \right) \\
            &\phantom{= \vz_i} \ - \beta \Bigl(\mT_i \vw_i - \mT_i \vz_0 - D\mT_i(\vz_0)(\vw_i- \vz_0)\Bigr) \\
            &= \vz_0 - \beta \sum_{j=0}^{i-1} \mT_j \vz_0 + \alpha\beta \sum_{j=0}^{i-1} D\mT_j(\vz_0) \mT_j \vz_0 +  \beta^2 \sum_{0\leq k < j \leq i-1}  D\mT_j(\vz_0) \mT_k \vz_0   + \veps_i\\
            &\phantom{= \vz_i} \ - \beta \mT_{i} \vz_0  + \beta^2 D\mT_i(\vz_0) \sum_{j=0}^{i-1}\mT_j \vz_0 + \beta^2 D\mT_i(\vz_0) \sum_{j=0}^{i-1}(\mT_j \vw_j - \mT_j \vz_0)  \\
            &\phantom{= \vz_i} \ + \alpha\beta D\mT_i(\vz_0)(\mT_i \vz_i - \mT_i \vz_0) + \alpha\beta D\mT_i(\vz_0)\mT_i \vz_0 \\ 
            &\phantom{= \vz_i} \ - \beta \Bigl(\mT_i \vw_i - \mT_i \vz_0 - D\mT_i(\vz_0)(\vw_i- \vz_0)\Bigr)  \\
            &= \vz_0 - \beta \sum_{j=0}^{i} \mT_j \vz_0 + \alpha\beta \sum_{j=0}^{i} D\mT_j(\vz_0) \mT_j \vz_0 +  \beta^2 \sum_{0\leq k < j \leq i}  D\mT_j(\vz_0) \mT_k \vz_0   + \veps_i\\
            &\phantom{= \vz_i} \ + \beta^2 D\mT_i(\vz_0) \sum_{j=0}^{i-1}(\mT_j \vw_j - \mT_j \vz_0) + \alpha\beta D\mT_i(\vz_0)(\mT_i \vz_i - \mT_i \vz_0) \\ 
            &\phantom{= \vz_i} \ - \beta \Bigl(\mT_i \vw_i - \mT_i \vz_0 - D\mT_i(\vz_0)(\vw_i- \vz_0)\Bigr)  \\
            &= \vz_0 - \beta \sum_{j=0}^{i} \mT_j \vz_0 + \alpha\beta \sum_{j=0}^{i} D\mT_j(\vz_0) \mT_j \vz_0 +  \beta^2 \sum_{0\leq k < j \leq i}  D\mT_j(\vz_0) \mT_k \vz_0  + \veps_{i+1}
        \end{align*} which asserts that \eqref{eqn:appx-approx-intermediate} also holds for $i+1$. 
        \endgroup
    \end{proof}  
    \endgroup

\subsection{Insufficiency of Only Using Flip-Flop Sampling}\label{sec:ff-alone}
Here we prove the following. 
    \begin{prop}[\Cref{thm:ff-alone}]
         Suppose we use flip-flop sampling only. In order to make \eqref{eqn:first-order} and \eqref{eqn:second-order} hold, we must choose $\beta = \nicefrac{\eta_1}{n}$ and $\alpha = \nicefrac{\beta}{2}$. However, this leads to $\eta_2 = 2 \eta_1$, which is the set of parameters that fails to make EG+ converge. 
        \end{prop}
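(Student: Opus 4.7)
The plan is to directly evaluate the right-hand sides of the matching equations \eqref{eqn:first-order} and \eqref{eqn:second-order} under the flip-flop schedule and compare coefficients with the deterministic EG+ expansion. With flip-flop sampling (and no anchoring, i.e., $N=2n$), the operator sequence within an epoch is $\mT_l = \mF_{\tau(l+1)}$ for $0\le l\le n-1$ and $\mT_l = \mF_{\tau(2n-l)}$ for $n\le l\le 2n-1$, so each $\mF_j$ appears exactly twice, and in positions symmetric about $l=n-\tfrac12$. Writing $b_l \coloneqq \mF_{\tau(l+1)}$ for brevity, one immediately gets $\sum_{l=0}^{2n-1}\mT_l\vz_0 = 2\sum_{j=1}^n \mF_j\vz_0$, so \eqref{eqn:first-order} forces $\beta = \eta_2/(2n)$. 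This handles the first-order condition and gives a single constraint linking $\beta$ and $\eta_2$.

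Next I would carry out the combinatorial accounting for the two sums on the right-hand side of \eqref{eqn:second-order}. The diagonal $\alpha\beta$-sum contributes $2\alpha\beta\sum_{j=1}^n D\mF_j(\vz_0)\mF_j\vz_0$ because the pairing $D\mT_l\mT_l$ doubles each $D\mF_j\mF_j$. The main step is evaluating $\sum_{0\le i<j\le 2n-1}D\mT_j\mT_i$ by splitting the index pairs $(i,j)$ into three blocks: (a) both in the forward pass, (b) both in the reverse pass, and (c) one of each. Block (a) produces $\sum_{i<j}Db_j b_i$, a simple substitution $i\mapsto 2n-1-i$, $j\mapsto 2n-1-j$ turns block (b) into $\sum_{i<j}Db_i b_j$, and block (c) produces the full rectangular sum $\sum_{p,q}Db_p b_q$. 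Combining these three contributions gives $\sum_{j=1}^n D\mF_j\mF_j + 2\sum_{i\ne j}D\mF_j\mF_i$, so the $\beta^2$-term contributes $\beta^2\sum_j D\mF_j\mF_j + 2\beta^2 \sum_{i\ne j} D\mF_j\mF_i$. Adding the two contributions and separating the diagonal from the off-diagonal terms yields the two independent equations $\tfrac{\eta_1\eta_2}{n^2} = 2\alpha\beta+\beta^2$ and $\tfrac{\eta_1\eta_2}{n^2}=2\beta^2$.

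Subtracting these two identities gives $2\alpha\beta = \beta^2$, i.e., $\alpha = \beta/2$, while the second together with the first-order relation $\eta_2 = 2n\beta$ yields $\eta_1\eta_2 = 2n^2\beta^2 = n\beta\cdot\eta_2$, so $\eta_1 = n\beta$, equivalently $\beta=\eta_1/n$, and hence $\eta_2 = 2\eta_1$. Finally, this ratio violates the EG+ convergence condition $\eta_1\ge\eta_2$ from \citep{Diak21} (which in particular fails with a strict inequality in the reverse direction), so EG+ with these parameters does not converge in the monotone setting, completing the claim.

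The main obstacle I anticipate is the clean book-keeping in the cross-block sum (case (c)), where one must verify that every unordered pair of forward/reverse indices contributes exactly one rectangular copy of $Db_p b_q$; beyond that the argument is just coefficient matching. All other steps are direct algebraic manipulations of the identity in \Cref{prop:unravelling-m} specialized to $\theta=0$, $N=2n$.
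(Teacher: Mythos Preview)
Your coefficient-matching argument is correct and essentially identical to the paper's: the paper packages the combinatorial accounting into a separate lemma (\Cref{lem:ff-rearranging}) but performs exactly the same three-block split of $\sum_{i<j}D\mT_j\mT_i$ and arrives at the same system $\eta_1\eta_2 = 2n^2\beta^2$, $\eta_1\eta_2 = n^2(2\alpha\beta+\beta^2)$, $\eta_2 = 2n\beta$, solved the same way.

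There is, however, a genuine gap in your final step. The condition $\eta_1\ge\eta_2$ from \citep{Diak21} is a \emph{sufficient} condition for EG+ to converge; its violation does not by itself establish non-convergence. Concluding ``so EG+ with these parameters does not converge'' from $\eta_2=2\eta_1>\eta_1$ is a non sequitur. The paper closes this gap by exhibiting an explicit instance: on the bilinear problem $f(x,y)=xy$ one computes directly that the EG+ update with $\eta_1=\eta$, $\eta_2=2\eta$ satisfies $\norm{\vz^+ - \vz^*}^2 = (1+4\eta^4)\norm{\vz-\vz^*}^2$, so the iterates diverge for every $\eta>0$. You need either this counterexample or some other argument that actually demonstrates failure, not merely that the known sufficient condition is not met.
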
 \begin{proof}
    Suppose that we have already established the upcoming \Cref{lem:ff-rearranging}. 
    Then, we can see by setting $\theta = 0$ in the result of \Cref{lem:ff-rearranging} that for \eqref{eqn:second-order} to hold, 
    the following system of equations should be satisfied: \[
        \left\{ \begin{aligned}
       \eta_1 \eta_2   &=  2n^2 \beta^2 ,\\
       \eta_1 \eta_2   &=   n^2(2\alpha\beta + \beta^2) ,\\
              \eta_2   &=  2n \beta . 
        \end{aligned} \right.
        \] 
    Solving this system of equations, we get $\eta_1 = n \beta$, $\eta_2 = 2 n\beta$, and $\alpha = \nicefrac{\beta}{2}$. 
    
    For the latter part of the statement on the divergence of EG+ with $\eta_2 = 2\eta_1$, 
    consider the $(1+1)$-dimensional bilinear problem \[
\min_x \max_y \ xy    
\] whose unique optimum is $\vz^* = (0, 0)$. A simple computation shows that \[
\mF \vz = \begin{bmatrix}
    0 & 1 \\ -1 & 0
\end{bmatrix} \vz.
\] Consequently, for any $\eta > 0$, the update rule of EG+ with $\eta_1 = \eta$ and $\eta_2 = 2\eta$ amounts to \[
    \vz^+ = \vz - 2\eta \mF (\vz - \eta \mF \vz) = \begin{bmatrix}
        1 - 2\eta^2 & -2 \eta \\ 2\eta & 1 - 2\eta^2 
    \end{bmatrix} \vz. 
\] It follows that \begin{align*}
    \norm{\vz^+ - \vz^*}^2 &= \norm{\begin{bmatrix}
        1 - 2\eta^2 & -2 \eta \\ 2\eta & 1 - 2\eta^2 
    \end{bmatrix} \begin{bmatrix}
        x \\ y
    \end{bmatrix}}^2  \\
    &= \left( (1-2 \eta^2) x-2 \eta  y\right)^2+\left(2 \eta  x+ (1-2 \eta^2) y\right)^2 \\
    &= (1+4\eta^4) (x^2+y^2)\\
    &= (1+4\eta^4) \norm{\vz  - \vz^*}^2.
\end{align*} Therefore, the distance from the optimal solution strictly increases every iterate. 
\end{proof}

    It remains to actually prove \Cref{lem:ff-rearranging}. 
    
\begin{lem}\label{lem:ff-rearranging}
When flip-flop sampling is used with the generalized anchoring step \eqref{eqn:appx:anchoring}, it holds that
    \begin{align*}
          \MoveEqLeft  \frac{\alpha\beta}{1+\theta} \sum_{j=0}^{N-1} D\mT_j(\vz_0) \mT_j \vz_0 + \frac{\beta^2}{1+\theta} \sum_{0\leq i < j \leq N-1}  D\mT_j(\vz_0) \mT_i \vz_0 \\
          &= \frac{2\alpha\beta + \beta^2}{1+\theta} \sum_{j=1}^{n} D\mF_j(\vz_0) \mF_j \vz_0 + \frac{2\beta^2}{1+\theta} \sum_{i \neq j}  D\mF_j(\vz_0) \mF_i \vz_0.
        \end{align*}
\end{lem}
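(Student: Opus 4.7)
The plan is to reduce the identity to a purely combinatorial counting problem. I introduce $\pi:\{0,1,\dots,2n-1\} \to [n]$ encoding the flip-flop index sequence, so $\pi(j)=\tau(j+1)$ for $j \in \{0,\dots,n-1\}$ (the forward pass) and $\pi(j)=\tau(2n-j)$ for $j \in \{n,\dots,2n-1\}$ (the reverse pass). The key structural fact is that each $\tau(k)$ occupies exactly two positions of $\pi$, namely $k-1$ and $2n-k$. For each ordered pair $(a,b) \in [n]^2$, I will compute the multiplicity of $D\mF_b(\vz_0)\mF_a\vz_0$ in each of the two sums on the left-hand side and then reassemble.

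The diagonal sum is immediate: since $\tau$ is a bijection and each $\mF_j$ is visited exactly twice, $\sum_{j=0}^{2n-1} D\mT_j(\vz_0)\mT_j\vz_0 = 2\sum_{j=1}^n D\mF_j(\vz_0)\mF_j\vz_0$. For the off-diagonal sum I would count, for each ordered pair $(a,b)$ with $a=\tau(k_a)$ and $b=\tau(k_b)$, the number of $(i,j)$ with $i<j$, $\pi(i)=a$, and $\pi(j)=b$. The four candidate configurations are $(k_a-1,k_b-1)$, $(k_a-1,2n-k_b)$, $(2n-k_a,k_b-1)$, and $(2n-k_a,2n-k_b)$. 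The main step---and the only slightly subtle one---is the case analysis: when $a \neq b$, the bound $k_a+k_b \leq 2n-1$ forces $(k_a-1,2n-k_b)$ to be always valid and $(2n-k_a,k_b-1)$ to be never valid, while exactly one of the two ``same-pass'' configurations survives depending on the sign of $k_a-k_b$. Hence each ordered $(a,b)$ with $a \neq b$ contributes exactly $2$, independent of $\tau$; when $a=b=\tau(k)$, only $(k-1,2n-k)$ is valid, contributing $1$. Collecting these counts yields
\begin{equation*}
\sum_{0\leq i<j\leq 2n-1} D\mT_j(\vz_0)\mT_i\vz_0 \;=\; \sum_{j=1}^n D\mF_j(\vz_0)\mF_j\vz_0 \;+\; 2\sum_{i\neq j} D\mF_j(\vz_0)\mF_i\vz_0.
\end{equation*}

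Substituting both identities into the left-hand side and combining the coefficients of $D\mF_j(\vz_0)\mF_j\vz_0$ (which becomes $\tfrac{2\alpha\beta + \beta^2}{1+\theta}$) and of $D\mF_j(\vz_0)\mF_i\vz_0$ for $i\neq j$ (which becomes $\tfrac{2\beta^2}{1+\theta}$) yields the claimed identity. The principal obstacle is really just the bookkeeping in the four-case enumeration above; what makes the identity clean is that the ``crossing'' contribution $(k_a-1,2n-k_b)$ always provides the second count independently of where $a$ and $b$ lie in $\tau$, which is the combinatorial essence of flip-flop symmetry.
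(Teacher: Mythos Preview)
Your proof is correct and is essentially the same combinatorial argument as the paper's: both compute the two sums separately, the first being immediate and the second reducing to a count of how many ordered position pairs $(i,j)$ with $i<j$ map to each ordered label pair $(a,b)$ under the flip-flop indexing. The only difference is packaging---the paper splits $\{(i,j):0\le i<j\le 2n-1\}$ into five index regions and reindexes via $\mT_i=\mT_{2n-1-i}$, whereas you fix $(a,b)$ and enumerate the four candidate position pairs; these are dual presentations of the same double-counting, and your version is arguably the cleaner of the two.
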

\begin{proof}
    As we are using flip-flop sampling, we have $N = 2n$, and it is clear that \[
\sum_{j=0}^{N-1} D\mT_j(\vz_0) \mT_j \vz_0 = 2 \sum_{j=1}^{n} D\mF_j(\vz_0) \mF_j \vz_0. 
    \] For the %
    second term,
    as $\mT_i = \mT_{2n-1-i}$, we have 
    \begingroup \allowdisplaybreaks
    \begin{align*} 
        \sum_{0\leq i < j \leq 2n-1}  D\mT_j(\vz_0) \mT_i \vz_0 &= \sum_{0\leq i < j \leq n-1}  D\mT_j(\vz_0) \mT_i \vz_0 + \sum_{n\leq i < j \leq 2n-1}  D\mT_j(\vz_0) \mT_i \vz_0 \\ 
        &\phantom{=} \ + \sum_{i=0}^{n-1} \sum_{j=n}^{2n-2-i} D\mT_j(\vz_0) \mT_i \vz_0  + \sum_{i=0}^{n-1} \sum_{j=2n-i}^{2n-1} D\mT_j(\vz_0) \mT_i \vz_0 \\ 
        &\phantom{=} \ + \sum_{i=0}^{n-1}  D\mT_{2n-1-i}(\vz_0) \mT_i \vz_0 \\ 
        &= \sum_{0\leq i < j \leq n-1}  D\mT_j(\vz_0) \mT_i \vz_0 + \sum_{0\leq j < i \leq n-1}  D\mT_j(\vz_0) \mT_i \vz_0 \\ 
        &\phantom{=} \ + \sum_{i=0}^{n-1} \sum_{j=i+1}^{n-1} D\mT_j(\vz_0) \mT_i \vz_0  + \sum_{i=0}^{n-1} \sum_{j=0}^{i-1} D\mT_j(\vz_0) \mT_i \vz_0 \\ 
        &\phantom{=} \ + \sum_{i=0}^{n-1}  D\mT_{i}(\vz_0) \mT_i \vz_0 \\ 
        &= 2 \sum_{0\leq i < j \leq n-1}  D\mT_j(\vz_0) \mT_i \vz_0 + 2 \sum_{0\leq j < i \leq n-1}  D\mT_j(\vz_0) \mT_i \vz_0 \\
        &\phantom{=} \ + \sum_{i=0}^{n-1}  D\mT_{i}(\vz_0) \mT_i \vz_0 . 
    \end{align*}
    \endgroup
    The claimed identity can be obtained by taking the weighted sum of the two results. 
\end{proof}

\section{Within-Epoch Error Analysis for Upper Bounds}\label{appx:within-epoch}
 
All the upper bounds for \segrr, \segff, and \segffa{} in this paper are established by following the two steps below. 

The first step is to decompose the cumulative updates made within an epoch by using the method into a sum of an exact EG update and a within-epoch error term, which we denote by $\vr^k$. In particular, we show that the error term $\vr^k$ occurring from any of \segrr, \segff, and \segffa{} can be expressed in a specific unified form (described in \Cref{thm:meta-error-bound}). This will be the main focus of this section. %

The second step is establishing a convergence rate that can be applied to \emph{any} method whose update can be decomposed into a sum of an exact EG update and an error term that is of the specific unified form mentioned above. By doing so, the convergence rates of \segrr, \segff, and \segffa{} will automatically follow as special cases of the general convergence result. This step will be dealt in Appendices~\ref{sappx:sm-analysis}~and~\ref{sappx:meta-analysis}.

To this end, for any of \segrr, \segff, and \segffa, let us decompose the cumulative updates made within an epoch into a sum of an exact EG update and a within-epoch error term $\vr^k$, as \begin{equation*} %
    \vz_0^{k+1}  = \vz_0^k - \eta_k n  \mF (\vz_0^k - \eta_k n \mF\vz_0^k) + \vr^k.
\end{equation*} 
The quality of the method will depend on how small the ``noise'' term $\vr^k$ is, as the noise will in general hinder the convergence.
As mentioned above, it turns out that, regardless of the method that is in use, the noise term can be bounded in a unified format, as follows. 
\begin{thm}\label{thm:meta-error-bound}
    Suppose that Assumptions~\ref{asmp:smoothness}~and~\ref{asmp:bounded-variance} hold.
    Then, for each of \segrr{}, \segff{}, and \segffa{}, there exists a choice of stepsizes that makes the following hold:
for an exponent~$a$ that depends on the method, 
    there exist constants $C_\textsf{1}$, $D_\textsf{1}$, $V_\textsf{1}$, $C_\textsf{2}$, $D_\textsf{2}$, and $V_\textsf{2}$, all independent of $\eta_k$ and~$n$, such that the error term $\vr^k$ satisfies a deterministic bound 
    \begin{equation}\label{eqn:rk1-ff}
        \norm{\vr^k} \leq \eta_k^a n^a C_\textsf{1} \norm{\mF \vz_0^k}  + \eta_k^a n^a D_\textsf{1} \norm{\mF \vz_0^k }^2 + \eta_k^a n^a V_\textsf{1} 
    \end{equation} and a bound that holds on expectation 
    \begin{align} %
        \expt\left[\norm{\vr}^2\middle|\vz_0^k\right] &\leq \eta_k^{2a} n^{2a} C_\textsf{2} \norm{\mF \vz_0^k}^2  + \eta_k^{2a} n^{2a} D_\textsf{2} \norm{\mF \vz_0^k }^4 + \eta_k^{2a} n^{2a-1} V_\textsf{2}.  \label{eqn:rk2-ff} 
    \end{align}
    Furthermore, the exponent is $a = 2$ for \segrr{} and \segff{}, and $a = 3$ for \segffa{}.
\end{thm}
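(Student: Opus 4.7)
The plan is to decompose $\vr^k := \vz_0^{k+1} - \vz_0^k + \eta_k n \mF(\vz_0^k - \eta_k n \mF \vz_0^k)$ by Taylor-expanding both contributions around $\vz_0^k$ and then cancelling the matched terms exactly. For the shuffling-based update, I would invoke \Cref{prop:unravelling-m} with $\theta = 0$ for \segrr{} and \segff{}, and with $\theta = 1$ for \segffa{} (per the generalized anchoring in \Cref{sappx:unravelling}). For the EG update, \Cref{lem:hess-bound} applied to the inner $\mF$ call yields
$
-\eta_k n \mF(\vz_0^k - \eta_k n \mF \vz_0^k) = -\eta_k n \mF\vz_0^k + \eta_k^2 n^2 D\mF(\vz_0^k) \mF\vz_0^k + r_{\mathrm{EG}}^k
$
with $\norm{r_{\mathrm{EG}}^k} \leq \tfrac{M}{2} \eta_k^3 n^3 \norm{\mF\vz_0^k}^2$, isolating clean first- and second-order terms plus a cubic remainder.

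Next, I would pick the stepsizes method-by-method so that the leading Taylor terms match deterministically. For \segrr{} ($N = n$, $\alpha_k = \beta_k = \eta_k$), $\beta_k \sum_{j=0}^{N-1} \mT_j^k \vz_0^k = n \eta_k \mF\vz_0^k$ by the without-replacement property, cancelling the first-order EG term; the second-order sums do \emph{not} match because random reshuffling only produces $D\mF_{\tau(j)}\mF_{\tau(i)}$ terms with $i \leq j$. The leftover second-order contribution, together with $\veps_N^k/(1+\theta)$, is therefore of order $\eta_k^2 n^2$, giving $a = 2$. \segff{} behaves analogously (with stepsize $\alpha_k = \beta_k = \eta_k/2$ so the first-order terms still match) and also yields $a = 2$. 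For \segffa{}, combining \Cref{thm:ff-alone} with \Cref{lem:ff-rearranging} and the $\theta = 1$ anchoring shows that the choice $\beta_k = \eta_k$, $\alpha_k = \eta_k/2$ makes the second-order terms match deterministically as well (this is precisely the content of \Cref{thm:ffa-cubic-error}), so the residual is third-order in $\eta_k n$, giving $a = 3$.

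For the deterministic bound \eqref{eqn:rk1-ff}, I would bound the remaining sums term by term. The $L$-smoothness of $\mF_i$ controls $\norm{\mT_j^k \vz_0^k}$ and the operator norm $\norm{D\mT_j^k(\vz_0^k)}$, while \Cref{asmp:bounded-variance} converts each $\norm{\mF_i \vz_0^k}$ into $\rho\norm{\mF\vz_0^k} + \sigma + \norm{\mF\vz_0^k}$; the quadratic dependence on $\norm{\mF\vz_0^k}$ in \eqref{eqn:rk1-ff} appears from the $D\mT_j \mT_j$ and $D\mT_j \mT_i$ cross products in \eqref{eqn:appx:approx-m}. The residual $\veps_N^k$ from \Cref{prop:appx:unravelling-m} is controlled using \Cref{lem:hess-bound} together with the easy bounds $\norm{\vw_j^k - \vz_0^k}, \norm{\vz_j^k - \vz_0^k} = \gO((\alpha_k + \beta_k)j)$ that propagate through the recursion \eqref{eqn:appx:update_rule1-m}. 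Collecting coefficients yields a bound of the advertised form with constants $C_\mathsf{1}, D_\mathsf{1}, V_\mathsf{1}$ depending only on $L, M, \rho, \sigma$.

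The main obstacle will be the expectation bound \eqref{eqn:rk2-ff}, where the variance coefficient scales as $n^{2a-1}$ rather than the naive $n^{2a}$ obtained by squaring \eqref{eqn:rk1-ff}. This $1/n$ improvement must come from the symmetry of the uniform random permutation $\tau_k$: after squaring $\vr^k$ and expanding into pairwise terms indexed by $(i, j)$, cross terms between distinct components acquire an extra $1/n$ after averaging over $\tau_k$, while the diagonal contributions sum to $\gO(n)$ rather than $\gO(n^2)$. I would formalize this using the combinatorial identities for expectations of products of random-permutation indicators that underlie the analyses of \cite{Mish20SGD, Das22, Rajp22}, applied to the explicit leftover Taylor terms identified in the previous step. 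Handling the flip-flop case in \segff{} and \segffa{} requires an additional bookkeeping step, since each index appears exactly twice within an epoch, so the relevant pair counts over $(i, j) \in \{0, \dots, N-1\}^2$ split into four sub-ranges (as in the proof of \Cref{lem:ff-rearranging}), each of which must be analyzed separately to preserve the $1/n$ gain.
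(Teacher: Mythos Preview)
Your high-level plan---Taylor-expand both the shuffling update (via \Cref{prop:unravelling-m}) and the EG reference, cancel matched terms, and handle the residual---is exactly what the paper does, and your stepsize choices for the three methods match the paper's. The deterministic bound \eqref{eqn:rk1-ff} would go through essentially as you describe.

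The gap is in how you propose to get the $n^{2a-1}$ scaling in \eqref{eqn:rk2-ff}. Your plan is to ``square $\vr^k$ and expand into pairwise terms indexed by $(i,j)$'' and argue that cross terms gain a factor $1/n$. But $\vr^k$ is not a single sum over component indices: for \segffa{} the entire residual sits in $\veps_{2n}$, which involves $\vw_j-\vz_0$ and $\vz_j-\vz_0$, each depending on the full permutation prefix $\tau(1),\dots,\tau(j)$. A direct pairwise expansion of $\|\vr^k\|^2$ therefore produces terms whose dependence on $\tau_k$ is highly entangled, and it is not clear how the Mishchenko-type identities would apply. Relatedly, your crude iterate bound $\|\vw_j-\vz_0\|=\gO((\alpha_k+\beta_k)j)$ controls a \emph{sum of norms} $\sum_{i<j}\|\mF_{\tau(i+1)}\vz_0\|$, and the permutation average of such a sum does not improve by $1/n$; only the \emph{norm of the sum} does.

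The paper's route avoids this by a summation-by-parts step (\Cref{lem:wi-bound}) that rewrites $\vz_i-\vz_0$ and $\vw_i-\vz_0$ in terms of the partial sums $\vg_j=\sum_{\ell<j}\mF_{\tau(\ell+1)}\vz_0$ evaluated at the \emph{fixed} point $\vz_0$. The bound on $\|\vr\|$ then splits cleanly into a deterministic part (polynomial in $\|\mF\vz_0\|$) plus two random aggregates $X_1,X_2$ built from the deviations $\delta_j=\|\vg_j-j\mF\vz_0\|$ (and their sums $\Sigma_j,\Psi_j$). The sampling-without-replacement lemma of \citet{Mish20SGD} applies directly to these, giving $\E_\tau[\delta_j^2]\leq \tfrac{n}{2}(\rho\|\mF\vz_0\|+\sigma)^2$, a factor of $n$ better than the deterministic bound $\delta_j^2\leq n^2(\rho\|\mF\vz_0\|+\sigma)^2$. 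The paper then bounds $\E[X_1^2]$ and $\E[X_2^2]$ (via one deterministic factor times one expectation factor, or via \Cref{lem:sigma-concentration}), which is where the $n^{2a-1}$ appears. This partial-sum reorganization is the missing ingredient in your proposal; without it, the ``expand and average'' plan does not isolate an object to which the permutation concentration cleanly applies.
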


In other words, \segffa{} has an error that is an order of magnitude smaller than other methods. 
Thus, it is now intuitively clear that \segffa{} should have an advantage in the convergence. 
The proof of \Cref{thm:meta-error-bound} is quite long and technical, so we defer it to \Cref{appx:r-bound-main}.

Within the remaining of this section only, although it is an abuse of notation, for convenience we will write $\mF_i$ to denote the saddle gradient of the component function chosen in the $i$\textsuperscript{th} iteration.  
More precisely, for indices $i =0, 1, \dots, n-1$ we denote $\mF_{\tau(i+1)}$ by $\mF_i$. 
Similarly, in cases of considering \segff{} or \segffa{}, for $i \geq n$ we denote $\mF_{\tau(2n-i)}$ by $\mF_i$. 
Also, we omit the superscripts and subscripts denoting the epoch number $k$ unless strictly necessary, as all the iterates that we consider will be from the same epoch. 

Let us reformulate the update rule \eqref{eqn:update_rule1-m} into \begin{align} \label{eqn:update-with-xi} \begin{aligned}
    \vw_i &= \vz_i - \xi \eta \mF_i \vz_i, \\
    \vz_{i+1} &= \vz_i - \eta \mF_i \vw_i. 
\end{aligned}\end{align} Note that $\xi = \nicefrac{1}{2}$ for \segffa{}, and $\xi = 1$ for \segrr{} and \segff{}.

\subsection{Auxiliary Lemmata}
For $j = 1, \dots, 2n$ we define \begin{align} 
    \vg_{j} &\coloneqq \sum_{i=0}^{j-1} \mF_i \vz_0, \\ 
    \delta_j &\coloneqq \norm{\vg_j - j \mF \vz_0}, \label{eqn:error-delta}\\ %
    \Sigma_{j} &\coloneqq \sum_{i=1}^j \delta_i, \label{eqn:error-sigma}\\ %
    \Psi_{j} &\coloneqq \sum_{i=1}^j \delta_i^2. \label{eqn:error-psi}
\end{align} 
We set $\Sigma_0 = \Psi_0 = 0$, as they are empty sums. Notice that $\delta_j$ is a random variable that depends on the permutation $\tau$.  

Meanwhile, by triangle inequality it is immediate that \[
    \norm{\vg_j} \leq j\norm{ \mF \vz_0} + \delta_j, 
\] and by Young's inequality it holds that  \[
    \norm{\vg_j}^2 \leq 2j^2\norm{ \mF \vz_0}^2 + 2\delta_j^2 .
\] 

\begin{lem} \label{lem:wi-bound} %
    For any index $i \geq 1$, it holds that  \begin{align}
        \begin{split}
        \norm{\vz_i - \vz_0} &\leq \eta \left(1+ {\xi \eta L} \right)\norm{\vg_i} \\
        &\phantom{\leq} \quad +  \eta^2 L \left( 2 \xi  + 2 \xi \eta L + \xi^2 {\eta^2 L^2} \right) \sum_{\ell=0}^{i-2} \left(1+ \eta L + {\xi \eta^2 L^2}\right)^{i - \ell - 2} \norm{\vg_{\ell+1}} , 
        \end{split} \label{eqn:zi-bound}\\ 
        \begin{split}
        \norm{\vw_i - \vz_0} &\leq \xi \eta \norm{\vg_{i+1}} + \xi \eta \left((1-\xi^{-1}) + 2\eta L + \xi {\eta^2 L^2}\right) \norm{\vg_i} \\
        &\phantom{\leq} \quad + \eta (1+\xi \eta L) \left( 2 \xi \eta L + 2 \xi \eta^2 L^2 + \xi^2 {\eta^3 L^3} \right)\sum_{\ell=0}^{i-2}\left(1+ \eta L + {\xi \eta^2 L^2}\right)^{i-\ell-2}  \norm{\vg_{\ell+1}}. 
        \end{split} \label{eqn:wi-bound}
    \end{align}  
\end{lem}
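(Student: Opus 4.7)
The plan is to proceed by first deriving exact algebraic decompositions of $\vz_i - \vz_0$ and $\vw_i - \vz_0$ from the update rule~\eqref{eqn:update-with-xi}, then applying the $L$-smoothness of each $\mF_j$ together with the triangle inequality to obtain a coupled one-step inequality on $A_i \coloneqq \norm{\vz_i - \vz_0}$ and $B_i \coloneqq \norm{\vw_i - \vz_0}$, and finally unrolling the resulting recursion via a discrete Gronwall argument.

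First I would telescope the $\vz$-update to get
$$\vz_i - \vz_0 = -\eta\sum_{j=0}^{i-1}\mF_j\vw_j = -\eta\vg_i - \eta\sum_{j=0}^{i-1}\bigl(\mF_j\vw_j - \mF_j\vz_0\bigr),$$
and insert this into $\vw_i - \vz_0 = (\vz_i - \vz_0) - \xi\eta\mF_i\vz_i$, using $\mF_i\vz_0 = \vg_{i+1} - \vg_i$, to obtain
$$\vw_i - \vz_0 = -\xi\eta\vg_{i+1} + (\xi-1)\eta\vg_i - \eta\sum_{j=0}^{i-1}\bigl(\mF_j\vw_j - \mF_j\vz_0\bigr) - \xi\eta\bigl(\mF_i\vz_i - \mF_i\vz_0\bigr).$$
Invoking the $L$-Lipschitzness of each $\mF_j$ to control the error terms via $\norm{\mF_j\vw_j - \mF_j\vz_0}\leq L\,B_j$ and $\norm{\mF_i\vz_i - \mF_i\vz_0}\leq L\,A_i$ yields
$$A_i \leq \eta\norm{\vg_i} + \eta L\sum_{j=0}^{i-1}B_j, \qquad B_i \leq \xi\eta\norm{\vg_{i+1}} + |\xi-1|\eta\norm{\vg_i} + \eta L\sum_{j=0}^{i-1}B_j + \xi\eta L\,A_i.$$

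Substituting the first bound into the second and setting $T_i \coloneqq \sum_{j=0}^{i-1}B_j$ produces the discrete Gronwall-type recursion
$$T_{i+1} \leq \rho\,T_i + \xi\eta\norm{\vg_{i+1}} + \eta\bigl(|\xi-1| + \xi\eta L\bigr)\norm{\vg_i}, \quad \rho \coloneqq 1 + \eta L + \xi\eta^2 L^2,$$
which iterates from $T_0 = 0$ to the closed form
$$T_i \leq \sum_{j=0}^{i-1}\rho^{i-1-j}\Bigl[\xi\eta\norm{\vg_{j+1}} + \eta(|\xi-1| + \xi\eta L)\norm{\vg_j}\Bigr].$$
Plugging this back into $A_i \leq \eta\norm{\vg_i} + \eta L\,T_i$ and collecting coefficients would give~\eqref{eqn:zi-bound}: the $j = i - 1$ contribution of the first sum combined with the bare $\eta\norm{\vg_i}$ term produces the coefficient $\eta(1 + \xi\eta L)$ on $\norm{\vg_i}$, and for each $\ell \in \{1,\dots,i-1\}$ the contributions from two consecutive indices ($j = \ell - 1$ in the first sum and $j = \ell$ in the second) combine into $\eta^2 L\,\rho^{i-1-\ell}\bigl[\xi\rho + |\xi-1| + \xi\eta L\bigr] = \eta^2 L\,\rho^{i-1-\ell}\bigl[\xi + |\xi-1| + 2\xi\eta L + \xi^2\eta^2 L^2\bigr]$, which is upper bounded by the target coefficient $\eta^2 L\,\rho^{i-1-\ell}\bigl[2\xi + 2\xi\eta L + \xi^2\eta^2 L^2\bigr]$ upon noting the elementary inequality $\xi + |\xi - 1| \leq 2\xi$ valid for $\xi \in \{\nicefrac{1}{2}, 1\}$. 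The bound~\eqref{eqn:wi-bound} then follows by substituting the derived expressions for $T_i$ and $A_i$ back into the inequality for $B_i$ and running the same bookkeeping, with the $\norm{\vg_{i+1}}$ coefficient $\xi\eta$ emerging as a standalone ``main term'' that is not inflated by $\rho$.

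The main obstacle is the careful coefficient bookkeeping in the final step: one must isolate the direct $\norm{\vg_i}$ and $\norm{\vg_{i+1}}$ contributions from the Gronwall sum, so that they receive the tight coefficients $\eta(1+\xi\eta L)$ and $\xi\eta$ rather than being absorbed into the sum and amplified by $\rho^{i-\ell-2}$, and then verify that every resulting polynomial in $\eta L$ is indeed dominated by the one in the claimed bound. A secondary subtlety is that the signed quantity $(\xi - 1)\eta$ appearing naturally for $\vw_i$ is negative when $\xi < 1$; the lemma statement writes it as $\xi\eta(1-\xi^{-1})$, preserving its sign rather than replacing it with $|\xi-1|\eta$, so the derivation must acknowledge that the upper bound still holds in this regime, which is ultimately justified by the same $\xi + |\xi-1| \leq 2\xi$ estimate used above.
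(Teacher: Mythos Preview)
Your approach is correct and genuinely different from the paper's. The paper introduces the integral-Jacobian operators $\mA_j = \int_0^1 D\mF_j(\vz_0 + t(\vz_j - \vz_0))\,\mathrm{d}t$ and $\mB_j$ (analogously with $\vw_j$), derives the exact operator-product identity $\vz_i - \vz_0 = -\eta\sum_{\ell=0}^{i-1}\mP_{i-1,\ell}(\id - \xi\eta\mB_\ell)\mF_\ell\vz_0$ with $\mP_{i-1,\ell}$ a product of one-step propagators $\mC_j = \id - \eta\mB_j + \xi\eta^2\mB_j\mA_j$, and then applies Abel summation by parts to convert the sum over $\mF_\ell\vz_0$ into one over the partial sums $\vg_m$ before taking operator norms. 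You skip the operator machinery entirely: you take norms first, use Lipschitzness to get a coupled scalar recursion on $(A_i,B_i)$, and solve the resulting linear recursion for $T_i = \sum_{j<i} B_j$. This is more elementary and lands at the same place; the paper's route is more structural but buys nothing extra for this particular lemma.

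One clarification on your final paragraph: the lemma's stated coefficient $\xi\eta(1-\xi^{-1}) = (\xi-1)\eta$ on $\norm{\vg_i}$ is \emph{negative} for $\xi = \tfrac12$, and no triangle-inequality argument---yours or the paper's---can yield that. This is in fact a slip in the paper: its own proof bounds $\norm{\mR_{i,i}-\mR_{i,i-1}}$ by $(1-\xi^{-1}) + 2\eta L + \xi\eta^2 L^2$, which is negative for small $\eta L$ when $\xi<1$, and the downstream proposition that instantiates the lemma at $\xi = \tfrac12$ correctly uses $\frac{\eta}{2}(1 + 2\eta L + \tfrac{\eta^2 L^2}{2})$, i.e., with $|1-\xi^{-1}|$. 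Your $|\xi-1|\eta$ is therefore already the right coefficient; the reconciliation you sketch is unnecessary, and the estimate $\xi + |\xi-1| \leq 2\xi$ is only needed for the \emph{summed} coefficients in both \eqref{eqn:zi-bound} and \eqref{eqn:wi-bound}, not for the standalone $\norm{\vg_i}$ term.
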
 
\begin{proof}
    By the fundamental theorem of calculus for line integrals and the update rule \eqref{eqn:update-with-xi}, we have \begin{align*}
        \vw_i &= \vz_i -  \xi \eta \mF_{i} \vz_i \\
                &= \vz_i -  \xi \eta \mF_{i} \vz_0 -  \xi \eta(\mF_{i} \vz_i - \mF_{i} \vz_0) \\
                &= \vz_i -  \xi \eta\mF_{i} \vz_0 - \xi \eta\int_0^1 D \mF_{i} (\vz_0 + t(\vz_i - \vz_0)) \,\mathrm{d}t\, (\vz_i - \vz_0)
    \end{align*} and similarly \begin{align*}
        \vz_{i+1} &= \vz_i - \eta \mF_{i} \vw_i \\
                    &= \vz_i - \eta \mF_{i} \vz_0 - \eta (\mF_{i} \vw_i - \mF_{i} \vz_0) \\
                    &= \vz_i - \eta \mF_{i} \vz_0 - \eta \int_0^1 D \mF_{i} (\vz_0 + t(\vw_i - \vz_0)) \,\mathrm{d}t\, (\vw_i - \vz_0).
    \end{align*}
    Hence, by defining \begin{equation}\label{eqn:def-A-B}\begin{aligned}
        \mA_i &\coloneqq \int_0^1 D \mF_{i} (\vz_0 + t(\vz_i - \vz_0)) \,\mathrm{d}t \\
        \mB_i &\coloneqq \int_0^1 D \mF_{i} (\vz_0 + t(\vw_i - \vz_0)) \,\mathrm{d}t\
    \end{aligned}\end{equation} the update rule can be rewritten using these quantities as \begin{align}
        \vw_i &= \vz_i -  \xi \eta \mF_{i} \vz_0 -  \xi \eta \mA_i (\vz_i - \vz_0), \label{eqn:wi}\\
        \vz_{i+1} &=  \vz_i -  \eta \mF_{i} \vz_0 -  \eta \mB_i (\vw_i - \vz_0). \label{eqn:zi1}
    \end{align} Subtracting $\vz_0$ from both sides of \eqref{eqn:wi} we get \begin{equation}\label{eqn:wi-and-zi}\begin{aligned}
        \vw_i - \vz_0 &= \vz_i - \vz_0 - \xi \eta \mF_{i} \vz_0 - \xi \eta \mA_i (\vz_i - \vz_0) \\
        &= \left(\id - \xi \eta \mA_i\right) (\vz_i - \vz_0 ) - \xi \eta \mF_{i} \vz_0,  
        \end{aligned}\end{equation} and plugging this into \eqref{eqn:zi1} gives us \begin{equation}\label{eqn:recurrence} \begin{aligned}
            \vz_{i+1} - \vz_0 &=  \vz_i - \vz_0 - \eta \mF_{i} \vz_0 - \eta \mB_i (\vw_i - \vz_0) \\
            &= \vz_i - \vz_0 - \eta \mF_{i} \vz_0 - \eta \mB_i \left( \left(\id - \xi \eta \mA_i\right) (\vz_i - \vz_0 ) - \xi \eta \mF_{i} \vz_0 \right) \\
            &= \left(\id -  \eta \mB_i + \xi{\eta^2} \mB_i \mA_i\right) (\vz_i - \vz_0) - \eta\left(\id - \xi \eta \mB_i\right) \mF_{i} \vz_0.    
            \end{aligned} \end{equation} For convenience let us define \begin{align*}
            \mC_i &\coloneqq \id -  \eta \mB_i + \xi{\eta^2} \mB_i \mA_i, \\
        \mP_{i, \ell} &\coloneqq \mC_i \mC_{i-1} \dots \mC_{\ell+2} \mC_{\ell+1}    
            \end{align*} and $\mP_{i, i} \coloneqq \id$ as it denotes an empty product. Observe that for any $j$ we have \begin{equation} \label{eqn:ci-bound-1}
                \norm{\mC_j} = \norm{\id -  \eta \mB_i + \xi{\eta^2} \mB_i \mA_i} \leq 1+ \eta L + {\xi \eta^2 L^2}.
                \end{equation}  Also note that for any $\ell$ it holds that 
            \begin{align*}
                \MoveEqLeft \left(\id - \xi \eta \mB_ {\ell+1}\right) - \mC_{\ell+1} \left(\id - \xi \eta\mB_ {\ell}\right) \\ 
                &= \left(\id - \xi \eta \mB_ {\ell+1}\right) - \left(\id -  \eta \mB_{\ell+1} + \xi \eta^2 \mB_{\ell+1} \mA_{\ell+1}\right) \left(\id - \xi \eta \mB_ {\ell}\right)  \\
                &= \xi \eta (\mB_{\ell+1} + \mB_{\ell}) - {\xi \eta^2} \mB_{\ell+1}(\mA_{\ell+1} + \mB_{\ell}) + \xi^2 \eta^3 \mB_{\ell+1}\mA_{\ell+1}\mB_\ell
            \end{align*} and hence  \begin{equation} \label{eqn:ci-bound-2}
            \norm{ \left(\id - \xi \eta \mB_ {\ell+1}\right) - \mC_{\ell+1} \left(\id - \xi \eta\mB_ {\ell}\right)  }  \leq  2 \xi \eta L + 2 \xi \eta^2 L^2 + \xi^2 {\eta^3 L^3}.  %
            \end{equation} 
            
            Unravelling the recurrence relation \eqref{eqn:recurrence} we get \begin{align*}
 \vz_{i+1} - \vz_0 &= \mC_i (\vz_i - \vz_0) - \eta\left(\id - \xi \eta \mB_i\right) \mF_{i} \vz_0 \\ 
        &= \mC_i \bigl( \mC_{i-1} (\vz_{i-1} - \vz_0) - \eta\left(\id - \xi \eta \mB_{i-1}\right) \mF_{i-1} \vz_0 \bigr) - \eta\left(\id - \xi \eta \mB_i\right) \mF_{i} \vz_0 \\
        &=  \mP_{i, i-2} (\vz_{i-1} - \vz_0) - \eta \sum_{\ell = i-1}^i \mP_{i, \ell} \left(\id - \xi \eta \mB_{\ell}\right) \mF_{\ell} \vz_0\\
        &=  \mP_{i, i-2} \bigl( \mC_{i-2} (\vz_{i-2} - \vz_0) - \eta\left(\id - \xi \eta \mB_{i-2}\right) \mF_{i-2} \vz_0\bigr) -  \eta \sum_{\ell = i-1}^i \mP_{i, \ell} \left(\id - \xi \eta \mB_{\ell}\right) \mF_{\ell} \vz_0\\
        &=  \mP_{i, i-3} (\vz_{i-2} - \vz_0) -  \eta \sum_{\ell = i-2}^i \mP_{i, \ell} \left(\id - \xi \eta \mB_{\ell}\right) \mF_{\ell} \vz_0\\
        &{} \ \,  \vdots \\
        &= \mP_{i, -1} (\vz_{0} - \vz_0) -  \eta \sum_{\ell = 0}^i \mP_{i, \ell} \left(\id - \xi \eta \mB_{\ell}\right) \mF_{\ell} \vz_0
    \end{align*} and therefore \begin{equation} \label{eqn:zi-general-form}
        \vz_{i} - \vz_0 = -  \eta \sum_{\ell = 0}^{i-1} \mP_{i-1, \ell} \left(\id - \xi \eta \mB_{\ell}\right) \mF_{\ell} \vz_0.    
    \end{equation} 
    In order to compute the bound for $\norm{\vz_{i} - \vz_0}$, we use summation by parts to get \begin{align*}
        \frac{1}{ \eta} (\vz_0 - \vz_i) &= \sum_{\ell = 0}^{i-1} \mP_{i-1, \ell} \left(\id - \xi \eta \mB_{\ell}\right) \mF_{\ell} \vz_0 \\
        &= \mP_{i-1, i-1} \left(\id - \xi \eta \mB_{i-1}\right) \sum_{\ell=0}^{i-1}\mF_\ell \vz_0 \\
        &\phantom{=}\qquad - \sum_{\ell=0}^{i-2} \left(\mP_{i-1, \ell+1} \left(\id - \xi \eta \mB_{\ell+1}\right) - \mP_{i-1, \ell} \left(\id - \xi \eta \mB_{\ell}\right)\right) \sum_{j=0}^{\ell}\mF_\ell \vz_0 \\
        &= \left(\id - \xi \eta \mB_{i-1}\right) \vg_i - \sum_{\ell=0}^{i-2} \left(\mP_{i-1, \ell+1} \left(\id - \xi \eta \mB_{\ell+1}\right) - \mP_{i-1, \ell} \left(\id - \xi \eta \mB_{\ell}\right)\right) \vg_{\ell+1}.
    \end{align*} 
     Here, observe that \begin{align*}
       \MoveEqLeft[4] \mP_{i-1, \ell+1} \left(\id - \xi \eta \mB_{\ell+1}\right) - \mP_{i-1, \ell} \left(\id - \xi \eta \mB_{\ell}\right) \\
       &= \mC_{i-1}\mC_{i-2}\dots\mC_{\ell+2} \left( \left(\id - \xi \eta \mB_{\ell+1}\right) - \mC_{\ell+1}\left(\id - \xi \eta \mB_{\ell}\right)\right)
     \end{align*} so by using \eqref{eqn:ci-bound-1} and \eqref{eqn:ci-bound-2} we obtain  \begin{align*}
      \MoveEqLeft \norm{\mP_{i-1, \ell+1} \left(\id - \xi \eta \mB_{\ell+1}\right) - \mP_{i-1, \ell} \left(\id - \xi \eta \mB_{\ell}\right)} \\
        &\leq \left( 2 \xi \eta L + 2 \xi \eta^2 L^2 + \xi^2 {\eta^3 L^3} \right) \left(1+ \eta L + {\xi \eta^2 L^2}\right)^{i - \ell - 2}. 
    \end{align*} Therefore, we conclude that \[
    \norm{\vz_i - \vz_0} \leq  \eta \left(1+ {\xi \eta L} \right)\norm{\vg_i} + \eta^2 L \left( 2 \xi  + 2 \xi \eta L + \xi^2 {\eta^2 L^2} \right) \sum_{\ell=0}^{i-2} \left(1+ \eta L + {\xi \eta^2 L^2}\right)^{i - \ell - 2} \norm{\vg_{\ell+1}} .   
    \]

    Meanwhile, substituting \eqref{eqn:zi-general-form} back to \eqref{eqn:wi-and-zi} gives us \begin{equation}\label{wi-preliminary-form}
        \vw_i - \vz_0 =  - \xi{\eta} \mF_{i} \vz_0 - \eta \sum_{\ell = 0}^{i-1} \left(\id - \xi{\eta} \mA_i\right) \mP_{i-1, \ell} \left(\id - \xi{\eta}\mB_{\ell}\right) \mF_{\ell} \vz_0 .  
    \end{equation} For $\ell < i$ let us define \begin{align*}
        \mR_{i, \ell} &\coloneqq \xi^{-1} \left(\id - \xi{\eta} \mA_i\right) \mP_{i-1, \ell} \left(\id - \xi{\eta} \mB_ {\ell}\right) \\ 
        &\phantom{:}= \xi^{-1} \left(\id - \xi{\eta} \mA_i\right) \mC_{i-1}\mC_{i-2}\dots\mC_{\ell+2}\mC_{\ell+1} \left(\id - \xi{\eta} \mB_ {\ell}\right)  
    \end{align*} and for convenience $\mR_{i, i} \coloneqq \id$ so that \eqref{wi-preliminary-form} can be rewritten as \begin{equation} \label{eqn:wi-general-form}
        \frac{1}{\xi \eta}(\vz_0 - \vw_i) = \sum_{\ell = 0}^{i} \mR_{i, \ell} \mF_{\ell} \vz_0 .  
    \end{equation} Applying summation by parts on the above, we obtain \begin{align*}
        \frac{1}{\xi \eta}(\vz_0 - \vw_i) &= \mR_{i, i} \sum_{\ell = 0}^i \mF_{\ell} \vz_0 - \sum_{\ell=0}^{i-1} (\mR_{i, \ell+1} - \mR_{i, \ell})\sum_{j=0}^\ell \mF_{j} \vz_0 \\ 
        &= \vg_{i+1} - \sum_{\ell=0}^{i-1} (\mR_{i, \ell+1} - \mR_{i, \ell}) \vg_{\ell+1} 
    \end{align*} and as a consequence we get \begin{equation}\label{eqn:wi-prebound}
        \frac{1}{\xi \eta}\norm{\vw_i - \vz_0} \leq \norm{\vg_{i+1}} + \sum_{\ell=0}^{i-1} \norm{\mR_{i, \ell+1} - \mR_{i, \ell}}\norm{\vg_{\ell+1}}.
    \end{equation}
    It remains to bound $\norm{\mR_{i, \ell+1} - \mR_{i, \ell}}$. For the special case where $\ell = i-1$, a direct computation leads to  \begin{align*}
        \mR_{i, i} - \mR_{i, i-1} &= \id - \xi^{-1} \left(\id - \xi{\eta} \mA_{i}\right)\left(\id - \xi{\eta} \mB_{i-1}\right) \\
        &= (1-\xi^{-1})\id +  \eta \mA_i+  \eta \mB_{i-1} - \xi {\eta^2} \mA_i \mB_{i-1} 
    \end{align*} and thus we have \begin{equation} \label{eqn:qi-diff-bound-1}
        \norm{\mR_{i, i} - \mR_{i, i-1}} \leq (1-\xi^{-1}) + 2\eta L + \xi {\eta^2 L^2}. 
    \end{equation} For the other cases; that is, when $\ell < i-1$, we have \[
        \mR_{i, \ell+1} - \mR_{i, \ell} = \xi^{-1} \left(\id - \xi \eta \mA_i\right) \mC_{i-1}\mC_{i-2}\dots\mC_{\ell+2}\left( \left(\id - \xi \eta\mB_ {\ell+1}\right) - \mC_{\ell+1} \left(\id - \xi \eta\mB_ {\ell}\right)  \right)
    \] so by using \eqref{eqn:ci-bound-1} and \eqref{eqn:ci-bound-2} we get the bound \begin{equation}\label{eqn:qi-diff-bound-2}
        \norm{\mR_{i, \ell+1} - \mR_{i, \ell} } \leq \xi^{-1} (1+\xi \eta L) \left( 2 \xi \eta L + 2 \xi \eta^2 L^2 + \xi^2 {\eta^3 L^3} \right)\left(1+ \eta L + {\xi \eta^2 L^2}\right)^{i-\ell-2} .
    \end{equation} Applying \eqref{eqn:qi-diff-bound-1} and \eqref{eqn:qi-diff-bound-2} on \eqref{eqn:wi-prebound} gives the bound for $\norm{\vw_i - \vz_0}$. 
\end{proof}

\begin{prop}\label{thm:general-iterate-bound}\label{thm:deterministic-iterate-bound} Suppose that \segffa{} is used, $\eta < \frac{1}{nL}$, and let $\nu \coloneqq 1+\frac{1}{2n}$. Then for any $i = 1, \dots, 2n-1$ we have the bounds 
    \begin{align*}
        \norm{\vz_i - \vz_0} &\leq \left( \eta \nu i + \frac{\eta \nu^2 e^2 i(i-1)}{2n} \right) \norm{ \mF \vz_0} + \eta \nu \delta_i  +  \eta^2 L \nu^2  e^2 \Sigma_{i-1}   , \\
        \norm{\vw_i - \vz_0} &\leq \frac{\eta}{2} \left(1+2\nu^2 i+\frac{ \nu^3 e^2 i(i-1)}{n}  \right) \norm{ \mF \vz_0} + \frac{\eta}{2} \delta_{i+1} + \frac{\eta (2\nu^2 - 1)}{2} \delta_i + \eta^2 L \nu^3 e^2 \Sigma_{i-1} , \\ 
        \norm{\vw_i - \vz_0}^2  &\leq \left( \frac{3\eta^2(i+1)^2}{2} +   \frac{3\eta^2 ( 2\nu^2 - 1 )^2 i^2}{2} + \frac{\eta^2 \nu^6 e^4 i (i-1)^2 (2i-1)}{n^2} \right) \norm{\mF \vz_0}^2 \\ 
        &\phantom{\leq} \qquad  + \frac{3\eta^2}{2} \delta_{i+1}^2 + \frac{3\eta^2(2\nu^2 - 1)^2}{2} \delta_i^2  +  \frac{6 \eta^2 \nu^6 e^4 (i-1)}{n^2} \Psi_{i-1}.  
    \end{align*}
\end{prop}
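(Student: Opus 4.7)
The plan is to start from the generic bounds \eqref{eqn:zi-bound}--\eqref{eqn:wi-bound} in Lemma~\ref{lem:wi-bound}, specialize them to $\xi = \tfrac{1}{2}$, bound every $\eta L$-dependent coefficient using the standing assumption $\eta L < \tfrac{1}{n}$, and finally replace each $\norm{\vg_j}$ by the triangle-inequality estimate $\norm{\vg_j}\leq j\norm{\mF\vz_0}+\delta_j$ so as to expose the $\Sigma_{i-1}$ and $\Psi_{i-1}$ quantities.

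First I would handle the scalar coefficients appearing in Lemma~\ref{lem:wi-bound}. With $\xi = \tfrac{1}{2}$ and $\eta L<\tfrac{1}{n}$:
\begin{align*}
 1+\xi\eta L &\leq 1+\tfrac{1}{2n}=\nu,\\
 2\xi+2\xi\eta L+\xi^2\eta^2 L^2 &= 1+\eta L+\tfrac{\eta^2 L^2}{4}\leq \nu^2,\\
 \lvert 1-\xi^{-1}\rvert+2\eta L+\xi\eta^2 L^2 &= 1+2\eta L+\tfrac{\eta^2L^2}{2}\leq 2\nu^2-1,
\end{align*}
where for the last line I need the (minor) observation that the $(1-\xi^{-1})$ term in the proof of Lemma~\ref{lem:wi-bound} really enters as its absolute value via the triangle inequality. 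For the geometric factor I would use $(1+x)\leq e^{x}$ to write
\[
\left(1+\eta L+\tfrac{\eta^2 L^2}{2}\right)^{i-\ell-2}\leq e^{(\eta L+\eta^2 L^2/2)(i-\ell-2)}\leq e^{2\nu \eta L n}\leq e^{2},
\]
since $i-\ell-2 \leq 2n$ and $\eta L<\tfrac{1}{n}$. Substituting these simplifications into \eqref{eqn:zi-bound} gives the compact intermediate bound
\[
\norm{\vz_i-\vz_0}\leq \eta\nu\norm{\vg_i}+\eta^2 L\nu^2 e^{2}\sum_{\ell=0}^{i-2}\norm{\vg_{\ell+1}},
\]
and a similar reduction applied to \eqref{eqn:wi-bound} yields
\[
\norm{\vw_i-\vz_0}\leq \tfrac{\eta}{2}\norm{\vg_{i+1}}+\tfrac{\eta(2\nu^2-1)}{2}\norm{\vg_i}+\eta^2 L\nu^3 e^{2}\sum_{\ell=0}^{i-2}\norm{\vg_{\ell+1}}.
\]

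Next I would insert $\norm{\vg_j}\leq j\norm{\mF\vz_0}+\delta_j$ and separate the $\norm{\mF\vz_0}$- and $\delta_j$-contributions. The sums over $j\mapsto j$ collapse via $\sum_{\ell=0}^{i-2}(\ell+1)=\tfrac{i(i-1)}{2}$, while the $\delta$-sums give $\Sigma_{i-1}$ by definition \eqref{eqn:error-sigma}; absorbing the factor $\eta L<\tfrac{1}{n}$ converts $\eta^2 L\cdot i(i-1)$ into $\tfrac{\eta\,i(i-1)}{n}$, which is exactly the shape of the claimed bounds. Matching coefficients is then a purely arithmetic check.

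The squared bound is slightly more involved. Starting from the three-term linear bound on $\norm{\vw_i-\vz_0}$ above, I would apply $\norm{a+b+c}^2\leq 3(\norm{a}^2+\norm{b}^2+\norm{c}^2)$, followed by $\norm{\vg_j}^2\leq 2j^2\norm{\mF\vz_0}^2+2\delta_j^2$ on each of the three $\vg$-norms. For the third term I would additionally use Cauchy--Schwarz,
\[
\Bigl(\sum_{\ell=0}^{i-2}\norm{\vg_{\ell+1}}\Bigr)^{2}\leq (i-1)\sum_{\ell=0}^{i-2}\norm{\vg_{\ell+1}}^{2},
\]
and then the closed form $\sum_{\ell=0}^{i-2}(\ell+1)^2=\tfrac{(i-1)i(2i-1)}{6}$, which is precisely what produces the characteristic factor $i(i-1)^2(2i-1)$ in the first coefficient, while the $\delta_{\ell+1}^2$-part gives $\Psi_{i-1}$ via \eqref{eqn:error-psi}. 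Using $\eta L<\tfrac{1}{n}$ twice converts $\eta^4 L^2$ to $\tfrac{\eta^2}{n^2}$, yielding the $n^{-2}$ scaling in the last term.

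I do not expect any serious obstacle: the whole argument is bookkeeping on top of Lemma~\ref{lem:wi-bound}. The only delicate point is making sure that every ``$1+O(\eta L)$''-type factor is actually absorbed into the correct power of $\nu$ and that the exponential factor is bounded uniformly by $e^{2}$ (not something like $e^{2\nu}$), which is what allows the final constants to line up cleanly with $\nu^2$, $\nu^3$, and $\nu^6$ in the three displayed inequalities.
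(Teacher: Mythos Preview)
Your plan is correct and essentially identical to the paper's proof: the paper also specializes Lemma~\ref{lem:wi-bound} to $\xi=\tfrac12$, bounds each $1+O(\eta L)$ factor by the appropriate power of $\nu$, replaces $\norm{\vg_j}$ by $j\norm{\mF\vz_0}+\delta_j$ (and $\norm{\vg_j}^2$ by $2j^2\norm{\mF\vz_0}^2+2\delta_j^2$), and for the squared bound applies the three-term Young inequality followed by Cauchy--Schwarz on the sum, exactly as you describe.

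The only point worth flagging is the geometric factor. Your displayed chain $e^{(\eta L+\eta^2L^2/2)(i-\ell-2)}\leq e^{2\nu\eta Ln}\leq e^2$ does not close as written, since $\nu\eta Ln<\nu$ only gives $e^{2\nu}$, not $e^{2}$; you yourself correctly identify this as the delicate step. The paper handles it with the sharp calculus fact that $x\mapsto\bigl(1+\tfrac{1}{x}+\tfrac{1}{2x^2}\bigr)^{x}$ is increasing and bounded by $e$, so $\bigl(1+\eta L+\tfrac{\eta^2L^2}{2}\bigr)^{i-\ell-2}\leq\bigl(1+\tfrac{1}{n}+\tfrac{1}{2n^2}\bigr)^{2n}\leq e^{2}$. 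Your $(1+x)\leq e^x$ route also works if you use the tighter $i-\ell-2\leq 2n-2$, since then the exponent is at most $(\tfrac{1}{n}+\tfrac{1}{2n^2})(2n-2)=2-\tfrac{1}{n}-\tfrac{1}{n^2}<2$; the bound $i-\ell-2\leq 2n$ you wrote is simply a touch too loose.
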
 
\begin{proof}
    Using elementary calculus one can show that $x \mapsto (1+\frac{1}{x} + \frac{1}{2x^2})^{x}$ increases on $x > 0$ and is bounded above by $e$. Hence for all $0 \leq \ell < i \leq 2n$ we have \[
        \left(1+ \eta L + \frac{\eta^2 L^2}{2}\right)^{i-\ell-2} \leq \left( 1+ \frac{1}{n} + \frac{1}{2n^2}\right)^{2n} \leq e^2.    
    \] 
   Applying the definitions \eqref{eqn:error-delta} and \eqref{eqn:error-sigma} on \eqref{eqn:zi-bound} and then substituting $\xi = \nicefrac{1}{2}$ we get 
    \begin{align*}
        \norm{\vz_i - \vz_0} &\leq \eta \left(1+\frac{\eta L}{2} \right)\norm{\vg_i} + \eta^2 L \left(1+\frac{\eta L}{2}\right)^2 \sum_{\ell=0}^{i-2}  \left(1+ \eta L + \frac{\eta^2 L^2}{2}\right)^{i - \ell - 2} \norm{\vg_{\ell+1}} \\
        &\leq \eta \nu \left( i\norm{ \mF \vz_0} + \delta_i \right)+ \eta^2 L \nu^2 \sum_{\ell=0}^{i-2}  e^2 \left( (\ell+1)\norm{ \mF \vz_0} + \delta_{\ell+1} \right) \\
        &\leq \eta \nu \left( i\norm{ \mF \vz_0} + \delta_i \right)+ \frac{\eta \nu^2 e^2 i(i-1)}{2n} \norm{ \mF \vz_0} + \eta^2 L \nu^2  e^2 \Sigma_{i-1} .
    \end{align*} Similarly, from \eqref{eqn:wi-bound} we get
    \begin{align*}
        \norm{\vw_i - \vz_0} &\leq \frac{\eta}{2} \norm{\vg_{i+1}} + \frac{\eta}{2}\left(1+2\eta L + \frac{\eta^2 L^2}{2}\right) \norm{\vg_i} \\
        &\phantom{\leq} \qquad + \eta^2 L \left(1+\frac{\eta L}{2}\right)^3 \sum_{\ell=0}^{i-2} \left(1+ \eta L + \frac{\eta^2 L^2}{2}\right)^{i-\ell-2} \norm{\vg_{\ell+1}}  \\ 
        &\leq \frac{\eta}{2} \left( (i+1)\norm{ \mF \vz_0} + \delta_{i+1} \right) + \frac{\eta}{2}\left(1+\frac{2}{n} + \frac{1}{2n^2}\right) \left( i\norm{ \mF \vz_0} + \delta_i \right) \\
        &\phantom{\leq} \qquad + \eta^2 L \nu^3 \sum_{\ell=0}^{i-2} e^2 \left( (\ell+1)\norm{ \mF \vz_0} + \delta_{\ell+1} \right)  \\ 
        &\leq \frac{\eta}{2} (1+2i\nu^2) \norm{ \mF \vz_0} + \frac{\eta \nu^3 e^2 i(i-1)}{2n} \norm{ \mF \vz_0} + \frac{\eta}{2} \delta_{i+1} + \frac{\eta (2\nu^2 - 1)}{2} \delta_i + \eta^2 L \nu^3 e^2 \Sigma_{i-1}.
    \end{align*}
    Finally, applying generalized Young's inequality on \eqref{eqn:wi-bound} we get\begin{align*}
        \norm{\vw_i - \vz_0}^2 &\leq \frac{3\eta^2}{4} \norm{\vg_{i+1}}^2 + \frac{3\eta^2}{4}\left(1+2\eta L + \frac{\eta^2 L^2}{2}\right)^2 \norm{\vg_i}^2 \\
        &\phantom{\leq} \qquad + 3 \left( \eta^2 L \left(1+\frac{\eta L}{2}\right)^3 \sum_{\ell=0}^{i-2} \left(1+ \eta L + \frac{\eta^2 L^2}{2}\right)^{i-\ell-2} \norm{\vg_{\ell+1}}\right)^2. 
    \end{align*} 
    \begingroup \interdisplaylinepenalty=10000 
    Using generalized Young's inequality once more on the last term gives us \begin{align*}
        3 \left( \eta^2 L \left(1+\frac{\eta L}{2}\right)^3 \sum_{\ell=0}^{i-2} \left(1+ \eta L + \frac{\eta^2 L^2}{2}\right)^{i-\ell-2} \norm{\vg_{\ell+1}}\right)^2 &\leq 3 \left( \frac{\eta \nu^3 e^2}{n} \sum_{\ell=0}^{i-2} \norm{\vg_{\ell+1}}\right)^2 \\
        &\leq \frac{3 \eta^2 \nu^6 e^4 (i-1)}{n^2}   \sum_{\ell=0}^{i-2} \norm{\vg_{\ell+1}}^2.
\end{align*} \endgroup Plugging this back yields 
\begin{align*}
    \norm{\vw_i - \vz_0}^2 &\leq \frac{3\eta^2}{4} \norm{\vg_{i+1}}^2 + \frac{3\eta^2}{4}\left(1+2\eta L + \frac{\eta^2 L^2}{2}\right)^2 \norm{\vg_i}^2 +\frac{3 \eta^2 \nu^6 e^4 (i-1)}{n^2}   \sum_{\ell=0}^{i-2} \norm{\vg_{\ell+1}}^2 \\ 
    &\leq \frac{3\eta^2}{4} \left( 2(i+1)^2\norm{\mF \vz_0}^2 + 2\delta_{i+1}^2 \right) + \frac{3\eta^2}{4}\left( 2\nu^2 - 1\right)^2 \left( 2i^2\norm{\mF \vz_0}^2 + 2\delta_i^2 \right) \\ 
    &\phantom{\leq} \qquad  + \frac{3 \eta^2 \nu^6 e^4 (i-1)}{n^2}   \sum_{\ell=0}^{i-2} \left(  2(\ell+1)^2\norm{\mF \vz_0}^2 + 2\delta_{\ell+1}^2 \right) \\
    &\leq \frac{3\eta^2}{2} \left(  (i+1)^2\norm{\mF \vz_0}^2 + \delta_{i+1}^2 \right) + \frac{3\eta^2}{2}\left( 2\nu^2 - 1\right)^2 \left(  i^2\norm{\mF \vz_0}^2 + \delta_i^2 \right) \\ 
    &\phantom{\leq} \qquad  + \frac{\eta^2 \nu^6 e^4 i (i-1)^2 (2i-1)}{n^2}    \norm{\mF \vz_0}^2 +  \frac{6 \eta^2 \nu^6 e^4 (i-1)}{n^2} \Psi_{i-1}  .
\end{align*} Now the claimed inequalities can be obtained simply by rearranging the terms appropriately. 
\end{proof} 

\begin{prop}\label{thm:rr-general-iterate-bound}\label{thm:rr-deterministic-iterate-bound} Suppose that either \segrr{} or \segff{} is used with $\alpha = \beta = \eta < \frac{1}{nL}$, and let $\tilde{\nu} \coloneqq 1+\frac{1}{n}$. Then for any $i = 1, \dots, 2n-1$ we have the bounds 
    \begin{align*}
        \norm{\vz_i - \vz_0} &\leq \left( \eta \tilde{\nu} i + \frac{16 \eta \tilde{\nu}^2  i(i-1)}{n} \right) \norm{ \mF \vz_0} + \eta \tilde{\nu} \delta_i  + 32 \eta^2 L \tilde{\nu}^2 \Sigma_{i-1}  , \\
        \norm{\vw_i - \vz_0} &\leq {\eta} \left(1+i\tilde{\nu}^2 + \frac{16  \tilde{\nu}^3  i(i-1)}{n} \right) \norm{ \mF \vz_0} + \eta \delta_{i+1} + {\eta (\tilde{\nu}^2 - 1)} \delta_i + 32\eta^2 L \tilde{\nu}^3 \Sigma_{i-1} , \\ 
        \norm{\vw_i - \vz_0}^2  &\leq \left( 6{\eta^2(i+1)^2}  +   \frac{6\eta^2\left(  1+\tilde{\nu} \right)^2 i^2}{n^2} + \frac{1024\eta^2\tilde{\nu}^6 i(i-1)^2(2i-1) }{n^2 } \right) \norm{\mF \vz_0}^2 \\ 
        &\phantom{\leq} \qquad  + 6\eta^2 \delta_{i+1}^2 + \frac{6\eta^2\left(  1+\tilde{\nu} \right)^2 }{n^2} \delta_i^2  +  \frac{6144\eta^2\tilde{\nu}^6(i-1) }{n^2 } \Psi_{i-1}.  
    \end{align*}
\end{prop}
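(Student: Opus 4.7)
The plan is to mirror the proof of \Cref{thm:general-iterate-bound}, replacing $\xi = \nicefrac{1}{2}$ by $\xi = 1$ throughout and re-estimating the constants under the assumption $\eta L < \nicefrac{1}{n}$. Specializing \Cref{lem:wi-bound} to $\xi = 1$ and $\alpha = \beta = \eta$, each of the three bounds splits into a ``$\vg_i$-type'' prefactor term plus a geometric-sum term, so the coefficients only need to be estimated once.

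First I would bound the prefactors. From $\eta L < \nicefrac{1}{n}$ we get $(1 + \eta L) \leq \tilde{\nu}$. Writing $2 + 2\eta L + \eta^2 L^2 = 2(1 + \eta L + \eta^2 L^2/2) \leq 2(1 + \eta L)^2 \leq 2 \tilde{\nu}^2$ controls the coefficient of the sum in the $\vz_i$-bound by $2\eta^2 L \tilde{\nu}^2$; analogously, $(1 + \eta L)(2\eta L + 2\eta^2 L^2 + \eta^3 L^3) \leq 2\eta L \tilde{\nu}^3$, and the coefficient $2\eta L + \eta^2 L^2 = (1+\eta L)^2 - 1$ in the $\vw_i$-bound is bounded by $\tilde{\nu}^2 - 1$. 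Finally, I would check that
\[
(1 + \eta L + \eta^2 L^2)^{i - \ell - 2} \leq \bigl(1 + \tfrac{1}{n} + \tfrac{1}{n^2}\bigr)^{2n} \leq 16
\]
for every integer $n \geq 1$; this follows since $x \mapsto (1 + \tfrac{1}{x} + \tfrac{1}{x^2})^{2x}$ tends to $e^2 \approx 7.39$ as $x \to \infty$ and attains a maximum of roughly $9.38$ over integer $x \geq 1$.

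Given these estimates, the bounds for $\norm{\vz_i - \vz_0}$ and $\norm{\vw_i - \vz_0}$ follow by plugging $\norm{\vg_j} \leq j \norm{\mF \vz_0} + \delta_j$ into the specialized form of \Cref{lem:wi-bound}, then using $\sum_{\ell=0}^{i-2}(\ell+1) = i(i-1)/2$ and $\eta^2 L \leq \eta/n$ on the $\norm{\mF \vz_0}$ piece to extract the $\nicefrac{16}{n}$ factor. The $\eta(\tilde{\nu}^2 - 1)\delta_i$ term and the coefficient $\eta(1 + i\tilde{\nu}^2)\norm{\mF \vz_0}$ in the $\vw_i$-bound arise from combining the contributions of $\eta \norm{\vg_{i+1}}$ and $\eta(\tilde{\nu}^2 - 1)\norm{\vg_i}$, noting that $(i+1) + i(\tilde{\nu}^2 - 1) = 1 + i \tilde{\nu}^2$.

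For the squared bound, I would apply the generalized Young inequality with three summands to $\norm{\vw_i - \vz_0}$ (picking up a factor of $3$ on each piece), then expand $\norm{\vg_j}^2 \leq 2 j^2 \norm{\mF \vz_0}^2 + 2 \delta_j^2$; this produces the $6$'s in the first two coefficients directly. On the geometric-sum piece, a further application of generalized Young with uniform weights over the $i - 1$ summands converts the squared sum into $(i-1) \sum \norm{\vg_{\ell+1}}^2$, and evaluating $\sum_{\ell=0}^{i-2}(\ell + 1)^2 = i(i-1)(2i-1)/6$ then yields precisely the constants $1024$ and $6144$ in the statement. No conceptual difficulty arises; the main obstacle is the careful bookkeeping of the constants $\tilde{\nu}^j$, the geometric bound $16$, and the factors of $2$ introduced simultaneously by the two Young inequalities and the $\norm{\vg_j}^2$ expansion.
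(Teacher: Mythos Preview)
Your proposal is correct and follows essentially the same approach as the paper: specialize \Cref{lem:wi-bound} to $\xi=1$, bound the geometric factor $(1+\eta L+\eta^2 L^2)^{i-\ell-2}$ by $16$, and then substitute $\norm{\vg_j}\leq j\norm{\mF\vz_0}+\delta_j$ (and its squared version together with generalized Young) exactly as in \Cref{thm:general-iterate-bound}. The only minor difference is that the paper justifies the bound $16$ more carefully (checking $n=1,2$ by hand and using $1+\tfrac{1}{x}+\tfrac{1}{x^2}\leq 1+\tfrac{4}{3x}$ for $x\geq 3$ together with $(1+\tfrac{4}{3x})^x\nearrow e^{4/3}<4$) rather than appealing to a numerically observed maximum, but your claim is correct and the rest of the bookkeeping matches.
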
 
\begin{proof}
    One can verify that $x \mapsto (1+\frac{4}{3x})^{x}$ increases on $x \geq 3$ and is bounded above by $e^{4/3} < 4$. With noting that $(1+\frac{1}{1} + \frac{1}{1^2})^{1} = 3 < 4$, $(1+\frac{1}{2} + \frac{1}{2^2})^{2} = \frac{49}{16} < 4$, and $1+\frac{1}{x} +\frac{1}{x^2} \leq 1+\frac{4}{3x}$ whenever $x \geq 3$, we see that for all $0 \leq \ell < i \leq 2n$ it holds that \[
        \left(1+ \eta L + {\eta^2 L^2}\right)^{i-\ell-2} \leq \left( 1+ \frac{1}{n} + \frac{1}{n^2}\right)^{2n} \leq 4^2 = 16.    
    \] Also, we have \[
    2 + 2\eta L + \eta^2 L^2 \leq 2+\frac{2}{n} + \frac{1}{n^2} = 1+\tilde{\nu}^2 \leq 2\tilde{\nu}^2.  
    \]
    Applying the definitions \eqref{eqn:error-delta} and \eqref{eqn:error-sigma} on \eqref{eqn:zi-bound} and then substituting $\xi = 1$ we get 
    \begin{align*}
        \norm{\vz_i - \vz_0} &\leq  \eta \left(1+ { \eta L} \right)\norm{\vg_i}   +  \eta^2 L \left( 2   + 2  \eta L +  {\eta^2 L^2} \right) \sum_{\ell=0}^{i-2} \left(1+ \eta L + {  \eta^2 L^2}\right)^{i - \ell - 2} \norm{\vg_{\ell+1}}  \\
        &\leq \eta \tilde{\nu} \left( i\norm{ \mF \vz_0} + \delta_i \right)+ 2 \eta^2 L \tilde{\nu}^2 \sum_{\ell=0}^{i-2}  16 \left( (\ell+1)\norm{ \mF \vz_0} + \delta_{\ell+1} \right) \\
        &\leq \eta \tilde{\nu} \left( i\norm{ \mF \vz_0} + \delta_i \right)+ \frac{16 \eta \tilde{\nu}^2  i(i-1)}{n} \norm{ \mF \vz_0} + 32 \eta^2 L \tilde{\nu}^2 \Sigma_{i-1} .
    \end{align*} Similarly, from \eqref{eqn:wi-bound} we get
    \begin{align*}
        \norm{\vw_i - \vz_0} &\leq \eta \norm{\vg_{i+1}} +  \eta^2 L \left(  2 +  {\eta L}\right) \norm{\vg_i} \\
        &\phantom{\leq} \qquad +  \eta^2 L (1+  \eta L) \left(2 + 2 \eta L + {\eta^2 L^2} \right)\sum_{\ell=0}^{i-2} \left(1+ \eta L + {  \eta^2 L^2}\right)^{i-\ell-2}  \norm{\vg_{\ell+1}}. \\ 
        &\leq \eta \left( (i+1)\norm{ \mF \vz_0} + \delta_{i+1} \right) + \frac{\eta}{n} \left(2+\frac{1}{n}\right) \left( i\norm{ \mF \vz_0} + \delta_i \right) \\
        &\phantom{\leq} \qquad +  \eta^2 L \tilde{\nu} \left(2\tilde{\nu}^2 \right)\sum_{\ell=0}^{i-2} 16 \left( \ell\norm{ \mF \vz_0} + \delta_\ell \right).  \\ 
        &\leq {\eta} (1+i\tilde{\nu}^2) \norm{ \mF \vz_0} + \frac{16\eta \tilde{\nu}^3  i(i-1)}{n} \norm{ \mF \vz_0} + \eta \delta_{i+1} + {\eta (\tilde{\nu}^2 - 1)} \delta_i + 32\eta^2 L \tilde{\nu}^3 \Sigma_{i-1}.
    \end{align*}
    Finally, applying Young's inequality on \eqref{eqn:wi-bound} we get
    \begin{align*}
        \norm{\vw_i - \vz_0}^2 &\leq 3 \eta^2 \norm{\vg_{i+1}}^2 +  \frac{3 \eta^2\left(  2 +  {\eta L}\right)^2 }{n^2} \norm{\vg_i}  \\
        &\phantom{\leq} \qquad + 3 \left( \eta^2 L (1+\eta L) \left( 2  + 2 \eta L +  {\eta^2 L^2} \right)\sum_{\ell=0}^{i-2}\left(1+ \eta L + {\eta^2 L^2}\right)^{i-\ell-2}  \norm{\vg_{\ell+1}} \right)^2 \\
        &\leq 3 \eta^2 \norm{\vg_{i+1}}^2 +  \frac{3 \eta^2\left(  2 +  {\eta L}\right)^2 }{n^2} \norm{\vg_i}   + 3 \left( \frac{2\eta\tilde{\nu}^3 }{n }   \sum_{\ell=0}^{i-2} 16  \norm{\vg_{\ell+1}} \right)^2. 
    \end{align*}
   Using Young's inequality once more on the last term gives us \begin{align*}
    3 \left( \frac{32\eta\tilde{\nu}^3 }{n }   \sum_{\ell=0}^{i-2} \norm{\vg_{\ell+1}} \right)^2 &\leq  \frac{3072\eta^2\tilde{\nu}^6(i-1) }{n^2 }   \sum_{\ell=0}^{i-2} \norm{\vg_{\ell+1}}^2. 
   \end{align*} 
   Plugging this back yields \begin{align*}
    \norm{\vw_i - \vz_0}^2 &\leq 3 \eta^2 \norm{\vg_{i+1}}^2 +  \frac{3 \eta^2\left(  2 +  {\eta L}\right)^2 }{n^2} \norm{\vg_i} + \frac{3072\eta^2\tilde{\nu}^6(i-1) }{n^2 }   \sum_{\ell=0}^{i-2} \norm{\vg_{\ell+1}}^2 \\
    &\leq 6 \eta^2 \left(  (i+1)^2\norm{\mF \vz_0}^2 + \delta_{i+1}^2 \right) +  \frac{6\eta^2\left(  2 +  {\eta L}\right)^2 }{n^2} \left( i^2\norm{\mF \vz_0}^2 + \delta_{i}^2 \right) \\
    &\phantom{\leq} \qquad + \frac{6144\eta^2\tilde{\nu}^6(i-1) }{n^2 }   \sum_{\ell=0}^{i-2} \left((\ell+1)^2\norm{\mF \vz_0}^2 +  \delta_{\ell+1}^2 \right)  \\
    &\leq 6 \eta^2 \left(  (i+1)^2\norm{\mF \vz_0}^2 + \delta_{i+1}^2 \right) +  \frac{6\eta^2\left(  1+\tilde{\nu} \right)^2 }{n^2} \left( i^2\norm{\mF \vz_0}^2 + \delta_{i}^2 \right) \\
    &\phantom{\leq} \qquad + \frac{1024\eta^2\tilde{\nu}^6 i(i-1)^2(2i-1) }{n^2 }\norm{\mF \vz_0}^2   + \frac{6144\eta^2\tilde{\nu}^6(i-1) }{n^2 } \Psi_{i-1} .
\end{align*}
 Now the claimed inequalities can be obtained simply by rearranging the terms appropriately. 
\end{proof} 

Let us now derive the upper bounds for the quantities related to $\delta_j$ and $\Sigma_j$, defined in \eqref{eqn:error-delta} and \eqref{eqn:error-sigma} respectively, using the upper bound of the variance of saddle gradients \eqref{eqn:variance-control}.

\begin{lem}\label{lem:deterministic-deviation-bound}
    For any $j = 1, \dots, 2n$, it \emph{deterministically} holds that \begin{align} 
        \delta_j &\leq n (\rho \norm{\mF\vz_0} + \sigma) \label{eqn:dev-bound}. 
    \end{align} 
 \end{lem}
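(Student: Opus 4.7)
The plan is to exploit the permutation structure: since $\tau$ is a permutation of $[n]$, partial sums of the form $\sum_{i \in S} \mF_i \vz_0$ over any subset $S \subseteq [n]$ can be bounded using \Cref{asmp:bounded-variance} combined with triangle and Cauchy-Schwarz inequalities, uniformly over the choice of~$\tau$.

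First I would split into the two cases $j \leq n$ and $n < j \leq 2n$ (the second case only arising for \segff{} and \segffa{}). In the first case, by the definition of the operators $\mF_0, \dots, \mF_{n-1}$ as $\mF_{\tau(1)}, \dots, \mF_{\tau(n)}$, we can write
\[
\vg_j - j \mF \vz_0 = \sum_{i=0}^{j-1}\bigl(\mF_{\tau(i+1)}\vz_0 - \mF \vz_0\bigr),
\]
and since $\tau(1), \dots, \tau(j)$ are distinct indices in $[n]$, the triangle inequality followed by Cauchy-Schwarz yields
\[
\delta_j \;\leq\; \sum_{i=0}^{j-1}\bigl\|\mF_{\tau(i+1)}\vz_0 - \mF\vz_0\bigr\| \;\leq\; \sum_{i=1}^{n}\|\mF_i \vz_0 - \mF\vz_0\| \;\leq\; \sqrt{n}\sqrt{\textstyle\sum_{i=1}^n\|\mF_i\vz_0-\mF\vz_0\|^2}.
\]
Applying \Cref{asmp:bounded-variance} to the last factor gives the bound $n(\rho\|\mF\vz_0\|+\sigma)$, as desired.

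For the second case ($j > n$, flip-flop), I would use the fact that the first $n$ operators together produce $\sum_{i=0}^{n-1}\mF_i\vz_0 = n\mF\vz_0$ exactly, so
\[
\vg_j - j \mF \vz_0 \;=\; \sum_{i=n}^{j-1}\bigl(\mF_{\tau(2n-i)}\vz_0 - \mF\vz_0\bigr).
\]
This is a sum over $j-n \leq n$ \emph{distinct} indices of $[n]$, so the very same triangle inequality plus Cauchy-Schwarz argument upper-bounds it by $\sum_{i=1}^n\|\mF_i\vz_0-\mF\vz_0\|$, which is again at most $n(\rho\|\mF\vz_0\|+\sigma)$.

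There is essentially no obstacle here: the entire proof is a direct consequence of the permutation structure together with \Cref{asmp:bounded-variance}. The only minor point worth being careful about is ensuring that the flip-flop indexing $\tau(2n-i)$ for $i=n,\dots,j-1$ indeed gives distinct indices in $[n]$, which is immediate since these correspond to $\tau(n), \tau(n-1),\dots,\tau(2n-j+1)$. Note also that the resulting bound is deterministic (independent of the random permutation $\tau$), as claimed in the lemma statement.
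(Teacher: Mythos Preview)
Your proof is correct and follows essentially the same approach as the paper: split into the cases $j\leq n$ and $j>n$, use the permutation/flip-flop structure to express $\vg_j - j\mF\vz_0$ as a sum of at most $n$ deviations $\mF_i\vz_0-\mF\vz_0$ over distinct indices, and then bound via \Cref{asmp:bounded-variance}. The only cosmetic difference is that the paper works with $\delta_j^2$ and applies the generalized Young inequality $\|\sum_{i\in\mathcal J} a_i\|^2\leq |\mathcal J|\sum_{i\in\mathcal J}\|a_i\|^2$ before taking a square root, whereas you apply the triangle inequality first and then Cauchy--Schwarz on the scalar sum; both routes land on the same $n(\rho\|\mF\vz_0\|+\sigma)$.
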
 \begin{proof}
    For any set of indices $\mathcal{J} \subset \{0, \dots, n-1\}$, by \Cref{asmp:bounded-variance} it holds that  \[ 
  \sum_{i\in\mathcal{J}}\, \norm{\mF_i \vz_0 - \mF\vz_0}^2 \leq \sum_{i=0}^{n-1}\, \norm{\mF_i \vz_0 - \mF\vz_0}^2 \leq n  (\rho \norm{\mF\vz_0} + \sigma)^2 .   
\] %
Hence, for any $j = 1, \dots, n$ we have \begin{align*}
    \norm{\vg_j - j \mF \vz_0}^2 &=  \norm{\sum_{i=0}^{j-1} \mF_i\vz_0 - j\mF \vz_0}^2 \\
    &\leq j \sum_{i=0}^{j-1} \norm{\mF_i\vz_0 - \mF \vz_0}^2 \\
    &\leq j n  (\rho \norm{\mF\vz_0} + \sigma)^2  \\
    &\leq n^2  (\rho \norm{\mF\vz_0} + \sigma)^2 ,
 \end{align*} and for any $j = n+1, \dots, 2n$ we have \begin{equation} \label{eqn:flop-sum} \begin{aligned}
    \norm{\vg_j - j \mF \vz_0}^2 &=  \norm{\sum_{i=0}^{n-1} \mF_i\vz_0 + \sum_{i=n}^{j-1} \mF_i\vz_0 - j\mF \vz_0}^2 \\ 
    &=  \norm{\sum_{i=n}^{j-1} \mF_i\vz_0 - (j-n)\mF \vz_0}^2 \\
    &=  \norm{\sum_{i=2n-j}^{n-1} \mF_i\vz_0 - (j-n)\mF \vz_0}^2 \\
    &\leq (j-n) \sum_{i=0}^{j-1} \norm{\mF_i\vz_0 - \mF \vz_0}^2 \\
    &\leq n^2  (\rho \norm{\mF\vz_0} + \sigma)^2 . 
 \end{aligned} \end{equation}
    Therefore, in any case we have \[
        \norm{\vg_j - j \mF \vz_0}^2 \leq n^2  (\rho \norm{\mF\vz_0} + \sigma)^2 .
    \] Taking square roots on both sides gives us the desired bound. 
\end{proof}

\begin{lem}\label{lem:mish-lemma}
    For any $j = 1, \dots, 2n$, it holds that \begin{align} 
        \expt_\tau [ \delta_j^2 ] &\leq \frac{n  (\rho \norm{\mF\vz_0} + \sigma)^2 }{2}, \label{eqn:mish-bound} %
    \end{align} 
 \end{lem}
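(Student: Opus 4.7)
The plan is to express $\delta_j^2 = \|\vg_j - j\mF\vz_0\|^2$ as the squared norm of a centered without-replacement sum, then invoke the standard variance identity for such sums.

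For the case $1 \leq j \leq n$, I would set $\vY_i \coloneqq \mF_i \vz_0 - \mF\vz_0$ for $i = 1, \dots, n$. Then $\sum_{i=1}^n \vY_i = \zero$ by the finite-sum structure of $\mF$, and $\vg_j - j\mF\vz_0 = \sum_{i=1}^j \vY_{\tau(i+1)}$. Taking expectation over the uniform permutation $\tau$, I would use the two identities $\expt_\tau \|\vY_{\tau(i)}\|^2 = \tfrac{1}{n}\sum_k \|\vY_k\|^2$ and, for $i \neq j$, $\expt_\tau \langle \vY_{\tau(i)}, \vY_{\tau(j)}\rangle = \tfrac{1}{n(n-1)} \sum_{k \neq l}\langle \vY_k, \vY_l\rangle = -\tfrac{1}{n(n-1)}\sum_k \|\vY_k\|^2$, where the last equality uses that $\|\sum_k \vY_k\|^2 = 0$. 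Expanding the square and combining these gives
\[
 \expt_\tau\left[\delta_j^2\right] \;=\; \frac{j(n-j)}{n(n-1)}\sum_{k=1}^n \|\vY_k\|^2.
\]
Then I would bound $j(n-j) \leq \tfrac{n^2}{4}$ by AM--GM, observe that $\tfrac{n^2}{4(n-1)} \leq \tfrac{n}{2}$ holds exactly when $n \geq 2$ (the case $n=1$ being trivial since $\delta_j = 0$), and finally invoke Assumption~\ref{asmp:bounded-variance} to get $\sum_k \|\vY_k\|^2 \leq n(\rho\|\mF\vz_0\|+\sigma)^2$. This yields the desired bound.

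For the flip-flop range $n+1 \leq j \leq 2n$, I would follow the rewriting already done in \eqref{eqn:flop-sum}: since the second pass reuses the same permutation in reverse, $\vg_j - j \mF\vz_0$ reduces to $\sum_{l=2n-j+1}^{n} \vY_{\tau(l)}$, a without-replacement sum of exactly $j-n$ distinct terms from the same pool $\{\vY_1,\dots,\vY_n\}$. The identical variance computation then gives
\[
 \expt_\tau\left[\delta_j^2\right] \;=\; \frac{(j-n)(n-(j-n))}{n(n-1)}\sum_{k=1}^n \|\vY_k\|^2,
\]
and the same AM--GM and Assumption~\ref{asmp:bounded-variance} steps close the argument.

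There is no real obstacle here; the only mild point worth checking is that the constant $\tfrac{1}{2}$ on the right-hand side is tight enough, which amounts to the elementary inequality $\tfrac{n^2}{4(n-1)} \leq \tfrac{n}{2}$ for $n \geq 2$. The uniformity of the bound across both ranges of $j$ is the reason why the same statement covers \segrr{} (where only $1 \leq j \leq n$ is relevant) and \segff{}/\segffa{} (where $1\leq j \leq 2n$ arises), so no case distinction needs to surface in the final statement.
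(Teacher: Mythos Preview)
Your proof is correct and essentially identical to the paper's: where the paper cites Lemma~1 in \citep{Mish20SGD} for the without-replacement variance identity $\expt_\tau[\delta_j^2]=\tfrac{j(n-j)}{n(n-1)}\sum_k\|\vY_k\|^2$, you derive it inline from the two moment identities, and the remaining steps (AM--GM on $j(n-j)$, the inequality $\tfrac{n^2}{4(n-1)}\le\tfrac{n}{2}$ for $n\ge2$, Assumption~\ref{asmp:bounded-variance}, and the reduction \eqref{eqn:flop-sum} for $j>n$) match exactly. One cosmetic slip: your index should read $\sum_{l=1}^{j}\vY_{\tau(l)}$ rather than $\sum_{i=1}^{j}\vY_{\tau(i+1)}$, though this is immaterial since any $j$ positions of a uniform permutation are exchangeable.
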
 \begin{proof}
If $n = 1$ then the left hand side is always $0$, so there is nothing to show. So, we may assume that $n \geq 2$. Then, for any $j = 1, \dots, n$, using {Lemma~1 in \citep{Mish20SGD}} we obtain  \[
        \expt_\tau \norm{\frac{1}{j}\vg_j - \mF\vz_0}^2 \leq \frac{n-j}{j(n-1)} (\rho \norm{\mF\vz_0} + \sigma)^2 .    
 \] Multiplying both sides by $j^2$ and applying AM-GM inequality leads to \[
    \expt_\tau \norm{ \vg_j - j \mF\vz_0}^2 \leq \frac{j(n-j)}{n-1} (\rho \norm{\mF\vz_0} + \sigma)^2  \leq \frac{n^2}{4(n-1)} (\rho \norm{\mF\vz_0} + \sigma)^2  \leq \frac{n}{2} (\rho \norm{\mF\vz_0} + \sigma)^2 . 
        \] Meanwhile, for $j = n+1, \dots, 2n$, following the first few steps in \eqref{eqn:flop-sum} we get \[
            \norm{\vg_j - j \mF \vz_0}^2 =  \norm{\sum_{i=2n-j}^{n-1} \mF_i\vz_0 - (j-n)\mF \vz_0}^2 
         \] Here, once more applying {Lemma~1 of \citep{Mish20SGD}}, we get \begin{align*}
            \expt_\tau  \norm{\vg_j - j \mF \vz_0}^2 &= \expt_\tau  \norm{\sum_{i=2n-j}^{n-1} \mF_i\vz_0 - (j-n)\mF \vz_0}^2 \\
     &= (j-n)^2 \expt_\tau  \norm{\frac{1}{j-n}\sum_{i=2n-j}^{n-1} \mF_i\vz_0 -  \mF \vz_0}^2  \\
     &\leq  (j-n)^2 \cdot \frac{n-(j-n)}{(j-n)(n-1)} (\rho \norm{\mF\vz_0} + \sigma)^2  \\ 
     &\leq  \frac{(j-n)(2n-j)}{n-1} (\rho \norm{\mF\vz_0} + \sigma)^2 . 
         \end{align*} Using AM-GM inequality on the last line gives us  \[
            \expt_\tau \norm{ \vg_j - j \mF\vz_0}^2 \leq \frac{(j-n)(2n-j)}{n-1} (\rho \norm{\mF\vz_0} + \sigma)^2  \leq \frac{n^2}{4(n-1)} (\rho \norm{\mF\vz_0} + \sigma)^2  \leq \frac{n}{2} (\rho \norm{\mF\vz_0} + \sigma)^2  .
                \] Thus, for any case, we have \eqref{eqn:mish-bound}. 
 \end{proof}

 \begin{lem}\label{lem:sigma-concentration}
    For any $k, \ell \in \{0, 1, \dots, 2n\}$, it holds that \begin{align} 
        \expt_\tau [ \Sigma_k \Sigma_\ell ] &\leq \frac{k\ell n (\rho \norm{\mF\vz_0} + \sigma)^2 }{2}.  \label{eqn:sigma-sigma-bound}  
    \end{align} 
 \end{lem}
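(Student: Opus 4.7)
The plan is to reduce the claim directly to \Cref{lem:mish-lemma} via the Cauchy--Schwarz inequality. Expanding the products of sums gives
\[
\expt_\tau[\Sigma_k \Sigma_\ell] = \expt_\tau\left[\sum_{i=1}^k \delta_i \sum_{j=1}^\ell \delta_j\right] = \sum_{i=1}^k \sum_{j=1}^\ell \expt_\tau[\delta_i \delta_j],
\]
so it suffices to uniformly bound each cross term $\expt_\tau[\delta_i \delta_j]$.

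For each pair $(i,j)$ with $i,j \in \{1, \dots, 2n\}$, Cauchy--Schwarz yields $\expt_\tau[\delta_i \delta_j] \leq \sqrt{\expt_\tau[\delta_i^2]\,\expt_\tau[\delta_j^2]}$. Applying \Cref{lem:mish-lemma} to both factors, each is at most $\frac{n(\rho\norm{\mF\vz_0}+\sigma)^2}{2}$, so $\expt_\tau[\delta_i \delta_j] \leq \frac{n(\rho\norm{\mF\vz_0}+\sigma)^2}{2}$. Summing this uniform bound over the $k\ell$ pairs gives the stated inequality.

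There is essentially no obstacle here: the only point worth double-checking is that the index range $\{1, \dots, 2n\}$ matches the hypothesis of \Cref{lem:mish-lemma}, which it does, and that $k$ and $\ell$ can independently range over $\{0, 1, \dots, 2n\}$ without issue (the $k=0$ or $\ell=0$ cases make $\Sigma_0=0$ and the bound is trivial). Because Cauchy--Schwarz is tight only up to a constant factor here, the resulting bound is clean and matches the claim with equality in the constant. No use of the deterministic bound from \Cref{lem:deterministic-deviation-bound} is needed, though it would give a (weaker) sure bound of the same form up to constants if desired.
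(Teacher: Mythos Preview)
Your proof is correct and follows essentially the same approach as the paper: expand the double sum, bound each cross term $\expt_\tau[\delta_i\delta_j]$ by $\tfrac{n(\rho\norm{\mF\vz_0}+\sigma)^2}{2}$ via \Cref{lem:mish-lemma}, and sum. The only cosmetic difference is that the paper uses the pointwise AM--GM bound $\delta_i\delta_j \le \tfrac{1}{2}(\delta_i^2+\delta_j^2)$ before taking expectations, whereas you apply Cauchy--Schwarz after; both yield the same per-term bound here.
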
  \begin{proof}
    Expanding the product $\Sigma_k \Sigma_\ell$ and writing in terms of $\delta$, we get \begin{align*}
        \Sigma_k \Sigma_\ell = \left( \sum_{i=1}^k \delta_i \right) \left( \sum_{j=1}^\ell \delta_j \right) &= \sum_{i=1}^k \sum_{j=1}^\ell \delta_i  \delta_j   \\
        &\leq \sum_{i=1}^k \sum_{j=1}^\ell \frac{\delta_i^2 + \delta_j^2}{2}
    \end{align*} where the last line follows from the AM-GM inequality. Taking the expectation with respect to $\tau$ and using the bound from \Cref{lem:mish-lemma}, we obtain \begin{align*}
            \expt_{\tau}[ \Sigma_k \Sigma_\ell ] &\leq \frac{1}{2} \sum_{i=1}^k \sum_{j=1}^\ell \left( \expt_{\tau}[\delta_i^2] + \expt_{\tau}[\delta_j^2] \right) \\
            &\leq \frac{1}{2} \sum_{i=1}^k \sum_{j=1}^\ell \left(\frac{n (\rho \norm{\mF\vz_0} + \sigma)^2 }{2} + \frac{n (\rho \norm{\mF\vz_0} + \sigma)^2 }{2} \right) \\
            &= \frac{k\ell n (\rho \norm{\mF\vz_0} + \sigma)^2 }{2}
        \end{align*} which is exactly the claimed.
 \end{proof} 
 \begin{lem}\label{lem:sum-sigma-concentration}
    For any $k, \ell \in \{0, 1, \dots, 2n\}$, it holds that \begin{align} 
        \expt_\tau \left[ \left( \sum_{i=1}^k \Sigma_i \right) \left( \sum_{j=1}^{\ell}\Sigma_j \right) \right] &\leq \frac{k(k+1)\ell(\ell+1) n (\rho \norm{\mF\vz_0} + \sigma)^2 }{8}.  \label{eqn:sigma-sigma-sigma-sigma-bound}  
    \end{align} 
 \end{lem}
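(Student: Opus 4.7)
The plan is to prove this bound by directly expanding the product of the two outer sums, passing the expectation inside by linearity, and then invoking \Cref{lem:sigma-concentration} term by term. This is essentially a bookkeeping extension of the preceding lemma, so I do not anticipate any serious obstacles.

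First I would write
\[
\left( \sum_{i=1}^k \Sigma_i \right)\!\left( \sum_{j=1}^{\ell}\Sigma_j \right) = \sum_{i=1}^k \sum_{j=1}^{\ell} \Sigma_i \Sigma_j,
\]
and then take the expectation with respect to $\tau$, which by linearity may be moved inside the double sum. For each pair $(i,j)$ with $1 \le i \le k$ and $1 \le j \le \ell$, \Cref{lem:sigma-concentration} gives
\[
\expt_\tau [\Sigma_i \Sigma_j] \leq \frac{ij\, n\, (\rho \norm{\mF\vz_0} + \sigma)^2}{2}.
\]

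Summing these bounds and using the elementary identities $\sum_{i=1}^k i = \tfrac{k(k+1)}{2}$ and $\sum_{j=1}^\ell j = \tfrac{\ell(\ell+1)}{2}$, I obtain
\[
\expt_\tau \!\left[ \left( \sum_{i=1}^k \Sigma_i \right)\!\left( \sum_{j=1}^{\ell}\Sigma_j \right)\right] \le \frac{n\, (\rho \norm{\mF\vz_0} + \sigma)^2}{2} \cdot \frac{k(k+1)}{2} \cdot \frac{\ell(\ell+1)}{2},
\]
which is exactly the claimed bound $\frac{k(k+1)\ell(\ell+1)\, n\, (\rho \norm{\mF\vz_0} + \sigma)^2}{8}$. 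The only step that requires the probabilistic content of the problem is the appeal to \Cref{lem:sigma-concentration}; everything else is deterministic arithmetic. Note that the boundary cases $k=0$ or $\ell=0$ are trivially handled since both sides vanish (empty sums on the left, and a factor of $k(k+1)\ell(\ell+1)=0$ on the right), so no separate argument is needed.
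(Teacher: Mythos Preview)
Your proposal is correct and follows essentially the same approach as the paper: expand the product into a double sum, apply linearity of expectation, invoke \Cref{lem:sigma-concentration} termwise, and then sum using $\sum_{i=1}^m i = \tfrac{m(m+1)}{2}$. The paper's proof is in fact slightly terser (it does not comment on the $k=0$ or $\ell=0$ cases), but the argument is identical.
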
  \begin{proof}
    Expanding the product in the left hand side of \eqref{eqn:sigma-sigma-sigma-sigma-bound} and applying~\eqref{eqn:sigma-sigma-bound}, we get \begin{align*}
        \expt_\tau \left[ \left( \sum_{i=1}^k \Sigma_i \right) \left( \sum_{j=1}^{\ell}\Sigma_j \right)\right] = \expt_\tau \left[ \sum_{i=1}^k \sum_{j=1}^{\ell} \Sigma_i \Sigma_j \right] &=  \sum_{i=1}^k \sum_{j=1}^{\ell} \expt_\tau [ \Sigma_i \Sigma_j ] \\
         &\leq \sum_{i=1}^k \sum_{j=1}^\ell \frac{ijn (\rho \norm{\mF\vz_0} + \sigma)^2 }{2} \\
         &\leq \frac{k(k+1)\ell(\ell+1) n (\rho \norm{\mF\vz_0} + \sigma)^2 }{8}. \qedhere
    \end{align*}  
 \end{proof}

 \subsection{Upper Bounds of the Within-Epoch Errors}\label{appx:r-bound-main}

The full proof of \Cref{thm:meta-error-bound} is quite long and technical, so we divide it into several parts. 
First we show that \eqref{eqn:rk1-ff} and \eqref{eqn:rk2-ff} holds with $a = 3$ when \segffa{} is in use. %
Then we show that \Cref{thm:meta-error-bound} also holds for \segff{} in \Cref{appx:r-bound-segff}, and for \segrr{} in \Cref{appx:r-bound-segrr}. 

Throughout the remaining of this section, we always assume that the variance of the saddle gradients satisfies \eqref{eqn:variance-control}.

\subsubsection{Proof of \titleref{Equation}{eqn:rk1-ff} for \segffa{}} \label{appx:eqn:rk1-ff}
 
In this section we prove the following. 

\begin{thm}\label{thm:rk1-ffa} Say we use \segffa{}. Then, as long as the stepsize used in an epoch satisfies $\eta < \frac{1}{nL}$, it holds that 
    \begin{equation}\label{eqn:rk1-ffa}
    \norm{\vr} \leq \eta^3 n^3 C_\textsf{1A} \norm{\mF \vz_0}  + \eta^3 n^3 D_\textsf{1A} \norm{\mF \vz_0 }^2 + \eta^3 n^3 V_\textsf{1A} 
\end{equation} 
for constants \begin{align} 
    C_\textsf{1A} &\coloneqq L^2 \left( \frac{1}{2} \left(  1 + \frac{2 e^2}{3} \right) + \frac{6 +e^2 }{3}  +  15 \rho \right), \label{eqn:def-c1a}\\
    D_\textsf{1A} &\coloneqq M \left( \frac{83}{4} + \frac{ 24 e^4 }{5} + \rho^2 \left( \frac{243}{16}  + {27 e^4} \right) \right), \label{eqn:def-d1a} \\
    V_\textsf{1A} &\coloneqq   M \sigma^2 \left( \frac{243}{16}  + {27 e^4} \right) + 15   L^2 \sigma. \label{eqn:def-v1a}
\end{align}
\end{thm}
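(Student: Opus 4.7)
The plan is to compare the epoch update of \segffa{} to a second-order Taylor expansion of the EG operator around $\vz_0^k$, and then bound the remaining mismatch using the iterate bounds of \Cref{thm:deterministic-iterate-bound} together with the deterministic deviation bound of \Cref{lem:deterministic-deviation-bound}.

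First I would apply \Cref{prop:appx:unravelling-m} with the \segffa{} parameters $\theta=1$, $\alpha=\eta/2$, $\beta=\eta$, $N=2n$. Using that flip-flop visits each component twice (so $\sum_{j=0}^{2n-1}\mT_j\vz_0 = 2n\mF\vz_0$) and invoking \Cref{lem:ff-rearranging} to combine the two second-order sums into $\eta^2\sum_{i,j}D\mF_j(\vz_0)\mF_i\vz_0 = \eta^2 n^2 D\mF(\vz_0)\mF\vz_0$, one obtains
\[
\vz_0^{k+1} \;=\; \vz_0 \,-\, \eta n\mF\vz_0 \,+\, \eta^2 n^2 D\mF(\vz_0)\mF\vz_0 \,+\, \tfrac{1}{2}\veps_{2n}.
\]
Next, \Cref{lem:hess-bound} gives $\mF(\vz_0-\eta n\mF\vz_0) = \mF\vz_0 - \eta n D\mF(\vz_0)\mF\vz_0 + \mE$ with $\norm{\mE}\le \tfrac{M}{2}\eta^2 n^2\norm{\mF\vz_0}^2$, so the exact EG iterate admits the same two leading-order terms. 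Subtracting, I get $\vr = \tfrac{1}{2}\veps_{2n} + \eta n\mE$, hence
\[
\norm{\vr} \;\le\; \tfrac{1}{2}\norm{\veps_{2n}} \,+\, \tfrac{M}{2}\,\eta^3 n^3 \norm{\mF\vz_0}^2,
\]
where the second term already has the required form and contributes to $D_\textsf{1A}$.

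The remaining work is to bound $\norm{\veps_{2n}}$. Using its explicit expression from \Cref{prop:unravelling-each-step}, the triangle inequality, \Cref{lem:hess-bound}, and the $L$-Lipschitzness of each $\mT_j$ and $D\mT_j$, I would derive
\[
\norm{\veps_{2n}} \;\le\; \tfrac{M\eta}{2}\!\sum_{j=0}^{2n-1}\!\norm{\vw_j-\vz_0}^2 \,+\, \tfrac{\eta^2 L^2}{2}\!\sum_{j=0}^{2n-1}\!\norm{\vz_j-\vz_0} \,+\, \eta^2 L^2\!\sum_{j=0}^{2n-1}\sum_{k=0}^{j-1}\!\norm{\vw_k-\vz_0}.
\]
Each summand is controlled via the deterministic iterate bounds of \Cref{thm:deterministic-iterate-bound}, which express $\norm{\vz_j-\vz_0}$, $\norm{\vw_j-\vz_0}$, and $\norm{\vw_j-\vz_0}^2$ as linear combinations of $\norm{\mF\vz_0}$ (or its square), $\delta_j$, $\Sigma_{j-1}$, and $\Psi_{j-1}$. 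Then \Cref{lem:deterministic-deviation-bound} gives the deterministic estimates $\delta_j \le n(\rho\norm{\mF\vz_0}+\sigma)$, $\Sigma_j \le 2n^2(\rho\norm{\mF\vz_0}+\sigma)$, and $\Psi_j \le 2n^3(\rho\norm{\mF\vz_0}+\sigma)^2$ for $j\le 2n$. Summing over $j$ (and $k<j$ for the triple sum) produces terms of the shape $\eta^3 n^3 \cdot (\text{numerical constant}) \cdot \Xi$, where $\Xi \in \{\norm{\mF\vz_0}^2,\,\norm{\mF\vz_0}(\rho\norm{\mF\vz_0}+\sigma),\,(\rho\norm{\mF\vz_0}+\sigma)^2\}$.

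The main obstacle is the careful bookkeeping of numerical factors in the final step. After expanding $(\rho\norm{\mF\vz_0}+\sigma)^2 = \rho^2\norm{\mF\vz_0}^2 + 2\rho\sigma\norm{\mF\vz_0} + \sigma^2$ and splitting the cross-term $2\rho\sigma\norm{\mF\vz_0}$ appropriately between $C_\textsf{1A}$ and $V_\textsf{1A}$, one regroups by powers of $\norm{\mF\vz_0}$: $M$-weighted pieces from the Hessian sum feed $D_\textsf{1A}$ (and the $M\sigma^2$ part of $V_\textsf{1A}$), while the $L^2$-weighted pieces from the second and third sums feed $C_\textsf{1A}$ (and the $15L^2\sigma$ part of $V_\textsf{1A}$). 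Throughout, the stepsize restriction $\eta < 1/(nL)$ must be repeatedly invoked to absorb spurious factors of $\eta L$ arising from higher-order Lipschitz terms, and the trivial bound $n^2\le n^3$ is used to unify everything into the uniform $\eta^3 n^3$ form. Tracking exactly which constants $\nu = 1+1/(2n)\le 3/2$, $e^2$, $e^4$ (from \Cref{thm:deterministic-iterate-bound}) appear in each contribution is what yields the explicit numerical coefficients claimed in \eqref{eqn:def-c1a}--\eqref{eqn:def-v1a}.
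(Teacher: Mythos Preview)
Your approach is essentially the same as the paper's: decompose $\vr$ into the EG second-order Taylor remainder plus $\tfrac12\veps_{2n}$, bound the three pieces of $\veps_{2n}$ via \Cref{lem:hess-bound} and $L$-Lipschitzness, plug in the iterate bounds of \Cref{thm:deterministic-iterate-bound}, and finally apply \Cref{lem:deterministic-deviation-bound} to the $\delta_j,\Sigma_j,\Psi_j$ terms before collecting constants with $\nu\le 3/2$.

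One minor correction in your bookkeeping: the $M$-weighted contributions coming from $\sum_j\norm{\vw_j-\vz_0}^2$ produce a factor $(\rho\norm{\mF\vz_0}+\sigma)^2$, and the stated constants have no $M\rho\sigma$ cross term anywhere (note $C_\textsf{1A}$ carries only $L^2$, not $M$). The paper handles this not by ``splitting the cross term between $C_\textsf{1A}$ and $V_\textsf{1A}$'' but by Young's inequality $\tfrac12(\rho\norm{\mF\vz_0}+\sigma)^2 \le \rho^2\norm{\mF\vz_0}^2+\sigma^2$, which routes the $\rho^2$ piece to $D_\textsf{1A}$ and the $\sigma^2$ piece to $V_\textsf{1A}$; only the $L^2$-weighted linear factor $(\rho\norm{\mF\vz_0}+\sigma)$ is split directly between $C_\textsf{1A}$ and $V_\textsf{1A}$.
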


We first list the intermediate results. The actual proof of \Cref{thm:rk1-ffa} is in page~\pageref{prf:rk1-ffa}, at the end of this section. 

 \begin{prop} %
    For using \segffa, the within-epoch update $\vz^\sharp$ as given by~\eqref{eqn:anchoring} satisfies
    \[
    \vz^\sharp  = \vz_0 - n \eta \mF (\vz_0 - n \eta\mF\vz_0) + \vr 
    \] where we denote \begin{subequations}\label{eqn:appx:rk} \begin{align}
        \vr \coloneqq&\ n\eta \mF (\vz_0 - n\eta\mF\vz_0) -n\eta \mF\vz_0 + n^2 \eta^2 D\mF(\vz_0) \mF \vz_0 \label{eqn:appx:rk1} \\
           &\ -\frac{\eta}{2} \sum_{j=0}^{2n-1} \Bigl(\mF_j\vw_j - \mF_j \vz_0 - D\mF_j(\vz_0)(\vw_j - \vz_0) \Bigr) \label{eqn:appx:rk2}  \\
           &\ +\frac{\eta^2}{4} \sum_{j=0}^{2n-1} D\mF_j(\vz_0) (\mF_j \vz_j - \mF_j \vz_0) \label{eqn:appx:rk3} \\
           &\ + \frac{\eta^2}{2} \sum_{j=0}^{2n-1} D\mF_j(\vz_0) \sum_{k=0}^{j-1}(\mF_k \vw_k - \mF_k \vz_0). \label{eqn:appx:rk4} 
    \end{align} \end{subequations}
\end{prop}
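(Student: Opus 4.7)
The plan is to specialize the general unravelling identity of Proposition~\ref{prop:appx:unravelling-m} to the SEG-FFA parameters. Setting $\theta = 1$ (anchoring with equal weights), $\alpha = \eta/2$, $\beta = \eta$, $N = 2n$, and using the flip-flop order $\mT_i = \mT_{2n-1-i}$, the identity becomes
\[
\vz^\sharp = \vz_0 - \frac{\eta}{2}\sum_{j=0}^{2n-1} \mT_j \vz_0 + \frac{\eta^2}{4}\sum_{j=0}^{2n-1} D\mT_j(\vz_0)\,\mT_j \vz_0 + \frac{\eta^2}{2}\sum_{0 \le i < j \le 2n-1} D\mT_j(\vz_0)\,\mT_i \vz_0 + \frac{\veps_{2n}}{2}.
\]
The task then reduces to matching the first two ``main'' terms to the EG operator $n\eta \mF(\vz_0 - n\eta \mF \vz_0)$ through second order in $\eta$, and identifying the leftovers with the four pieces \eqref{eqn:appx:rk1}--\eqref{eqn:appx:rk4}.

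First I would simplify the first-order contribution: because every $\mF_i$ appears exactly twice in the flip-flop order, $\sum_{j=0}^{2n-1} \mT_j \vz_0 = 2n\,\mF \vz_0$, so this piece contributes exactly $-n\eta\,\mF \vz_0$ to $\vz^\sharp$. For the two quadratic sums, I would invoke Lemma~\ref{lem:ff-rearranging} with $(\alpha,\beta,\theta) = (\eta/2,\eta,1)$; the prefactors there become $\tfrac{2\alpha\beta+\beta^2}{1+\theta} = \eta^2$ and $\tfrac{2\beta^2}{1+\theta} = \eta^2$, yielding
\[
\eta^2 \sum_{j=1}^{n} D\mF_j(\vz_0)\,\mF_j \vz_0 + \eta^2 \sum_{i \ne j} D\mF_j(\vz_0)\,\mF_i \vz_0 = \eta^2 \Bigl(\sum_{j=1}^{n} D\mF_j(\vz_0)\Bigr) \Bigl(\sum_{i=1}^{n} \mF_i \vz_0\Bigr) = n^2 \eta^2\, D\mF(\vz_0)\,\mF \vz_0.
\]
This is exactly the second-order Taylor expansion in $\eta$ of $n\eta\,\mF(\vz_0 - n\eta\,\mF \vz_0)$ around $\vz_0$.

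Finally, I would write the EG operator via its Taylor decomposition,
\[
n\eta\,\mF(\vz_0 - n\eta\,\mF \vz_0) = n\eta\,\mF \vz_0 - n^2 \eta^2\, D\mF(\vz_0)\,\mF \vz_0 + R,
\]
where the remainder $R$ is precisely the bracket in \eqref{eqn:appx:rk1}. Substituting this and the simplifications above into $\vr = \vz^\sharp - \bigl(\vz_0 - n\eta\,\mF(\vz_0 - n\eta\,\mF \vz_0)\bigr)$, the $n\eta\,\mF \vz_0$ and $n^2 \eta^2\, D\mF \mF \vz_0$ contributions cancel; $R$ survives as \eqref{eqn:appx:rk1}; and $\veps_{2n}/2$, unpacked from its definition in \eqref{eqn:appx-approx-error-m} with its internal signs $(-\eta, +\tfrac{\eta^2}{2}, +\eta^2)$ scaled by the anchoring factor $1/(1+\theta) = 1/2$, delivers the three summands \eqref{eqn:appx:rk2}, \eqref{eqn:appx:rk3}, and \eqref{eqn:appx:rk4} with exactly the prefactors $-\eta/2$, $+\eta^2/4$, and $+\eta^2/2$.

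The proof is essentially deterministic bookkeeping; no real analytic obstacle remains, because all the nontrivial content---the combinatorial observation that flip-flop makes every ordered pair $(i,j)$ with $i \ne j$ appear symmetrically in the second-order sum, so the stochastic Hessian--gradient product collapses cleanly to $n^2\, D\mF(\vz_0)\,\mF \vz_0$---is already encapsulated in Lemma~\ref{lem:ff-rearranging}. The only thing to watch is careful sign- and prefactor-tracking induced by the anchoring weight $1/(1+\theta) = 1/2$.
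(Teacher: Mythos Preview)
Your proposal is correct and follows essentially the same route as the paper: specialize Proposition~\ref{prop:appx:unravelling-m} with $(\alpha,\beta,\theta)=(\eta/2,\eta,1)$, collapse the first- and second-order sums using the flip-flop symmetry, and then match to the Taylor expansion of the EG update so that the residual splits into \eqref{eqn:appx:rk1} plus $\tfrac12\veps_{2n}=\eqref{eqn:appx:rk2}+\eqref{eqn:appx:rk3}+\eqref{eqn:appx:rk4}$. The only cosmetic difference is that you cite Lemma~\ref{lem:ff-rearranging} for the second-order collapse, whereas the paper rederives that same combinatorial identity inline.
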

\begin{proof}
Setting $\alpha = \nicefrac{\eta}{2}$, $\beta = \eta$, and $\theta = 1$ in \eqref{eqn:appx:approx-m}, we get \begin{equation} \label{eqn:appx:z-plus-raw}
    \vz^\sharp = \vz_0 - \frac{\eta}{2} \sum_{j=0}^{2n-1} \mF_j \vz_0 + \frac{\eta^2}{4} \sum_{j=0}^{2n-1} D\mF_j(\vz_0) \mF_j \vz_0 +  \frac{\eta^2}{2} \sum_{0\leq k < j \leq 2n-1}  D\mF_j(\vz_0) \mF_k \vz_0   + \frac{1}{2} \veps_{2n}
    \end{equation} where $\veps_{2n}$ is defined as in \eqref{eqn:appx-approx-error-m}. 
Recall that $\mF_{i} = \mF_{2n-1-i}$ for all $i = 0, 1, \dots, 2n-1$, and moreover, $\sum_{i=0}^{n-1} \mF_i = \sum_{i=n}^{2n-1}\mF_i = n\mF$. Thus, the first sum in the above is equal to $2n\mF\vz_0$, and the second sum is equal to $2\sum_{j=0}^{n-1} D\mF_j(\vz_0) \mF_j \vz_0$. For the last sum, observe that \begin{align*}
\sum_{0\leq k < j \leq 2n-1}  D\mF_j(\vz_0) \mF_k \vz_0 &= \sum_{0\leq k < j \leq n-1}  D\mF_j(\vz_0) \mF_k \vz_0 + \sum_{n\leq k < j \leq 2n-1}  D\mF_j(\vz_0) \mF_k \vz_0  \\
&\hphantom{= \sum_{0\leq k < j \leq n-1}  D\mF_j(\vz_0) \mF_k \vz_0} \ + \sum_{\substack{0\leq k \leq n-1 \\[1pt]  n \leq j \leq 2n-1}}  D\mF_j(\vz_0) \mF_k \vz_0 \\
&= \sum_{0\leq k < j \leq n-1}  D\mF_j(\vz_0) \mF_k \vz_0 + \sum_{n-1 \geq k > j \geq 0}  D\mF_j(\vz_0) \mF_k \vz_0  \\
&\hphantom{= \sum_{0\leq k < j \leq n-1}  D\mF_j(\vz_0) \mF_k \vz_0} \ + \sum_{\substack{0\leq k \leq n-1 \\[1pt]  n-1 \geq j \geq 0}}  D\mF_j(\vz_0) \mF_k \vz_0 \\ 
&= 2 \sum_{k \neq j}  D\mF_j(\vz_0) \mF_k \vz_0 + \sum_{j=0}^{n-1}  D\mF_j(\vz_0) \mF_j \vz_0 .
\end{align*} Hence, \eqref{eqn:appx:z-plus-raw} is equivalent to \begin{align*} 
\vz^\sharp &= \vz_0 - n \eta \mF \vz_0 + \frac{\eta^2}{2} \sum_{j=0}^{n-1} D\mF_j(\vz_0) \mF_j \vz_0 +  \frac{\eta^2}{2} \sum_{0\leq k < j \leq 2n-1}  D\mF_j(\vz_0) \mF_k \vz_0   + \frac{1}{2} \veps_{2n} \\
&= \vz_0 - n \eta \mF \vz_0 + \eta^2 \sum_{j=0}^{n-1} D\mF_j(\vz_0) \mF_j \vz_0 +  \eta^2 \sum_{k \neq j}  D\mF_j(\vz_0) \mF_k \vz_0 + \frac{1}{2} \veps_{2n} \\
&= \vz_0 - n \eta \mF \vz_0 + \eta^2 \left( \sum_{j=0}^{n-1} D\mF_j(\vz_0) \right) \left(\sum_{j=0}^{n-1} \mF_j \vz_0 \right)+ \frac{1}{2} \veps_{2n} \\
&= \vz_0 - n \eta \mF \vz_0 + n^2 \eta^2 D\mF(\vz_0)\mF \vz_0 + \frac{1}{2} \veps_{2n}.
\end{align*}
Observing that the terms \eqref{eqn:appx:rk2}, \eqref{eqn:appx:rk3}, and \eqref{eqn:appx:rk4} add up to $\frac{1}{2}\veps_{2n}$ completes the proof. 
\end{proof} 
 \begin{prop}\label{prop:appx-general-r-bound}
    Suppose that $\eta < \frac{1}{nL}$, and let $\nu \coloneqq 1+\frac{1}{2n}$. Then the noise term satisfies the bound \begin{align*}
    \norm{\vr} &\leq  {\eta^3 n^3 L^2} \norm{\mF \vz_0} \left( \frac{1}{2 n}\left(  1 + \frac{2 e^2}{3} \right) + \frac{4 \nu +e^2 }{3}  \right) \\
    &\phantom{\leq} \qquad + {\eta^3 n^3 M} \norm{\mF \vz_0 }^2 \left(\frac{1}{2} + { 4 \nu^4 } + \frac{ 16 \nu e^4 }{5} \right) \\
    &\phantom{\leq} \qquad + \frac{3\eta^3 M}{8} \left( \Psi_{2n} + (2\nu^2 - 1)^2 \Psi_{2n-1}  +  \frac{4\nu^6 e^4}{n^2} \sum_{j=1}^{2n-2} j \Psi_j \right) \\  
    &\phantom{\leq} \qquad  +  \frac{\eta^3 L^2 (\nu+1)}{4} \Sigma_{2n-1} + \frac{\eta^3 L^2 \nu^2 (1 + \eta L e^2)}{2} \sum_{j=1}^{2n-2} \Sigma_{j} + \frac{\eta^4 L^3 \nu^3 e^2}{2} \sum_{k=1}^{2n-2} (2n-k-1) \Sigma_{k-1}  .
\end{align*}
\end{prop}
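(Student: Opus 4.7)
I would apply the triangle inequality to split $\norm{\vr}$ into contributions from the four pieces \eqref{eqn:appx:rk1}--\eqref{eqn:appx:rk4}, bound each piece using \Cref{lem:hess-bound} for the second-order Taylor remainders and the Lipschitz assumptions of \Cref{asmp:smoothness} for the Jacobian factors, and then substitute the iterate bounds from \Cref{thm:general-iterate-bound} to produce the stated expression.

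The piece \eqref{eqn:appx:rk1} is exactly $n\eta$ times the second-order Taylor remainder of $\mF$ at $\vz_0$ evaluated at $\vz_0 - n\eta \mF\vz_0$. Since $\mF = \tfrac1n\sum_i \mF_i$ inherits $M$-smoothness from its components, \Cref{lem:hess-bound} yields $\norm{\eqref{eqn:appx:rk1}} \leq \tfrac{M n^3 \eta^3}{2} \norm{\mF\vz_0}^2$; this contributes the ``$\tfrac12$'' inside the coefficient of $M\norm{\mF\vz_0}^2$. For \eqref{eqn:appx:rk2}, applying \Cref{lem:hess-bound} to each $\mF_j$ separately gives $\norm{\eqref{eqn:appx:rk2}} \leq \tfrac{\eta M}{4}\sum_{j=0}^{2n-1}\norm{\vw_j - \vz_0}^2$; I then substitute the third inequality of \Cref{thm:general-iterate-bound} and close the polynomial sums (using $\eta < \tfrac1{nL}$ so that $\nu$ and the powers $(1+\eta L+\xi\eta^2L^2)^{i-\ell-2}$ are uniformly bounded). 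The signal portions assemble into the $Mn^3\eta^3\norm{\mF\vz_0}^2$ contribution, while the noise portions give the $\tfrac{3\eta^3 M}{8}$ combination of $\Psi_{2n}$, $\Psi_{2n-1}$, and $\tfrac{4\nu^6 e^4}{n^2}\sum_{j=1}^{2n-2} j\,\Psi_j$ (the last factor of $j$ arising from swapping the order of $\sum_{j}\sum_{\ell=0}^{j-2}$).

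For \eqref{eqn:appx:rk3} and \eqref{eqn:appx:rk4}, I would use the submultiplicative bound $\norm{D\mF_j(\vz_0)} \leq L$ together with the Lipschitz bound $\norm{\mF_k \vw_k - \mF_k \vz_0} \leq L\norm{\vw_k - \vz_0}$ (and analogously for $\vz_j$). This yields $\norm{\eqref{eqn:appx:rk3}} \leq \tfrac{\eta^2 L^2}{4}\sum_j \norm{\vz_j - \vz_0}$ and, after another triangle inequality inside the nested sum, $\norm{\eqref{eqn:appx:rk4}} \leq \tfrac{\eta^2 L^2}{2}\sum_j \sum_{k<j} \norm{\vw_k - \vz_0}$. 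I then plug in the first two inequalities of \Cref{thm:general-iterate-bound} and swap the order of summation in the double sum. The $\norm{\mF\vz_0}$-proportional portions aggregate into the $\eta^3 n^3 L^2\norm{\mF\vz_0}$ contribution (the $\tfrac{1}{2n}$-and-$\tfrac{4\nu+e^2}{3}$ factors come from summing $i$, $i(i-1)$, and nested $i(i-1)$ over $i\leq 2n$ and dividing by $n$), while the $\delta_i$ and $\Sigma_{i-1}$ noise portions aggregate, respectively, into $\tfrac{\eta^3 L^2(\nu+1)}{4}\Sigma_{2n-1}$ (from the $\delta$-terms in both iterate bounds), $\tfrac{\eta^3 L^2 \nu^2(1+\eta L e^2)}{2}\sum_j \Sigma_j$, and $\tfrac{\eta^4 L^3 \nu^3 e^2}{2}\sum_k (2n-k-1)\Sigma_{k-1}$ (from the $\Sigma_{i-1}$-term of $\norm{\vw_k-\vz_0}$ after swapping summation order over $k<j\leq 2n-1$).

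\textbf{Main obstacle.} No new analytical idea is needed beyond the lemmas above; the difficulty is purely in the combinatorial bookkeeping. One must correctly identify which summed polynomial prefactor (from $\sum_i i$, $\sum_i i^2$, $\sum_i i(i-1)$, or $\sum_i i(i-1)^2(2i-1)$) attaches to which scalar quantity ($\norm{\mF\vz_0}$, $\norm{\mF\vz_0}^2$, $\delta_j^2$, or $\Psi_j$), carefully use $\eta<\tfrac{1}{nL}$ to absorb stray factors of $\eta L$ into constants, and re-index the nested sum from \eqref{eqn:appx:rk4} to reach the precise form $\sum_{k=1}^{2n-2}(2n-k-1)\Sigma_{k-1}$ — each of these steps is routine individually but must be executed without arithmetic error for the exact constants $\tfrac{3}{8}$, $\tfrac{\nu+1}{4}$, $\tfrac{\nu^2(1+\eta L e^2)}{2}$, etc.\ to come out right.
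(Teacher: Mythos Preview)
Your proposal is correct and follows essentially the same approach as the paper: bound each of the four pieces \eqref{eqn:appx:rk1}--\eqref{eqn:appx:rk4} separately via the triangle inequality, using \Cref{lem:hess-bound} for the Hessian remainders in \eqref{eqn:appx:rk1} and \eqref{eqn:appx:rk2} and the Lipschitz/Jacobian bounds for \eqref{eqn:appx:rk3} and \eqref{eqn:appx:rk4}, then substitute the three iterate bounds of \Cref{thm:general-iterate-bound} and close the polynomial sums. Your assessment that the only difficulty is combinatorial bookkeeping with no new analytical idea is also accurate; the paper's proof is precisely a long sequence of such arithmetic reductions.
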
 \begin{proof}
We bound each line in equation \eqref{eqn:appx:rk}. For \eqref{eqn:appx:rk1}, we use \Cref{lem:hess-bound} to get \begin{align*}
   \norm{ n\eta \mF (\vz_0 - n\eta\mF\vz_0) -n\eta \mF\vz_0 + n^2 \eta^2 D\mF(\vz_0) \mF \vz_0 } &\leq \frac{n\eta M}{2} \norm{-n\eta \mF \vz_0 }^2 \\ 
   &= \frac{n^3\eta^3 M}{2} \norm{\mF \vz_0 }^2.
\end{align*} In bounding the remaining three lines we repeatedly use the bounds obtained in \Cref{thm:general-iterate-bound}. We will also use the following bounds, which follows from \eqref{eqn:update-with-xi}, \eqref{eqn:error-delta}, and Young's inequality: \begin{align*}
  \norm{\vw_0 - \vz_0}   &=  \frac{\eta}{2} \norm{\vg_1}  \leq \frac{\eta}{2} \norm{\mF\vz_0} + \frac{\eta}{2} \delta_1, \\
  \norm{\vw_0 - \vz_0}^2 &=  \frac{\eta^2}{4} \norm{\vg_1}^2  \leq \frac{\eta^2}{2} \norm{\mF\vz_0}^2 + \frac{\eta^2}{2} \delta_1^2.
    \end{align*} For \eqref{eqn:appx:rk2}, observe that \Cref{lem:hess-bound} gives us \[
    \norm { \mF_j\vw_j - \mF_j \vz_0 - D\mF_j(\vz_0)(\vw_j - \vz_0)} \leq \frac{ M }{2} \norm{\vw_j - \vz_0}^2.
    \] Thus, by using the bound obtained in \Cref{thm:general-iterate-bound}, we get 
    \begin{align*}
        \MoveEqLeft \norm{ -\frac{\eta}{2} \sum_{j=0}^{2n-1} \Bigl(\mF_j\vw_j - \mF_j \vz_0 - D\mF_j(\vz_0)(\vw_j - \vz_0) \Bigr) } \\
        &\leq \frac{\eta}{2} \sum_{j=0}^{2n-1} \norm{ \mF_j\vw_j - \mF_j \vz_0 - D\mF_j(\vz_0)(\vw_j - \vz_0) } \\  
        &\leq \frac{\eta M}{4} \sum_{j=0}^{2n-1} \norm{ \vw_j - \vz_0 }^2 \\ 
        &\leq \frac{\eta M}{4} \sum_{j=1}^{2n-1} \left( \frac{3\eta^2(j+1)^2}{2} +   \frac{3\eta^2 ( 2\nu^2 - 1 )^2 j^2}{2} + \frac{\eta^2 \nu^6 e^4 j (j-1)^2 (2j-1)}{n^2} \right) \norm{\mF \vz_0}^2 \\
        &\phantom{\leq} \qquad + \frac{\eta M}{4} \sum_{j=1}^{2n-1}  \left( \frac{3\eta^2}{2} \delta_{j+1}^2 + \frac{3\eta^2(2\nu^2 - 1)^2}{2} \delta_j^2  +  \frac{6 \eta^2 \nu^6 e^4 (j-1)}{n^2} \Psi_{j-1} \right) \\
        &\phantom{\leq} \qquad + \frac{\eta M}{4} \norm{\vw_0 - \vz_0}^2 \\ 
        &= \frac{\eta M}{4} \left( \frac{\eta^2 n (1+2n)(1+4n)-3\eta^2}{2} + \frac{\eta^2 ( 2\nu^2 - 1 )^2 n(2n-1)(4n-1)}{2} \right.\\
        &\phantom{= \frac{\eta M}{4} \left( \frac{\eta^2 n (1+2n)(1+4n)-3\eta^2}{2}  \right. \ } + \left. \frac{ \eta^2 \nu^6 e^4 (n-1)(2n-1)(32n^2-42n+11)}{5n} \right) \norm{\mF \vz_0}^2 \\
     &\phantom{\leq} \qquad + \frac{3\eta^3 M}{8} (\Psi_{2n} - \delta_1^2) + \frac{3\eta^3 M (2\nu^2 - 1)^2}{8} \Psi_{2n-1}  +  \frac{3 \eta^3 M \nu^6 e^4}{2n^2} \sum_{j=1}^{2n-1} (j-1) \Psi_{j-1}  \\
        &\phantom{\leq} \qquad + \frac{\eta M}{4} \left( \frac{\eta^2}{2} \norm{\mF\vz_0}^2 + \frac{\eta^2}{2} \delta_1^2 \right) \\
        &\leq {\eta^3 n^3 M} \left( { \nu^2  }+ { ( 2\nu^2 - 1 )^2 } + \frac{ 16 \nu e^4 }{5} \right) \norm{\mF \vz_0}^2  \\
     &\phantom{\leq} \qquad + \frac{3\eta^3 M}{8} \Psi_{2n} + \frac{3\eta^3 M (2\nu^2 - 1)^2}{8} \Psi_{2n-1}  +  \frac{3 \eta^3 M \nu^6 e^4}{2n^2} \sum_{j=1}^{2n-2} j \Psi_j   \\
        &\leq {\eta^3 n^3 M} \left( { 4 \nu^4 } + \frac{ 16 \nu e^4 }{5} \right) \norm{\mF \vz_0}^2 + \frac{3\eta^3 M}{8} \left( \Psi_{2n} + (2\nu^2 - 1)^2 \Psi_{2n-1}  +  \frac{4\nu^6 e^4}{n^2} \sum_{j=1}^{2n-2} j \Psi_j \right) 
      \end{align*} where along the derivation we used the inequality \[
\nu^5 (n-1)(2n-1)(32n^2-42n+11) \leq 64n^4
      \] which holds for all $n \geq 1$. From now on, we will keep on using similar techniques to reduce the exponents of $\nu$, without explicitly stating the inequalities used, but recovering the inequalities that are used should be clear from context.   

For \eqref{eqn:appx:rk3}, we use $L$-smoothness of $\mF_j$, and also the fact that it implies $\norm{D\mF_j (\vz_0)} \leq L$, to get \begin{align*}
    \MoveEqLeft \norm{ \frac{\eta^2}{4} \sum_{j=0}^{2n-1} D\mF_j(\vz_0) (\mF_j \vz_j - \mF_j \vz_0) } \\ 
    &\leq \frac{\eta^2}{4} \sum_{j=0}^{2n-1} \norm{  D\mF_j(\vz_0)} \norm{ \mF_j \vz_j - \mF_j \vz_0 } \\
    &\leq \frac{\eta^2 L^2}{4} \sum_{j=0}^{2n-1}  \norm{ \vz_j - \vz_0 } \\
    &\leq \frac{\eta^2 L^2}{4} \sum_{j=1}^{2n-1} \left( \left( \eta \nu j + \frac{\eta \nu^2 e^2 j(j-1)}{2n} \right) \norm{ \mF \vz_0} + \eta \nu \delta_j +  \eta^2 L \nu^2  e^2 \Sigma_{j-1} \right)  \\
    &= \frac{\eta^2 L^2}{4}   \left( \eta \nu n(2n-1) + \frac{2 \eta \nu^2 e^2 (n-1)(2n-1)}{3} \right) \norm{ \mF \vz_0} \\
    &\phantom{\leq  \frac{\eta^2 L^2}{4} } \  + \frac{\eta^3 L^2 \nu}{4} \Sigma_{2n-1}  + \frac{\eta^4 L^3 \nu^2  e^2}{4} \sum_{j=1}^{2n-1}  \Sigma_{j-1} \\ 
    &\leq \frac{\eta^3 n^2 L^2}{2}   \left( 1 + \frac{2 e^2}{3} \right) \norm{ \mF \vz_0} +  \frac{\eta^3 L^2 \nu}{4} \Sigma_{2n-1}  + \frac{\eta^4 L^3 \nu^2  e^2}{4} \sum_{j=1}^{2n-2}  \Sigma_j.
\end{align*}

By the same logic, each summand in \eqref{eqn:appx:rk4} with $j > 0$ can be bounded as \begin{align*}
    \MoveEqLeft \norm{ D\mF_j(\vz_0) \sum_{k=0}^{j-1}(\mF_k \vw_k - \mF_k \vz_0) } \\
     &\leq \norm{ D\mF_j(\vz_0)} \sum_{k=0}^{j-1} \norm{ \mF_k \vw_k - \mF_k \vz_0 } \\
     &\leq L^2 \sum_{k=0}^{j-1} \norm{ \vw_k - \vz_0 } \\ 
     &\leq L^2 \left( \frac{\eta}{2} \norm{\mF\vz_0} + \frac{\eta}{2} \delta_1 \right) \\
    &\phantom{\leq} \qquad + L^2 \sum_{k=1}^{j-1}  \frac{\eta}{2} \left(1+2\nu^2 k+\frac{ \nu^3 e^2 k(k-1)}{n}  \right) \norm{ \mF \vz_0} \\ 
    &\phantom{\leq} \qquad + L^2 \sum_{k=1}^{j-1} \left(  \frac{\eta}{2} \delta_{k+1} + \frac{\eta (2\nu^2 - 1)}{2} \delta_k + \eta^2 L \nu^3 e^2 \Sigma_{k-1} \right) \\ 
    &= \frac{\eta L^2}{2} \left(  \norm{\mF\vz_0} + \delta_1 \right) + \frac{\eta L^2}{2} \left(j-1+\nu^2 j(j-1)+\frac{ \nu^3 e^2 j(j-1)(j-2)}{3n}  \right) \norm{ \mF \vz_0} \\
        &\phantom{\leq} \qquad + \frac{\eta L^2}{2} (\Sigma_j - \delta_1) + \frac{\eta L^2 (2\nu^2 - 1)}{2} \Sigma_{j-1} + \eta^2 L^3 \nu^3 e^2 \sum_{k=1}^{j-1} \Sigma_{k-1} \\  
    &= \frac{\eta L^2}{2} \left(j +\nu^2 j(j-1)+\frac{ \nu^3 e^2 j(j-1)(j-2)}{3n}  \right) \norm{ \mF \vz_0} \\
        &\phantom{\leq} \qquad + \frac{\eta L^2}{2} \Sigma_j + \frac{\eta L^2 (2\nu^2 - 1)}{2} \Sigma_{j-1} + \eta^2 L^3 \nu^3 e^2 \sum_{k=1}^{j-1} \Sigma_{k-1} , 
 \end{align*} and when $j = 0$ the sum with respect to $k$ becomes an empty sum. 
 Thus, \eqref{eqn:appx:rk4} in total satisfies the bound \begin{align*}
    \MoveEqLeft \norm{\frac{\eta^2}{2} \sum_{j=0}^{2n-1} D\mF_j(\vz_0) \sum_{k=0}^{j-1}(\mF_k \vw_k - \mF_k \vz_0)} \\ 
    &\leq \frac{\eta^2}{2}  \sum_{j=0}^{2n-1} \norm{D\mF_j(\vz_0) \sum_{k=0}^{j-1}(\mF_k \vw_k - \mF_k \vz_0)} \\ 
    &\leq \frac{\eta^3 L^2}{4}  \sum_{j=1}^{2n-1} \left(j +\nu^2 j(j-1)+\frac{ \nu^3 e^2 j(j-1)(j-2)}{3n}  \right) \norm{ \mF \vz_0} \\  
    &\phantom{\leq} \qquad + \frac{\eta^2}{2}  \sum_{j=1}^{2n-1} \left( \frac{\eta L^2}{2} \Sigma_j + \frac{\eta L^2 (2\nu^2 - 1)}{2} \Sigma_{j-1} + \eta^2 L^3 \nu^3 e^2 \sum_{k=1}^{j-1} \Sigma_{k-1} \right)   \\ 
    &= \frac{\eta^3 L^2}{4} \left(n(2n-1) + \frac{4 \nu^2 n (n-1) (2n-1) }{3} + \frac{ \nu^3 e^2 (n-1)(2n-1)(2n-3)}{3}  \right) \norm{ \mF \vz_0} \\  
       &\phantom{\leq} \qquad + \frac{\eta^3 L^2}{4} \sum_{j=1}^{2n-1} \Sigma_j + \frac{\eta^3 L^2 (2\nu^2 - 1)}{4} \sum_{j=1}^{2n-1} \Sigma_{j-1} + \frac{\eta^4 L^3 \nu^3 e^2}{2} \sum_{j=1}^{2n-1} \sum_{k=1}^{j-1} \Sigma_{k-1} \\ 
    &\leq \frac{\eta^3 L^2}{2} \left( n^2 + \frac{4 n^3 }{3} + \frac{ 2 e^2 n^3}{3}  \right) \norm{ \mF \vz_0} \\  
       &\phantom{\leq} \qquad + \frac{\eta^3 L^2}{4} \sum_{j=1}^{2n-1} \Sigma_j + \frac{\eta^3 L^2 (2\nu^2 - 1)}{4} \sum_{j=1}^{2n-2} \Sigma_{j} + \frac{\eta^4 L^3 \nu^3 e^2}{2} \sum_{k=1}^{2n-2} \sum_{j=k+1}^{2n-1} \Sigma_{k-1} \\ 
    &\leq  {\eta^3 n^3 L^2} \left(\frac{4 \nu + e^2 }{3}  \right) \norm{ \mF \vz_0}  \\  
    &\phantom{\leq} \qquad + \frac{\eta^3 L^2}{4} \Sigma_{2n-1} + \frac{\eta^3 L^2 \nu^2}{2} \sum_{j=1}^{2n-2} \Sigma_{j} + \frac{\eta^4 L^3 \nu^3 e^2}{2} \sum_{k=1}^{2n-2} (2n-k-1) \Sigma_{k-1}  . 
 \end{align*} %
Simply collecting all the inequalities and rearranging the terms leads to the claimed bound. 
\end{proof}

Before we proceed, let us write \begin{align}
    X_1 &\coloneqq \frac{3\eta^3 M}{8} \left( \Psi_{2n} + (2\nu^2 - 1)^2 \Psi_{2n-1}  +  \frac{4\nu^6 e^4}{n^2} \sum_{j=1}^{2n-2} j \Psi_j \right), \label{eqn:error-x1} \\
    X_2 &\coloneqq \frac{\eta^3 L^2 (\nu+1)}{4} \Sigma_{2n-1} + \frac{\eta^3 L^2 \nu^2 (1 + \eta L e^2)}{2} \sum_{j=1}^{2n-2} \Sigma_{j} + \frac{\eta^4 L^3 \nu^3 e^2}{2} \sum_{k=1}^{2n-2} (2n-k-1) \Sigma_{k-1} \label{eqn:error-x2}
 \end{align} so that the bound on $\norm{\vr}$ obtained in \Cref{prop:appx-general-r-bound} can be written as \begin{equation}\label{eqn:r-bound-compact}
    \begin{aligned}
        \norm{\vr} &\leq  {\eta^3 n^3 L^2} \norm{\mF \vz_0} \left( \frac{1}{2 n}\left(  1 + \frac{2 e^2}{3} \right) + \frac{4 \nu +e^2 }{3}  \right) \\
    &\phantom{\leq} \qquad + {\eta^3 n^3 M} \norm{\mF \vz_0 }^2 \left(\frac{1}{2} + { 4 \nu^4 } + \frac{ 16 \nu e^4 }{5} \right) \\
    &\phantom{\leq} \qquad + X_1 + X_2.
    \end{aligned}
 \end{equation}

 \begin{thm} %
    \label{thm:appx-deterministic-r-bound}
    Suppose that $\eta < \frac{1}{nL}$, and let $\nu \coloneqq 1+\frac{1}{2n}$. Then the noise term \emph{deterministically} satisfies the bound \begin{align*}
   \norm{\vr} &\leq  {\eta^3 n^3 L^2} \norm{\mF \vz_0} \left( \frac{1}{2 n}\left(  1 + \frac{2 e^2}{3} \right) + \frac{4 \nu +e^2 }{3}  +  10 \nu \rho \right) \\
   &\phantom{\leq} \qquad + {\eta^3 n^3 M} \norm{\mF \vz_0 }^2 \left(\frac{1}{2} + { 4 \nu^4 } + \frac{ 16 \nu e^4 }{5} + \rho^2 \left( 3\nu^4  + {8\nu^3 e^4} \right)\right) \\
   &\phantom{\leq} \qquad +  \eta^3 n^3 M \sigma^2 \left( 3\nu^4  + {8\nu^3 e^4} \right) + 10 \nu \eta^3 n^3 L^2 \sigma.
\end{align*}
\end{thm}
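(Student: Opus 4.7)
The strategy is to take the intermediate bound in equation \eqref{eqn:r-bound-compact} established in Proposition~\ref{prop:appx-general-r-bound} as the starting point, since that inequality already decouples the two deterministic leading terms (proportional to $\norm{\mF\vz_0}$ and $\norm{\mF\vz_0}^2$) from the stochastic contributions wrapped into $X_1$ and $X_2$ defined in \eqref{eqn:error-x1} and \eqref{eqn:error-x2}. It therefore suffices to produce a \emph{deterministic} upper bound on $X_1+X_2$ in terms of $\norm{\mF\vz_0}$, $\norm{\mF\vz_0}^2$, $\sigma$, and $\sigma^2$, and then merge the resulting coefficients into the form asserted by the theorem.

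The key input is Lemma~\ref{lem:deterministic-deviation-bound}, which yields the pointwise bounds $\delta_j \leq n(\rho\norm{\mF\vz_0}+\sigma)$ and hence $\delta_j^2 \leq n^2(\rho\norm{\mF\vz_0}+\sigma)^2$ for every $j \leq 2n$. From the definitions \eqref{eqn:error-sigma} and \eqref{eqn:error-psi} I immediately obtain the deterministic bounds $\Sigma_j \leq jn(\rho\norm{\mF\vz_0}+\sigma)$ and $\Psi_j \leq jn^2(\rho\norm{\mF\vz_0}+\sigma)^2$, which reduce $X_1$ and $X_2$ to elementary polynomial sums in $n$.

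Plugging these into \eqref{eqn:error-x1}, the terms $\Psi_{2n}$ and $\Psi_{2n-1}$ each contribute at the order $n^3 (\rho\norm{\mF\vz_0}+\sigma)^2$, and $\sum_{j=1}^{2n-2} j\Psi_j = n^2(\rho\norm{\mF\vz_0}+\sigma)^2 \sum_{j=1}^{2n-2} j^2$ contributes at the order $n^5(\rho\norm{\mF\vz_0}+\sigma)^2$ using $\sum j^2 \leq \nicefrac{8n^3}{3}$; after multiplication by the $\nicefrac{\nu^6 e^4}{n^2}$ and $\nicefrac{3\eta^3 M}{8}$ prefactors, the net bound is $\eta^3 n^3 M (\rho\norm{\mF\vz_0}+\sigma)^2$ times a constant depending only on $\nu$ and $e$. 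For $X_2$ the analogous bookkeeping on $\Sigma_{2n-1}$, $\sum_{j=1}^{2n-2}\Sigma_j$, and $\sum_{k=1}^{2n-2}(2n-k-1)\Sigma_{k-1}$ yields $\eta^3 n^3 L^2 (\rho\norm{\mF\vz_0}+\sigma)$ times an analogous $\nu$- and $e$-dependent constant, with the stray $\eta L$ factor in the last summand absorbed via the hypothesis $\eta < \nicefrac{1}{nL}$.

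To finish, I would expand $(\rho\norm{\mF\vz_0}+\sigma)^2 \leq 2\rho^2\norm{\mF\vz_0}^2 + 2\sigma^2$ so that the $X_1$ estimate splits between the $\norm{\mF\vz_0}^2$ contribution (with coefficient proportional to $M\rho^2$) and the $\sigma^2$ contribution (with matching $M$ coefficient), while the linear $X_2$ estimate splits between $\rho\norm{\mF\vz_0}$ (coefficient proportional to $L^2$) and $\sigma$ (matching $L^2$ coefficient). Adding these to the two deterministic leading terms already present in \eqref{eqn:r-bound-compact} produces the claimed inequality. The main obstacle is the careful tightening of constants: each elementary estimate (for instance $(2\nu^2-1)^2 \leq 4\nu^4$, $\sum_{j=1}^{2n-2} j^2 \leq \nicefrac{8n^3}{3}$, and $(2n-1) \leq 2n \leq 2\nu n$) must be chosen tightly enough that the aggregated $\nu$-powers collapse to exactly the coefficients $3\nu^4 + 8\nu^3 e^4$ and $10\nu$ advertised in the statement, rather than the larger values such as $4\nu^6 e^4$ that a naive pass through the bounds would produce.
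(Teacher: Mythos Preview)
Your proposal is correct and follows essentially the same approach as the paper's proof: both start from the compact bound \eqref{eqn:r-bound-compact}, apply the deterministic estimates $\Sigma_j \leq jn(\rho\norm{\mF\vz_0}+\sigma)$ and $\Psi_j \leq jn^2(\rho\norm{\mF\vz_0}+\sigma)^2$ from Lemma~\ref{lem:deterministic-deviation-bound} to $X_1$ and $X_2$, reduce them to polynomial sums in $n$, and then split via Young's inequality. The paper's argument is precisely the constant-tracking exercise you anticipate, arriving at $X_1 \leq \eta^3 n^3 M(3\nu^4+8\nu^3 e^4)(\rho^2\norm{\mF\vz_0}^2+\sigma^2)$ and $X_2 \leq 10\nu\,\eta^3 n^3 L^2(\rho\norm{\mF\vz_0}+\sigma)$ before merging with \eqref{eqn:r-bound-compact}.
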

\begin{proof} From \eqref{eqn:error-sigma}, \eqref{eqn:error-psi}, and \Cref{lem:deterministic-deviation-bound}, it holds that \begin{align}
   \Sigma_{j} &= \sum_{i=1}^j \delta_i \leq jn(\rho \norm{\mF\vz_0} + \sigma), \\
   \Psi_{j}   &= \sum_{i=1}^j \delta_i^2 \leq jn^2 (\rho \norm{\mF\vz_0} + \sigma)^2 \label{eqn:psi-bound}.   
   \end{align}
Plugging the bound for $\Psi_{j}$ into \eqref{eqn:error-x1} we get  \begin{align*}
   X_1 &\leq \frac{3\eta^3 M}{8} \!\left( 2n^3  (\rho \norm{\mF\vz_0} + \sigma)^2  + (2\nu^2 - 1)^2 (2n-1) 
   n^2 (\rho \norm{\mF\vz_0} + \sigma)^2  +  {4\nu^6 e^4} \sum_{j=1}^{2n-2} j^2  (\rho \norm{\mF\vz_0} + \sigma)^2  \right)  \\ 
   &= \frac{3\eta^3 M}{8} \left( \left(2n^3+ (2\nu^2 - 1)^2 (2n-1) 
   n^2\right) +  \frac{4\nu^6 e^4}{3} ( n-1) ( 2 n-1) ( 4 n-3) \right) (\rho \norm{\mF\vz_0} + \sigma)^2 \\
   &\leq \frac{3\eta^3 M}{8} \left( 4\nu^4 n^3 +  \frac{32\nu^3 e^4 n^3}{3} \right)  (\rho \norm{\mF\vz_0} + \sigma)^2 \\
   &= \frac{\eta^3 n^3 M  (\rho \norm{\mF\vz_0} + \sigma)^2 }{2} \left( 3\nu^4  + {8\nu^3 e^4} \right). 
\end{align*} By Young's inequality, it holds that \[
\frac{ (\rho \norm{\mF\vz_0} + \sigma)^2 }{2} \leq  \rho^2 \norm{\mF \vz_0}^2 + \sigma^2,   
\] from which we get \begin{equation}\label{eqn:deterministic-x1-bound}
   X_1 \leq \eta^3 n^3 M \rho^2 \norm{\mF \vz_0}^2 \left( 3\nu^4  + {8\nu^3 e^4} \right) + \eta^3 n^3 M \sigma^2 \left( 3\nu^4  + {8\nu^3 e^4} \right).
\end{equation} Meanwhile, plugging the bound for $\Sigma_{j}$ into \eqref{eqn:error-x2} we get  \begin{align*}
   X_2 &\leq \frac{\eta^3 L^2 (\nu+1)}{4} (2n-1)n(\rho \norm{\mF\vz_0} + \sigma) + \frac{\eta^3 L^2 \nu^2 (1 + \eta L e^2)}{2} \sum_{j=1}^{2n-2} {j}n (\rho \norm{\mF\vz_0} + \sigma)\\ 
   &\phantom{\leq} \qquad + \frac{\eta^4 L^3 \nu^3 e^2}{2} \sum_{k=1}^{2n-2} (2n-k-1) (k-1)  n (\rho \norm{\mF\vz_0} + \sigma) \\
   &= \frac{\eta^3 L^2 (\nu+1)}{4} (2n-1)n (\rho \norm{\mF\vz_0} + \sigma) + \frac{\eta^3 L^2 \nu^2 (1 + \eta L e^2)}{2} (n-1)(2n-1) n (\rho \norm{\mF\vz_0} + \sigma) \\ 
   &\phantom{\leq} \qquad + \frac{\eta^4 L^3 \nu^3 e^2}{6} \left(-3 + 11 n - 12 n^2 + 4 n^3 \right) n (\rho \norm{\mF\vz_0} + \sigma) \\
   &\leq \eta^3 L^2 (\rho \norm{\mF\vz_0} + \sigma) \left( n^2 + { (1 + \eta L e^2)} n^3  + \frac{2 \eta L e^2}{3} n^4 \right) \\ 
   &\leq \eta^3 n^3 L^2 (\rho \norm{\mF\vz_0} + \sigma) \left( \frac{1}{n} + { 1 + \frac{e^2}{n}} + \frac{2 e^2}{3} \right) 
\end{align*} where in the last line we used that $\eta < \frac{1}{nL}$. Because the inequality \[
   \frac{1}{n} + { 1 + \frac{e^2}{n}} + \frac{2 e^2}{3} \leq 10\nu
\] holds for all $n \geq 1$, continuing from above we obtain \begin{equation}\label{eqn:deterministic-x2-bound}
   \begin{aligned}
      X_2 &\leq 10 \nu \eta^3 n^3 L^2 (\rho \norm{\mF\vz_0} + \sigma)  \\ 
   &\leq 10 \nu \eta^3 n^3 L^2 \rho \norm{\mF \vz_0} + 10 \nu \eta^3 n^3 L^2 \sigma.
   \end{aligned}
\end{equation}Rearranging \eqref{eqn:r-bound-compact} with applying the bounds \eqref{eqn:deterministic-x1-bound} and \eqref{eqn:deterministic-x2-bound} gives us the claimed result. 
\end{proof} 

\begin{proof}[Proof of \Cref{thm:rk1-ffa}] \label{prf:rk1-ffa}
As $n\geq 1$, we notice that $\nicefrac{1}{n} \leq 1$ and $\nu \leq \nicefrac{3}{2}$ where $\nu = 1+\frac{1}{2n}$ following the notation of \Cref{thm:appx-deterministic-r-bound}. Then the bound \eqref{eqn:rk1-ffa} is immediate from \Cref{thm:appx-deterministic-r-bound}. 
\end{proof}

\subsubsection{Proof of \titleref{Equation}{eqn:rk2-ff} for \segffa} \label{appx:eqn:rk2-ff}

In this section, we prove the following. 
\begin{thm}\label{thm:rk2-ffa} Say we use \segffa{}. Then, as long as the stepsize used in an epoch satisfies $\eta < \frac{1}{nL}$, it holds that 
    \begin{equation}\label{eqn:rk2-ffa}
        \expt\left[\norm{\vr}^2\,\middle|\,\vz_0\right] \leq \eta^6 n^6 C_\textsf{2A} \norm{\mF \vz_0}^2 + \eta^6 n^6 D_\textsf{2A} \norm{\mF \vz_0 }^4 + \eta^6 n^5 V_\textsf{2A} 
\end{equation} 
for constants \begin{align} 
    C_\textsf{2A} &\coloneqq 4L^4 \left( \left( \frac{1}{2}\left(  1 + \frac{2 e^2}{3} \right) + \frac{6 +e^2 }{3}  \right)^2 + {36 \rho^2 e^4}   \right), \label{eqn:def-c2a} \\
    D_\textsf{2A} &\coloneqq 4M^2 \left( \left(\frac{83}{4} + \frac{ 24 e^4 }{5} \right)^2 +  {\rho^4 \left(\frac{243}{16}  + {27 e^4} \right)^2}  \right), \label{eqn:def-d2a}\\
    V_\textsf{2A} &\coloneqq 4  M^2 \sigma^4 \left( \frac{243}{16} + {27 e^4} \right)^2  + 144 e^4  L^4 \sigma^2.  \label{eqn:def-v2a}
\end{align}
\end{thm}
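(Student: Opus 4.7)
The starting point is exactly the decomposition \eqref{eqn:r-bound-compact} established in \Cref{prop:appx-general-r-bound}. Writing that inequality as $\norm{\vr}\le T_1 + T_2 + X_1 + X_2$, where $T_1 = \eta^3 n^3 L^2 \norm{\mF\vz_0}\bigl(\tfrac{1}{2n}(1+\tfrac{2e^2}{3})+\tfrac{4\nu+e^2}{3}\bigr)$ and $T_2 = \eta^3 n^3 M \norm{\mF\vz_0}^2\bigl(\tfrac12+4\nu^4+\tfrac{16\nu e^4}{5}\bigr)$ are deterministic, and $X_1, X_2$ are the random quantities in \eqref{eqn:error-x1} and \eqref{eqn:error-x2}, the generalized Young's inequality yields
\begin{equation*}
\expt\bigl[\norm{\vr}^2\,\big|\,\vz_0\bigr] \le 4 T_1^2 + 4 T_2^2 + 4\expt[X_1^2\mid\vz_0] + 4\expt[X_2^2\mid\vz_0].
\end{equation*}
The terms $4T_1^2$ and $4T_2^2$ contribute directly to $C_\textsf{2A}\eta^6 n^6\norm{\mF\vz_0}^2$ and $D_\textsf{2A}\eta^6 n^6\norm{\mF\vz_0}^4$, respectively, producing the $\bigl(\tfrac{1}{2}(1+\tfrac{2e^2}{3})+\tfrac{6+e^2}{3}\bigr)^2$ and $\bigl(\tfrac{83}{4}+\tfrac{24e^4}{5}\bigr)^2$ pieces of the constants. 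It remains to bound the two stochastic pieces.

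For $\expt[X_2^2\mid\vz_0]$: expanding the square of \eqref{eqn:error-x2} produces a linear combination of terms of the form $\expt[\Sigma_{k}\Sigma_{\ell}]$ and $\expt[\bigl(\sum_i\Sigma_i\bigr)\bigl(\sum_j\Sigma_j\bigr)]$, which are exactly what Lemmata~\ref{lem:sigma-concentration} and~\ref{lem:sum-sigma-concentration} control. Each such expectation is bounded by $O(n^{1})(\rho\norm{\mF\vz_0}+\sigma)^2$ times the appropriate combinatorial factor in $n$, and when combined with the $\eta^3 L^2$ (or $\eta^4 L^3$) prefactors and the sums of index weights over $1,\dots,2n-1$, the net scaling is $\eta^6 n^5 L^4(\rho\norm{\mF\vz_0}+\sigma)^2$. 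Splitting with $(\rho\norm{\mF\vz_0}+\sigma)^2\le 2\rho^2\norm{\mF\vz_0}^2+2\sigma^2$ and absorbing the $\rho^2\norm{\mF\vz_0}^2$ part into the $C_\textsf{2A}\norm{\mF\vz_0}^2$ term (note the extra factor of $n$ is harmless since $n^5 \le n^6$), while the $\sigma^2$ part feeds the $144 e^4 L^4 \sigma^2$ piece of $V_\textsf{2A}$.

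For $\expt[X_1^2\mid\vz_0]$: expanding the square of \eqref{eqn:error-x1} produces products of the form $\expt[\Psi_j\Psi_k]$, which are not directly covered by our lemmata. The key trick is to combine the deterministic bound $\delta_i\le n(\rho\norm{\mF\vz_0}+\sigma)$ from \Cref{lem:deterministic-deviation-bound} with the in-expectation bound $\expt[\delta_i^2]\le \tfrac{n}{2}(\rho\norm{\mF\vz_0}+\sigma)^2$ from \Cref{lem:mish-lemma}, giving $\expt[\delta_i^4]\le \tfrac{n^3}{2}(\rho\norm{\mF\vz_0}+\sigma)^4$. Then applying Cauchy--Schwarz $\Psi_j^2 = (\sum_{i=1}^j\delta_i^2)^2 \le j\sum_{i=1}^j\delta_i^4$ and AM--GM $\Psi_j\Psi_k\le \tfrac12(\Psi_j^2+\Psi_k^2)$ yields $\expt[\Psi_j\Psi_k]=O(jk\,n^3)(\rho\norm{\mF\vz_0}+\sigma)^4$. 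Summing these bounds against the coefficients in \eqref{eqn:error-x1}, in particular noting that the $\sum_{j=1}^{2n-2}j\,\Psi_j$ term has a prefactor of $n^{-2}$ that cancels the extra $n^4$ appearing in $\bigl(\sum j\bigr)^2\sum j^2 \cdot n^3$, the net scaling comes out as $\eta^6 n^5 M^2(\rho\norm{\mF\vz_0}+\sigma)^4$. Splitting with $(\rho\norm{\mF\vz_0}+\sigma)^4\le 8(\rho^4\norm{\mF\vz_0}^4+\sigma^4)$ and collecting everything produces precisely the claimed constants \eqref{eqn:def-c2a}--\eqref{eqn:def-v2a}.

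The main obstacle is the bookkeeping in the $X_1$ analysis: we need the variance-reduction factor of $n$ (rather than $n^2$) from \Cref{lem:mish-lemma} to survive through the squaring of a sum of $O(n)$ $\Psi_j$'s, which is why the deterministic--stochastic hybrid bound on $\expt[\delta_i^4]$ is essential. Without this hybrid, naively squaring $\delta_i\le n(\rho\norm{\mF\vz_0}+\sigma)$ would give $n^2$ instead of $n$ per moment and inflate the final $V_\textsf{2A}$ term to order $n^6$, losing the $n^{2a-1}$ scaling that is crucial for the downstream convergence analysis.
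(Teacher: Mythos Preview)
Your overall strategy is sound and closely follows the paper: apply Young's inequality to the decomposition \eqref{eqn:r-bound-compact} to obtain $\norm{\vr}^2 \le 4T_1^2 + 4T_2^2 + 4X_1^2 + 4X_2^2$, handle the deterministic pieces directly, and control $\expt[X_2^2\mid\vz_0]$ via Lemmata~\ref{lem:sigma-concentration} and~\ref{lem:sum-sigma-concentration}. Your treatment of $X_2$ matches the paper.

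Where you diverge is in the handling of $\expt[X_1^2\mid\vz_0]$. You apply the deterministic--stochastic hybrid at the level of $\delta_i$, bounding $\expt[\delta_i^4] \le n^2(\rho\norm{\mF\vz_0}+\sigma)^2\cdot\expt[\delta_i^2]$ and then propagating through $\Psi_j^2 \le j\sum_i\delta_i^4$ and Cauchy--Schwarz on the outer sum. The paper instead applies the hybrid \emph{directly at the level of $X_1$}: since the deterministic bound \eqref{eqn:deterministic-x1-bound} on $X_1$ is available from the proof of \Cref{thm:appx-deterministic-r-bound}, one simply writes
\[
\expt[X_1^2\mid\vz_0] \le \bigl(\text{det.\ bound on }X_1\bigr)\cdot\expt[X_1\mid\vz_0],
\]
and then computes $\expt[X_1\mid\vz_0]$ using only $\expt[\Psi_j\mid\vz_0]\le \tfrac{jn}{2}(\rho\norm{\mF\vz_0}+\sigma)^2$ from \Cref{lem:mish-lemma}. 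This bypasses any fourth-moment bookkeeping, avoids the extra Cauchy--Schwarz losses, and is what yields the stated numerical constants exactly: both the deterministic bound and $\expt[X_1]$ carry the same factor $(3\nu^4+8\nu^3 e^4)$, and $\nu\le 3/2$ gives $3(3/2)^4+8(3/2)^3 e^4 = \tfrac{243}{16}+27e^4$.

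Your route gives the correct $\eta^6 n^5$ scaling, but the claim that it ``produces precisely the claimed constants \eqref{eqn:def-c2a}--\eqref{eqn:def-v2a}'' is not right: the additional Cauchy--Schwarz steps on $\Psi_j^2$ and on $\bigl(\sum j\Psi_j\bigr)^2$ introduce extra combinatorial factors that inflate the constants beyond those stated. If you want the constants as written, use the paper's one-line $X_1^2 \le X_1\cdot(\text{det.\ bound})$ trick.
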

\begin{proof} 
    The bound is then immediate from the following \Cref{thm:appx-expected-r2-bound}, as $n\geq 1$ implies $\nicefrac{1}{n} \leq 1$ and $\nu \leq \nicefrac{3}{2}$ for $\nu$ defined in the statement of \Cref{thm:appx-expected-r2-bound}. 
\end{proof}

\begin{thm} \label{thm:appx-expected-r2-bound} Suppose that $\eta < \frac{1}{nL}$, and let $\nu \coloneqq 1+\frac{1}{2n}$. Then, \emph{in expectation}, the noise term satisfies the bound \begin{align*}
    \expt\left[\norm{\vr}^2\,\middle|\,\vz_0\right] &\leq  4 {\eta^6 n^6 L^4} \norm{\mF \vz_0}^2 \left( \left( \frac{1}{2 n}\left(  1 + \frac{2 e^2}{3} \right) + \frac{4 \nu +e^2 }{3}  \right)^2 + \frac{36 \rho^2 e^4}{n}  \right)\\
    &\phantom{\leq} \qquad + 4 {\eta^6 n^6 M^2} \norm{\mF \vz_0 }^4 \left( \left(\frac{1}{2} + { 4 \nu^4 } + \frac{ 16 \nu e^4 }{5} \right)^2 + \frac{\rho^4 \left( 3\nu^4  + {8\nu^3 e^4} \right)^2}{n} \right) \\
    &\phantom{\leq} \qquad + 4\eta^6 n^5 M^2 \sigma^4 \left( 3\nu^4  + {8\nu^3 e^4} \right)^2  + 144 e^4 \eta^6 n^5 L^4  \sigma^2 .
\end{align*}
\end{thm}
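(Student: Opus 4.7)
The plan is to start from the decomposition \eqref{eqn:r-bound-compact} established in Proposition~\ref{prop:appx-general-r-bound}, which already expresses $\norm{\vr}$ as a sum of four pieces: two deterministic terms linear and quadratic in $\norm{\mF\vz_0}$, plus the random terms $X_1$ and $X_2$ defined in \eqref{eqn:error-x1} and \eqref{eqn:error-x2}. Squaring this bound and applying the generalized Young's inequality $(a_1+a_2+a_3+a_4)^2 \leq 4(a_1^2+a_2^2+a_3^2+a_4^2)$ reduces the task to bounding the four pieces separately: the first two pieces square directly and will already match the $\eta^6 n^6 C_\textsf{2A}\norm{\mF\vz_0}^2$ and $\eta^6 n^6 D_\textsf{2A}\norm{\mF\vz_0}^4$ contributions (this is just the squared version of the coefficients in Theorem~\ref{thm:appx-deterministic-r-bound}). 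So the real work is to control $\expt[X_1^2 \mid \vz_0]$ and $\expt[X_2^2 \mid \vz_0]$, where the shuffling concentration lemmas \ref{lem:mish-lemma}, \ref{lem:sigma-concentration}, and \ref{lem:sum-sigma-concentration} are expected to yield one factor of $n$ in savings relative to a naive deterministic square-and-sum.

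For $\expt[X_2^2 \mid \vz_0]$, this is the cleaner of the two: $X_2$ is a linear combination of $\Sigma_{2n-1}$, $\sum_j \Sigma_j$, and $\sum_k (2n-k-1)\Sigma_{k-1}$, so $X_2^2$ expands into a sum of products of two such $\Sigma$-sums. Each cross term $\expt[\Sigma_k \Sigma_\ell]$ is directly bounded by $\frac{k\ell n (\rho\norm{\mF\vz_0}+\sigma)^2}{2}$ via Lemma~\ref{lem:sigma-concentration}, and the doubly-summed terms are handled by Lemma~\ref{lem:sum-sigma-concentration}. Compared with the deterministic bound $\Sigma_j \leq jn(\rho\norm{\mF\vz_0}+\sigma)$ used in Theorem~\ref{thm:appx-deterministic-r-bound}, this buys us a factor of $n$ (since $n^2$ becomes $n$ inside the square), which explains the $n^5$ (rather than $n^6$) appearing in front of $V_\textsf{2A}$. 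Splitting $(\rho\norm{\mF\vz_0}+\sigma)^2 \leq 2\rho^2\norm{\mF\vz_0}^2 + 2\sigma^2$ by Young's inequality then distributes the result into the $C_\textsf{2A}$ and $V_\textsf{2A}$ slots of the statement.

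For $\expt[X_1^2 \mid \vz_0]$, the analysis is more delicate because $X_1$ is built from $\Psi_j = \sum_{i=1}^j \delta_i^2$ which is quadratic in $\delta_i$; the concentration lemmas only directly control second moments of $\delta_i$, not fourth moments. The plan is to use the deterministic bound $\delta_i \leq n(\rho\norm{\mF\vz_0}+\sigma)$ from Lemma~\ref{lem:deterministic-deviation-bound} to reduce one factor of $\delta_i^2$, leaving the other factor to be controlled in expectation: writing $\Psi_j^2 \leq j \sum_{i=1}^j \delta_i^4$ via Cauchy--Schwarz, and then $\delta_i^4 \leq n^2(\rho\norm{\mF\vz_0}+\sigma)^2 \delta_i^2$ deterministically, we can apply Lemma~\ref{lem:mish-lemma} on the remaining $\expt[\delta_i^2]$ to get $\expt[\delta_i^4 \mid \vz_0] \leq \tfrac{n^3}{2}(\rho\norm{\mF\vz_0}+\sigma)^4$. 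This yields $\expt[\Psi_j^2 \mid \vz_0] = O(j^2 n^3 (\rho\norm{\mF\vz_0}+\sigma)^4)$, which similarly saves a factor of $n$ compared with the deterministic $\Psi_j \leq jn^2(\rho\norm{\mF\vz_0}+\sigma)^2$ used earlier. Squaring $X_1$ by generalized Young's, applying this expectation bound termwise, and then separating the $\rho^4\norm{\mF\vz_0}^4$ and $\sigma^4$ parts gives the $D_\textsf{2A}\norm{\mF\vz_0}^4$ and the $M^2\sigma^4$ contributions of $V_\textsf{2A}$.

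The main obstacle I anticipate is bookkeeping rather than genuine difficulty: each of the $\Sigma$-double-sums and $\Psi$-single-sums in $X_1$ and $X_2$ produces multiple cross-product terms after squaring, and one needs to consistently bound each of them in terms of $n$ while tracking the precise constants so that the final constants $C_\textsf{2A}$, $D_\textsf{2A}$, $V_\textsf{2A}$ exhibited in \eqref{eqn:def-c2a}--\eqref{eqn:def-v2a} come out correctly. In particular, verifying that the $\rho^2$ pieces split cleanly (with $\rho^2$ going into the coefficient of $\norm{\mF\vz_0}^2$ after applying Young's inequality to $(\rho\norm{\mF\vz_0}+\sigma)^2$) and that the simplifications $\nicefrac{1}{n}\leq 1$ and $\nu\leq \nicefrac{3}{2}$ used to absorb the $n$-dependent prefactors into the final form do not cost more than what is listed is where one should be most careful.
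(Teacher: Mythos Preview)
Your overall plan is correct and follows the same skeleton as the paper: square the bound \eqref{eqn:r-bound-compact} via generalized Young's inequality and then control $\expt[X_1^2\mid\vz_0]$ and $\expt[X_2^2\mid\vz_0]$ separately, each time saving one factor of $n$ by combining a deterministic $\delta_j$-bound with an in-expectation $\delta_j$-bound. The treatment of $X_2^2$ is essentially the same as the paper's, except that the paper first uses $\eta L\nu^2(2n-k-1)\leq 2$ to collapse the third sum in \eqref{eqn:error-x2} into the second, reducing $X_2$ to just two terms before squaring; this is only a bookkeeping convenience.

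Where your route genuinely diverges is in handling $X_1^2$. You propose to square $X_1$ term-by-term via generalized Young's and then bound $\expt[\Psi_j^2]$ through $\delta_i^4\leq n^2(\rho\norm{\mF\vz_0}+\sigma)^2\,\delta_i^2$ followed by Lemma~\ref{lem:mish-lemma}. The paper instead exploits the already-established deterministic bound \eqref{eqn:deterministic-x1-bound} directly at the level of $X_1$: it writes $X_1^2\leq X_1\cdot(\text{deterministic bound on }X_1)$ and then computes $\expt[X_1\mid\vz_0]$ via linearity and $\expt[\Psi_j\mid\vz_0]\leq \tfrac{jn}{2}(\rho\norm{\mF\vz_0}+\sigma)^2$. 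Both capture the same ``one deterministic factor, one expectation factor'' idea, but the paper's version avoids the extra factor of~$3$ from Young's inequality and keeps the structural form $\eta^3 n^{?} M(3\nu^4+8\nu^3 e^4)(\cdot)$ intact in both factors, so the product immediately yields the clean $(3\nu^4+8\nu^3 e^4)^2$ coefficient stated in the theorem. Your approach would produce $(2\nu^2-1)^4$ and $\nu^{12}e^8$ type coefficients after squaring, which are valid but would not reproduce the exact constants claimed without further massaging.
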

\begin{proof} 
    Notice that, when conditioned on $\vz_0$, the only source of randomness included in $\Psi_j$ is the random permutation $\tau$ selected for the epoch. Hence, we can use \Cref{lem:mish-lemma} to get 
    \[
        \expt\left[\Psi_j\,\middle|\,\vz_0\right] = \expt\left[ \sum_{i=1}^j \delta_i^2\,\middle|\,\vz_0\right] = \sum_{i=1}^j  \expt\left[ \delta_i^2\,\middle|\,\vz_0\right] \leq \frac{jn(\rho \norm{\mF\vz_0} + \sigma)^2}{2}. 
    \]
    Applying Young's inequality on \eqref{eqn:r-bound-compact} we get \begin{equation}\label{eqn:r2-bound-compact}
        \begin{aligned}
            \norm{\vr}^2 &\leq  4 {\eta^6 n^6 L^4} \norm{\mF \vz_0}^2 \left( \frac{1}{2 n}\left(  1 + \frac{2 e^2}{3} \right) + \frac{4 \nu +e^2 }{3}  \right)^2 \\
        &\phantom{\leq} \qquad + 4 {\eta^6 n^6 M^2} \norm{\mF \vz_0 }^4 \left(\frac{1}{2} + { 4 \nu^4 } + \frac{ 16 \nu e^4 }{5} \right)^2 \\
        &\phantom{\leq} \qquad + 4X_1^2 + 4X_2^2.
        \end{aligned}
     \end{equation} When conditioned on $\vz_0$, the first two lines are not random quantities. Thus, it suffices to derive the bounds for $\expt\left[X_i^2\,\middle|\, \vz_0\right]$, $i =1, 2$.
     
     Recall that the bound \eqref{eqn:deterministic-x1-bound} on $X_1$ holds \emph{deterministically}. Hence, it holds that \begin{align*}
       \expt\left[X_1^2 \,\middle|\, \vz_0 \right] &\leq   \expt\left[ X_1 \left( \eta^3 n^3 M \rho^2 \norm{\mF \vz_0}^2 \left( 3\nu^4  + {8\nu^3 e^4} \right) + \eta^3 n^3 M \sigma^2 \left( 3\nu^4  + {8\nu^3 e^4} \right) \right) \,\middle|\, \vz_0 \right] \\
       &= \eta^3 n^3 M \left( 3\nu^4  + {8\nu^3 e^4} \right)\left( \rho^2 \norm{\mF \vz_0}^2 + \sigma^2 \right)\expt\left[ X_1 \,\middle|\, \vz_0 \right].
     \end{align*}
     Now, to compute $\expt\left[X_1 \,\middle|\, \vz_0\right]$, we apply the linearity of expectation on \eqref{eqn:error-x1} to get \begin{align*} 
       \MoveEqLeft[1] \expt\left[ X_1 \,\middle|\, \vz_0 \right] \\ 
       &= \frac{3\eta^3 M}{8} \!\left( \expt\left[ \Psi_{2n} \,\middle|\, \vz_0 \right] + (2\nu^2 - 1)^2 \expt\left[ \Psi_{2n-1} \,\middle|\, \vz_0 \right] +  \frac{4\nu^6 e^4}{n^2} \sum_{j=1}^{2n-2} j \expt\left[ \Psi_{j} \,\middle|\, \vz_0 \right]\right) \\ 
        &\leq \frac{3\eta^3 M}{8} \!\left( n^2 (\rho \norm{\mF\vz_0} + \sigma)^2 + \frac{(2\nu^2 - 1)^2(2n-1) n (\rho \norm{\mF\vz_0} + \sigma)^2}{2}  +  \frac{4\nu^6 e^4}{n^2} \sum_{j=1}^{2n-2} \frac{j^2 n (\rho \norm{\mF\vz_0} + \sigma)^2}{2} \right) \\
    &= \frac{3\eta^3 M}{8}  \left( \frac{2n^2 + (2\nu^2 - 1)^2(2n^2-n)}{2}  +  \frac{2\nu^6 e^4(n-1)(2n-1)(4n-3)}{3n}\right) (\rho \norm{\mF\vz_0} + \sigma)^2 \\  
    &\leq \frac{3\eta^3 M}{8} \left( 2\nu^4 n^2 +  \frac{16 \nu^3 e^4 n^2}{3}\right) (\rho \norm{\mF\vz_0} + \sigma)^2 \\
    &= \frac{\eta^3 n^2 M (\rho \norm{\mF\vz_0} + \sigma)^2}{4} \left( 3\nu^4 + 8 \nu^3 e^4 \right).
     \end{align*}
Young's inequality gives us the bound \begin{equation} \label{eqn:sigma-young}
    \frac{(\rho \norm{\mF\vz_0} + \sigma)^2}{2} \leq  \rho^2 \norm{\mF \vz_0}^2 + \sigma^2 
\end{equation} which, with the inequality derived above, leads to \[
        \expt\left[ X_1 \,\middle|\, \vz_0 \right] \leq \frac{\eta^3 n^2 M}{2} \left( 3\nu^4 + 8 \nu^3 e^4 \right) \left( \rho^2 \norm{\mF \vz_0}^2 + \sigma^2 \right).
    \] As a consequence, with using Young's inequality once again, we obtain \begin{equation}\label{eqn:expected-x12-bound} \begin{aligned}
        \expt\left[X_1^2 \,\middle|\, \vz_0 \right] &\leq \frac{\eta^6 n^5 M^2}{2} \left( 3\nu^4  + {8\nu^3 e^4} \right)^2 \left( \rho^2 \norm{\mF \vz_0}^2 + \sigma^2 \right)^2  \\
        &\leq \eta^6 n^5 M^2 \left( 3\nu^4  + {8\nu^3 e^4} \right)^2 \left( \rho^4 \norm{\mF \vz_0}^4 + \sigma^4 \right) .
      \end{aligned} \end{equation}

To get the bound of $\expt\left[X_2^2 \,\middle|\, \vz_0\right]$, we begin by using 
\begin{align*}
    \eta L \nu^2 (2n-k-1) &\leq \left( 1+\frac{1}{2n} \right)^2\frac{2n-k-1}{n} \\ 
    &= -\frac{k}{4 n^3}-\frac{k}{n^2}-\frac{k}{n}-\frac{1}{4
    n^3}-\frac{1}{2 n^2}+\frac{1}{n}+2 \: \leq \: 2,
\end{align*}  
which holds for all $1 \leq k \leq 2n-2$,
to \eqref{eqn:error-x2} to obtain 
\begin{align*}
    X_2 &\leq \frac{\eta^3 L^2 (\nu+1)}{4} \Sigma_{2n-1} + \frac{\eta^3 L^2 \nu^2 (1 + \eta L e^2)}{2} \sum_{j=1}^{2n-2} \Sigma_{j} + \frac{\eta^3 L^2 \nu^3 e^2}{2} \sum_{k=1}^{2n-2} \frac{2n-k-1}{n} \Sigma_{k-1} \\
    &\leq \frac{\eta^3 L^2 (\nu+1)}{4} \Sigma_{2n-1} + \frac{\eta^3 L^2 \nu^2 (1 + \eta L e^2)}{2} \sum_{j=1}^{2n-2} \Sigma_{j} + {\eta^3 L^2 \nu e^2}  \sum_{k=1}^{2n-2}  \Sigma_{k-1} \\
    &\leq \frac{\eta^3 L^2 (\nu+1)}{4} \Sigma_{2n-1} + \left( \frac{\eta^3 L^2 \nu^2 (1 + \eta L e^2)}{2} + \eta^3 L^2 \nu e^2 \right)\sum_{j=1}^{2n-2} \Sigma_{j} \\
    &\leq \frac{\eta^3 L^2 (\nu+1)}{4} \Sigma_{2n-1} + 3 \eta^3 L^2 e^2 \sum_{j=1}^{2n-2} \Sigma_{j} .
\end{align*}
Then we directly square both sides and expand them to get %
\begin{align*}
    X_2^2 &\leq \left( \frac{\eta^3 L^2 (\nu+1)}{4} \Sigma_{2n-1} + 3 \eta^3 L^2 e^2 \sum_{j=1}^{2n-2} \Sigma_{j} \right)^2  \\  
    &= \frac{\eta^6 L^4 (\nu+1)^2}{16} \Sigma_{2n-1}^2 + 9 \eta^6 L^4 e^4 \left( \sum_{j=1}^{2n-2} \Sigma_{j} \right)^2 +  \frac{3 \eta^6 L^4 e^2 (\nu+1)}{2} \sum_{j=1}^{2n-2}  \Sigma_{2n-1} \Sigma_{j}. 
\end{align*} Here, using \Cref{lem:sigma-concentration} and \Cref{lem:sum-sigma-concentration} on the right hand side leads to \begin{align*} 
    \expt\left[X_2^2 \,\middle|\, \vz_0 \right] &\leq \frac{\eta^6 L^4 (\nu+1)^2 n (2n-1)^2 (\rho \norm{\mF\vz_0} + \sigma)^2}{32} + \frac{9 \eta^6 L^4 e^4  n (2n-2)^2(2n-1)^2(\rho \norm{\mF\vz_0} + \sigma)^2}{8} \\
    &\phantom{\leq} \qquad + \frac{3 \eta^6 L^4 e^2 (\nu+1)}{2} \sum_{j=1}^{2n-2} \frac{j n (2n-1) (\rho \norm{\mF\vz_0} + \sigma)^2}{2}  \\
    &\leq \frac{\eta^6 L^4 (\nu+1)^2 n (2n-1)^2 (\rho \norm{\mF\vz_0} + \sigma)^2}{32} + \frac{9 \eta^6 L^4 e^4  n (2n-2)^2(2n-1)^2(\rho \norm{\mF\vz_0} + \sigma)^2}{8} \\
    &\phantom{\leq} \qquad + \frac{3 \eta^6 L^4 e^2 (\nu+1) n (n-1) (2n-1)^2 (\rho \norm{\mF\vz_0} + \sigma)^2}{4} \\
    &\leq \frac{\eta^6 L^4 n^3 (\rho \norm{\mF\vz_0} + \sigma)^2}{2} + \frac{9 \eta^6 L^4 e^4  n (2n-2)^2(2n-1)^2 (\rho \norm{\mF\vz_0} + \sigma)^2}{8} \\
    &\phantom{\leq} \qquad +  {6 \eta^6 L^4 e^2 n^3 (n-1)  (\rho \norm{\mF\vz_0} + \sigma)^2} \\ 
    &= \eta^6 L^4 \left( \frac{ n^3 }{2} + \frac{9  e^4  n (2n-2)^2(2n-1)^2 }{8} + 6 e^2 n^3 (n-1) \right)  (\rho \norm{\mF\vz_0} + \sigma)^2 \\ 
    &\leq 18 e^4 \eta^6 L^4 n^5 (\rho \norm{\mF\vz_0} + \sigma)^2. 
\end{align*}
 
As a consequence, with using \eqref{eqn:sigma-young} once again, we obtain \begin{equation}\label{eqn:expected-x22-bound} \begin{aligned}
    \expt\left[X_2^2 \,\middle|\, \vz_0 \right] &\leq 36 e^4 \eta^6 L^4 n^5 \left( \rho^2 \norm{\mF \vz_0}^2 + \sigma^2  \right) .
          \end{aligned}\end{equation} Taking the conditional expectation on \eqref{eqn:r2-bound-compact}, applying the bounds \eqref{eqn:expected-x12-bound} and \eqref{eqn:expected-x22-bound}, and then rearranging the terms leads to the claimed inequality. 
\end{proof}

\subsubsection{Upper Bounds of the Within-Epoch Errors for \segff} \label{appx:r-bound-segff}
\begin{thm}\label{thm:rk-ff}  
Say we use \segff{} with $\alpha = \beta = \nicefrac{\eta}{2}$. Then, as long as the stepsize used in an epoch satisfies $\eta < \frac{1}{nL}$, it holds that 
    \begin{align*} 
        \norm{\vr} &\leq \eta^2 n^2 C_\textsf{1F} \norm{\mF \vz_0} + \eta^2 n^2 D_\textsf{1F} \norm{\mF \vz_0}^2 + \eta^2 n^2 V_\textsf{1F} \\
        \expt\left[\norm{\vr}^2\,\middle|\,\vz_0\right] &\leq \eta^4 n^4 C_\textsf{2F} \norm{\mF \vz_0}^2 + \eta^4 n^4 D_\textsf{2F} \norm{\mF \vz_0 }^4 + \eta^4 n^3 V_\textsf{2F}
\end{align*} 
for constants $C_\textsf{1F}$, $D_\textsf{1F}$, $V_\textsf{1F}$, $C_\textsf{2F}$, $D_\textsf{2F}$, and $V_\textsf{2F}$ to be determined later in \eqref{eqn:def-cdv1f} and \eqref{eqn:def-cdv2f}. 
\end{thm}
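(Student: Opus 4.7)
The plan is to mirror the two-step template used for \segffa{} in Theorems~\ref{thm:appx-deterministic-r-bound} and~\ref{thm:appx-expected-r2-bound}, with three modifications that reflect the absence of anchoring and the choice $\xi = 1$ in~\eqref{eqn:update-with-xi}: (i)~use $\theta = 0$ in \Cref{prop:appx:unravelling-m}, (ii)~substitute the ``RR/FF'' iterate bounds in \Cref{thm:rr-deterministic-iterate-bound} for those in \Cref{thm:deterministic-iterate-bound}, and (iii)~handle a new $\gO(\eta^2 n^2)$ \emph{deterministic second-order mismatch} $\vr_{\text{mis}}$ that is absent in \segffa{}.

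For the decomposition, I would apply \Cref{prop:appx:unravelling-m} with $\theta = 0$, so $\vz_0^{k+1} = \vz_{2n}^k$, and then invoke \Cref{lem:ff-rearranging} (with $\theta = 0$) to simplify the second-order terms using the flip-flop pairing. With $\alpha = \beta = \eta/2$, the zeroth- and first-order terms combine exactly to $\vz_0 - n\eta\,\mF\vz_0$, while the second-order terms become $\tfrac{3\eta^2}{4}\sum_{j}D\mF_j(\vz_0)\mF_j\vz_0 + \tfrac{\eta^2}{2}\sum_{i\neq j}D\mF_j(\vz_0)\mF_i\vz_0$. Subtracting the reference step $\vz_0 - n\eta\mF(\vz_0 - n\eta\mF\vz_0)$, whose second-order Taylor term is $\eta^2 \sum_{i,j}D\mF_j(\vz_0)\mF_i\vz_0$, leaves
\[
\vr_{\text{mis}} \;=\; \tfrac{\eta^2}{4}\sum_{j}D\mF_j(\vz_0)\mF_j\vz_0 \;+\; \tfrac{\eta^2}{2}\sum_{i\neq j}D\mF_j(\vz_0)\mF_i\vz_0,
\]
together with higher-order remainders structurally analogous to~\eqref{eqn:appx:rk2}--\eqref{eqn:appx:rk4} (now with $\alpha=\beta=\eta/2$ rather than $\alpha=\eta/2$, $\beta=\eta$) and the Taylor remainder of EG at stepsize $n\eta$ (controlled by \Cref{lem:hess-bound}).

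To bound $\vr_{\text{mis}}$, I would use $\norm{D\mF_j(\vz_0)}\le L$ from $L$-smoothness, rewrite the mixed double-sum as $\tfrac{\eta^2}{2}\bigl(n^2\,D\mF(\vz_0)\mF\vz_0 - \sum_j D\mF_j(\vz_0)\mF_j\vz_0\bigr)$, and apply Cauchy--Schwarz with \Cref{asmp:bounded-variance} to $\tfrac{1}{n}\sum_j \norm{\mF_j\vz_0}$ and $\tfrac{1}{n}\sum_j \norm{\mF_j\vz_0}^2$. This should yield a bound of the form $\norm{\vr_{\text{mis}}} \leq \eta^2 n^2 L[(1+\rho)\norm{\mF\vz_0} + \sigma] + \eta^2 n L(\rho \norm{\mF\vz_0}+\sigma)^2$, matching the claimed form with $a = 2$. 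The higher-order remainders, which are $\gO(\eta^3 n^3)$, can be bounded exactly as in the proof of \Cref{thm:appx-deterministic-r-bound}, but with \Cref{thm:rr-deterministic-iterate-bound} replacing \Cref{thm:deterministic-iterate-bound}, and are therefore absorbed into the constants $C_\textsf{1F}$, $D_\textsf{1F}$, $V_\textsf{1F}$.

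For the expected bound, I would square the decomposition, apply Young's inequality to separate $\vr_{\text{mis}}$ from the stochastic remainders, and take conditional expectations. The stochastic $\eta^3$ remainders yield $\gO(\eta^6 n^5)$ contributions with the desirable $1/n$ variance-concentration gain via \Cref{lem:mish-lemma,lem:sigma-concentration,lem:sum-sigma-concentration}, exactly paralleling \Cref{thm:appx-expected-r2-bound}. The main obstacle is the deterministic mismatch: since $\vr_{\text{mis}}$ does not enjoy any $1/n$ gain, its square contributes at order $\eta^4 n^4$, and one must carefully verify that every $\sigma^2$-scale contribution generated through the $(\rho\norm{\mF\vz_0}+\sigma)^2$ bounds on $\vr_{\text{mis}}$ (as opposed to the stochastic remainder) can be folded into the $\norm{\mF\vz_0}^2$ and $\norm{\mF\vz_0}^4$ coefficients $C_\textsf{2F}$ and $D_\textsf{2F}$ via AM-GM, so that the $\sigma^2$-only contribution retains the $\eta^4 n^3$ scaling advertised by $V_\textsf{2F}$. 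This accounting -- rather than any genuinely new technical idea -- is the main place where the \segff{} proof deviates from its \segffa{} counterpart.
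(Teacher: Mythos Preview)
Your overall strategy is essentially the paper's: decompose $\vr$ into a higher-order remainder $\tilde{\vr}$ structurally identical to~\eqref{eqn:appx:rk1}--\eqref{eqn:appx:rk4} (handled exactly as in Theorems~\ref{thm:rk1-ffa}--\ref{thm:rk2-ffa} but with \Cref{thm:rr-deterministic-iterate-bound} in place of \Cref{thm:deterministic-iterate-bound}), plus the deterministic second-order mismatch $\vr_{\text{mis}}$, which you bound via the rewriting $\frac{\eta^2}{4}\sum_j D\mF_j\mF_j + \frac{\eta^2}{2}\sum_{i\neq j}D\mF_j\mF_i = \frac{\eta^2 n^2}{2}D\mF\,\mF - \frac{\eta^2}{4}\sum_j D\mF_j\mF_j$. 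This is precisely the paper's route.

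However, the explicit bound you write for $\norm{\vr_{\text{mis}}}$ is too loose, and your final ``accounting'' worry is a symptom of that. Carrying out your own rewriting carefully: the first piece $\frac{\eta^2 n^2}{2}D\mF(\vz_0)\mF\vz_0$ contributes $\frac{\eta^2 n^2 L}{2}\norm{\mF\vz_0}$ with \emph{no} $\sigma$-term, while the diagonal piece $\frac{\eta^2}{4}\sum_j D\mF_j\mF_j$ is bounded, after Cauchy--Schwarz and \Cref{asmp:bounded-variance}, by $\frac{\eta^2 n L}{4}\bigl((1+\rho)\norm{\mF\vz_0}+\sigma\bigr)$. Hence the $\sigma$-contribution to $\norm{\vr_{\text{mis}}}$ scales as $\eta^2 n$, not $\eta^2 n^2$ as you wrote. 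Squaring, the $\sigma^2$-only part of $\norm{\vr_{\text{mis}}}^2$ is $\gO(\eta^4 n^2)$, strictly better than the $\eta^4 n^{3}$ target for $V_{\textsf{2F}}$; no AM-GM gymnastics are needed. (There is also no $(\rho\norm{\mF\vz_0}+\sigma)^2$ term --- nothing in the argument squares the variance bound.) With this correction your sketch goes through verbatim and coincides with the paper's proof.
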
\begin{proof}
    As we have discussed in \Cref{sec:why-segffa}, we already know that aiming to achieve $\gO(\eta^3)$ error without anchoring is futile. Instead, we show that error of magnitude $\gO(\eta^2)$ is possible with the chosen stepsizes. 

 By \Cref{prop:unravelling-each-step} and \Cref{lem:ff-rearranging} we have 
    For any $i = 0, 1,\dots, N$, it holds that \begin{align*}%
        \vz_{2n} &= \vz_0 - \frac{\eta}{2} \sum_{j=0}^{2n-1} \mT_j \vz_0 + \frac{\eta^2}{4} \sum_{j=0}^{2n-1} D\mT_j(\vz_0) \mT_j \vz_0 +  \frac{\eta^2}{4}  \sum_{0\leq k < j \leq 2n-1}  D\mT_j(\vz_0) \mT_k \vz_0   + \veps_{2n} \\ 
        &=  \vz_0 - \eta \sum_{j=0}^{n-1} \mF_j \vz_0 + \frac{3 \eta^2}{4} \sum_{j=1}^{n} D\mF_j(\vz_0) \mF_j \vz_0 +  \frac{\eta^2}{2}  \sum_{i \neq j}  D\mF_j(\vz_0) \mF_i \vz_0 + \veps_{2n} \\ 
        &=  \vz_0 - \eta n \mF \vz_0 + \eta^2 n^2 D\mF(\vz_0) \mF \vz_0 - \frac{\eta^2}{4} \sum_{j=1}^{n} D\mF_j(\vz_0) \mF_j \vz_0 - \frac{\eta^2}{2}  \sum_{i \neq j}  D\mF_j(\vz_0) \mF_i \vz_0 + \veps_{2n}
    \end{align*} where we denote \begin{equation} \label{eqn:ff-veps} \begin{aligned}
        \veps_{2n} \coloneqq & - \frac{\eta}{2} \sum_{j=0}^{2n-1} \Bigl(\mF_j\vw_j - \mF_j \vz_0 -  D\mF_j(\vz_0)(\vw_j - \vz_0)\Bigr)    \\
    &\ \ + \frac{\eta^2}{4} \sum_{j=0}^{2n-1}D\mF_j(\vz_0)(\mF_j \vz_j - \mF_j \vz_0) + \frac{\eta^2}{4} \sum_{j=0}^{2n-1} D\mF_j (\vz_0) \sum_{k=0}^{j-1}(\mF_k \vw_k - \mF_k \vz_0).
    \end{aligned} \end{equation} 
 
    Comparing $\vz_{2n}$ to a point that would have been the result of a deterministic EG update with stepsize~$\eta n$ we get \begin{align*}
        \vz_{2n} - \left( \vz_0 - \eta n \mF(\vz_0 - \eta n \mF \vz_0) \right) &= \eta n \mF(\vz_0 - \eta n \mF \vz_0) - \eta n \mF \vz_0 + \eta^2 n^2 D\mF(\vz_0) \mF \vz_0  + \veps_{2n} \\
        &\phantom{=} \qquad - \frac{\eta^2}{4} \sum_{j=1}^{n} D\mF_j(\vz_0) \mF_j \vz_0 - \frac{\eta^2}{2}  \sum_{i \neq j}  D\mF_j(\vz_0) \mF_i \vz_0.
    \end{align*}
    Let us define \begin{equation}  \label{eqn:ff-error}
    \tilde{\vr} \coloneqq \eta n \mF(\vz_0 - \eta n \mF \vz_0) - \eta n \mF \vz_0 + \eta^2 n^2 D\mF(\vz_0) \mF \vz_0  + \veps_{2n}.     
    \end{equation}
    Noticing the resemblence between \eqref{eqn:appx:rk} and the equations in \eqref{eqn:ff-veps} and \eqref{eqn:ff-error}, we can repeat the same reasoning used for \Cref{thm:rk1-ffa} and \Cref{thm:rk2-ffa}, 
    but with replacing the bounds given by \Cref{thm:general-iterate-bound} to those in \Cref{thm:rr-general-iterate-bound} (and plugging in $\nicefrac{\eta}2$ in place of $\eta$ in the statement of \Cref{thm:rr-general-iterate-bound}) to conclude that  \begin{align*}
        \norm{\tilde{\vr}} &\leq  \eta^3 n^3 \tilde{C}_\textsf{1A} \norm{\mF \vz_0}  +  \eta^3 n^3 \tilde{D}_\textsf{1A} \norm{\mF \vz_0 }^2 +  \eta^3 n^3 \tilde{V}_\textsf{1A} \\ 
        \expt\left[\norm{\tilde{\vr}}^2\,\middle|\,\vz_0\right] &\leq  \eta^6 n^6 \tilde{C}_\textsf{2A} \norm{\mF \vz_0}^2 +  \eta^6 n^6 \tilde{D}_\textsf{2A} \norm{\mF \vz_0 }^4 +  \eta^6 n^5 \tilde{V}_\textsf{2A} 
    \end{align*} for some constants $\tilde{C}_\textsf{1A}$, $\tilde{D}_\textsf{1A}$, $\tilde{V}_\textsf{1A}$, $\tilde{C}_\textsf{2A}$, $\tilde{D}_\textsf{2A}$, and $\tilde{V}_\textsf{2A}$. %
    Meanwhile, we also have \begin{align*}
     \MoveEqLeft \norm{ \frac{\eta^2}{4} \sum_{j=1}^{n} D\mF_j(\vz_0) \mF_j \vz_0 + \frac{\eta^2}{2}  \sum_{i \neq j}  D\mF_j(\vz_0) \mF_i \vz_0 } \\ 
     &= \norm{  \frac{\eta^2 n^2}{2}  D\mF (\vz_0) \mF  \vz_0 - \frac{\eta^2}{4} \sum_{j=1}^{n} D\mF_j(\vz_0) \mF_j \vz_0  } \\ 
     &\leq \frac{\eta^2 n^2}{2}  \norm{  D\mF (\vz_0) } \norm{ \mF \vz_0 } + \frac{\eta^2}{4} \sum_{j=1}^{n} \norm{D\mF_j(\vz_0)} \norm{ \mF_j \vz_0  } \\ 
     &\leq \frac{\eta^2 n^2}{2} L \norm{ \mF \vz_0 } + \frac{\eta^2}{4} \sum_{j=1}^{n} L\left(   \norm{ \mF_j \vz_0  - \mF \vz_0 } + \norm{ \mF \vz_0 } \right) \\ 
     &\leq \frac{\eta^2 (n^2+n) L}{2} \norm{ \mF \vz_0 } + \frac{\eta^2 L}{4} \sum_{j=1}^{n}   \norm{ \mF_j \vz_0  - \mF \vz_0 }  \\ 
     &\leq {\eta^2 n^2 L}  \norm{ \mF \vz_0 } + \frac{\eta^2 L}{4} \left( \sum_{j=1}^{n}   \norm{ \mF_j \vz_0  - \mF \vz_0 }^2 \right)^{1/2} \left( \sum_{j=1}^{n}1 \right)^{1/2} \\
     &= {\eta^2 n^2 L} \norm{ \mF \vz_0 } + \frac{\eta^2 n L}{4} (\rho \norm{\mF\vz_0} + \sigma) 
    \end{align*}
    where in the second to the last line we used the Cauchy-Schwarz inequality. Therefore, as $\eta \leq \nicefrac{1}{nL}$, we conclude that \[
        \norm{\vz_{2n} - \left( \vz_0 - \eta n \mF(\vz_0 - \eta n \mF \vz_0) \right)} \leq \eta^2 n^2 C_\textsf{1F} \norm{\mF \vz_0}  +  \eta^2 n^2 D_\textsf{1F} \norm{\mF \vz_0 }^2 +  \eta^2 n^2 V_\textsf{1F} 
    \] for constants \begin{equation} \label{eqn:def-cdv1f}  
        C_\textsf{1F} = L + \frac{\rho L}{4} + \frac{\tilde{C}_\textsf{1A}}{L}, \quad D_\textsf{1F} = \frac{\tilde{D}_\textsf{1A}}{L}, \quad  V_\textsf{1F} = \frac{\sigma L}{4} + \frac{\tilde{V}_\textsf{1A}}{L} .
    \end{equation} Moreover, using Young's inequality, we see that \begin{align*}
        \MoveEqLeft \norm{ \frac{\eta^2}{4} \sum_{j=1}^{n} D\mF_j(\vz_0) \mF_j \vz_0 + \frac{\eta^2}{2}  \sum_{i \neq j}  D\mF_j(\vz_0) \mF_i \vz_0 }^2 \\ 
        &\leq  3 {\eta^4 n^4 L^2} \norm{ \mF \vz_0 }^2 + \frac{3 \eta^4 n^2 L^2}{16} \rho^2 \norm{\mF\vz_0}^2 + \frac{3 \eta^4 n^2 L^2}{16} \sigma^2, 
       \end{align*} so we also conclude that \[
        \expt\left[ \norm{\vz_{2n} - \left( \vz_0 - \eta n \mF(\vz_0 - \eta n \mF \vz_0) \right)} ^2\,\middle|\,\vz_0\right] \leq  \eta^4 n^4 C_\textsf{2F} \norm{\mF \vz_0}^2 +  \eta^4 n^4 D_\textsf{2F} \norm{\mF \vz_0 }^4 +  \eta^4 n^3 V_\textsf{2F}  
    \] holds for constants \begin{equation} \label{eqn:def-cdv2f}
        C_\textsf{2F} = 6L^2 + \frac{3 \rho^2 L^2}{8} + \frac{2 \tilde{C}_\textsf{2A}}{L^2}, \quad D_\textsf{2F} = \frac{2 \tilde{D}_\textsf{1A}}{L^2}, \quad  V_\textsf{2F} = \frac{3\sigma^2 L^2}{8} + \frac{2 \tilde{V}_\textsf{1A}}{L^2}. \qedhere
    \end{equation} 
\end{proof}

\subsubsection{Upper Bounds of the Within-Epoch Errors for \segrr} \label{appx:r-bound-segrr}
\begin{thm}\label{thm:rk-rr}  
Say we use \segrr{} with $\alpha = \beta = {\eta}$. Then, as long as the stepsize used in an epoch satisfies $\eta < \frac{1}{nL}$, it holds that  
    \begin{align*} 
        \norm{\vr} &\leq \eta^2 n^2 C_\textsf{1R} \norm{\mF \vz_0} + \eta^2 n^2 D_\textsf{1R} \norm{\mF \vz_0}^2 + \eta^2 n^2 V_\textsf{1R} \\
        \expt\left[\norm{\vr}^2\,\middle|\,\vz_0\right] &\leq \eta^4 n^4 C_\textsf{2R} \norm{\mF \vz_0}^2 + \eta^4 n^4 D_\textsf{2R} \norm{\mF \vz_0 }^4 + \eta^4 n^3 V_\textsf{2R}
\end{align*} 
for constants $C_\textsf{1R}$, $D_\textsf{1R}$, $V_\textsf{1R}$, $C_\textsf{2R}$, $D_\textsf{2R}$, and $V_\textsf{2R}$ to be determined later in \eqref{eqn:def-cdv1r} and \eqref{eqn:def-cdv2r}. 
\end{thm}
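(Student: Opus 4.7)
The plan is to mirror the proof of Theorem~\ref{thm:rk-ff} for \segff{}, with two key differences: we now have $N=n$ (a single pass per epoch) and no anchoring, so the second-order matching to EG cannot be achieved and the resulting error is $\gO(\eta^2)$ rather than $\gO(\eta^3)$. I would begin by applying Proposition~\ref{prop:unravelling-each-step} with $\alpha=\beta=\eta$, $\theta=0$, and $i=n$, to write
\[
\vz_n = \vz_0 - \eta \sum_{j=0}^{n-1} \mF_{\tau(j+1)} \vz_0 + \eta^2 \sum_{j=0}^{n-1} D\mF_{\tau(j+1)}(\vz_0)\mF_{\tau(j+1)}\vz_0 + \eta^2\!\!\sum_{0\le i<j\le n-1}\!\! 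D\mF_{\tau(j+1)}(\vz_0)\mF_{\tau(i+1)}\vz_0 + \veps_n,
\]
where $\veps_n$ is the Taylor remainder from~\eqref{eqn:appx-approx-error-m}. Since each component index appears exactly once per epoch, $\sum_{j=0}^{n-1}\mF_{\tau(j+1)}\vz_0 = n\mF\vz_0$, so first-order matching to an EG step with effective stepsize $n\eta$ is automatic.

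Next, I would decompose $\vr \coloneqq \vz_n - (\vz_0 - \eta n \mF(\vz_0 - \eta n \mF\vz_0))$ as the sum of three contributions: (a) the pure EG Taylor residual $\eta n \mF(\vz_0 - \eta n \mF\vz_0) - \eta n \mF\vz_0 + \eta^2 n^2 D\mF(\vz_0)\mF\vz_0$, bounded by $\tfrac{\eta^3 n^3 M}{2}\norm{\mF\vz_0}^2$ using Lemma~\ref{lem:hess-bound}; (b) the \emph{second-order asymmetry gap}
\[
\eta^2 n^2 D\mF(\vz_0)\mF\vz_0 - \eta^2 \sum_{j=0}^{n-1} D\mF_{\tau(j+1)}(\vz_0)\mF_{\tau(j+1)}\vz_0 - \eta^2\!\!\sum_{0\le i<j\le n-1}\!\! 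D\mF_{\tau(j+1)}(\vz_0)\mF_{\tau(i+1)}\vz_0,
\]
which is the fundamental source of the unavoidable $\gO(\eta^2)$ error (no flip-flop means Lemma~\ref{lem:ff-rearranging} is unavailable to symmetrize the double sum); and (c) the Taylor remainder $\veps_n$, which I would control exactly as in Proposition~\ref{prop:appx-general-r-bound} by plugging the within-epoch iterate bounds of Proposition~\ref{thm:rr-general-iterate-bound} into the three summands of~\eqref{eqn:appx-approx-error-m} and applying Lemma~\ref{lem:hess-bound} and $\norm{D\mF_j(\vz_0)}\le L$.

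For the deterministic bound I would attack (b) with $\norm{D\mF_{\tau(j+1)}(\vz_0)}\le L$ on every summand, the triangle inequality $\norm{\mF_{\tau(i+1)}\vz_0}\le\norm{\mF\vz_0}+\norm{\mF_{\tau(i+1)}\vz_0-\mF\vz_0}$, and Assumption~\ref{asmp:bounded-variance} together with Cauchy--Schwarz (bounding $\sum_i \norm{\mF_i\vz_0 - \mF\vz_0}$ by $\sqrt{n}\cdot\sqrt{n}(\rho\norm{\mF\vz_0}+\sigma)$), yielding contributions of the form $\eta^2 n^2 L\norm{\mF\vz_0} + \eta^2 n^2 L(\rho\norm{\mF\vz_0}+\sigma)$ to $C_\textsf{1R}$, $D_\textsf{1R}$, and $V_\textsf{1R}$. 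The $\gO(\eta^3 n^3)$ contributions from (a) and (c) are dominated under $\eta<\tfrac{1}{nL}$ and absorbed into the same constants. For the squared-norm expectation bound, I would square the three-way decomposition, apply Young's inequality, and take conditional expectation: the expected squared norm of (b) splits into a deterministic part scaling as $\eta^4 n^4\norm{\mF\vz_0}^2$, and a variance part in which Lemma~\ref{lem:mish-lemma} (with Lemmas~\ref{lem:sigma-concentration}~and~\ref{lem:sum-sigma-concentration} for the $\Sigma_j, \Psi_j$ factors hidden inside~$\veps_n$) provides the extra $n^{-1}$ factor that produces the $\eta^4 n^3 V_\textsf{2R}$ scaling.

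The main obstacle is obtaining sharp constants on term (b) while preserving the correct $n$-scaling. Unlike \segff{}, where flip-flop cancels the asymmetric portion of the double sum exactly (Lemma~\ref{lem:ff-rearranging}), here $\sum_{0\le i<j\le n-1} D\mF_{\tau(j+1)}\mF_{\tau(i+1)}$ contains $\binom{n}{2}$ cross terms that deterministically differ from the full symmetric sum $\sum_{i\neq j}D\mF_j\mF_i$ by an $\Omega(n^2)$ quantity in the worst case, and averaging over $\tau$ does not rescue the deterministic bound. Care must be taken so that the $n^2$ bound on the number of summands combines with $L\cdot(\norm{\mF\vz_0}+\rho\norm{\mF\vz_0}+\sigma)$ cleanly — producing exactly $n^2$ and not a higher power — and that the variance bound for the squared expectation inherits the correct $n^3$ (not $n^4$) scaling after Lemma~\ref{lem:mish-lemma} is applied; the remaining bookkeeping is routine and follows the template of Theorem~\ref{thm:rk-ff}.
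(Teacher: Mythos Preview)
Your proposal is essentially the same approach as the paper's proof: the same three-part decomposition into (a) the EG Taylor residual, (b) the second-order asymmetry that is new to \segrr{}, and (c) the remainder $\veps_n$, with (a)+(c) handled by repeating the \segffa{} machinery using Proposition~\ref{thm:rr-general-iterate-bound}, and (b) treated separately as the dominant $\gO(\eta^2)$ term.

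One concrete point where you are vague and the paper is explicit deserves mention. The paper first collapses your three-term expression (b) to the single upper-triangular sum $-\eta^2\sum_{0\le j<k\le n-1} D\mF_j(\vz_0)\mF_k\vz_0$ (since $n^2 D\mF(\vz_0)\mF\vz_0=\sum_{j,k}D\mF_j\mF_k$), and then rewrites it via the partial sums $\vg_{j+1}$:
\[
\sum_{0\le j<k\le n-1} D\mF_j(\vz_0)\mF_k\vz_0
= \sum_{j=0}^{n-1}(n-j-1)D\mF_j(\vz_0)\mF\vz_0 - \sum_{j=0}^{n-1} D\mF_j(\vz_0)\bigl(\vg_{j+1}-(j+1)\mF\vz_0\bigr).
\]
The first piece is permutation-independent in norm and gives the $\tfrac{n^2 L}{2}\norm{\mF\vz_0}$ contribution; the second piece has norm $\le L\sum_j\delta_{j+1}$, and \emph{this} is what allows Lemma~\ref{lem:mish-lemma} to apply directly and produce the $n^3$ (rather than $n^4$) scaling for the variance term in the squared bound. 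Your crude strategy of bounding each summand of (b) by $L\norm{\mF_{\tau(i+1)}\vz_0}$ works for the deterministic bound, but if you square that bound naively you will land at $\eta^4 n^4 \sigma^2$ instead of $\eta^4 n^3 \sigma^2$; the partial-sum rewriting is the missing step that makes Lemma~\ref{lem:mish-lemma} usable here. You clearly anticipate that some such refinement is needed (``care must be taken\dots''), so this is a matter of filling in the right algebraic identity rather than a genuine gap.
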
\begin{proof}
    As we have discussed in \Cref{sec:why-segffa}, we already know that aiming to achieve $\gO(\eta^3)$ error with only using random reshuffling is futile. Instead, we show that error of magnitude $\gO(\eta^2)$ is possible with the chosen stepsizes. 

 By \Cref{prop:unravelling-each-step} and \Cref{lem:ff-rearranging} we have 
    For any $i = 0, 1,\dots, N$, it holds that \begin{align*}%
        \vz_{n} &= \vz_0 - \eta \sum_{j=0}^{n-1} \mF_j \vz_0 + \eta^2 \sum_{j=0}^{n-1} D\mF_j(\vz_0) \mF_j \vz_0 +  \eta^2 \sum_{0\leq k < j \leq n-1}  D\mF_j(\vz_0) \mF_k \vz_0   + \veps_{n}  \\ 
        &=  \vz_0 - \eta n \mF \vz_0 + \eta^2 n^2 D\mF(\vz_0) \mF \vz_0 -  \eta^2 \sum_{0\leq j < k \leq n-1}  D\mF_j(\vz_0) \mF_k \vz_0   + \veps_{n} 
    \end{align*} where we denote \begin{equation} \label{eqn:rr-veps} \begin{aligned}
        \veps_{n} \coloneqq & - \eta \sum_{j=0}^{n-1} \Bigl(\mF_j\vw_j - \mF_j \vz_0 -  D\mF_j(\vz_0)(\vw_j - \vz_0)\Bigr)    \\
    &\ \ + \eta^2 \sum_{j=0}^{n-1}D\mF_j(\vz_0)(\mF_j \vz_j - \mF_j \vz_0) + \eta^2 \sum_{j=0}^{n-1} D\mF_j (\vz_0) \sum_{k=0}^{j-1}(\mF_k \vw_k - \mF_k \vz_0).
    \end{aligned} \end{equation} 
 
    Comparing $\vz_{n}$ to a point that would have been the result of a deterministic EG update with stepsize~$\eta n$ we get \begin{align*}
        \vz_{n} - \left( \vz_0 - \eta n \mF(\vz_0 - \eta n \mF \vz_0) \right) &= \eta n \mF(\vz_0 - \eta n \mF \vz_0) - \eta n \mF \vz_0 + \eta^2 n^2 D\mF(\vz_0) \mF \vz_0  + \veps_{n} \\
        &\phantom{=} \qquad - \eta^2 \sum_{0\leq j < k \leq n-1}  D\mF_j(\vz_0) \mF_k \vz_0 .
    \end{align*}
    Let us define \begin{equation}  \label{eqn:rr-error}
    \check{\vr} \coloneqq \eta n \mF(\vz_0 - \eta n \mF \vz_0) - \eta n \mF \vz_0 + \eta^2 n^2 D\mF(\vz_0) \mF \vz_0  + \veps_{n}.     
    \end{equation}
    Comparing the sums \eqref{eqn:appx:rk2}--\eqref{eqn:appx:rk4} to \eqref{eqn:rr-veps},  
    we can repeat the same reasoning used for \Cref{thm:rk1-ffa} and \Cref{thm:rk2-ffa}, 
    but with replacing the bounds given by \Cref{thm:general-iterate-bound} to those in \Cref{thm:rr-general-iterate-bound}, to conclude that  \begin{align*}
        \norm{\check{\vr}} &\leq  \eta^3 n^3 \check{C}_\textsf{1A} \norm{\mF \vz_0}  +  \eta^3 n^3 \check{D}_\textsf{1A} \norm{\mF \vz_0 }^2 +  \eta^3 n^3 \check{V}_\textsf{1A} \\ 
        \expt\left[\norm{\check{\vr}}^2\,\middle|\,\vz_0\right] &\leq  \eta^6 n^6 \check{C}_\textsf{2A} \norm{\mF \vz_0}^2 +  \eta^6 n^6 \check{D}_\textsf{2A} \norm{\mF \vz_0 }^4 +  \eta^6 n^5 \check{V}_\textsf{2A} 
    \end{align*} for some constants $\check{C}_\textsf{1A}$, $\check{D}_\textsf{1A}$, $\check{V}_\textsf{1A}$, $\check{C}_\textsf{2A}$, $\check{D}_\textsf{2A}$, and $\check{V}_\textsf{2A}$.  
    Meanwhile, we also have \begin{align*}
   \sum_{0\leq j < k \leq n-1}  D\mF_j(\vz_0) \mF_k \vz_0 &= \sum_{j=0}^{n-1} D\mF_j(\vz_0)  (n \mF \vz_0 - \vg_{j+1}) \\
   &= \sum_{j=0}^{n-1} (n - j-1) D\mF_j(\vz_0) \mF \vz_0 - \sum_{j=0}^{n-1} D\mF_j(\vz_0) (\vg_{j+1} - (j+1) \mF \vz_0)
    \end{align*} which leads to \begin{equation} \label{eqn:rr-cross-error} \begin{aligned}
        \norm{\sum_{0\leq j < k \leq n-1}  D\mF_j(\vz_0) \mF_k \vz_0} &\leq \sum_{j=0}^{n-1} (n - j-1) L \norm{\mF \vz_0} +  L \sum_{j=0}^{n-1} \delta_{j+1} \\ 
        &\leq \frac{n^2 L}{2} \norm{\mF \vz_0}  +  L \sum_{j=0}^{n-1} \delta_{j+1} . 
         \end{aligned} \end{equation} 
         Therefore, from  $\eta \leq \nicefrac{1}{nL}$ and \Cref{lem:deterministic-deviation-bound}, on one hand we obtain \[
    \norm{\vz_{n} - \left( \vz_0 - \eta n \mF(\vz_0 - \eta n \mF \vz_0) \right)} \leq \eta^2 n^2 C_\textsf{1R} \norm{\mF \vz_0}  +  \eta^2 n^2 D_\textsf{1R} \norm{\mF \vz_0 }^2 +  \eta^2 n^2 V_\textsf{1R} 
         \] for constants  \begin{equation} \label{eqn:def-cdv1r}  
            C_\textsf{1R} = \frac{L}{2} +  \rho L + \frac{\check{C}_\textsf{1A}}{L}, \quad D_\textsf{1R} = \frac{\check{D}_\textsf{1A}}{L}, \quad  V_\textsf{1R} = {\sigma L} + \frac{\check{V}_\textsf{1A}}{L} .
        \end{equation}
        On the other hand, applying Young's inequality on \eqref{eqn:rr-cross-error} we get \begin{align*}
            \norm{\sum_{0\leq j < k \leq n-1}  D\mF_j(\vz_0) \mF_k \vz_0}^2 &\leq {n^4 L^2 }\norm{\mF \vz_0}^2  + 2 L^2 \left(\sum_{j=0}^{n-1} \delta_{j+1} \right)^2 \\
            &\leq {n^4 L^2 }\norm{\mF \vz_0}^2  + 2 n L^2 \sum_{j=1}^n \delta_j^2. 
             \end{align*} 
             Taking the expectation conditioned on $\vz_0$ and applying \Cref{lem:mish-lemma},  we conclude that \[
        \expt\left[ \norm{\vz_{2n} - \left( \vz_0 - \eta n \mF(\vz_0 - \eta n \mF \vz_0) \right)} ^2\,\middle|\,\vz_0\right] \leq  \eta^4 n^4 C_\textsf{2R} \norm{\mF \vz_0}^2 +  \eta^4 n^4 D_\textsf{2R} \norm{\mF \vz_0 }^4 +  \eta^4 n^3 V_\textsf{2R}  
    \] holds for constants  \begin{equation} \label{eqn:def-cdv2r}  
        C_\textsf{2R} = 2L^2 + 4\rho^2 L^2 + \frac{2 \check{C}_\textsf{2A}}{L^2}, \quad D_\textsf{2R} = \frac{2 \check{D}_\textsf{2A}}{L^2}, \quad  V_\textsf{2R} = 4 \sigma^2 L^2 + \frac{2 \check{V}_\textsf{2A}}{L^2}. \qedhere
    \end{equation}
\end{proof}
 
\section{Convergence Bounds in the Strongly Monotone Setting} 
\label{sappx:sm-analysis}

In this section, we focus only on the iterates $\{\vz_0^k\}_{k \geq 0}$. So, we omit the subscript $0$ unless necessary, and simply write $\vz^k$ instead of~$\vz_0^k$. 

\subsection{Unified Analysis of the Upper Bounds for Shuffling-Based SEG Methods}

\begingroup 

When $\mF$ is $\mu$-strongly monotone with $\mu > 0$, all of \segrr, \segff, and \segffa{} do not diverge. 
In fact, it is possible to establish the following unified analysis of the methods. 
\begin{thm}[\Cref{thm:scsc-convergence-result}, simplified]\label{thm:rr-error-bound}
    Suppose that $\mF$ is $\mu$-strongly monotone with $\mu>0$, \Cref{asmp:smoothness} holds, 
    and an optimization method whose within-epoch error satisfies \eqref{eqn:rk1-ff} and \eqref{eqn:rk2-ff} for some constant $a > 0$ is run for $K$ epochs. 
    Then, for a sufficiently small constant $\omega$ that does not depend on $K$, we achieve the bound \begin{equation*} 
        \expt \norm{\vz^K - \vz^*}^2 \leq \exp\left( - \frac{1}{2} \mu \omega n K \right)  \norm{\vz^{0} - \vz^*}^2 + \tilde{\gO}\left(\frac{1}{ n  K^{2a-2}} \right) . 
 \end{equation*}
\end{thm}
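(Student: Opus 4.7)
The plan is to treat each epoch as a perturbed EG step with effective stepsize $\eta_k n$ and iterate a geometric contraction. By \Cref{thm:meta-error-bound}, I would first decompose
\[
\vz^{k+1} = \tilde\vz^{k+1} + \vr^k, \qquad \tilde\vz^{k+1} \coloneqq \vz^k - \eta_k n\,\mF(\vz^k - \eta_k n\,\mF\vz^k),
\]
where $\vr^k$ obeys the deterministic bound \eqref{eqn:rk1-ff} and the variance bound \eqref{eqn:rk2-ff}. For this decomposition to be useful, I would first prove that the clean EG map contracts towards $\vz^*$: under $\mu$-strong monotonicity and $L$-smoothness of $\mF$, for $\eta_k n$ smaller than a universal constant over $L$, a polarization argument combining \Cref{lem:inprod-lower-bound} with the $L$-Lipschitz bound (a strongly-monotone refinement of \Cref{lem:nonexpansive}) should give
\[
\norm{\tilde\vz^{k+1} - \vz^*}^2 \le (1 - c_1\mu\eta_k n)\norm{\vz^k - \vz^*}^2
\]
for an absolute $c_1 > 0$.

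Next, I would apply Young's inequality with $\gamma = c_1\mu\eta_k n/2$ so that $(1+\gamma)(1 - c_1\mu\eta_k n) \le 1 - \tfrac{c_1}{2}\mu\eta_k n$, take conditional expectation, and plug in \eqref{eqn:rk2-ff}. Converting gradient norms to distances via $\norm{\mF\vz^k} \le L\norm{\vz^k - \vz^*}$ (which holds since $\mF\vz^* = \zero$ and $\mF$ is $L$-Lipschitz), the $C_{\textsf{2}}\norm{\mF\vz^k}^2$ contribution becomes $\lesssim \eta_k^{2a-1} n^{2a-1} L^2/\mu \cdot \norm{\vz^k - \vz^*}^2$ and gets absorbed into the contraction once $\eta_k$ is small enough, while the $V_{\textsf{2}}$ term contributes a clean additive $\mathcal{O}(\eta_k^{2a} n^{2a-1})$ noise floor.

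The hard part will be the quartic term $D_{\textsf{2}}\norm{\mF\vz^k}^4 \le D_{\textsf{2}} L^4\norm{\vz^k - \vz^*}^4$, whose coefficient cannot be uniformly killed by shrinking the stepsize. I plan to handle it by a bootstrap induction: show inductively that $\expt\norm{\vz^k - \vz^*}^2$ stays uniformly bounded in $k$ by a $K$-independent constant $R^2$ depending only on $\norm{\vz^0 - \vz^*}^2$ and the problem parameters, so that $\norm{\vz^k - \vz^*}^4 \le R^2\norm{\vz^k - \vz^*}^2$ after taking expectations, and this contribution can then also be absorbed into the contraction. This bootstrap is exactly where the ``sufficiently small $\omega$'' hypothesis is used essentially, and is the main technical obstacle of the proof; the deterministic bound \eqref{eqn:rk1-ff} may additionally be needed to control per-epoch fluctuations inside the induction.

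After the absorption, I would arrive at the clean one-step recursion
\[
\expt\norm{\vz^{k+1} - \vz^*}^2 \le (1 - c_2\mu\eta_k n)\,\expt\norm{\vz^k - \vz^*}^2 + C\,\eta_k^{2a} n^{2a-1}
\]
for an absolute $c_2 > 0$ and a problem-dependent $C$. Choosing $\eta_k \equiv \eta$ constant and unrolling (or invoking \Cref{lem:recurrence}) then yields
\[
\expt\norm{\vz^K - \vz^*}^2 \le (1 - c_2\mu\eta n)^K\norm{\vz^0 - \vz^*}^2 + \tfrac{C\eta^{2a-1} n^{2a-2}}{c_2\mu},
\]
and setting $\omega \coloneqq c_2\eta$ together with an appropriate $K$-dependent tuning of $\eta$ reproduces the stated exponential factor $\exp(-\tfrac12 \mu\omega nK)$ and balances the noise floor to the claimed $\tilde{\mathcal{O}}(1/(nK^{2a-2}))$.
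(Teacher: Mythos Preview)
Your overall architecture—contract via the clean EG map, absorb the $C_{\textsf 2}$ and $D_{\textsf 2}$ contributions into the contraction, unroll, then tune $\eta$—matches the paper's proof of \Cref{thm:scsc-convergence-result} closely. The one genuine gap is in your bootstrap step for the quartic term. You propose to show inductively that $\expt\norm{\vz^k-\vz^*}^2\le R^2$ and then use ``$\norm{\vz^k-\vz^*}^4\le R^2\norm{\vz^k-\vz^*}^2$ after taking expectations.'' That implication fails: a bound on $\expt[X^2]$ does not control $\expt[X^4]$, so an expected bound on $\norm{\vz^k-\vz^*}^2$ cannot convert $D_{\textsf 2}\norm{\mF\vz^k}^4$ into something linear in $\norm{\vz^k-\vz^*}^2$ inside the recursion.

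The paper fixes exactly this point by running the induction on a \emph{deterministic} quantity. Using the pathwise bound \eqref{eqn:rk1-ff} together with the norm-decrease inequality for the clean EG operator (\Cref{lem:norm-decrease}), one gets $\norm{\mF\vz^{k+1}}\le(1-\tfrac{\mu\eta n}{5})\norm{\mF\vz^k}+L\norm{\vr^k}$ almost surely, and this yields (\Cref{lem:appx-bounded-iterates}) a deterministic uniform bound $\norm{\mF\vz^k}\le\norm{\mF\vz^0}+V_{\textsf 1}/(\mu L)$ for all $k$, provided $\eta$ is small enough. With this pathwise bound in hand, $D_{\textsf 2}\norm{\mF\vz^k}^4\le\Phi\norm{\mF\vz^k}^2$ holds surely, and the rest of your plan goes through exactly as you wrote. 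So your instinct that ``the deterministic bound \eqref{eqn:rk1-ff} may additionally be needed'' is right—it is not optional, and the induction has to be on $\norm{\mF\vz^k}$ itself (or equivalently on $\norm{\vz^k-\vz^*}$ pathwise), not on its expectation.
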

The goal of this section is to prove this theorem, whose precise statement is in \Cref{thm:scsc-convergence-result}. %
As the polynomial decay will dominate the exponential decay for large enough $K$, the bound we get is essentially $\tilde{\gO}\left(\nicefrac{1}{ n  K^{2a-2}} \right)$. 
Recall that for \segff{} and \segrr{} we have $a = 2$ (by Theorems~\ref{thm:rk-ff} and \ref{thm:rk-rr}) which leads to an upper bound of $\tilde{\gO}(\nicefrac{1}{nK^2})$, whereas for \segffa{} we have $a = 3$ (by Theorems~\ref{thm:rk1-ffa} and \ref{thm:rk2-ffa}) which gives an upper bound of $\tilde{\gO}(\nicefrac{1}{nK^4})$.  

\endgroup

As also mentioned in the beginning of \Cref{appx:within-epoch}, for any of \segrr, \segff, and \segffa, we can decompose the update across the epoch into a deterministic EG update plus a noise. In this section, letting $\vw_\dagger^k \coloneqq \vz^k - \eta_k n \mF \vz^k$, we define $\widehat{\mF}^k$ by the relation $\eta_k n \widehat{\mF}^k = \eta_k n \mF \vw_\dagger^k + \vr^k$ so that \begin{equation}\label{eqn:appx-eg-with-error}  
 \vz^{k+1}  = \vz^k - \eta_k n \widehat{\mF}^k.
\end{equation}

\begin{prop}\label{prop:descent-lemma} Let $\mF$ be $\mu$-strongly monotone with $\mu > 0$. Then, for any $\eta_k > 0$, it holds that \begin{equation} \label{eqn:progress-of-strong-eg}\begin{aligned}
    \MoveEqLeft \eta_k^2 n^2 \left( 1-\frac{3}{2}\mu \eta_k n - \left(1+\frac{1}{2}\mu \eta_k n\right)\eta_k^2 n^2 L^2 \right) \norm{\mF\vz^k}^2  \\
    &\leq \left(1 - \frac{1}{2} \mu \eta_k n \right)\norm{\vz^{k} - \vz^*}^2 - \norm{\vz^{k+1} - \vz^*}^2 +  \frac{2+\mu \eta_k n}{\mu \eta_k n} \norm{\vr^k}^2.
\end{aligned}\end{equation}
\end{prop}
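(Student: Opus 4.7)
My approach is to view the SEG epoch as a perturbed EG step and handle the deterministic and stochastic parts separately. From \eqref{eqn:appx-eg-with-error}, we have $\vz^{k+1} = \vp^k - \vr^k$ where $\vp^k \coloneqq \vz^k - \eta_k n\,\mF \vw_\dagger^k$ is exactly one noise-free EG iterate with effective step $\eta_k n$ applied to $\vz^k$. The plan is to first derive a sharp strongly-monotone descent estimate for $\|\vp^k - \vz^*\|^2$, and then absorb the noise via Young's inequality with a parameter calibrated precisely so that the $\|\vr^k\|^2$ coefficient matches the claimed $\frac{2+\mu\eta_k n}{\mu\eta_k n}$.

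For the first step I would expand $\|\vp^k - \vz^*\|^2 = \|\vz^k - \vz^*\|^2 - 2\eta_k n \langle \mF \vw_\dagger^k, \vz^k - \vz^* \rangle + \eta_k^2 n^2 \|\mF \vw_\dagger^k\|^2$ and split the cross term as $\langle \mF \vw_\dagger^k, \vz^k - \vz^* \rangle = \langle \mF \vw_\dagger^k, \vw_\dagger^k - \vz^* \rangle + \eta_k n \langle \mF \vw_\dagger^k, \mF \vz^k \rangle$, using $\vz^k - \vw_\dagger^k = \eta_k n \mF\vz^k$. The first piece is lower-bounded by $\frac{\mu}{2}\|\vz^k - \vz^*\|^2 - \eta_k^2 n^2 \mu \|\mF\vz^k\|^2$ via \Cref{lem:inprod-lower-bound}. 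The second is handled by the polarization identity (\Cref{lem:polarization}) combined with the $L$-Lipschitzness of $\mF$: one gets $2\langle \mF \vw_\dagger^k, \mF \vz^k \rangle \geq \|\mF \vw_\dagger^k\|^2 + (1 - \eta_k^2 n^2 L^2)\|\mF \vz^k\|^2$, whose crucial feature is that the emerging $-\eta_k^2 n^2 \|\mF \vw_\dagger^k\|^2$ cancels the $+\eta_k^2 n^2 \|\mF\vw_\dagger^k\|^2$ from the original expansion. Collecting terms yields the clean EG descent
\[
\|\vp^k - \vz^*\|^2 \leq (1 - \mu \eta_k n)\|\vz^k - \vz^*\|^2 - \eta_k^2 n^2 \bigl[1 - 2\mu\eta_k n - \eta_k^2 n^2 L^2\bigr]\|\mF \vz^k\|^2.
\]

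For the second step I would write $\|\vz^{k+1} - \vz^*\|^2 = \|\vp^k - \vz^* - \vr^k\|^2$ and apply Young's inequality (\Cref{lem:young-ineq}) with $\gamma = \mu\eta_k n/2$, which is the \emph{unique} choice making $1 + 1/\gamma = \frac{2+\mu\eta_k n}{\mu\eta_k n}$. Substituting the EG descent estimate and using the elementary bound $(1+\mu\eta_k n/2)(1-\mu\eta_k n) \leq 1 - \frac{1}{2}\mu\eta_k n$ matches the distance coefficient in the claim; moving $\|\vz^{k+1}-\vz^*\|^2$ across yields a coefficient of $\eta_k^2 n^2 \|\mF\vz^k\|^2$ equal to $(1+\frac{1}{2}\mu\eta_k n)(1 - 2\mu\eta_k n - \eta_k^2 n^2 L^2)$, which expands precisely to $1 - \frac{3}{2}\mu\eta_k n - (1+\frac{1}{2}\mu\eta_k n)\eta_k^2 n^2 L^2$ modulo a harmless lower-order $\mu^2 \eta_k^2 n^2$ term (absorbed using $\mu \leq L$) or discarded as it only makes the LHS smaller.

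The main obstacle is the simultaneous calibration of three numerical patterns---the distance contraction $1 - \frac{1}{2}\mu\eta_k n$, the gradient-norm coefficient $1 - \frac{3}{2}\mu\eta_k n - (1+\frac{1}{2}\mu\eta_k n)\eta_k^2 n^2 L^2$, and the noise amplification $\frac{2+\mu\eta_k n}{\mu\eta_k n}$---all of which must emerge from a single Young's inequality. The tension arises because the raw EG descent naturally produces the factor $(1-\mu\eta_k n)$ on $\|\vz^k-\vz^*\|^2$, whereas the claim requires $(1-\frac{1}{2}\mu\eta_k n)$; this forces $\gamma=\mu\eta_k n/2$, which then pins down both the noise coefficient and (after an algebraic check) the gradient-norm coefficient with the stated constants.
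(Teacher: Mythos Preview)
Your overall strategy---derive the clean strongly-monotone EG descent for $\vp^k \coloneqq \vz^k - \eta_k n\,\mF\vw_\dagger^k$ and then absorb the noise $\vr^k$ via a Young-type inequality with $\gamma=\tfrac{\mu\eta_k n}{2}$---is the same skeleton as the paper's, and your first step is correct: one indeed obtains
\[
\|\vp^k-\vz^*\|^2 \le (1-\mu\eta_k n)\|\vz^k-\vz^*\|^2 - \eta_k^2 n^2\bigl(1-2\mu\eta_k n - \eta_k^2 n^2 L^2\bigr)\|\mF\vz^k\|^2.
\]
The gap is in the second step. Applying Young's inequality to the whole expression $\|\vz^{k+1}-\vz^*\|^2=\|(\vp^k-\vz^*)-\vr^k\|^2$ forces the \emph{entire} bound on $\|\vp^k-\vz^*\|^2$ to be multiplied by $(1+\gamma)$. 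The resulting gradient-norm coefficient is $(1+\tfrac{\mu\eta_k n}{2})(1-2\mu\eta_k n-\eta_k^2 n^2 L^2)$, which equals the claimed coefficient \emph{minus} $\mu^2\eta_k^2 n^2$. That missing $-\mu^2\eta_k^2 n^2$ does \emph{not} help you: it makes your left-hand side strictly smaller than the one in the statement, so the inequality you prove is strictly weaker and does not imply the stated one. Your remark that the term can be ``discarded as it only makes the LHS smaller'' has the direction of implication backwards, and ``absorbed using $\mu\le L$'' would at best require $\eta_k n L \le 1/\sqrt{2}$ (via $\|\mF\vz^k\|\le L\|\vz^k-\vz^*\|$), contradicting the ``for any $\eta_k>0$'' in the proposition.

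The paper avoids this loss by a small but decisive change in where the relaxation is applied. After reaching
\[
\|\vz^{k+1}-\vz^*\|^2 \le (1-\mu\eta_k n)\|\vz^k-\vz^*\|^2 - \eta_k^2 n^2(1-2\mu\eta_k n-\eta_k^2 n^2 L^2)\|\mF\vz^k\|^2 - 2\langle \vr^k, \vp^k-\vz^*\rangle + \|\vr^k\|^2,
\]
it applies weighted AM--GM only to the cross term $-2\langle \vr^k,\vp^k-\vz^*\rangle$, and then bounds the \emph{new} copy of $\|\vp^k-\vz^*\|^2$ using the (merely) monotone nonexpansiveness estimate of \Cref{lem:nonexpansive}, namely $\|\vp^k-\vz^*\|^2 \le \|\vz^k-\vz^*\|^2 - \eta_k^2 n^2(1-\eta_k^2 n^2 L^2)\|\mF\vz^k\|^2$, rather than the strongly-monotone descent. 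Because this second bound has no $-2\mu\eta_k n$ inside, the $\gamma_k$-correction contributes $+\gamma_k$ to the gradient coefficient without dragging in the spurious $-2\gamma_k\mu\eta_k n$ term, and one lands exactly on $1+\gamma_k - 2\mu\eta_k n - (1+\gamma_k)\eta_k^2 n^2 L^2$ with $\gamma_k=\tfrac{\mu\eta_k n}{2}$. In short: your single application of Young's over-counts the strong-monotonicity gain; the fix is to use two different bounds on the two occurrences of $\|\vp^k-\vz^*\|^2$.
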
 
\begin{proof}
From \eqref{eqn:appx-eg-with-error}, using \Cref{lem:inprod-lower-bound} we get \begin{align*}
    \norm{\vz^{k+1} - \vz^*}^2 &= \norm{\vz^{k} - \vz^*}^2 - 2 \inprod{\eta_k n\widehat{\mF}^k}{\vz^{k} - \vz^*} + \norm{\eta_k n \widehat{\mF}^k}^2 \\
    &= \norm{\vz^{k} - \vz^*}^2 - 2 \eta_k n \inprod{\mF\vw_\dagger^k}{\vw_\dagger^{k} - \vz^*} - 2 \eta_k^2 n^2 \inprod{\mF\vw_\dagger^k}{\mF\vz^k} \\ 
    &\phantom{= \norm{\vz^{k} - \vz^*}^2 } \ - 2 \inprod{\vr^k}{\vz^{k} - \vz^*} + \norm{\eta_k n \widehat{\mF}^k}^2 \\ 
    &\leq \norm{\vz^{k} - \vz^*}^2 - \mu \eta_k n \norm{\vz^k - \vz^*}^2 - 2 \eta_k^2 n^2 \inprod{\mF\vw_\dagger^k}{\mF\vz^k} \\ 
    &\phantom{= \norm{\vz^{k} - \vz^*}^2 } \ - 2 \inprod{\vr^k}{\vz^{k} - \vz^*} + \norm{\eta_k n \widehat{\mF}^k}^2 + 2  \mu\eta_k^3 n^3 \norm{\mF\vz^k}^2 .
\end{align*} 
Meanwhile, using the polarization identity (\Cref{lem:polarization}) and the $L$-smoothness of $\mF$ we get \begin{align*}
    - 2 \inprod{{\mF \vw_\dagger^k}}{ \mF\vz^k } &= \norm{\mF \vw_\dagger^k - \mF\vz^k}^2 - \norm{{\mF \vw_\dagger^k}}^2 - \norm{ \mF\vz^k }^2 \\
    &\leq L^2 \norm{\vw_\dagger^k - \vz^k}^2 - \norm{{\mF \vw_\dagger^k}}^2 - \norm{ \mF\vz^k }^2 \\ 
    &\leq -(1-\eta_k^2 n^2 L^2)\norm{ \mF\vz^k }^2 - \norm{{\mF \vw_\dagger^k}}^2.  
    \end{align*} Combining the two inequalities and using the definition of $\widehat{\mF}$ we obtain \begin{align*}
        \norm{\vz^{k+1} - \vz^*}^2  &\leq (1-\mu \eta_k n)\norm{\vz^{k} - \vz^*}^2 - \eta_k^2 n^2 (1-\eta_k^2 n^2 L^2)\norm{ \mF\vz^k }^2 - \eta_k^2 n^2 \norm{{\mF \vw_\dagger^k}}^2 \\
        &\phantom{\leq (1-\mu \eta_k n) \norm{\vz^{k} - \vz^*}^2}\ - 2 \inprod{\vr^k}{\vz^{k} - \vz^*} + \norm{ \eta_k n \mF \vw_\dagger^k + \vr^k}^2 + 2 \mu\eta_k^3 n^3 \norm{\mF\vz^k}^2 \\
        &\leq (1-\mu \eta_k n)\norm{\vz^{k} - \vz^*}^2 - \eta_k^2 n^2 (1-2\mu \eta_k n - \eta_k^2 n^2 L^2)\norm{ \mF\vz^k }^2 \\
        &\phantom{\leq (1-\mu \eta_k n)  \norm{\vz^{k} - \vz^*}^2}\ - 2 \inprod{\vr^k}{\vz^{k} - \vz^*} + 2 \inprod{\vr^k}{\eta_k n \mF \vw_\dagger^k}  +  \norm{\vr^k}^2 \\
        &\leq (1-\mu \eta_k n)\norm{\vz^{k} - \vz^*}^2 - \eta_k^2 n^2 (1-2\mu \eta_k n - \eta_k^2 n^2 L^2)\norm{ \mF\vz^k }^2 \\
        &\phantom{\leq (1-\mu \eta_k n)  \norm{\vz^{k} - \vz^*}^2}\ - 2 \inprod{\vr^k}{\vz^{k} - \eta_k n \mF \vw_\dagger^k - \vz^*} + \norm{\vr^k}^2 .
    \end{align*} Let us consider the inner product term in the last line above. By \Cref{lem:w-amgm-ineq} and the nonexpansiveness of the EG update (\Cref{lem:nonexpansive}), for any $\gamma_k > 0$ we have \begin{align*}
     - 2 \inprod{\vr^k}{\vz^{k} - \eta_k n \mF \vw_\dagger^k -  \vz^*} &\leq \frac{1}{\gamma_k} \norm{\vr^k}^2 + \gamma_k \norm{\vz^{k} - \eta_k n \mF \vw_\dagger^k -  \vz^*}^2 \\ 
     &\leq \frac{1}{\gamma_k} \norm{\vr^k}^2 + \gamma_k \norm{\vz^{k}- \vz^*}^2 - \gamma_k \eta_k^2 n^2 (1 - \eta_k^2 n^2  L^2) \norm{\mF\vz^k}^2 . 
    \end{align*} Plugging this back we get \begin{equation} \label{eqn:progress-of-eg}\begin{aligned}
        \MoveEqLeft \eta_k^2 n^2 (1+\gamma_k -2\mu \eta_k n - (1+\gamma_k)\eta_k^2 n^2 L^2) \norm{\mF\vz^k}^2  \\
        &\leq (1 + \gamma_k - \mu \eta_k n )\norm{\vz^{k} - \vz^*}^2 - \norm{\vz^{k+1} - \vz^*}^2 + \left( 1+\frac{1}{\gamma_k} \right) \norm{\vr^k}^2.
    \end{aligned}\end{equation} Choosing $\gamma_k = \frac{\mu \eta_k n}{2}$ completes the proof. 
\end{proof} 

\begin{prop}\label{prop:sufficient-norm-decrease}
    Let $\mF$ be a $\mu$-strongly monotone and $L$-Lipschitz operator. Then, whenever \hbox{$\eta_k \leq \frac{1}{nL\sqrt{2}}$}, it holds that \[
        \norm{\mF\vz^{k+1}} \leq  \left( 1-\frac{\mu n  \eta_k}{5} \right)\norm{\mF\vz^k} + L \norm{\vr^{k}}.
    \]
 \end{prop}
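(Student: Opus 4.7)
The plan is to exploit the decomposition already introduced in \eqref{eqn:appx-eg-with-error}, namely that a single epoch of the shuffling-based SEG method produces an iterate that equals the deterministic EG update applied to $\vz^k$ (with effective stepsize $\eta_k n$) plus the error vector $\vr^k$. Writing $\vz^{\mathrm{EG}} := \vz^k - \eta_k n\, \mF(\vz^k - \eta_k n \mF \vz^k)$, we have $\vz^{k+1} = \vz^{\mathrm{EG}} + \vr^k$, so by the triangle inequality and the $L$-Lipschitzness of $\mF$,
\[
\norm{\mF \vz^{k+1}} \leq \norm{\mF \vz^{\mathrm{EG}}} + \norm{\mF \vz^{k+1} - \mF \vz^{\mathrm{EG}}} \leq \norm{\mF \vz^{\mathrm{EG}}} + L \norm{\vr^k}.
\]

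Next, I would invoke \Cref{lem:norm-decrease} applied with effective stepsize $\eta_k n$. The hypothesis $\eta_k \leq \tfrac{1}{nL\sqrt{2}}$ of the proposition is precisely what is needed for $\eta_k n < \tfrac{1}{L\sqrt{2}}$ in that lemma, so we obtain
\[
\norm{\mF \vz^{\mathrm{EG}}}^2 \leq \left(1 - \frac{2 \eta_k n \mu}{5}\right) \norm{\mF \vz^k}^2.
\]
Taking square roots and using the elementary estimate $\sqrt{1-x} \leq 1 - x/2$ (valid for $x \in [0,1]$, applicable here because $2\eta_k n\mu/5 \leq 2/(5L\sqrt{2}) \cdot \mu \leq 1$ as $\mu \leq L$), this becomes
\[
\norm{\mF \vz^{\mathrm{EG}}} \leq \left(1 - \frac{\eta_k n \mu}{5}\right) \norm{\mF \vz^k}.
\]
Combining this with the triangle-inequality bound above yields exactly the claim.

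There is no real obstacle here: the deterministic part is fully handled by the previously established contraction of the EG operator under strong monotonicity (\Cref{lem:norm-decrease}), and the noise part costs only a factor of $L$ via Lipschitzness. The only mildly delicate point is ensuring that the stepsize condition of \Cref{lem:norm-decrease} is met after the $n$-fold rescaling, which is where the assumption $\eta_k \leq \tfrac{1}{nL\sqrt{2}}$ comes in verbatim, and checking that $2\eta_k n\mu/5 \leq 1$ so the $\sqrt{1-x} \leq 1 - x/2$ step is legitimate.
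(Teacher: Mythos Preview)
Your proposal is correct and follows essentially the same approach as the paper: decompose $\vz^{k+1}$ into the deterministic EG update plus $\vr^k$, apply \Cref{lem:norm-decrease} with effective stepsize $\eta_k n$, and pass from the squared-norm contraction to the norm contraction via the elementary inequality $\sqrt{1-x}\le 1-x/2$ (the paper phrases this equivalently as $1-2x\le(1-x)^2$).
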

 \begin{proof}
    Let $\vz_\dagger^{k+1} \coloneqq \vz^k - \eta_k n\mF(\vz^k - \eta_k n \mF \vz^k)$, so that we have $\norm{\vz^{k+1} - \vz_\dagger^{k+1}} = \norm{\vr^k}$. Then, the $L$-smoothness of $\mF$ and \Cref{lem:norm-decrease} implies \begin{align*}
        \norm{\mF\vz^{k+1}} &\leq \norm{\mF\vz^{k+1} - \mF\vz_\dagger^{k+1}} + \norm{\mF\vz_\dagger^{k+1}} \\ 
        &\leq L \norm{\vz^{k+1} - \vz_\dagger^{k+1}} + \norm{\mF\vz_\dagger^{k+1}} \\ 
        &\leq L \norm{\vr^{k}} + \left( 1-\frac{2 \mu \eta_k n}{5} \right)^{1/2}\norm{\mF\vz^k} \\ 
        &\leq L \norm{\vr^{k}} + \left( 1-\frac{\mu \eta_k n}{5} \right)\norm{\mF\vz^k}
    \end{align*} where in the last line we apply a simple inequality $1-2x \leq (1-x)^2$ which holds for all $x \in \mathbb{R}$. 
 \end{proof} 

\begingroup 

\begin{lem} \label{lem:appx-bounded-iterates}
    Suppose that \eqref{eqn:rk1-ff} holds. %
    Say we use a constant stepsize $\eta_k = \eta$, where $\eta$ satisfies $\eta \leq \frac{1}{nL\sqrt{2}}$ %
    and \begin{equation} \label{eqn:stepsize-for-contraction}
    \eta^{a-1} n^{a-1} \leq \frac{1}{10}\min\left\{ \frac{1}{L^2}, \frac{\mu}{L (C_1 + D_1 (\norm{\mF \vz_0} + \nicefrac{V_1}{\mu L}))} \right\}. 
     \end{equation} 
    Then for any $k = 0, 1, \dots$, the following two inequalities both hold: \begin{align}
    \norm{\mF \vz^{k+1}} &\leq \left( 1 - \frac{\mu \eta n}{10}\right) \norm{\mF \vz^{k}} + \eta^a n^a L V_1, \label{eqn:induct-hyp-next}  \\
        \norm{\mF\vz^{k}} &\leq \norm{\mF \vz^0} +  \frac{V_\textsf{1}}{\mu L} \label{eqn:induct-hyp-loose}. 
    \end{align}
\end{lem}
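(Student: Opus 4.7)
The plan is to prove both inequalities \eqref{eqn:induct-hyp-next} and \eqref{eqn:induct-hyp-loose} simultaneously by strong induction on $k$. The base case $k=0$ for \eqref{eqn:induct-hyp-loose} is immediate since the right-hand side includes $\norm{\mF\vz^0}$. For the inductive step, suppose \eqref{eqn:induct-hyp-loose} has been established for all indices $0, 1, \ldots, k$; we will first derive \eqref{eqn:induct-hyp-next} at epoch $k$, and then use it to derive \eqref{eqn:induct-hyp-loose} at $k+1$.

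To establish \eqref{eqn:induct-hyp-next} at epoch $k$, I would start from \Cref{prop:sufficient-norm-decrease}, which guarantees
\[
\norm{\mF \vz^{k+1}} \leq \Bigl(1 - \tfrac{\mu n \eta}{5}\Bigr)\norm{\mF \vz^k} + L\norm{\vr^k},
\]
valid since $\eta \leq \tfrac{1}{nL\sqrt 2}$. Then I apply the within-epoch error bound \eqref{eqn:rk1-ff} to estimate $L\norm{\vr^k}$, and use the inductive hypothesis \eqref{eqn:induct-hyp-loose} at $k$ to handle the quadratic term by factoring
\[
D_\textsf{1}\norm{\mF\vz^k}^2 \leq D_\textsf{1}\bigl(\norm{\mF\vz^0} + \tfrac{V_\textsf{1}}{\mu L}\bigr)\norm{\mF\vz^k}.
\]
Substituting this in yields
\[
L\norm{\vr^k} \leq \eta^a n^a L\Bigl(C_\textsf{1} + D_\textsf{1}\bigl(\norm{\mF\vz^0}+\tfrac{V_\textsf{1}}{\mu L}\bigr)\Bigr)\norm{\mF\vz^k} + \eta^a n^a L V_\textsf{1}.
\]
The stepsize condition \eqref{eqn:stepsize-for-contraction} was exactly designed so that the coefficient $\eta^a n^a L(C_\textsf{1} + D_\textsf{1}(\cdots))$ is at most $\tfrac{\mu n \eta}{10}$. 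Plugging back into the bound from \Cref{prop:sufficient-norm-decrease} and collecting like terms produces the desired contraction
\[
\norm{\mF\vz^{k+1}} \leq \Bigl(1 - \tfrac{\mu n \eta}{5} + \tfrac{\mu n \eta}{10}\Bigr)\norm{\mF\vz^k} + \eta^a n^a L V_\textsf{1} = \Bigl(1 - \tfrac{\mu n \eta}{10}\Bigr)\norm{\mF\vz^k} + \eta^a n^a L V_\textsf{1}.
\]

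Finally, to obtain \eqref{eqn:induct-hyp-loose} at $k+1$, I iterate the contraction inequality \eqref{eqn:induct-hyp-next} backwards through all previous epochs and sum the geometric series:
\[
\norm{\mF \vz^{k+1}} \leq \Bigl(1-\tfrac{\mu n \eta}{10}\Bigr)^{k+1}\norm{\mF\vz^0} + \eta^a n^a L V_\textsf{1} \sum_{j=0}^{k}\Bigl(1-\tfrac{\mu n \eta}{10}\Bigr)^{j} \leq \norm{\mF\vz^0} + \tfrac{10\,\eta^{a-1} n^{a-1} L V_\textsf{1}}{\mu}.
\]
The other half of the stepsize requirement \eqref{eqn:stepsize-for-contraction}, namely $\eta^{a-1} n^{a-1} \leq \tfrac{1}{10 L^2}$, guarantees that the residual term is at most $\tfrac{V_\textsf{1}}{\mu L}$, closing the induction. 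The main subtlety here — and essentially the only nonroutine part — is recognizing that the two terms in the minimum defining \eqref{eqn:stepsize-for-contraction} play distinct roles: one controls the contraction in \eqref{eqn:induct-hyp-next}, while the other controls the asymptotic size of the iterates in \eqref{eqn:induct-hyp-loose}, and the induction has to be staged so that the hypothesis at step $k$ suffices to verify the conclusion at step $k+1$ without circularity.
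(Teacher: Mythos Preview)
Your proposal is correct and follows essentially the same approach as the paper's proof: strong induction on \eqref{eqn:induct-hyp-loose}, using \Cref{prop:sufficient-norm-decrease} together with \eqref{eqn:rk1-ff} and the inductive hypothesis to get the contraction \eqref{eqn:induct-hyp-next}, then unrolling the recurrence and summing the geometric series to recover \eqref{eqn:induct-hyp-loose} at $k+1$. Your observation that the two terms in the minimum of \eqref{eqn:stepsize-for-contraction} serve distinct roles---one for the contraction coefficient and one for the asymptotic residual---matches exactly how the paper uses them.
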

 \begin{proof}
 For the case $k = 0$, the inequality \eqref{eqn:induct-hyp-loose} clearly holds. 
 For the remaining cases, we use strong induction on $k$. 
 More precisely, assuming that \eqref{eqn:induct-hyp-loose} holds for all $0, 1, \dots, k$, we will show that \eqref{eqn:induct-hyp-next} holds, and from that the inequality \begin{align} \label{eqn:induction-goal}
     \norm{\mF\vz^{k+1}} &\leq \norm{\mF \vz^0} + \frac{V_\textsf{1}}{\mu L} 
 \end{align} follows. 
    To this end, let us begin from noting that \Cref{prop:sufficient-norm-decrease}, \eqref{eqn:rk1-ff}, and the induction hypothesis \eqref{eqn:induct-hyp-loose} implies \begin{equation} \label{eqn:pseudo-contraction} \begin{aligned}
        \norm{\mF\vz^{k+1}} &\leq \left( 1-\frac{\mu \eta n}{5} \right)\norm{\mF\vz^k} + \eta^a n^a L \left( C_\textsf{1} \norm{\mF \vz^k} + D_\textsf{1} \norm{ \mF \vz^k}^2 + V_\textsf{1} \right) \\
        &\leq \left( 1-\frac{\mu \eta n }{5} + \eta^a n^a L C_\textsf{1} +  \eta^a n^a L D_\textsf{1} \left( \norm{\mF \vz^0} +  \frac{V_\textsf{1}}{\mu L}  \right)\right) \norm{\mF\vz^k} + \eta^a n^a LV_\textsf{1}.
    \end{aligned} \end{equation} Here, from the choice of the stepsize \eqref{eqn:stepsize-for-contraction}, we have \[
        \eta^a n^a L C_\textsf{1} +  \eta^a n^a L D_\textsf{1} \left( \norm{\mF \vz^0} + \frac{V_\textsf{1}}{\mu L}  \right) \leq \frac{\mu \eta n}{10}. 
    \] Hence, from \eqref{eqn:pseudo-contraction} we get
    \[
        \norm{\mF\vz^{k+1}} \leq \left( 1-\frac{\mu \eta n }{10} \right) \norm{\mF\vz^k} + \eta^a n^a LV_\textsf{1}.
    \] which is exactly \eqref{eqn:induct-hyp-next}. 
    Now, considering that we are assuming \eqref{eqn:induct-hyp-loose} holds for all $0, 1, \dots, k$, we must also have \eqref{eqn:induct-hyp-next} for all $0, 1, \dots, k$. 
    Thus we can unravel the recurrence to get \begin{equation} \label{eqn:example-unravel} \begin{aligned}
        \norm{\mF\vz^{k+1}} &\leq \left( 1-\frac{\mu \eta n }{10} \right) \norm{\mF\vz^k} + \eta^a n^a LV_\textsf{1} \\ 
        &\leq \left( 1-\frac{\mu \eta n }{10} \right)^2 \norm{\mF\vz^{k-1}} +\left( 1-\frac{\mu \eta n }{10} \right) \eta^a n^a LV_\textsf{1} + \eta^a n^a LV_\textsf{1} \\
        &\leq \dots \\
        &\leq \left( 1-\frac{\mu \eta n }{10} \right)^{k+1} \norm{\mF\vz^0} + \eta^a n^a LV_\textsf{1} \sum_{j=0}^k \left( 1-\frac{\mu \eta n }{10} \right)^j \\
        &\leq \norm{\mF\vz^0} + \frac{\eta^a n^a LV_\textsf{1}}{1 - \left( 1-\frac{\mu \eta n }{10} \right)} \\
        &= \norm{\mF\vz^0} +  \frac{10 \eta^{a-1} n^{a-1} L V_\textsf{1}}{\mu}. 
    \end{aligned} \end{equation}
    As \eqref{eqn:stepsize-for-contraction} also implies $10 \eta^{a-1} n^{a-1} L \leq \nicefrac{1}{L}$, we obtain \eqref{eqn:induction-goal}, as claimed. This completes the proof. 
\end{proof} 

\begin{thm}[\Cref{thm:rr-error-bound}] \label{thm:scsc-convergence-result} %
Suppose that $\mF$ is $\mu$-strongly monotone with $\mu>0$, \Cref{asmp:smoothness} holds, 
    and an optimization method whose within-epoch error satisfies \eqref{eqn:rk1-ff} and \eqref{eqn:rk2-ff} for some constant $a > 0$ is run for $K$ epochs.
    Let us define a constant \[
        \varPhi \coloneqq C_\textsf{2} + D_\textsf{2} \left( \norm{\mF \vz^0} +  \frac{V_\textsf{1}}{\mu L} \right)^2. 
        \] %
    Say we use a constant stepsize $\eta_k = \eta$, where $\eta$ is chosen as \begin{subequations}
    \begin{align}
       \eta = \min \left\{ \vphantom{\frac{2}{5nL}}\right. &{} \frac{2}{5nL}, \label{eqn:most-normal-condition} \\ 
                              &{}\quad \frac{1}{n (10 L^2)^{1/(a-1)}}, \label{eqn:for-recurrence-1}\\ 
                              &{}\quad \frac{\mu^{1/(a-1)}}{n (10 L (C_1 + D_1 (\norm{\mF \vz_0} + \nicefrac{V_1}{\mu L})))^{1/(a-1)}}, \label{eqn:for-recurrence-2} \\ 
                              &{}\quad \frac{1}{(12 \varPhi / \mu)^{1/(2a-3)} n}, \label{eqn:scsc-initial-stepsize-restriction} \\
                       &{}\quad \left. \frac{4(a-1)\log(n^{1/(2a-2)} K)}{ \mu n K} \right\} .\label{eqn:log-over-lin-stepsize}
    \end{align}
\end{subequations}
        Then for $\omega$ denoting the minimum among \eqref{eqn:most-normal-condition}--\eqref{eqn:scsc-initial-stepsize-restriction}, it holds that \begin{equation} \label{eqn:appx:scsc-meta-convergence}
          \expt \norm{\vz^K - \vz^*}^2 \leq \exp\left( - \frac{1}{2} \mu \omega n K \right)  \norm{\vz^{0} - \vz^*}^2 + \gO\left(\frac{\left(\log(n^{1/(2a-2)} K)\right)^{2a-2}}{ n  K^{2a-2}} \right)  . 
     \end{equation}
\end{thm}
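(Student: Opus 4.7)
\textbf{Proof proposal for \Cref{thm:scsc-convergence-result}.} The plan is to turn the one-step descent identity~\eqref{eqn:progress-of-strong-eg} into a linear recursion on $\E\|\vz^k - \vz^*\|^2$ of the form $(1 - c\mu\eta n)\cdot(\text{previous}) + (\text{noise})$, and then unroll it. Starting from \eqref{eqn:progress-of-strong-eg}, I would first restrict $\eta \leq \frac{2}{5nL}$ as in \eqref{eqn:most-normal-condition} to ensure the coefficient $1 - \tfrac{3}{2}\mu\eta n - (1+\tfrac{1}{2}\mu\eta n)\eta^2 n^2 L^2$ on the left-hand side is bounded below by a positive constant (say $\tfrac{1}{10}$). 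Taking the conditional expectation given $\vz^k$, I would then plug in the second-moment bound \eqref{eqn:rk2-ff} for $\E[\|\vr^k\|^2\mid\vz^k]$; the resulting $\|\mF\vz^k\|^4$ term is controlled by invoking \eqref{eqn:induct-hyp-loose} of \Cref{lem:appx-bounded-iterates}, which deterministically bounds $\|\mF\vz^k\|$ by $\|\mF\vz^0\| + V_1/(\mu L)$ under stepsize conditions \eqref{eqn:for-recurrence-1} and \eqref{eqn:for-recurrence-2}. This converts the stochastic error into the unified form $\E[\|\vr^k\|^2\mid\vz^k] \leq \eta^{2a}n^{2a}\varPhi\|\mF\vz^k\|^2 + \eta^{2a}n^{2a-1}V_2$, with $\varPhi$ exactly as defined in the theorem.

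Plugging this back into \eqref{eqn:progress-of-strong-eg}, the coefficient of $\|\mF\vz^k\|^2$ on the left becomes roughly $\eta^2 n^2 \cdot (\text{const}) - \frac{3}{\mu\eta n}\cdot \eta^{2a}n^{2a}\varPhi$, which is nonnegative precisely when $\eta^{2a-3}n^{2a-3} \lesssim \mu/\varPhi$; this is enforced by the condition \eqref{eqn:scsc-initial-stepsize-restriction}. Dropping this nonnegative term on the left and taking unconditional expectation yields the clean recursion
\[
\E\|\vz^{k+1}-\vz^*\|^2 \leq \left(1-\tfrac{1}{2}\mu\eta n\right)\E\|\vz^k - \vz^*\|^2 \; + \; C\cdot \eta^{2a-1}n^{2a-2} V_2/\mu
\]
for some absolute constant $C$. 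Unrolling the recurrence and summing the geometric series $\sum_{j=0}^{K-1}(1-\tfrac{1}{2}\mu\eta n)^j \leq \frac{2}{\mu\eta n}$ produces
\[
\E\|\vz^K-\vz^*\|^2 \leq \exp\!\left(-\tfrac{1}{2}\mu\eta n K\right)\|\vz^0-\vz^*\|^2 + \gO\!\left(\tfrac{\eta^{2a-2} n^{2a-3} V_2}{\mu^2}\right).
\]

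The final step is to optimize the stepsize. With $\eta = \min\{\omega, \tfrac{4(a-1)\log(n^{1/(2a-2)}K)}{\mu n K}\}$, there are two cases. If $\omega$ dominates, the first term gives $\exp(-\tfrac{1}{2}\mu\omega nK)\|\vz^0-\vz^*\|^2$ and the polynomial term is no worse than $\gO(1/(nK^{2a-2}))$ (absorbed into the polylogarithmic bound). If the logarithmic stepsize dominates, substituting $\mu\eta n K/2 = (a-1)\log(n^{1/(2a-2)}K) \cdot 2$ into the exponential yields $(n^{1/(2a-2)}K)^{-2(a-1)} = 1/(nK^{2a-2})$, while the polynomial term becomes $\gO((\log(n^{1/(2a-2)}K))^{2a-2}/(nK^{2a-2}))$, matching \eqref{eqn:appx:scsc-meta-convergence}. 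The main obstacles I expect are (i) carefully tracking all the absolute constants so the five stepsize conditions \eqref{eqn:most-normal-condition}--\eqref{eqn:scsc-initial-stepsize-restriction} are simultaneously sufficient and (ii) justifying that the deterministic iterate-norm bound \Cref{lem:appx-bounded-iterates} still applies under the additional logarithmic shrinkage of $\eta$ in \eqref{eqn:log-over-lin-stepsize} (which only makes the stepsize smaller, so the hypothesis \eqref{eqn:stepsize-for-contraction} continues to hold).
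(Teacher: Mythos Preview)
Your proposal is correct and follows essentially the same approach as the paper's proof: both start from the descent inequality~\eqref{eqn:progress-of-strong-eg}, use \Cref{lem:appx-bounded-iterates} to reduce the quartic term in~\eqref{eqn:rk2-ff} to $\eta^{2a}n^{2a}\varPhi\|\mF\vz^k\|^2$, absorb that into the left-hand coefficient via condition~\eqref{eqn:scsc-initial-stepsize-restriction}, drop the nonnegative $\|\mF\vz^k\|^2$ term, unroll the resulting linear recursion, and finish with the same two-case analysis on whether $\eta=\omega$ or $\eta$ equals the logarithmic choice~\eqref{eqn:log-over-lin-stepsize}. The only cosmetic difference is that the paper bounds the combined subtracted quantity by $\tfrac{124}{125}$ in one shot rather than splitting the $L^2$ and $\varPhi$ contributions, but the logic and the role of each stepsize constraint are identical.
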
  
As a reminder, for \segff{} and \segrr{} we have $a = 2$, and for \segffa{} we have $a = 3$.

\begin{proof}
     Notice that \eqref{eqn:for-recurrence-1} and \eqref{eqn:for-recurrence-2} together implies \eqref{eqn:stepsize-for-contraction}, and that $\eta_k = \eta \leq \frac{2}{5nL} \leq \frac{1}{nL \sqrt{2}} < \frac{1}{nL}$. 
     So, we can utilize \eqref{eqn:rk2-ff} and \Cref{lem:appx-bounded-iterates} to get \begin{align*}
     \expt\left[\norm{\vr}^2\,\middle|\,\vz^k\right] &\leq \eta^{2a} n^{2a} C_\textsf{2} \norm{\mF \vz^k}^2 + \eta^{2a} n^{2a} D_\textsf{2} \norm{\mF \vz^k }^4 + \eta^{2a} n^{2a-1} V_\textsf{2} \\
     &\leq \eta^{2a} n^{2a} C_\textsf{2} \norm{\mF \vz^k}^2 + \eta^{2a} n^{2a} D_\textsf{2} \left( \norm{\mF \vz^0} + \frac{V_\textsf{1}}{\mu L} \right)^2 \norm{\mF \vz^k }^2 + \eta^{2a} n^{2a-1} V_\textsf{2}  \\
     &= \eta^{2a} n^{2a} \varPhi \norm{\mF \vz^k}^2  + \eta^{2a} n^{2a-1} V_\textsf{2} .   
 \end{align*} 
 Taking the conditional expectation on \eqref{eqn:progress-of-strong-eg} and applying the bound just derived, we obtain  \begin{align*}
      \MoveEqLeft \eta^2 n^2 \left( 1-\frac{3}{2}\mu \eta n - \left(1+\frac{1}{2}\mu \eta n\right) \eta^2 n^2 L^2 \right) \norm{\mF\vz^k}^2 \\
       &\leq \left( 1 - \frac{1}{2} \mu \eta n \right)\norm{\vz^{k} - \vz^*}^2 - \expt\left[ \norm{\vz^{k+1} - \vz^*}^2 \,\middle|\, \vz^k \right] \\
       &\phantom{\leq \left( 1 - \frac{1}{2} \mu \eta n \right)\norm{\vz^{k} - \vz^*}^2} \ +  \frac{2+\mu \eta n}{\mu} \left( \eta^{2a-1} n^{2a-1} \varPhi \norm{\mF \vz^k}^2  + \eta^{2a-1} n^{2a-2} V_\textsf{2} \right). 
 \end{align*}
 A simple rearrangement of the terms leads to %
 \begin{equation}\label{eqn:scsc-expected-contraction} \begin{aligned}
     \MoveEqLeft  \eta^2 n^2 \left( 1-\frac{3}{2}\mu \eta n - \left(1+\frac{1}{2}\mu \eta n\right)\eta^2 n^2 L^2 - \frac{2+\mu \eta n}{\mu} \cdot \eta^{2a-3} n^{2a-3} \varPhi \right) \norm{\mF\vz^k}^2  \\
     &\leq \left(1 - \frac{1}{2} \mu \eta n \right)\norm{\vz^{k} - \vz^*}^2 - \expt\left[ \norm{\vz^{k+1} - \vz^*}^2 \,\middle|\, \vz^k \right]+  \frac{2+\mu \eta n}{\mu} \cdot \eta^{2a-1} n^{2a-2} V_\textsf{2} .
 \end{aligned} \end{equation}
  Notice that by assuming \eqref{eqn:most-normal-condition} and \eqref{eqn:scsc-initial-stepsize-restriction}, it holds that \begin{align*}
  \MoveEqLeft \frac{3}{2}\mu \eta n  +  \left(1+\frac{1}{2}\mu \eta n\right)\eta^2 n^2 L^2 + \frac{2+\mu \eta n}{\mu} \cdot \eta^{2a-3} n^{2a-3} \varPhi \\ 
  &\leq \frac{3}{2}\cdot \frac{2}{5} + \left( 1 + \frac{1}{2} \cdot \frac{2}{5} \right) \left(\frac{2}{5}\right)^2 + \frac{12\varPhi}{5 \mu} \cdot \frac{\mu}{12 \varPhi} = \frac{124}{125},
  \end{align*} so we can guarantee that the left hand side of \eqref{eqn:scsc-expected-contraction} is nonnegative. 
  It then follows that
 \[
    \expt\left[ \norm{\vz^{k+1} - \vz^*}^2 \,\middle|\, \vz^k \right] \leq \left(1 - \frac{1}{2} \mu \eta n \right)\norm{\vz^{k} - \vz^*}^2 + \frac{2+\mu \eta n}{\mu} \cdot \eta^{2a-1} n^{2a-2} V_\textsf{2}.    
 \]
 Applying the law of total expectation, %
 from the above we obtain \[ %
    \expt \norm{\vz^{k+1} - \vz^*}^2  \leq \left(1 - \frac{1}{2} \mu \eta n \right) \expt \norm{\vz^{k} - \vz^*}^2 + \frac{2+\mu \eta n}{\mu} \cdot \eta^{2a-1} n^{2a-2} V_\textsf{2}. 
 \] %
 We can now unravel this recurrence over $k = 0, 1, \dots, K-1$ as done in \eqref{eqn:example-unravel} to get \begin{align*}
     \expt \norm{\vz^K - \vz^*}^2 &\leq \left(1 - \frac{1}{2} \mu \eta n \right) \expt \norm{\vz^{K-1} - \vz^*}^2 + \frac{2+\mu \eta n}{\mu} \cdot \eta^{2a-1} n^{2a-2} V_\textsf{2} \\ 
     &\leq \dots \\ 
     &\leq  \left(1 - \frac{1}{2} \mu \eta n \right)^K \norm{\vz^{0} - \vz^*}^2 + \frac{2+\mu \eta n}{\mu} \cdot \eta^{2a-1} n^{2a-2} V_\textsf{2} \sum_{j=0}^{K-1}\left(1 - \frac{1}{2} \mu \eta n \right)^j \\ 
     &\leq  \left(1 - \frac{1}{2} \mu \eta n \right)^K \norm{\vz^{0} - \vz^*}^2 + \frac{4+2\mu \eta n}{\mu^2 \eta n } \cdot \eta^{2a-1} n^{2a-2} V_\textsf{2} \\
     &\leq \exp\left( - \frac{1}{2} \mu \eta n K \right) \norm{\vz^{0} - \vz^*}^2 + \frac{24}{5\mu^2} \cdot \eta^{2a-2} n^{2a-3} V_\textsf{2}
 \end{align*}
 where in the last line we used the basic inequality $1+x \leq e^x$ which holds for all $x \in \rr$. %
 With the choice of the stepsize \eqref{eqn:log-over-lin-stepsize}, we arrive at \begin{equation}\label{eqn:general-rate-scsc}
     \expt \norm{\vz^K - \vz^*}^2 \leq \exp\left( - \frac{1}{2} \mu \eta n K \right)  \norm{\vz^{0} - \vz^*}^2 + \frac{24 \cdot (4a-4)^{2a-2} V_\textsf{2}}{5  \mu^{2a}} \cdot \frac{\left(\log(n^{1/(2a-2)} K)\right)^{2a-2}}{ n  K^{2a-2}}  . 
 \end{equation}

 Now, recall that $\eta$ is chosen to be the smallest one among \eqref{eqn:most-normal-condition}--\eqref{eqn:log-over-lin-stepsize}. 
 Notice that the options \eqref{eqn:most-normal-condition}--\eqref{eqn:scsc-initial-stepsize-restriction} are independent with respect to $K$, and \eqref{eqn:log-over-lin-stepsize} is the only one that depends on $K$. 
 Let us consider these two cases separately. 
 \begin{enumerate}[label=(\roman*)] 
     \item $\eta$ is chosen to be the minimum among \eqref{eqn:most-normal-condition}--\eqref{eqn:scsc-initial-stepsize-restriction}. \nopagebreak

     This is the case where we have $\eta = \omega$. Notice that the constant $\omega$ that does not depend on~$K$. 
     The inequality \eqref{eqn:general-rate-scsc} then takes the form \[
     \expt \norm{\vz^K - \vz^*}^2 \leq \exp\left( - \frac{\mu \omega n K}{2} \right)  \norm{\vz^{0} - \vz^*}^2 + \gO\left(\frac{\left(\log(n^{1/(2a-2)} K)\right)^{2a-2}}{ n  K^{2a-2}} \right). 
     \]

     \item  $\eta$ is chosen to be \eqref{eqn:log-over-lin-stepsize}, that is, $\eta = \frac{4(a-1)\log(n^{1/(2a-2)} K)}{ \mu n K}$.

     In this case, the exponential factor of the first term in the right hand side of \eqref{eqn:general-rate-scsc} reduces~to \[
        \exp\left( - \frac{1}{2} \mu \eta n K \right) = \frac{1}{nK^{2a-2}}. 
     \] Thus, the second term in \eqref{eqn:general-rate-scsc} dominates the first term, and in total \eqref{eqn:general-rate-scsc} becomes\[
     \expt \norm{\vz^K - \vz^*}^2 = \gO\left(\frac{\left(\log(n^{1/(2a-2)} K)\right)^{2a-2}}{ n  K^{2a-2}} \right). 
     \]
 \end{enumerate}

 Therefore, in both cases we have \[
     \expt \norm{\vz^K - \vz^*}^2 \leq \exp\left( - \frac{1}{2} \mu \omega n K \right)  \norm{\vz^{0} - \vz^*}^2 + \gO\left(\frac{\left(\log(n^{1/(2a-2)} K)\right)^{2a-2}}{ n  K^{2a-2}} \right)  
 \] which is exactly \eqref{eqn:appx:scsc-meta-convergence}. This completes the proof. 
 \end{proof}

 \begin{rmk}
 To compare the convergence rate of \segffa{} in the strongly monotone setting with that of \segrr{} by \citet{Emma24} more in depth, let us make an estimation on the size of $\omega$ appearing in \Cref{thm:scsc-convergence-result} when $a = 3$.
 
 To this end, we need estimates on the constants $C_\textsf{1A}$, $D_\textsf{1A}$, $V_\textsf{1A}$, $C_\textsf{2A}$, and $D_\textsf{2A}$. 
 From their definitions in \eqref{eqn:def-c1a}--\eqref{eqn:def-v1a}, \eqref{eqn:def-c2a}, and \eqref{eqn:def-d2a} we have $C_\textsf{1A} \asymp L^2$, $D_\textsf{1A} \asymp M$, $V_\textsf{1A} \asymp M+L^2$, $C_\textsf{2A} \asymp L^4$, and $D_\textsf{2A} \asymp M^2$.
In general, there is not a direct relation between $L$ and $M$. For example, recall that if all components are quadratic, then $M = 0$. Meanwhile, \citet{Gorb22EG} has argued that $M$ can be much larger than $L$ in certain cases, by providing an example where $M \asymp L^{3/2}$. 
For our purposes, however, let us allow $M$ to be even as large as $M \asymp L^2$, so that the situation is simplified into $C_\textsf{1A} \asymp D_\textsf{1A} \asymp V_\textsf{1A} \asymp L^2$ and $C_\textsf{2A} \asymp D_\textsf{2A} \asymp L^4$. 

Then, we get the estimate of \eqref{eqn:for-recurrence-2},  
\[
\frac{\mu^{1/2}}{n (10 L (C_\textsf{1A} + D_\textsf{1A} (\norm{\mF \vz_0} + \nicefrac{V_\textsf{1A}}{\mu L})))^{1/2}} \asymp \frac{\mu}{nL^2}. 
\] 
Meanwhile, as for the constant $\varPhi$ it holds that \[
     \varPhi = C_\textsf{2A} + D_\textsf{2A} \left( \norm{\mF \vz^0} +  \frac{V_\textsf{1A}}{\mu L} \right)^2 \asymp   \frac{L^6}{\mu^2},  
\] for \eqref{eqn:scsc-initial-stepsize-restriction} we have \[
    \frac{1}{(12 \varPhi / \mu)^{1/3} n} \asymp \frac{\mu}{nL^2}. 
\] As \eqref{eqn:most-normal-condition} while \eqref{eqn:for-recurrence-1} are both $\Theta(\nicefrac{1}{nL})$ and $\mu \leq L$, we essentially have $\omega \asymp \nicefrac{\mu}{nL^2}$. 
Or equivalently, for some $b = \Theta(1)$, the convergence rate \eqref{eqn:appx:scsc-meta-convergence} reads \begin{equation} \label{eqn:forced-explicit-scsc-rate}
          \expt \norm{\vz^K - \vz^*}^2 \leq \exp\left( - \frac{b \mu^2 K}{L^2}  \right)  \norm{\vz^{0} - \vz^*}^2 + \gO\left(\frac{\left(\log(n^{1/4} K)\right)^{4}}{ n  K^{4}} \right)  . 
     \end{equation}

     On the other hand, Theorem~2.1 of \citep{Emma24} states that, for some $b' = \Theta(1)$, \segrr{} exhibits a rate of \begin{equation} \label{eqn:import-rate-from-emma24}
\expt \norm{\vz^K - \vz^*}^2 \leq \exp\left( - \frac{b' \mu^2 K}{L^2}  \right)  \norm{\vz^{0} - \vz^*}^2 + \gO\left(\frac{\left(\log(n^{1/2} K)\right)^{2}}{ n  K^{2}} \right)  . 
     \end{equation}
Comparing \eqref{eqn:forced-explicit-scsc-rate} with \eqref{eqn:import-rate-from-emma24}, the exponents in the exponentially decaying term are of the same order of $-\frac{\mu^2 K}{L^2}$, so \segffa{} having a faster polynomially decaying term $\tilde{\gO}(\nicefrac{1}{nK^4})$ enjoys an improved convergence rate. 
\end{rmk}

\endgroup

\section{Convergence Rate of \segffa{} in the Monotone Setting} \label{sappx:meta-analysis}

\subsection{Star-monotonicity}
\label{appx:star-monotone}

Notice that we only used Assumptions~\ref{asmp:smoothness}~and~\ref{asmp:bounded-variance} in deriving the results in Appendices~\ref{sec:proof-segffa}~and~\ref{appx:within-epoch}, and in particular, the monotonicity assumption on $\mF$ was not necessary. 
Moreover, among the lemmata listed in \Cref{sec:toolbox}, \Cref{lem:nonexpansive} is the only one that possibly uses the (non-strongly) monotone assumption, but that lemma is not used in this section. 

In fact, as it turns out in \Cref{sec:monotone-and-som}, in the convergence analysis of \segffa, we need not fully exploit the inequality \eqref{eqn:def-strongly-monotone} provided by the monotonicity assumption. 
Rather, all the results on the performance of \segffa{} can be established with only assuming the following condition (which has been also briefly mentioned in \Cref{sect:gorb22a-asmp}). 
\begin{asmp}[Star-monotonicity]\label{asmp:star-monotone}
     Given an operator $\mF$ with a point $\vz^* \in \rr^{{d_1}+{d_2}}$ such that $\mF \vz^* = \zero$, we say that $\mF$ is star-monotone if, for any $\vz \in \rr^{{d_1}+{d_2}}$, it holds that \begin{equation}\label{eqn:def-star-monotone}
    \inprod{\mF \vz }{\vz - \vz^*} \geq 0. 
\end{equation} 
\end{asmp} 

Monotone and strongly-monotone operators are clearly star-monotone, as they satisfy \eqref{eqn:def-strongly-monotone}. 
On the other hand, there exist operators that are star-monotone but not monotone: see, \textit{e.g.}, \citep[Appendix~A.6]{Loiz21}. 

Recall that when $\mF$ is monotone, \Cref{asmp:solution} is equivalent to assuming the existence of a point $\vz^*$ that satisfies $\mF \vz^* = \zero$. 
Hence, after simply replacing the optimality condition %
in \Cref{asmp:solution} with $\mF \vz^* = \zero$, our convergence analyses not only will show that our \segffa{} finds an optimum on monotone problems, but also that it can be also used to find stationary points in ``star-monotone'' problems, allowing the objective function $f$ to be nonconvex-nonconcave.  

Star-monotonicity is also known as the \emph{variational stability condition} \citep{Hsie20}, and has much been studied in the literature. For further details on star-monotonicity, we refer to \citep{Loiz21, Hsie20} and the references therein.

\subsection{Convergence Analysis of \segffa{} in the (Star-)Monotone Setting} \label{sec:monotone-and-som}

Let us in particular consider \segffa. As in the previous section, we focus only on the iterates $\{\vz_0^k\}_{k \geq 0}$, so again, we omit the subscript $0$ unless necessary, and simply write $\vz^k$ instead of~$\vz_0^k$.

Decompose the update across the epoch into a deterministic EG update plus a noise, as %
\begin{equation} \label{eqn:appx-ffa-with-error}  \begin{aligned}
\vw_\dagger^k &\coloneqq \vz^k - \eta_k n \mF \vz^k, \\
 \vz^{k+1}  &= \vz^k - \eta_k n \widehat{\mF}^k.
\end{aligned} \end{equation} for $\widehat{\mF}^k$ defined by the equation \begin{equation} \label{eqn:appx-ffa-Fhat}
\eta_k n \widehat{\mF}^k = \eta_k n \mF \vw_\dagger^k + \vr^k.  %
\end{equation}

\begin{lem} %
   Let $\mF$ be a (star-)monotone operator with a point $\vz^*$ that satisfies $\mF\vz^* = \zero$, and suppose that \Cref{asmp:smoothness} holds. 
   Then for any $\eta_k > 0$ and $\gamma_k > 0$, it holds that \begin{equation} \label{eqn:descent-of-eg}\begin{aligned} 
       0 &\leq \norm{\vz^k - \vz^*}^2  - \frac{1}{1+\gamma_k} \norm{\vz^{k+1} - \vz^*}^2  - \eta_k^2 n^2 (1-\eta_k^2 n^2 L^2)\norm{\mF \vz^k}^2   + \frac{1}{\gamma_k}  \norm{\vr^k}^2. 
  \end{aligned}\end{equation}
\end{lem}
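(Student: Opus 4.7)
The plan is to adapt the derivation of the strongly-monotone descent inequality \eqref{eqn:progress-of-eg} in \Cref{prop:descent-lemma} to the star-monotone setting ($\mu=0$) and then divide by $1+\gamma_k$. Concretely, from \eqref{eqn:appx-ffa-with-error} and \eqref{eqn:appx-ffa-Fhat} the iterate can be rewritten as
\[
\vz^{k+1} = \vu^k - \vr^k, \qquad \text{where} \quad \vu^k \coloneqq \vz^k - \eta_k n\, \mF\vw_\dagger^k
\]
is the ``clean'' EG step applied at $\vz^k$ with stepsize $\eta_k n$. The first step is therefore to expand
\[
\norm{\vz^{k+1}-\vz^*}^2 = \norm{\vu^k-\vz^*}^2 - 2\inprod{\vu^k-\vz^*}{\vr^k} + \norm{\vr^k}^2,
\]
so that the analysis of the stochastic update reduces to (i)~bounding the deterministic EG term $\norm{\vu^k-\vz^*}^2$ and (ii)~controlling the cross term with $\vr^k$.

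For (i) I would invoke the nonexpansiveness result \Cref{lem:nonexpansive}. Although stated for monotone $\mF$, a quick inspection of its proof shows that the only use of monotonicity is in asserting $\inprod{\mF(\vz-\eta\mF\vz)}{\vz-\eta\mF\vz-\vz^*}\geq 0$, which is precisely \Cref{asmp:star-monotone} applied at the extrapolated point. Hence star-monotonicity suffices to yield
\[
\norm{\vu^k-\vz^*}^2 \leq \norm{\vz^k-\vz^*}^2 - \eta_k^2 n^2(1-\eta_k^2 n^2 L^2)\norm{\mF\vz^k}^2.
\]
For (ii), the key algebraic trick is to substitute $\vu^k-\vz^* = (\vz^{k+1}-\vz^*)+\vr^k$ in the cross term, turning it into
\[
-2\inprod{\vu^k-\vz^*}{\vr^k} + \norm{\vr^k}^2 = -2\inprod{\vz^{k+1}-\vz^*}{\vr^k} - \norm{\vr^k}^2.
\]
This rewriting is the one nontrivial step; it converts a cross term involving the unknown $\vu^k$ into one involving the next iterate $\vz^{k+1}$, which is the variable appearing on the left-hand side of the target inequality.

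The last step is to apply \Cref{lem:w-amgm-ineq} with parameter $c = \tfrac{\gamma_k}{1+\gamma_k}$, giving
\[
-2\inprod{\vz^{k+1}-\vz^*}{\vr^k} \leq \frac{\gamma_k}{1+\gamma_k}\norm{\vz^{k+1}-\vz^*}^2 + \frac{1+\gamma_k}{\gamma_k}\norm{\vr^k}^2,
\]
so that after absorbing $-\norm{\vr^k}^2$ on the right and moving the $\vz^{k+1}$ term to the left, one obtains $\bigl(1-\tfrac{\gamma_k}{1+\gamma_k}\bigr)\norm{\vz^{k+1}-\vz^*}^2 = \tfrac{1}{1+\gamma_k}\norm{\vz^{k+1}-\vz^*}^2$ on the left and exactly $\tfrac{1}{\gamma_k}\norm{\vr^k}^2$ on the right, yielding the desired display. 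The only real obstacle is verifying that \Cref{lem:nonexpansive} holds under the weaker star-monotonicity; after that, the remainder is a short rearrangement and a single application of Young's inequality with a specifically tuned coefficient.
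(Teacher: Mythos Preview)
Your proposal is correct and essentially identical to the paper's proof. The only cosmetic difference is that you invoke \Cref{lem:nonexpansive} as a black box (correctly noting its proof only needs star-monotonicity at the extrapolated point), whereas the paper re-derives that nonexpansiveness estimate inline via the polarization identity and $L$-Lipschitzness; the rewriting of the cross term into $-2\inprod{\vr^k}{\vz^{k+1}-\vz^*}-\norm{\vr^k}^2$ and the weighted AM-GM step with parameter $a_k=\gamma_k/(1+\gamma_k)$ are exactly the same in both arguments.
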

\begin{proof} \allowdisplaybreaks
   By \eqref{eqn:appx-ffa-with-error} and \eqref{eqn:appx-ffa-Fhat} we get \begin{align*}
       \norm{\vz^{k+1} - \vz^*}^2 &= \norm{\vz^{k} - \eta_k n \widehat{\mF}^k - \vz^*}^2 \\
       &= \norm{\vz^k - \vz^*}^2 - 2\inprod{\eta_k n \widehat{\mF}^k}{\vz^k - \vz^*} + \norm{\eta_k n \widehat{\mF}^k}^2  \\ 
       &= \norm{\vz^k - \vz^*}^2 - 2\inprod{\eta_k n \mF \vw_\dagger^k}{\vw_\dagger^k - \vz^*} - 2\inprod{\eta_k n \mF \vw_\dagger^k}{\vz^k - \vw_\dagger^k} - 2\inprod{\vr^k}{\vz^k - \vz^*} \\ 
       &\phantom{=} \qquad + \norm{\eta_k n \mF \vw_\dagger^k}^2 + 2\inprod{\vr^k}{\eta_k n \mF \vw_\dagger^k} + \norm{\vr^k}^2 \\ 
       &= \norm{\vz^k - \vz^*}^2 - 2\eta_k n \inprod{\mF \vw_\dagger^k}{\vw_\dagger^k - \vz^*} - 2\inprod{\eta_k n \mF \vw_\dagger^k}{\eta_k n \mF \vz^k} \\ 
       &\phantom{=} \qquad + \norm{\eta_k n \mF \vw_\dagger^k}^2 - 2\inprod{\vr^k}{\vz^k -\eta_k n \mF \vw_\dagger^k - \vz^*} + \norm{\vr^k}^2 \\ 
       &= \norm{\vz^k - \vz^*}^2 - 2\eta_k n \inprod{\mF \vw_\dagger^k}{\vw_\dagger^k - \vz^*} - 2\eta_k^2 n^2 \inprod{\mF \vw_\dagger^k}{\mF \vz^k} \\ 
       &\phantom{=} \qquad + \eta_k^2 n^2 \norm{\mF \vw_\dagger^k}^2 - 2\inprod{\vr^k}{\vz^{k+1} - \vz^*} - \norm{\vr^k}^2. 
   \end{align*} 
   We now bound the inner products. On one hand, by the polarization identity (\Cref{lem:polarization}) and the $L$-smoothness of $f$, we have \begin{align*}
      -2 \inprod{\mF \vw_\dagger^k}{\mF \vz^k} &= \norm{\mF \vw_\dagger^k - \mF \vz^k}^2 - \norm{\mF \vw_\dagger^k}^2 - \norm{\mF \vz^k}^2 \\ 
      &\leq L^2 \norm{- \eta_k n \mF \vz^k}^2 - \norm{\mF \vw_\dagger^k}^2 - \norm{\mF \vz^k}^2 \\ 
      &= -(1-\eta_k^2 n^2 L^2)\norm{\mF \vz^k}^2  - \norm{\mF \vw_\dagger^k}^2. 
   \end{align*} 
   On the other hand, by the weighted AM-GM inequality (\Cref{lem:w-amgm-ineq}), for any number $a_k \in (0, 1)$ it holds that \[
       - 2\inprod{\vr^k}{\vz^{k+1} - \vz^*} \leq \frac{1}{a_k}\norm{\vr^k}^2 + a_k \norm{\vz^{k+1} - \vz^*}^2. 
   \] Using these two bounds, we get \begin{align*}
       \norm{\vz^{k+1} - \vz^*}^2 &\leq \norm{\vz^k - \vz^*}^2 - 2\eta_k n \inprod{\mF \vw_\dagger^k}{\vw_\dagger^k - \vz^*} - \eta_k^2 n^2 (1-\eta_k^2 n^2 L^2)\norm{\mF \vz^k}^2  \\ 
       &\phantom{\leq} \quad - \eta_k^2 n^2 \norm{\mF \vw_\dagger^k}^2 + \eta_k^2 n^2 \norm{\mF \vw_\dagger^k}^2 + a_k\norm{\vz^{k+1} - \vz^*}^2 + \left( \frac{1}{a_k} - 1 \right) \norm{\vr^k}^2. 
   \end{align*} Choosing $a_k = \frac{\gamma_k}{1+\gamma_k}$ and rearranging the terms, we obtain \begin{equation}\label{eqn:vi-gap-bound}\begin{aligned}
       2\eta_k n \inprod{\mF \vw_\dagger^k}{\vw_\dagger^k - \vz^*} &\leq \norm{\vz^k - \vz^*}^2  - \frac{1}{1+\gamma_k} \norm{\vz^{k+1} - \vz^*}^2 \\ 
       &\phantom{\leq} \qquad - \eta_k^2 n^2 (1-\eta_k^2 n^2 L^2)\norm{\mF \vz^k}^2   + \frac{1}{\gamma_k}  \norm{\vr^k}^2.
   \end{aligned} \end{equation}
   The left hand side of \eqref{eqn:vi-gap-bound} is nonnegative by the star-monotonicity of $\mF$ \eqref{eqn:def-star-monotone}, and the claimed inequality follows. 
\end{proof}

Now we show that choosing the appropriate stepsizes leads to $\norm{\mF \vz^k}$ being bounded uniformly over~$k$. 

\begin{prop}\label{prop:bounded-iterates-m}
     Let $\mF$ be a (star-)monotone operator with a point $\vz^*$ that satisfies $\mF\vz^* = \zero$, and suppose that Assumptions~\ref{asmp:smoothness}~and~\ref{asmp:bounded-variance} hold.
    Say we are using \textsf{SEG-FFA}, or any optimization method whose within-epoch error satisfies \eqref{eqn:rk1-ffa} and \eqref{eqn:rk2-ffa}.
    Let the sequence of stepsizes $\{\eta_k\}_{k \geq 0}$ be nonincreasing, with \begin{equation}\label{eqn:cubic-sum}
        S  \coloneqq \sum_{k=0}^\infty \eta_k^3 n^3 L^3 < \infty.         
    \end{equation} 
    Suppose that initial stepsize $\eta_0$ is chosen sufficiently small so that %
         \begin{equation} \label{item:lr-cond2-m} 
               \eta_0^2 n^2 L^2 + \frac{3 \eta_0 n C_{\textsf{1A}}^2}{  L^3 } + \frac{3 \eta_0  n  D_{\textsf{1A}}^2}{ L } \cdot e^{S}  \left( \norm{\vz^{0} - \vz^*}^2 + \frac{6 S V_{\textsf{1A}}^2}{L^6} \right) \leq 1 
         \end{equation}
         for constants $C_\textsf{1A}$, $D_\textsf{1A}$, and $V_\textsf{1A}$ defined in \eqref{eqn:def-c1a}--\eqref{eqn:def-v1a}. 
    Then for all $k \geq 0$, \begin{equation}\label{eqn:gradient-norm-bound-m}
        \norm{\mF \vz^k}^2 \leq e^{S} L^2 \left( \norm{\vz^{0} - \vz^*}^2 + \frac{6 S V_{\textsf{1A}}^2}{L^6} \right). 
     \end{equation} 
 \end{prop}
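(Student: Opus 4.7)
The plan is to proceed by (strong) induction on $k$, proving the equivalent statement that $\norm{\vz^k-\vz^*}^2\leq e^{S}\bigl(\norm{\vz^0-\vz^*}^2 + 6SV_{\textsf{1A}}^2/L^6\bigr)\eqqcolon R/L^2$ for every $k\geq 0$. The bound \eqref{eqn:gradient-norm-bound-m} then follows from the $L$-smoothness of $\mF$ and $\mF\vz^*=\zero$, which yield $\norm{\mF\vz^k}^2\leq L^2\norm{\vz^k-\vz^*}^2\leq R$. The base case $k=0$ is immediate since $R/L^2\geq\norm{\vz^0-\vz^*}^2$.

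For the inductive step, I will assume the gradient bound $\norm{\mF\vz^j}^2\leq R$ holds for all $j\leq k$ and derive the corresponding bound on $\norm{\vz^{k+1}-\vz^*}^2$. First I square \eqref{eqn:rk1-ffa} and apply the generalized Young's inequality to get
\[
\norm{\vr^j}^2 \leq 3\eta_j^6 n^6\bigl(C_{\textsf{1A}}^2\norm{\mF\vz^j}^2+D_{\textsf{1A}}^2\norm{\mF\vz^j}^4+V_{\textsf{1A}}^2\bigr),
\]
and then use the inductive hypothesis $\norm{\mF\vz^j}^4\leq R\,\norm{\mF\vz^j}^2$ to consolidate the quadratic and quartic gradient terms into $3\eta_j^6 n^6(C_{\textsf{1A}}^2+D_{\textsf{1A}}^2R)\norm{\mF\vz^j}^2$. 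I then invoke \eqref{eqn:descent-of-eg} with the specific choice $\gamma_j=\eta_j^3 n^3 L^3$ (chosen so that $\frac{1}{\gamma_j}\cdot 3\eta_j^6 n^6 = 3\eta_j^3 n^3/L^3$, matching exactly the coefficients in \eqref{item:lr-cond2-m}), rearrange, and substitute the bound on $\norm{\vr^j}^2$ to obtain
\[
\norm{\vz^{j+1}-\vz^*}^2 \leq (1+\gamma_j)\norm{\vz^j-\vz^*}^2 - \eta_j^2 n^2(1+\gamma_j)\Lambda_j\,\norm{\mF\vz^j}^2 + c_j,
\]
where $\Lambda_j\coloneqq 1-\eta_j^2 n^2 L^2-\frac{3\eta_j n(C_{\textsf{1A}}^2+D_{\textsf{1A}}^2 R)}{L^3}$ and $c_j\leq \tfrac{6\eta_j^3 n^3 V_{\textsf{1A}}^2}{L^3}$ (using $1+\gamma_j\leq 2$).

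The key observation is that the stepsize condition \eqref{item:lr-cond2-m}, together with the monotonicity $\eta_j\leq\eta_0$, implies $\Lambda_j\geq 0$, so the $\norm{\mF\vz^j}^2$ term can be dropped. This leaves the clean recurrence $\norm{\vz^{j+1}-\vz^*}^2\leq(1+\gamma_j)\norm{\vz^j-\vz^*}^2+c_j$ valid for all $j=0,\ldots,k$. I then apply \Cref{lem:recurrence} (with $b_j=0$, $a_j=\gamma_j$, $d_j=\norm{\vz^j-\vz^*}^2$) and use $\prod_{j=0}^{k}(1+\gamma_j)\leq\exp(\sum_j\gamma_j)\leq e^{S}$ together with $\sum_{j=0}^k c_j\leq 6SV_{\textsf{1A}}^2/L^6$ to conclude $\norm{\vz^{k+1}-\vz^*}^2\leq e^{S}(\norm{\vz^0-\vz^*}^2+6SV_{\textsf{1A}}^2/L^6)$, closing the induction.

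The only delicate step is verifying that the stepsize condition \eqref{item:lr-cond2-m} is exactly what is needed to force $\Lambda_j\geq 0$; since the three summands in \eqref{item:lr-cond2-m} correspond precisely to $\eta_0^2 n^2 L^2$, $3\eta_0 n C_{\textsf{1A}}^2/L^3$, and $3\eta_0 n D_{\textsf{1A}}^2 R/L^3$ after recognizing $R/L^2 = e^{S}(\norm{\vz^0-\vz^*}^2+6SV_{\textsf{1A}}^2/L^6)$, this is really just a bookkeeping check. The rest is straightforward once the right $\gamma_j$ is chosen.
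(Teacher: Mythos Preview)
Your proposal is correct and follows essentially the same approach as the paper: induction on the stronger iterate bound $\norm{\vz^k-\vz^*}^2\leq e^{S}\bigl(\norm{\vz^0-\vz^*}^2+6SV_{\textsf{1A}}^2/L^6\bigr)$, squaring \eqref{eqn:rk1-ffa} via Young's inequality, plugging into \eqref{eqn:descent-of-eg} with $\gamma_j=\eta_j^3 n^3 L^3$, using \eqref{item:lr-cond2-m} to make the $\norm{\mF\vz^j}^2$ coefficient nonnegative, and unrolling the resulting recurrence via \Cref{lem:recurrence}. Your explicit use of strong induction is in fact slightly more careful than the paper's presentation, since the unrolling step indeed requires the recurrence (and hence the hypothesis) at every $j\leq k$.
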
 
\begin{proof}  
    We use induction on $k$, to establish a stronger inequality  \begin{equation} \label{eqn:bounded-iterates-m}\begin{aligned} 
            \norm{\vz^{k} - \vz^*}^2 &\leq  e^{S}  \left( \norm{\vz^{0} - \vz^*}^2 + \frac{6 S V_{\textsf{1A}}^2}{L^6} \right). 
       \end{aligned}\end{equation}    
    To see that \eqref{eqn:bounded-iterates-m} indeed implies \eqref{eqn:gradient-norm-bound-m}, notice that by the $L$-smoothness of $f$ it holds that \[
        \norm{\mF \vz^k}^2  =  \norm{\mF \vz^k - \mF \vz^*}^2 \leq L^2 \norm{\vz^k - \vz^*}^2.     
    \]
    
    For the case when $k = 0$, as $S > 0$, it is clear that \eqref{eqn:bounded-iterates-m} holds. 
    Now suppose that \eqref{eqn:bounded-iterates-m} holds for some $k \geq 0$. 
    Applying Young's inequality on \eqref{eqn:rk1-ff} leads to\[ 
        \norm{\vr^k}^2 \leq 3 \eta_k^6 n^6 \left( C_{\textsf{1A}}^2 \norm{\mF \vz^k}^2 + D_{\textsf{1A}}^2 \norm{ \mF \vz^k}^4 + V_{\textsf{1A}}^2 \right). 
    \]
Applying this bound on $\norm{\vr^k}^2$ on \eqref{eqn:descent-of-eg}, we obtain \begin{equation} \label{eqn:special-case-descent-lemma-m} \begin{aligned}
    \MoveEqLeft[8] \eta_k^2 n^2 \left(1-\eta_k^2 n^2 L^2 - \frac{3 \eta_k^4 n^4 C_{\textsf{1A}}^2}{\gamma_k} - \frac{3 \eta_k^4 n^4 D_{\textsf{1A}}^2}{\gamma_k}\norm{\mF \vz^k}^2\right)\norm{\mF \vz^k}^2 \\ 
    &\leq \norm{\vz^k - \vz^*}^2  - \frac{1}{1+\gamma_k} \norm{\vz^{k+1} - \vz^*}^2  + \frac{3 \eta_k^6 n^6 V_{\textsf{1A}}^2}{\gamma_k} .
 \end{aligned} \end{equation} 
 Choose $\gamma_k = \eta_k^3 n^3 L^3$. 
 Notice that \eqref{item:lr-cond2-m} implies $\eta_0 n L \leq 1$, henceforth $\eta_k \leq \eta_0 \leq \nicefrac{1}{nL}$. This, with the induction hypothesis \eqref{eqn:gradient-norm-bound-m}, implies
 \begin{align*}
   \MoveEqLeft \eta_k^2 n^2 L^2 + \frac{3 \eta_k^4 n^4 C_{\textsf{1A}}^2}{\gamma_k} + \frac{3 \eta_k^4 n^4 D_{\textsf{1A}}^2}{\gamma_k}\norm{\mF \vz^k}^2  \\
   &= \eta_k^2 n^2 L^2 + \frac{3 \eta_k  n  C_{\textsf{1A}}^2}{ L^3} + \frac{3 \eta_k  n  D_{\textsf{1A}}^2}{ L^3}\norm{\mF \vz^k}^2 \\
   &\leq \eta_0^2 n^2 L^2 + \frac{3 \eta_0  n  C_{\textsf{1A}}^2}{ L^3} + \frac{3 \eta_0  n  D_{\textsf{1A}}^2}{ L^3}\norm{\mF \vz^k}^2 \\
   &\leq \eta_0^2 n^2 L^2 + \frac{3 \eta_0  n  C_{\textsf{1A}}^2}{ L^3} + \frac{3 \eta_0  n  D_{\textsf{1A}}^2}{ L^3} \cdot e^{S} L^2 \left( \norm{\vz^{0} - \vz^*}^2 + \frac{6 S V_{\textsf{1A}}^2}{L^6} \right) \\
   &\leq 1 . 
 \end{align*} That is, the left hand side of \eqref{eqn:special-case-descent-lemma-m} becomes nonnegative. 
 Then it is immediate from \eqref{eqn:special-case-descent-lemma-m} that  \begin{align*}
    \norm{\vz^{k+1} - \vz^*}^2 &\leq \left( 1+ \gamma_k \right)\norm{\vz^k - \vz^*}^2 + \frac{3 \eta_k^6 n^6 \left( 1+ \gamma_k \right) V_{\textsf{1A}}^2}{\gamma_k} \\ 
    &\leq \left( 1+ \eta_k^3 n^3 L^3 \right)\norm{\vz^k - \vz^*}^2 + \frac{6 \eta_k^3 n^3 V_{\textsf{1A}}^2}{ L^3}.
 \end{align*} Using \Cref{lem:recurrence} to unravel this recurrence relation, we obtain \begin{align*}
    \norm{\vz^{k+1} - \vz^*}^2 &\leq \left( \prod_{j=0}^k \left( 1+ \eta_j^3 n^3 L^3 \right) \right) \left( \norm{\vz^{0} - \vz^*}^2 + \sum_{j=0}^k \frac{6 \eta_j^3 n^3 V_{\textsf{1A}}^2}{ L^3} \right) \\ 
    &\leq e^{\sum_{j=0}^k \eta_j^3 n^3 L^3}  \left( \norm{\vz^{0} - \vz^*}^2 + \frac{6 V_{\textsf{1A}}^2}{L^6} \sum_{j=0}^k \eta_j^3 n^3 L^3 \right) \\ 
    &\leq e^{S}  \left( \norm{\vz^{0} - \vz^*}^2 + \frac{6 S V_{\textsf{1A}}^2}{L^6} \right)
 \end{align*} which shows that \eqref{eqn:bounded-iterates-m} also holds when $k$ is replaced by $k+1$. This completes the proof.
\end{proof}

\begin{thm}[Formal version of~\Cref{thm:cc-ffa}]
Let $\mF$ be a (star-)monotone operator with a point $\vz^*$ that satisfies $\mF\vz^* = \zero$, and suppose that Assumptions~\ref{asmp:smoothness}~and~\ref{asmp:bounded-variance} hold.
Say that we are using \segffa{}, or any optimization method whose within-epoch error satisfies \eqref{eqn:rk1-ffa} and \eqref{eqn:rk2-ffa}, with $\beta_k = \eta_k = \frac{\eta_0\sqrt[3]{2}\log 2}{(k+2)^{1/3} \log(k+2)}$ and $\alpha_k = \nicefrac{\beta_k}{2}$ for $k = 0, 1, \dots$, where, for $S  \coloneqq \sum_{k=0}^\infty \eta_k^3 n^3 L^3$, the initial stepsize $\eta_0$ is chosen so that \begin{equation} \label{item:lr-cond2} 
               \eta_0^2 n^2 L^2 + \frac{3 \eta_0 n C_{\textsf{1A}}^2}{  L^3 } + \frac{3 \eta_0  n  D_{\textsf{1A}}^2}{ L } \cdot e^{S}  \left( \norm{\vz^{0} - \vz^*}^2 + \frac{6 S V_{\textsf{1A}}^2}{L^6} \right) \leq 1 
         \end{equation} %
for constants $C_\textsf{1A}$, $D_\textsf{1A}$, and $V_\textsf{1A}$ defined in \eqref{eqn:def-c1a}--\eqref{eqn:def-v1a}, and there exists a positive constant $\lambda > 0$ such that \begin{equation}
     \eta_0^2 n^2 L^2 + \frac{\eta_0 n C_{\textsf{2A}} }{ L^3} + \frac{\eta_0 n D_{\textsf{2A}} }{L} \cdot e^{S} \left( \norm{\vz^{0} - \vz^*}^2 + \frac{6 S V_{\textsf{1A}}^2}{L^6} \right) \leq 1-\lambda \label{eqn:appx-gn-is-bounded}
\end{equation} 
for constants $C_\textsf{2A}$, $D_\textsf{2A}$, and $V_\textsf{2A}$ defined in \eqref{eqn:def-c2a}--\eqref{eqn:def-v2a}.
Then for any $K \geq 1$, it holds that \begin{equation} \label{eqn:appx-meta-bound}
    \min_{k= 0, 1, \dots, K} \expt \norm{\mF \vz^k}^2  \leq \frac{(\log (K+3))^2}{(K+3)^{1/3}} \cdot \left( \frac{\norm{\vz^0 - \vz^*}^2 + \frac{3 V_\textsf{2A}}{n L^6}}{ \lambda e^{-3/2} (\sqrt[3]{2}\log 2)^2} \eta_0^2 n^2 \right) . 
\end{equation}
\end{thm}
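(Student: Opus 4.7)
\subsection*{Proof Plan}

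The plan is to combine the descent inequality~\eqref{eqn:descent-of-eg}, the second-moment bound~\eqref{eqn:rk2-ffa} on $\vr^k$, and the uniform boundedness of the iterates guaranteed by \Cref{prop:bounded-iterates-m}, so that a single application of \Cref{lem:cbrt-log-growth-bound} will convert a telescoping sum into the advertised rate. First, I would apply \eqref{eqn:descent-of-eg} with the choice $\gamma_k = \eta_k^3 n^3 L^3$ (mirroring the choice used to bound iterates in \Cref{prop:bounded-iterates-m}), take conditional expectation given $\vz^k$, and substitute the bound
\[
\expt\bigl[\norm{\vr^k}^2\bigm|\vz^k\bigr] \leq \eta_k^6 n^6 C_{\textsf{2A}} \norm{\mF\vz^k}^2 + \eta_k^6 n^6 D_{\textsf{2A}} \norm{\mF\vz^k}^4 + \eta_k^6 n^{5} V_{\textsf{2A}}.
\]
The $\norm{\mF\vz^k}^4$ term is reduced to a multiple of $\norm{\mF\vz^k}^2$ by inserting one factor of the uniform bound $\norm{\mF\vz^k}^2 \leq e^S L^2(\norm{\vz^0-\vz^*}^2 + \tfrac{6SV_{\textsf{1A}}^2}{L^6})$ from \Cref{prop:bounded-iterates-m}. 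After taking total expectation, condition~\eqref{eqn:appx-gn-is-bounded} is precisely what is needed to guarantee that the resulting net coefficient of $\expt\norm{\mF\vz^k}^2$ on the left-hand side is at least $\lambda\,\eta_k^2 n^2$.

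Next, I would handle the $(1+\gamma_k)^{-1}$ factor in front of $\expt\norm{\vz^{k+1}-\vz^*}^2$ by writing $\tfrac{1}{1+\gamma_k} = 1 - \tfrac{\gamma_k}{1+\gamma_k}$ and bounding $\tfrac{\gamma_k}{1+\gamma_k}\expt\norm{\vz^{k+1}-\vz^*}^2 \leq \gamma_k D_0$, where $D_0 \coloneqq e^S(\norm{\vz^0-\vz^*}^2 + \tfrac{6SV_{\textsf{1A}}^2}{L^6})$ is the uniform iterate bound from \Cref{prop:bounded-iterates-m}. This leaves a clean telescoping inequality
\[
\lambda\,\eta_k^2 n^2\, \expt\norm{\mF\vz^k}^2 \leq \expt\norm{\vz^k-\vz^*}^2 - \expt\norm{\vz^{k+1}-\vz^*}^2 + \eta_k^3 n^3 L^3 D_0 + \tfrac{\eta_k^3 n^2 V_{\textsf{2A}}}{L^3}.
\]
Summing from $k=0$ to $K$ telescopes the distance terms and leaves constants involving $\norm{\vz^0-\vz^*}^2$, $D_0 S$, and $\tfrac{n^2 V_{\textsf{2A}}}{L^3}\sum_{k=0}^K \eta_k^3$, all of which are independent of $K$ because $S<\infty$.

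Finally, the minimum of $\expt\norm{\mF\vz^k}^2$ over $k=0,\dots,K$ is bounded by the total sum divided by $\sum_{k=0}^K \eta_k^2 n^2$. With the prescribed schedule $\eta_k = \tfrac{\eta_0 \sqrt[3]{2}\log 2}{(k+2)^{1/3}\log(k+2)}$, this sum equals $\eta_0^2 \,2^{2/3}(\log 2)^2 n^2 \sum_{j=2}^{K+2} j^{-2/3}(\log j)^{-2}$, and \Cref{lem:cbrt-log-growth-bound} lower-bounds the latter by $\tfrac{(K+3)^{1/3}}{(\log(K+3))^2}$, yielding the stated $\tilde{\gO}(K^{-1/3})$ rate. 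The only bookkeeping that is nontrivial is tracking constants to match the factor $e^{-3/2}(\sqrt[3]{2}\log 2)^2$ in the denominator of~\eqref{eqn:appx-meta-bound}, which is a routine consequence of the size bounds $S, D_0$ established under~\eqref{item:lr-cond2} and~\eqref{eqn:appx-gn-is-bounded}.

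The main obstacle I anticipate is the delicate balance in step one: the perturbation $\tfrac{1}{\gamma_k}\expt\norm{\vr^k}^2$ contains both an $\norm{\mF\vz^k}^2$ term (which must be absorbed into the left-hand side via condition~\eqref{eqn:appx-gn-is-bounded}) and a $\norm{\mF\vz^k}^4$ term that would otherwise spoil the argument. Without the uniform bound from \Cref{prop:bounded-iterates-m}, the $\norm{\mF\vz^k}^4$ contribution could never be controlled; conversely, without the choice $\gamma_k \propto \eta_k^3$ matching the cubic error scale of \segffa, the coefficient $\eta_k^4 n^4/\gamma_k$ would blow up and make~\eqref{eqn:appx-gn-is-bounded} unsatisfiable for any $\lambda > 0$. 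Both ingredients are specific to the second-order matching property~\Cref{thm:ffa-cubic-error}, which is why this argument applies to \segffa{} but not to \segrr{} or \segff{}.
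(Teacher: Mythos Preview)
Your plan follows the paper's proof almost exactly: the same choice $\gamma_k=\eta_k^3 n^3 L^3$ in \eqref{eqn:descent-of-eg}, the same use of \eqref{eqn:rk2-ffa} and \Cref{prop:bounded-iterates-m} to reduce $\norm{\mF\vz^k}^4$ to $\norm{\mF\vz^k}^2$, the same role of \eqref{eqn:appx-gn-is-bounded} in securing the $\lambda\eta_k^2 n^2$ coefficient, and the same appeal to \Cref{lem:cbrt-log-growth-bound} at the end.

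The one substantive difference is how you handle the $(1+\gamma_k)^{-1}$ factor. You split $\tfrac{1}{1+\gamma_k}=1-\tfrac{\gamma_k}{1+\gamma_k}$ and absorb the residual $\tfrac{\gamma_k}{1+\gamma_k}\expt\norm{\vz^{k+1}-\vz^*}^2$ using the uniform iterate bound $D_0$ from \Cref{prop:bounded-iterates-m}, then telescope directly. The paper instead multiplies the whole inequality by $(1+\gamma_k)$, obtaining
\[
(1+\gamma_k)\lambda\eta_k^2 n^2\,\expt\norm{\mF\vz^k}^2 \;\le\; (1+\gamma_k)\expt\norm{\vz^k-\vz^*}^2 - \expt\norm{\vz^{k+1}-\vz^*}^2 + \tfrac{2\eta_k^3 n^2 V_{\textsf{2A}}}{L^3},
\]
and then unrolls this via \Cref{lem:recurrence}, which packages all the $(1+\gamma_j)$ factors into a single multiplier $\prod_k(1+\gamma_k)\le e^{S}\le e^{3/2}$. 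Both routes give the same $\tilde{\gO}(K^{-1/3})$ rate, but your version introduces an additive $S D_0$ term in the numerator, and $D_0$ carries a $V_{\textsf{1A}}^2/L^6$ contribution that is \emph{absent} from the stated bound~\eqref{eqn:appx-meta-bound} (whose numerator involves only $V_{\textsf{2A}}$). So your ``routine bookkeeping'' would not reproduce the exact constant as written; the paper's use of \Cref{lem:recurrence} is precisely what keeps the numerator clean.
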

    \begin{proof}
As the sequence of stepsizes $\{\eta_k\}_{k \geq 0}$ is nonincreasing and   \eqref{item:lr-cond2} asserts that $\eta_0 \leq \nicefrac{1}{nL}$, we can use the bounds established in \Cref{thm:rk1-ffa} and \Cref{thm:rk2-ffa}. Also, the premises required for \Cref{prop:bounded-iterates-m} are also satisfied, so the bound \eqref{eqn:gradient-norm-bound-m} holds. 
    
Setting %
$\gamma_k = \eta_k^3 n^3 L^3$ in \eqref{eqn:descent-of-eg} and then taking the conditional expectation given $\vz^k$, with using \eqref{eqn:rk2-ffa} and \eqref{eqn:appx-gn-is-bounded}, we obtain \begin{align*} 
    0 %
    &\leq  \norm{\vz^k - \vz^*}^2  - \frac{1}{1+\gamma_k}\expt\left[ \norm{\vz^{k+1} - \vz^*}^2 \,\middle|\, \vz^k \right] - \eta_k^2 n^2 (1-\eta_k^2 n^2 L^2)\norm{\mF \vz^k}^2   + \frac{1}{\gamma_k} \expt\left[ \norm{\vr^k}^2 \,\middle|\, \vz^k \right] \\ 
    &\leq  \norm{\vz^k - \vz^*}^2  - \frac{1}{1+\gamma_k}\expt\left[ \norm{\vz^{k+1} - \vz^*}^2 \,\middle|\, \vz^k \right] \\ 
    &\phantom{\leq} \qquad - \eta_k^2 n^2 (1-\eta_k^2 n^2 L^2)\norm{\mF \vz^k}^2   + \frac{1}{L^3} \left( \eta_k^{3} n^{3} C_\textsf{2A} \norm{\mF \vz^k}^2 + \eta_k^{3} n^{3} D_\textsf{2A} \norm{ \mF \vz^k}^4 + \eta_k^{3} n^{2} V_\textsf{2A} \right) \\
    &\leq  \norm{\vz^k - \vz^*}^2  - \frac{1}{1+\gamma_k}\expt\left[ \norm{\vz^{k+1} - \vz^*}^2 \,\middle|\, \vz^k \right] \\ 
    &\phantom{\leq} \qquad - \eta_k^2 n^2 \left(1-\eta_k^2 n^2 L^2 - \frac{\eta_k n C_\textsf{2A} }{L^3} - \frac{\eta_k n D_\textsf{2A} }{L^3} \norm{\mF \vz^k}^2 \right)\norm{\mF \vz^k}^2   + \frac{\eta_k^3 n^2 V_\textsf{2A}}{L^3} \notag \\ 
    &\leq  \norm{\vz^k - \vz^*}^2  - \frac{1}{1+\gamma_k}\expt\left[ \norm{\vz^{k+1} - \vz^*}^2 \,\middle|\, \vz^k \right] - \lambda \eta_k^2 n^2 \norm{\mF \vz^k}^2 + \frac{\eta_k^3 n^2 V_\textsf{2A}}{L^3}. 
\end{align*} 
By the law of total expectation, and that $\gamma_k = \eta_k^3 n^3 L^3 < 1$, from the above we get \begin{align*}
    (1+\gamma_k) \lambda \eta_k^2 n^2 \expt \norm{\mF \vz^k}^2 &\leq {(1+\gamma_k)} \expt \norm{\vz^k - \vz^*}^2 - \expt \norm{\vz^{k+1} - \vz^*}^2 + \frac{(1+\gamma_k)\eta_k^3 n^2 V_\textsf{2A}}{L^3} \\ 
    &\leq {(1+\gamma_k)} \expt \norm{\vz^k - \vz^*}^2 - \expt \norm{\vz^{k+1} - \vz^*}^2 + \frac{2 \eta_k^3 n^2 V_\textsf{2A}}{L^3}. 
\end{align*} This recurrence can be unraveled using \Cref{lem:recurrence}, giving us \begin{equation} \label{eqn:appx:meta-bound} \begin{aligned}
    \MoveEqLeft \expt \norm{\vz^{K+1} - \vz^*}^2 + \sum_{k=0}^K (1+\gamma_k) \lambda \eta_j^2 n^2 \expt \norm{\mF \vz^k}^2 \\
    &\leq \left( \prod_{k=0}^{K} (1+\gamma_k) \right) \left( \norm{\vz^0 - \vz^*}^2 + \sum_{k=0}^{K} \frac{2 \eta_k^3 n^2 V_\textsf{2A}}{L^3} \right). 
\end{aligned} \end{equation} 

For the left hand side of \eqref{eqn:appx:meta-bound}, we have
\begin{equation} \label{eqn:segffa-bound-lhs-lower} \begin{aligned}
   \expt \norm{\vz^{K+1} - \vz^*}^2 + \sum_{k=0}^K (1+\gamma_k) \lambda \eta_k^2 n^2 \expt \norm{\mF \vz^k}^2  &\geq  \lambda  \sum_{k=0}^K \eta_k^2 n^2 \expt \norm{\mF \vz^k}^2 \\ 
   &\geq  \lambda  \min_{k= 0, 1, \dots, K} \expt \norm{\mF \vz^k}^2 \ \sum_{k=0}^K \eta_k^2 n^2.
\end{aligned} \end{equation}
From \Cref{lem:cbrt-log-growth-bound}, we know that whenever $K\geq 1$, \begin{align*}
\sum_{k=0}^K \eta_k^2 n^2 &= \eta_0^2 n^2 (\sqrt[3]{2}\log 2)^2\sum_{k=0}^K \frac{1}{(k+2)^{2/3} (\log(k+2))^2}  \\
&\geq \eta_0^2 n^2 (\sqrt[3]{2}\log 2)^2 \cdot \frac{(K+3)^{1/3}}{(\log (K+3))^2}.
\end{align*}  
Meanwhile, as $x \mapsto \frac{2 (\log 2)^3}{(x+2)(\log(x+2))^3}$ is a decreasing function, we have \begin{align*}
\sum_{k=0}^\infty \frac{2 (\log 2)^3}{(k+2) (\log (k+2))^3} &\leq 1 + \frac{2 (\log 2)^3}{3 (\log 3)^3} + \int_1^\infty \frac{2 (\log 2)^3}{(x+2)(\log(x+2))^3} \,\mathrm{d}x \\ 
&\leq 1 + \frac{2 (\log 2)^3}{3 (\log 3)^3} + \frac{(\log 2)^3}{(\log 3)^2}  \quad \leq \frac{3}{2}%
\end{align*}
and thus 
\begin{align*}
S = \sum_{k=0}^\infty \eta_k^3 n^3 L^3 &=  \eta_0^3 n^3 L^3 \sum_{k=0}^\infty \frac{2 (\log 2)^3}{(k+2)  (\log(k+2))^3} \leq \frac{3}{2} \eta_0^3 n^3 L^3 \leq \frac{3}{2}. 
\end{align*} Thus, for the right hand side of \eqref{eqn:appx:meta-bound}, it holds that 
\begin{equation}\label{eqn:segffa-bound-rhs-upper}\begin{aligned}
    \left( \prod_{k=0}^{K} (1+\gamma_k) \right) \left( \norm{\vz^0 - \vz^*}^2 + \sum_{k=0}^{K} \frac{2 \eta_k^3 n^2 V_\textsf{2A}}{L^3} \right) &\leq e^{\sum_{k=0}^K \gamma_k} \left( \norm{\vz^0 - \vz^*}^2 + \sum_{k=0}^{K} \frac{2 \eta_k^3 n^2 V_\textsf{2A}}{L^3} \right) \\
    &\leq e^{S} \left( \norm{\vz^0 - \vz^*}^2 + \frac{2 S  V_\textsf{2A}}{n L^6} \right) \\
    &\leq e^{3/2} \left( \norm{\vz^0 - \vz^*}^2 + \frac{3 V_\textsf{2A}}{n L^6} \right). 
\end{aligned}\end{equation}
Therefore, from \eqref{eqn:appx:meta-bound} we get \[
\lambda \eta_0^2 n^2 (\sqrt[3]{2}\log 2)^2 \cdot \frac{(K+3)^{1/3}}{(\log (K+3))^2} \cdot \min_{k= 0, 1, \dots, K} \expt \norm{\mF \vz^k}^2 \leq e^{3/2} \left( \norm{\vz^0 - \vz^*}^2 + \frac{3 V_\textsf{2A}}{n L^6} \right). 
\]
Simply rearranging the terms gives us the desired inequality.
\end{proof}  

\begingroup 
\begin{rmk} \label{rmk:poly-stepsize-validation}
While $\eta_0$ should be chosen so that both \eqref{item:lr-cond2} and \eqref{eqn:appx-gn-is-bounded} hold, in practice, there is a way to circumvent this complication.  
Notice that in deriving the upper bound \eqref{eqn:segffa-bound-rhs-upper} of the right hand side of \eqref{eqn:appx:meta-bound}, it suffices to have $\eta_k \leq \frac{\eta_0\sqrt[3]{2}\log 2}{(k+2)^{1/3} \log(k+2)}$, and the lower bound \eqref{eqn:segffa-bound-lhs-lower} of the left hand side holds for any $\eta_k \ge 0$. 
In other words, if we have had chosen $\eta_k = \Theta\left(\nicefrac{1}{(k+1)^q}\right)$ for $q > \frac{1}{3}$ so that $S  = \sum_{k=0}^\infty \eta_k^3 n^3 L^3 <\infty$, as long as $\eta_0$ satisfies \eqref{item:lr-cond2} and \eqref{eqn:appx-gn-is-bounded}, we would still have obtained the inequality \begin{equation}\label{eqn:segffa-weaker-stepsize}
        \min_{k= 0, 1, \dots, K} \expt \norm{\mF \vz^k}^2 \leq \frac{e^{S}}{\lambda n^2\sum_{k=0}^K \eta_k^2}  \left( \norm{\vz^0 - \vz^*}^2 + \frac{2 S  V_\textsf{2A}}{n L^6} \right). 
\end{equation} In particular, if we additionally assume that $q < \frac{1}{2}$ then \[
\sum_{k = 0}^K {\eta_k^2} \asymp \sum_{k = 1}^K \frac{1}{k^{2q}}  \asymp K^{1-2q},
\] so from \eqref{eqn:segffa-weaker-stepsize} we would have obtained the convergence rate \begin{equation}\label{eqn:segffa-weaker-rate}
        \min_{k= 0, 1, \dots, K} \expt \norm{\mF \vz^k}^2 = \gO\left(\frac{1}{K^{1-2q}}\right). 
\end{equation} 

We now claim that, if one accepts a slight sacrifice of the convergence rate from $\tilde{\gO} (\nicefrac{1}{K^{1/3}})$ to $\gO\left(\nicefrac{1}{K^{1-2q}}\right)$ for $\nicefrac{1}{3} < q < \nicefrac{1}{2}$, one can simply choose the stepsizes as $\eta_k = \nicefrac{\eta_{00}}{(k+1)^q}$ for a sufficiently small $\eta_{00}$. To see why this is the case, let us fix $\eta_0$ to be a number that satisfies the inequalities \eqref{item:lr-cond2} and \eqref{eqn:appx-gn-is-bounded}. Then, because $\nicefrac{\eta_{00}}{(k+1)^q} = o\left(\frac{1}{(k+2)^{1/3}\log(k+2)}\right)$, there will exist a nonnegative integer $k_0$ such that $\eta_k \leq \frac{\eta_0\sqrt[3]{2}\log 2}{(k+2)^{1/3} \log(k+2)}$ for all $k \geq k_0$. So, by ignoring the first $k_0$ terms if necessary---that is, considering as if the $k_0$th iteration is the $0$th iteration---it follows from the discussions made above in obtaining \eqref{eqn:segffa-weaker-rate} that we get the rate of convergence $\gO\left(\nicefrac{1}{K^{1-2q}}\right)$. 

This discussion also justifies the choice of stepsizes $\eta_k = \Theta\left(\nicefrac{1}{(1+k/10)^{0.34}}\right)$ used in the experiments for the monotone setting. 
\end{rmk}
\endgroup

\section{Proof of Lower Bounds} \label{appx:lower-bounds-pf}

\subsection{Proof of the Divergence of \segus{}, \segrr{} and \segff{}} \label{ssec:divergence}

We prove the divergence of \segus{}, \segrr{} and \segff{} 
in each proposition below, using the same worst-case problem
for $n=2$. 
These constitute the proof of \Cref{thm:rr-ff-bad}.

\begin{prop}[Part of \Cref{thm:rr-ff-bad}]
\label{thm:segusdiv}
For $n = 2$, there exists a convex-concave minimax problem $f(x,y) = \frac{1}{2} \sum_{i=1}^2 f_i(x,y)$ having a monotone $\mF$,
consisting of $L$-smooth quadratic $f_i$'s satisfying Assumption~\ref{asmp:bounded-variance} with $(\rho, \sigma) = (1,0)$ such that \segus{} diverges in expectation for any choice of stepsizes $\{\alpha_t\}_{t \geq 0}$ and $\{\beta_t\}_{t \geq 0}$. That is, for all $t \geq 0$,
 \begin{equation*}
        \E\left[\norm{\vz_{t+1}}^2 \right] > \E \left[\norm{\vz_t}^2 \right],\quad
        \E\left[\norm{\mF \vz_{t+1}}^2 \right] > \E \left[\norm{\mF \vz_t}^2 \right].
\end{equation*}
\end{prop}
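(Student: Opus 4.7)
The plan is to exhibit an explicit two-dimensional bilinear problem where each component is far from the average in a very symmetric way, and then directly compute $\E[\norm{\vz_{t+1}}^2\mid\vz_t]$ in closed form. Concretely, with $L>0$ I would take
\[
    f_1(x,y) = \tfrac{L}{2}(x+y)^2, \qquad f_2(x,y) = -\tfrac{L}{2}(x-y)^2,
\]
so that $f(x,y) = Lxy$. The saddle-gradient operators are then linear, $\mF_i\vz = A_i\vz$, with
\[
    A_1 = L\begin{pmatrix} 1 & 1 \\ -1 & -1 \end{pmatrix},\qquad A_2 = L\begin{pmatrix} -1 & 1 \\ -1 & 1\end{pmatrix},\qquad \mF\vz = \tfrac{1}{2}(A_1+A_2)\vz = L\begin{pmatrix} 0 & 1\\ -1 & 0\end{pmatrix}\vz.
\]
Verifying the hypotheses is short: $\mF$ is antisymmetric and hence monotone with a zero at $\vz^*=0$; the Hessians of $f_1,f_2$ have spectral norm $2L$, so after rescaling $L$ the smoothness constant is as advertised; and a direct calculation gives $\norm{\mF_i\vz - \mF\vz}^2 = L^2\norm{\vz}^2 = \norm{\mF\vz}^2$ for both $i$, which is exactly Assumption~\ref{asmp:bounded-variance} with $(\rho,\sigma)=(1,0)$.

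The key algebraic observation, which makes the whole argument elementary, is that each $A_i$ is \emph{nilpotent}: a one-line computation shows $A_1^2 = A_2^2 = 0$. Consequently the SEG-US update simplifies dramatically: given $i(t)$,
\[
    \vz_{t+1} = \bigl(I - \beta_t A_{i(t)} + \alpha_t\beta_t A_{i(t)}^2\bigr)\vz_t = (I - \beta_t A_{i(t)})\vz_t,
\]
so the extrapolation stepsize $\alpha_t$ drops out entirely. Taking the conditional expectation over the uniform choice of $i(t)$,
\[
    \E[\norm{\vz_{t+1}}^2\mid \vz_t] = \vz_t^\top\!\Bigl(I - \tfrac{\beta_t}{2}(A_1+A_1^\top + A_2+A_2^\top) + \tfrac{\beta_t^2}{2}(A_1^\top A_1 + A_2^\top A_2)\Bigr)\vz_t.
\]
The symmetric part vanishes because $A_1+A_2$ is antisymmetric, and a direct computation yields $A_1^\top A_1 + A_2^\top A_2 = 4L^2 I$. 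Hence
\[
    \E[\norm{\vz_{t+1}}^2 \mid \vz_t] = (1 + 2\beta_t^2 L^2)\,\norm{\vz_t}^2,
\]
and taking total expectations gives the first strict inequality for every $\beta_t>0$.

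The second inequality is then free: since $\mF = LJ$ is $L$ times a rotation, $\norm{\mF\vz}^2 = L^2\norm{\vz}^2$ identically, so $\E\norm{\mF\vz_{t+1}}^2 = L^2 \cdot (1+2\beta_t^2L^2)\E\norm{\vz_t}^2 > \E\norm{\mF\vz_t}^2$. I do not expect any real obstacle: the entire argument is a deliberate search for a worst-case instance in which (i) each component's noise $\mF_i-\mF$ is orthogonal to $\mF$ at every point and (ii) the squared component operators are nilpotent so that the stabilising ``extragradient'' correction contributes nothing. The only mild subtlety is making sure the bookkeeping for $(\rho,\sigma) = (1,0)$, monotonicity, and smoothness are consistent after the eventual rescaling of $L$; this is handled once and for all by the explicit $f_1,f_2$ above.
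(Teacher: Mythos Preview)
Your proposal is correct and is essentially the same argument as the paper's: in fact your $f_1,f_2$ are, up to swapping indices, exactly twice the paper's components (the paper uses $f_1=-\tfrac{L}{4}(x-y)^2$, $f_2=\tfrac{L}{4}(x+y)^2$), so the saddle-gradient matrices, the nilpotency $A_i^2=0$, and the final identity $\E[\norm{\vz_{t+1}}^2\mid\vz_t]=(1+c\beta_t^2L^2)\norm{\vz_t}^2$ all coincide after your acknowledged rescaling of $L$. The only cosmetic difference is that you organize the conditional-expectation computation via the symmetric/antisymmetric split rather than multiplying out $\mN_i^\top\mN_i$ explicitly.
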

\begin{proof}
    We consider the case of 
    \begin{align*}
        f_1(x,y) &= -\frac{L}{4} x^2 + \frac{L}{2}xy - \frac{L}{4} y^2,\\
        f_2(x,y) &= \frac{L}{4} x^2 + \frac{L}{2}xy + \frac{L}{4} y^2,
    \end{align*}
    which result in a bilinear (and hence convex-concave) objective function
    \begin{equation}
    \label{eqn:segrrdiv-ex}
        f(x,y) = \frac{1}{2} \sum_{i=1}^2 f_i(x,y) = \frac{L}{2} xy.
    \end{equation}
    One can quickly check from the definitions of the component functions $f_1$ and $f_2$ that the corresponding saddle gradient operators are given as
    \begin{align*}
        \mF_1\vz = 
        \underbrace{\begin{bmatrix}
        -L/2 & L/2 \\ -L/2 & L/2
        \end{bmatrix}}_{:= \mA_1}
        \vz,
        \quad
        \mF_2\vz =
        \underbrace{\begin{bmatrix}
        L/2 & L/2 \\ -L/2 & -L/2
        \end{bmatrix}}_{:= \mA_2}
        \vz
        ,\quad
        \mF \vz = \begin{bmatrix} 0 & L/2 \\ -L/2 & 0\end{bmatrix} \vz
    \end{align*}
    where $\vz = (x,y) \in \R^2$.
    From the fact that $\norm{\mA_i} \leq L$ for all $i$'s, we can confirm that $f_i$'s are indeed $L$-smooth.
    As for Assumption~\ref{asmp:bounded-variance}, we can verify that
    \begin{align*}
    \frac{1}{2} \sum_{i=1}^2         \norm{\mF_i \vz - \mF\vz}^2
        =
        \frac{L^2}{4}\norm{\vz}^2
        = 
        \norm {\mF \vz} ^2,
    \end{align*}
    thus proving that our example $f$ indeed satisfies Assumption~\ref{asmp:bounded-variance} with $(\rho, \sigma) = (1,0)$.

    We now proceed to show that for this particular worst-case example $f$, \segus{} 
    diverges in expectation.
    For $t \geq 0$, the $(t+1)$-th iteration of \segus{} starts at $\vz_t$, and the algorithm uniformly chooses an index $i(t)$ from $[n]$. The algorithm then makes an update
    \begin{align*}
    \vw_t &= \vz_t - \alpha_t \mF_{i(t)} \vz_t,  \\
    \vz_{t+1} &= \vz_t - \beta_t \mF_{i(t)} \vw_{t}.
    \end{align*}
    In our worst-case example $f$, the updates can be compactly written as
    \begin{align*}
    \vz_{t+1} &= (\mI - \beta_t \mA_{i(t)} + \alpha_t \beta_t \mA_{i(t)}^2) \vz_{t}.
    \end{align*}
    Since we have $n=2$, 
    the update can be summarized as
    \begin{align*}
    \vz_{t+1} &= 
    \begin{cases}
    (\mI - \beta_t \mA_1 + \alpha_t \beta_t \mA_1^2) \vz_t & \text{ with probability $1/2$, }\\
    (\mI - \beta_t \mA_2 + \alpha_t \beta_t \mA_2^2) \vz_t & \text{ with probability $1/2$. }
    \end{cases}
    \end{align*}    
    By the definition of $\mA_1$ and $\mA_2$
    and using 
    $\mA_1^2 = \mA_2^2 = \vzero$, we can verify that
    \begin{align*}
        \mN_1 \coloneqq \mI - \beta_t \mA_1 + \alpha_t \beta_t \mA_1^2 &= 
        \begin{bmatrix}
            1 + \frac{\beta_t L}{2} & - \frac{\beta_t L}{2}\\
            \frac{\beta_t L}{2} & 1 - \frac{\beta_t L}{2}
        \end{bmatrix},\\
        \mN_2 \coloneqq \mI - \beta_t \mA_2 + \alpha_t \beta_t \mA_2^2 &= 
        \begin{bmatrix}
            1 - \frac{\beta_t L}{2} & - \frac{\beta_t L}{2}\\
            \frac{\beta_t L}{2} & 1 + \frac{\beta_t L}{2}
        \end{bmatrix}.
    \end{align*}
    From this, we notice that the expectation of $\norm{\vz_{t+1}}^2$ conditional on $\vz_t$ reads
    \begin{align*}
        \E \left[\norm{\vz_{t+1}}^2 \,\middle|\, \vz_t \right]
        &=
        \vz_t^\top
        \left (
        \frac{\mN_1^\top \mN_1 + \mN_2^\top \mN_2}{2}
        \right )
        \vz_t.
    \end{align*}
    Working out the calculations, we can check that
    \begin{equation*}
        \frac{\mN_1^\top \mN_1 + \mN_2^\top \mN_2}{2} =
        \begin{bmatrix}
        1 + \frac{\beta_t^2 L^2}{2} & 0\\
        0 & 1 + \frac{\beta_t^2 L^2}{2}
        \end{bmatrix},
    \end{equation*}
    thus resulting in
    \begin{equation*}
        \E \left[\norm{\vz_{t+1}}^2  \,\middle|\, \vz_t \right] = \left ( 1 + \frac{\beta_t^2 L^2}{2} \right ) \norm{\vz_t}^2.
    \end{equation*}
    Since this holds for all $t \geq 0$, \segus{} diverges in expectation, for any positive stepsizes $\{\alpha_t\}_{t \geq 0}$ and $\{\beta_t\}_{t \geq 0}$. The statement on $\norm{\mF \vz_t}$ follows by realizing that $\norm{\mF \vz} = \frac{L}{2} \norm{\vz}$.
\end{proof}

\begin{prop}[Part of \Cref{thm:rr-ff-bad}]
\label{thm:segrrdiv}
For $n = 2$, there exists a convex-concave minimax problem $f(x,y) = \frac{1}{2} \sum_{i=1}^2 f_i(x,y)$ having a monotone $\mF$,
consisting of $L$-smooth quadratic $f_i$'s satisfying Assumption~\ref{asmp:bounded-variance} with $(\rho, \sigma) = (1,0)$ such that \segrr{} diverges in expectation for any choice of stepsizes $\{\alpha_k\}_{k\geq 0}$ and $\{\beta_k\}_{k \geq 0}$. That is, for any $k \geq 0$,
\begin{equation*}
        \E \left[\norm{\vz_0^{k+1}}^2 \right] > \E \left[\norm{\vz_0^k}^2 \right],\quad
        \E \left[\norm{\mF \vz_0^{k+1}}^2 \right] > \E \left[\norm{\mF \vz_0^k}^2 \right].
\end{equation*}
\end{prop}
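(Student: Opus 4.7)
The plan is to reuse the very same worst-case problem $f(x,y)=\tfrac{L}{2}xy$ (with components $f_1,f_2$ whose saddle-gradient matrices $\mA_1,\mA_2$ are given in the proof of \Cref{thm:segusdiv}). That example already satisfies every regularity hypothesis ($L$-smooth quadratic components, monotone $\mF$, \Cref{asmp:bounded-variance} with $(\rho,\sigma)=(1,0)$), so no new construction is needed. The only thing to redo is the expected contraction analysis, because \segrr{} couples the two samples within one epoch rather than drawing them independently.

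The key structural observation is that $\mA_1^2=\mA_2^2=\vzero$, which I would verify in one line. Consequently the within-epoch update matrix collapses to
\[
\mI-\beta_k \mA_{\tau_k(i+1)}+\alpha_k\beta_k \mA_{\tau_k(i+1)}^2
\;=\;\mI-\beta_k \mA_{\tau_k(i+1)}\;=:\;\mN_{\tau_k(i+1)},
\]
so the extrapolation stepsize $\alpha_k$ drops out entirely and $\vz_0^{k+1}$ equals either $\mN_2\mN_1\vz_0^k$ or $\mN_1\mN_2\vz_0^k$, each with probability $1/2$. Hence
\[
\E\!\left[\|\vz_0^{k+1}\|^2\,\big|\,\vz_0^k\right]
=\vz_0^{k\top}\,\mM_k\,\vz_0^k,\qquad
\mM_k:=\tfrac12\bigl((\mN_2\mN_1)^{\!\top}(\mN_2\mN_1)+(\mN_1\mN_2)^{\!\top}(\mN_1\mN_2)\bigr).
\]

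The heart of the proof is then a purely algebraic computation of $\mM_k$. Writing $b:=\beta_k L/2$, a direct expansion of both $2\times 2$ products shows that the off-diagonals of $(\mN_2\mN_1)^{\!\top}(\mN_2\mN_1)$ and $(\mN_1\mN_2)^{\!\top}(\mN_1\mN_2)$ are negatives of each other, and their diagonals are permutations of one another, so averaging yields a scalar multiple of the identity:
\[
\mM_k \;=\;\bigl(1+8b^4\bigr)\mI\;=\;\Bigl(1+\tfrac{\beta_k^{4}L^{4}}{2}\Bigr)\mI.
\]
I expect this computation to be the only real work; the noncommutativity of $\mA_1$ and $\mA_2$ is exactly what produces the strictly positive $\gO(\beta_k^4)$ noise term that prevents cancellation in expectation.

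Once $\mM_k=(1+\beta_k^{4}L^{4}/2)\mI$ is in hand, the conclusion is immediate: for any positive $\beta_k$ and \emph{any} $\alpha_k\ge 0$,
\[
\E\!\left[\|\vz_0^{k+1}\|^2\right]=\Bigl(1+\tfrac{\beta_k^{4}L^{4}}{2}\Bigr)\E\!\left[\|\vz_0^{k}\|^2\right]>\E\!\left[\|\vz_0^{k}\|^2\right],
\]
and since $\mF\vz=\tfrac{L}{2}\begin{bsmallmatrix}0&1\\-1&0\end{bsmallmatrix}\vz$ satisfies $\|\mF\vz\|=\tfrac{L}{2}\|\vz\|$, the inequality for $\|\mF\vz_0^{k}\|^2$ follows by the same constant factor. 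This finishes the proof; the \segff{} case will follow by an analogous epoch-matrix computation with the four-factor product $\mN_{\tau(1)}\mN_{\tau(2)}\mN_{\tau(2)}\mN_{\tau(1)}$ in place of $\mN_{\tau(2)}\mN_{\tau(1)}$.
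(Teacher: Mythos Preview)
Your proposal is correct and essentially identical to the paper's own proof: the paper uses the same worst-case example, defines the epoch matrices $\mM_1,\mM_2$ (your $\mN_1\mN_2$ and $\mN_2\mN_1$, which coincide because $\mA_i^2=\vzero$), and computes $\tfrac12(\mM_1^\top\mM_1+\mM_2^\top\mM_2)=(1+\tfrac{\beta_k^4L^4}{2})\mI$ to conclude divergence. Your explicit observation that $\alpha_k$ drops out due to nilpotence is a nice clarification but does not change the argument.
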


\begin{proof}
    The proof uses the same example as \Cref{thm:segusdiv}, outlined in \eqref{eqn:segrrdiv-ex}.
    We show that for this particular worst-case example $f$, \segrr{} 
    diverges in expectation.
    For $k \geq 0$, the $(k+1)$-th epoch of \segrr{} starts at $\vz_0^k$, and the algorithm randomly chooses a permutation $\tau_k: [n] \to [n]$. The algorithm then goes through a series of updates
    \begin{align*}
    \vw_{i}^k &= \vz_{i}^k - \alpha_k \mF_{\tau_k(i+1)} \vz_{i}^k,  \\
    \vz_{i+1}^{k} &= \vz_{i}^k - \beta_k \mF_{\tau_k(i+1)} \vw_{i}^k,
    \end{align*}
    for $i = 0, \dots, n-1$.
    In our worst-case example $f$, the updates can be compactly written as
    \begin{align*}
    \vz_{i+1}^{k} &= (\mI - \beta_k \mA_{\tau_k(i+1)} + \alpha_k \beta_k \mA_{\tau_k(i+1)}^2) \vz_{i}^k.
    \end{align*}
    Since we have $n=2$ and there are only two possible permutations, 
    the updates over an epoch can be summarized as
    \begin{align*}
    \vz_0^{k+1} = \vz_n^k &= 
    \begin{cases}
    (\mI - \beta_k \mA_1 + \alpha_k \beta_k \mA_1^2)(\mI - \beta_k \mA_2 + \alpha_k \beta_k \mA_2^2) \vz_0^k & \text{ with probability $1/2$, }\\
    (\mI - \beta_k \mA_2 + \alpha_k \beta_k \mA_2^2)(\mI - \beta_k \mA_1 + \alpha_k \beta_k \mA_1^2) \vz_0^k & \text{ with probability $1/2$. }\\
    \end{cases}
    \end{align*}
    By the definition of $\mA_1$ and $\mA_2$
    and using 
    $\mA_1^2 = \mA_2^2 = \vzero$, we can verify that
    \begin{align}
    \label{eqn:segrrdiv-m1}
        \mM_1 \coloneqq (\mI - \beta_k \mA_1 + \alpha_k \beta_k \mA_1^2)(\mI - \beta_k \mA_2 + \alpha_k \beta_k \mA_2^2) &\!=\! 
        \begin{bmatrix}
            1-\frac{\beta_k^2 L^2}{2} & \!\!\!- \beta_k L - \frac{\beta_k^2 L^2}{2}\\
            \beta_k L - \frac{\beta_k^2 L^2}{2} & \!\!\! 1 - \frac{\beta_k^2 L^2}{2}
        \end{bmatrix},\\
    \label{eqn:segrrdiv-m2}
        \mM_2 \coloneqq (\mI - \beta_k \mA_2 + \alpha_k \beta_k \mA_2^2)(\mI - \beta_k \mA_1 + \alpha_k \beta_k \mA_1^2) &\!=\! 
        \begin{bmatrix}
            1-\frac{\beta_k^2 L^2}{2} & \!\!\!- \beta_k L + \frac{\beta_k^2 L^2}{2}\\
            \beta_k L + \frac{\beta_k^2 L^2}{2} & \!\!\!1 - \frac{\beta_k^2 L^2}{2}
        \end{bmatrix}.
    \end{align}
    From this, we notice that the expectation of $\norm{\vz_0^{k+1}}^2$ conditional on $\vz_0^k$ reads
    \begin{align*}
        \E \left[\norm{\vz_0^{k+1}}^2 \,\middle|\, \vz_0^k \right]
        &=
        (\vz_0^k)^\top
        \left (
        \frac{\mM_1^\top \mM_1 + \mM_2^\top \mM_2}{2}
        \right )
        \vz_0^k.
    \end{align*}
    Working out the calculations, we can check that
    \begin{equation*}
        \frac{\mM_1^\top \mM_1 + \mM_2^\top \mM_2}{2} =
        \begin{bmatrix}
        1 + \frac{\beta_k^4 L^4}{2} & 0\\
        0 & 1 + \frac{\beta_k^4 L^4}{2}
        \end{bmatrix},
    \end{equation*}
    thus resulting in
    \begin{equation*}
        \E \left[\norm{\vz_0^{k+1}}^2  \,\middle|\, \vz_0^k \right] = \left ( 1 + \frac{\beta_k^4 L^4}{2} \right ) \norm{\vz_0^k}^2.
    \end{equation*}
    Since this holds for all $k \geq 0$, \segrr{} diverges in expectation, for any positive stepsizes $\{\alpha_k\}_{k \geq 0}$ and $\{\beta_k\}_{k \geq 0}$. The statement on $\norm{\mF \vz_0^k}$ follows by realizing that $\norm{\mF \vz} = \frac{L}{2}\norm{\vz}$.
\end{proof}

\begin{prop}[Part of \Cref{thm:rr-ff-bad}]
\label{thm:segffdiv}
For $n = 2$, there exists a convex-concave minimax problem $f(x,y) = \frac{1}{2} \sum_{i=1}^2 f_i(x,y)$ having a monotone $\mF$,
consisting of $L$-smooth quadratic $f_i$'s satisfying Assumption~\ref{asmp:bounded-variance} with $(\rho, \sigma) = (1,0)$ such that \segff{} diverges in expectation for any positive stepsizes $\{\alpha_k\}_{k\geq 0}$ and $\{\beta_k\}_{k\geq 0}$. That is, for any $k \geq 0$,
\begin{equation*}
        \E \left[\norm{\vz_0^{k+1}}^2 \right] > \E \left[\norm{\vz_0^k}^2 \right],\quad
        \E \left[\norm{\mF \vz_0^{k+1}}^2 \right] > \E \left[\norm{\mF \vz_0^k}^2 \right].
\end{equation*}
\end{prop}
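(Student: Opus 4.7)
The plan is to mirror the arguments in \Cref{thm:segusdiv,thm:segrrdiv}, using the identical worst-case bilinear problem $f(x,y) = \frac{L}{2}xy$ from~\eqref{eqn:segrrdiv-ex}. All required structural properties ($L$-smoothness of each $f_i$, monotonicity of $\mF$, and \Cref{asmp:bounded-variance} with $(\rho,\sigma)=(1,0)$) are already verified there, so only the divergence computation for \segff{} needs to be carried out. The key simplification I would exploit is $\mA_1^2 = \mA_2^2 = \vzero$, which collapses the single-step update factor $\mI - \beta_k \mA_i + \alpha_k \beta_k \mA_i^2$ to $\mN_i \coloneqq \mI - \beta_k \mA_i$, so the extrapolation stepsize $\alpha_k$ drops out of the analysis entirely.

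First I would unfold a single epoch of \segff{} for $n=2$. Under a uniformly random permutation $\tau_k \in \gS_2$, flip-flop sampling applies operators in the order $\mF_{\tau_k(1)}, \mF_{\tau_k(2)}, \mF_{\tau_k(2)}, \mF_{\tau_k(1)}$, so
\[
\vz_0^{k+1} =
\begin{cases} \widetilde{\mM}_1\, \vz_0^k \coloneqq \mN_1 \mN_2 \mN_2 \mN_1\, \vz_0^k & \text{with probability } \tfrac12, \\ \widetilde{\mM}_2\, \vz_0^k \coloneqq \mN_2 \mN_1 \mN_1 \mN_2\, \vz_0^k & \text{with probability } \tfrac12. \end{cases}
\]
Using $\mN_i^2 = \mI - 2\beta_k \mA_i$ together with the two easily verified identities $\mA_1 \mA_2 \mA_1 = -L^2 \mA_1$ and $\mA_2 \mA_1 \mA_2 = -L^2 \mA_2$ (both a consequence of the rank-one structure of $\mA_1$ and $\mA_2$), each $\widetilde{\mM}_i$ reduces to an explicit affine combination of $\mI$, $\mA_1$, $\mA_2$ with polynomial-in-$\beta_k L$ coefficients.

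The heart of the proof is then to compute $\tfrac12(\widetilde{\mM}_1^\top \widetilde{\mM}_1 + \widetilde{\mM}_2^\top \widetilde{\mM}_2)$ and show it equals $(1 + C\beta_k^6 L^6)\,\mI$ for some constant $C > 0$. Two cancellations specific to this example carry the argument: the linear-in-$\mA$ contributions $(\mA_1 + \mA_1^\top) + (\mA_2 + \mA_2^\top)$ sum to $\vzero$, while the quadratic contributions $\mA_1^\top \mA_1 + \mA_2^\top \mA_2$ collapse to a positive multiple of $\mI$. After collecting surviving powers of $\beta_k L$, the $\gO(\beta_k^2)$ and $\gO(\beta_k^4)$ terms cancel out and only a sextic-order growth term survives. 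Combined with the law of total expectation, this yields
\[
\E\norm{\vz_0^{k+1}}^2 = (1 + C\beta_k^6 L^6)\, \E\norm{\vz_0^k}^2 > \E\norm{\vz_0^k}^2
\]
for any positive $\beta_k$; the statement on $\norm{\mF \vz_0^k}$ follows immediately from $\norm{\mF \vz} = \tfrac{L}{2}\norm{\vz}$.

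The main obstacle is purely the bookkeeping in expanding the four-factor product of $2\times 2$ matrices and tracking the surviving cross terms, which is more involved than the one- and two-factor computations for \segus{} and \segrr{}. The conceptual payoff is telling: the quadratic-in-$\beta_k$ divergence of \segus{} and the quartic-in-$\beta_k$ divergence of \segrr{} are cancelled exactly under flip-flop sampling, consistent with the second-order matching heuristic of \Cref{sec:why-segffa}; what remains is only a sextic-order divergence, confirming that flip-flop sampling alone is insufficient and motivating the additional anchoring step incorporated into \segffa{}.
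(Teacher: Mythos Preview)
Your proposal is correct and follows essentially the same route as the paper. The paper streamlines the bookkeeping by reusing the two-step matrices $\mM_1 = \mN_1\mN_2$ and $\mM_2 = \mN_2\mN_1$ from the \segrr{} proof, writing the epoch update as $\mM_2\mM_1$ or $\mM_1\mM_2$ and then computing $\tfrac12(\mM_1^\top\mM_2^\top\mM_2\mM_1 + \mM_2^\top\mM_1^\top\mM_1\mM_2) = (1+2\beta_k^6 L^6)\mI$ directly, so your constant is $C=2$; your alternative of expanding via the identities $\mA_1\mA_2\mA_1 = -L^2\mA_1$ and $\mA_2\mA_1\mA_2 = -L^2\mA_2$ is a valid path to the same endpoint.
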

\begin{proof}
    The proof uses the same example as \Cref{thm:segusdiv}, outlined in \eqref{eqn:segrrdiv-ex}. 
    We prove that \segff{} also diverges for this $f$. For $k \geq 0$, the $(k+1)$-th epoch of \segff{} starts at $\vz_0^k$, and the algorithm randomly chooses a permutation $\tau_k: [n] \to [n]$, as in the case of \segrr{}. The algorithm then goes through a series of updates for $i = 0, \dots, n-1$:
    \begin{align*}
    \vw_{i}^k &= \vz_{i}^k - \alpha_k \mF_{\tau_k(i+1)} \vz_{i}^k,  \\
    \vz_{i+1}^{k} &= \vz_{i}^k - \beta_k \mF_{\tau_k(i+1)} \vw_{i}^k,
    \end{align*}
    which are the same as \segrr{}; but then, it performs another series of $n$ updates, in the reverse order. For $i = n, \dots, 2n-1$,
    \begin{align*}
    \vw_{i}^k &= \vz_{i}^k - \alpha_k \mF_{\tau_k(2n-i)} \vz_{i}^k,  \\
    \vz_{i+1}^{k} &= \vz_{i}^k - \beta_k \mF_{\tau_k(2n-i)} \vw_{i}^k.
    \end{align*}
    Using the definition of $\mM_1$ and $\mM_2$ from \eqref{eqn:segrrdiv-m1} and \eqref{eqn:segrrdiv-m2}, one can verify that the $2n = 4$ updates over an epoch of \segff{} can be summarized as
    \begin{align*}
    \vz_0^{k+1} = \vz_{2n}^k &= 
    \begin{cases}
    \mM_2 \mM_1 \vz_0^k & \text{ with probability $1/2$, }\\
    \mM_1 \mM_2 \vz_0^k & \text{ with probability $1/2$. }\\
    \end{cases}
    \end{align*}
    From this, we notice that the expectation of $\norm{\vz_0^{k+1}}^2$ conditional on $\vz_0^k$ reads
    \begin{align*}
        \E \left[\norm{\vz_0^{k+1}}^2 \,\middle|\, \vz_0^k \right]
        &=
        (\vz_0^k)^\top
        \left (
        \frac{\mM_1^\top \mM_2^\top \mM_2 \mM_1 + \mM_2^\top \mM_1^\top \mM_1 \mM_2}{2}
        \right )
        \vz_0^k.
    \end{align*}
    Working out the calculations, we can check that
    \begin{equation*}
        \frac{\mM_1^\top \mM_2^\top \mM_2 \mM_1 + \mM_2^\top \mM_1^\top \mM_1 \mM_2}{2} =
        \begin{bmatrix}
        1 + 2\beta_k^6 L^6 & 0\\
        0 & 1 + 2\beta_k^6 L^6
        \end{bmatrix},
    \end{equation*}
    thus resulting in
    \begin{equation*}
        \E \left[\norm{\vz_0^{k+1}}^2  \,\middle|\, \vz_0^k \right] = \left ( 1 + 2\beta_k^6 L^6 \right ) \norm{\vz_0^k}^2.
    \end{equation*}
    Since this holds for all $k \geq 0$, \segff{} diverges in expectation, for any positive stepsizes $\{\alpha_k\}_{k\geq 0}$ and $\{\beta_k\}_{k\geq 0}$. The statement on $\norm{\mF \vz_0^k}$ follows by realizing that $\norm{\mF \vz} = \frac{L}{2}\norm{\vz}$.
\end{proof}

\subsection{Proof of Limited Convergence of \segus{} in Monotone Cases} \label{sec:appx:segus-monotone-lb}
In \cite{Diak21,Gorb22SEG}, the authors study the same-sample and independent-sample versions of \segus{}, with step sizes $\alpha_t$ and $\beta_t$ satisfying a constant ratio: $\beta_t = \gamma \alpha_t$ for $\gamma \in (0,1]$. While the authors show convergence in the monotone $\mF$ case, there is one important limitation shared by the existing analyses. 
In order to achieve $\min_{t=0,\dots,T} \E [\| \mF \vz_t \|^2] \leq \epsilon^2$ 
for an arbitrarily chosen $\epsilon$,
the algorithms must repeat the same query to the stochastic gradient oracle $b = \gO(\frac{1}{\epsilon^2})$ times at every iteration to reduce the gradient variance from $\sigma^2$ to $\frac{\sigma^2}{b}$. In other words, the convergence bounds for \segus{} in the monotone case have an additive term $\gO(\sigma^2)$ that cannot be reduced to zero by proper choices of stepsizes. Below, we prove that such a $\sigma^2$ term is in fact inevitable for any choices of stepsizes, if the ratio $\gamma$ is fixed constant. This indicates that \segus{} considered in the existing results can never converge all the way to the optimum if $b = 1$ is maintained throughout training. In contrast, our \segffa{} shows convergence in the monotone case even when $b = 1$. 
\begin{thm}
\label{thm:seguscclb}
For $n = 2$, there exists a convex-concave minimax problem $f(x,y) = \frac{1}{2} \sum_{i=1}^2 f_i(x,y)$ having a monotone $\mF$,
consisting of $L$-smooth quadratic $f_i$'s satisfying Assumption~\ref{asmp:bounded-variance} with $(\rho, \sigma) = (0,\sigma)$ such that \segus{} with any positive stepsizes $\{\alpha_t\}_{t\geq 0}$ and $\{\beta_t\}_{t\geq 0}$ satisfying $\beta_t = \gamma \alpha_t$ for $\gamma > 0$ cannot converge beyond a certain fixed constant $\Omega(\sigma^2)$. 
More concretely, for any $t \geq 0$,
\begin{equation*}
        \E \left[\norm{\mF \vz_t}^2 \right] 
        \ge \min \left \{ 
        \norm{\mF \vz_0}^2,
        \frac{\gamma \sigma^2}{2}
        \right \}
\end{equation*}
regardless of the stepsizes.
This holds for both \textbf{same-sample} and \textbf{independent-sample} \segus{}.
\end{thm}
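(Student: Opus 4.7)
The plan is to engineer a bilinear objective perturbed by antipodal constant offsets on its two components, reduce the stochastic iteration to a one-dimensional affine recursion on $\E[\|\mF \vz_t\|^2]$, and then show inductively that the recursion can never decay below the claimed noise floor.

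\emph{Construction.} Take
\[
f_i(x,y) \;=\; \tfrac{L}{2}\, xy \;+\; \vt_i^{\top}\!\begin{bmatrix} x\\ y\end{bmatrix},\qquad i=1,2,
\]
with $\vt_1+\vt_2=\vzero$ and $\|\vt_1\|=\|\vt_2\|=\sigma$ (e.g.\@ $\vt_1=(\sigma,0)$ and $\vt_2=-\vt_1$). Each $f_i$ is quadratic, convex-concave, and $L$-smooth; the saddle gradients read $\mF_i \vz = \mA\vz+\vt_i$ with the skew-symmetric $\mA = \tfrac{L}{2}\bigl[\begin{smallmatrix}0&1\\-1&0\end{smallmatrix}\bigr]$. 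Hence $\mF\vz=\mA\vz$ is monotone, and a direct computation gives $\tfrac{1}{2}\sum_i \|\mF_i\vz-\mF\vz\|^2=\sigma^2$, confirming Assumption~\ref{asmp:bounded-variance} with $(\rho,\sigma)=(0,\sigma)$.

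\emph{One-dimensional reduction.} Using $\mA^2=-\tfrac{L^2}{4}\mI$ and $\mA^{\top}\mA=\tfrac{L^2}{4}\mI$, the same-sample update unrolls to
\[
\vz_{t+1} \;=\; \mN_t \vz_t \;-\; \beta_t(\mI-\alpha_t\mA)\,\vt_{i(t)},\qquad \mN_t \coloneqq \bigl(1-\tfrac{\alpha_t\beta_t L^2}{4}\bigr)\mI - \beta_t \mA.
\]
Since $\E[\vt_{i(t)}]=\vzero$ and $\langle \mA\vt_{i(t)},\vt_{i(t)}\rangle=0$ by skew-symmetry, squaring and applying $\|\mF\vu\|^2=\tfrac{L^2}{4}\|\vu\|^2$ yields the affine recursion
\[
\E\!\left[\|\mF\vz_{t+1}\|^2 \,\middle|\, \vz_t\right]
\;=\; c_t\,\|\mF\vz_t\|^2 \;+\; d_t,
\]
with $c_t=(1-\tfrac{\alpha_t\beta_t L^2}{4})^2+\tfrac{\beta_t^2 L^2}{4}$ and $d_t=\tfrac{\beta_t^2 L^2 \sigma^2}{4}\bigl(1+\tfrac{\alpha_t^2 L^2}{4}\bigr)$. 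The independent-sample version gives the identical $c_t$ and the identical $d_t$: the two independent noise injections $\alpha_t\beta_t\mA\vt_{i(t)}$ and $-\beta_t\vt_{j(t)}$ decouple, and $\|\mA^2\vt\|^2+\|\mA\vt\|^2$ matches the single squared offset of the same-sample case. Taking total expectation and writing $a_t\coloneqq\E[\|\mF\vz_t\|^2]$ gives the deterministic scalar recursion $a_{t+1}=c_t a_t+d_t$ in both variants.

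\emph{Induction.} Substitute $\beta_t=\gamma\alpha_t$ and set $\tau \coloneqq \min\{\|\mF\vz_0\|^2,\ \gamma\sigma^2/2\}$. The base case $a_0\ge\tau$ is immediate. For the inductive step, if $c_t\ge 1$ then $a_{t+1}\ge a_t\ge\tau$; if $c_t<1$, the step reduces to verifying the algebraic inequality $d_t\ge(1-c_t)\tau$. Expanding and collecting yields
\[
d_t - (1-c_t)\tau \;=\; \frac{\gamma^2 \alpha_t^2 L^2 \sigma^2}{4}\Bigl(\frac{\gamma}{2} + \frac{\alpha_t^2 L^2(2+\gamma)}{8}\Bigr) \;\ge\; 0,
\]
so $a_{t+1}\ge c_t\tau+d_t\ge\tau$, completing the induction and proving the theorem for both same-sample and independent-sample SEG-US.

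\emph{Anticipated difficulty.} The main subtlety is the algebraic verification of the bound $d_t\ge(1-c_t)\tau$: the inequality becomes tight as $\gamma\to 0^{+}$, so one must retain the quartic $\tfrac{\alpha_t^4 L^4}{16}$ correction inside $(1-c_t)$ together with the $(1+\tfrac{\alpha_t^2 L^2}{4})$ factor inside $d_t$ to obtain nonnegative slack; a casual linearization in $\alpha_t$ loses precisely the factor $\tfrac{2}{2-\gamma}\ge 1$ that guarantees the desired floor. Beyond this, one should also check that the case $\gamma>2$ (where $c_t\ge 1$ automatically for small enough $\alpha_t$) is handled cleanly by the easy branch of the induction.
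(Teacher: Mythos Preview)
Your proposal is correct and follows essentially the same route as the paper's proof: a bilinear objective with antipodal constant offsets, an exact one-step recursion on $\E[\|\vz_{t+1}\|^2]$ (equivalently $\E[\|\mF\vz_{t+1}\|^2]$), and a case split on whether the contraction factor exceeds~$1$. The paper uses the coupling $f_i(x,y)=Lxy\pm(\nu x-\nu y)$ with $\nu^2=\sigma^2/2$ (so $\|\mF\vz\|=L\|\vz\|$) and phrases the dichotomy as ``either $\E[\|\vz_{t+1}\|^2]\ge\|\vz_t\|^2$, or $\E[\|\vz_{t+1}\|^2]\ge\gamma\sigma^2/(2L^2)$,'' but this is algebraically equivalent to your induction with $\tau=\min\{\|\mF\vz_0\|^2,\gamma\sigma^2/2\}$ and the verification $d_t\ge(1-c_t)\tau$. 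One small caveat: for general $\vt_i$ your formula $\mF_i\vz=\mA\vz+\vt_i$ is off by a sign on the $y$-component (it should be $(t_{i,1},-t_{i,2})$), but with your specific choice $\vt_1=(\sigma,0)$ this is immaterial.
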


\begin{proof}
We consider the case of 
    \begin{align*}
        f_1(x,y) &= Lxy + \nu x - \nu y,\\
        f_2(x,y) &= Lxy - \nu x + \nu y,
    \end{align*}
    which results in a bilinear (and hence convex-concave) objective function
    \[ %
        f(x,y) = \frac{1}{2} \sum_{i=1}^2 f_i(x,y) = Lxy.
    \] %
    One can quickly check from the definitions of the component functions $f_1$ and $f_2$ that the corresponding saddle gradient operators are given as
    \begin{align*}
        \mF_1\vz = 
        \underbrace{\begin{bmatrix}
        0 & L \\ -L & 0
        \end{bmatrix}}_{:= \mA}
        \vz + \nu \vone,
        \quad
        \mF_2\vz =
        \mA 
        \vz
        - \nu \vone,
        \quad
        \mF \vz = \mA \vz,
    \end{align*}
    where $\vz = (x,y) \in \R^2$.
    From the fact that $\norm{\mA} \leq L$, we can confirm that $f_i$'s are indeed $L$-smooth.
    As for Assumption~\ref{asmp:bounded-variance}, we can verify that
    \begin{align*}
    \frac{1}{2} \sum_{i=1}^2 \norm{\mF_i \vz - \mF\vz}^2
        =
    \frac{1}{2} \sum_{i=1}^2 2 \nu^2
        = 
        2 \nu^2.
    \end{align*}
    Therefore, by choosing $\nu^2 = \frac{\sigma^2}{2}$, our example $f$ indeed satisfies Assumption~\ref{asmp:bounded-variance} with $(\rho, \sigma) = (0,\sigma)$.

    The proof is outlined as follows. For the example constructed above, we will calculate the $\E [\norm{\vz_{t+1}}^2]$ and show that the expectation is identical for both same-sample and independent sample versions of \segus{}. We will then show that the update on the expected squared distance to equilibrium $\E [\norm{\vz_{t+1}}^2]$ for given $\vz_t$ can only belong to two categories: either $\norm{\vz_t}^2 \leq \E [\norm{\vz_{t+1}}^2]$ (expected squared distance increases) or $\norm{\vz_t}^2 \geq \E [\norm{\vz_{t+1}}^2] \geq \frac{\gamma\sigma^2}{2L}$ (expected squared distance shrinks but is bounded from below by a constant). Since the two cases hold for any $t \geq 0$ and any choices of $\alpha_t$ and $\beta_t = \gamma \alpha_t$, we show that the ``convergence'' can happen only up to a neighborhood of equilibrium.

    At iteration $t$, \segus{} samples component indices $i(t), j(t) \in \{1, 2\}$ for its extrapolation step and update step, respectively. In the independent-sample version $i(t)$ and $j(t)$ are independently sampled from $\textup{Unif}(\{1,2\})$, and in the same-sample version $i(t)$ is sampled uniformly at random and $j(t)$ is set to be equal to $i(t)$.
    With the indices sampled as above, \segus{} then makes an update
    \begin{align*}
        \vw_t &= \vz_t - \alpha_t \mF_{i(t)} \vz_t,  \\
        \vz_{t+1} &= \vz_t - \beta_t \mF_{j(t)} \vw_{t}.
    \end{align*}
    In our worst-case example $f$, the updates can be written as
    \begin{align*}
    \vw_t 
    &= \vz_t - \alpha_t \mA \vz_t - s_{i(t)} \alpha_t \nu \vone\\
    &= (\mI - \alpha_t \mA) \vz_t - s_{i(t)} \alpha_t \nu \vone\\
    \vz_{t+1} &= \vz_t - \beta_t \mA \vw_t - s_{j(t)} \beta_t \nu \vone\\
    &= (\mI - \beta_t \mA + \alpha_t \beta_t \mA^2)\vz_t + s_{i(t)} \alpha_t \beta_t \nu \mA \vone - s_{j(t)} \beta_t \nu \vone,
    \end{align*}
    where we defined $s_1 = +1$ and $s_2 = -1$ for simplicity of notation.

    We now calculate the expected value of $\norm{\vz_{t+1}}^2$. 
    \begin{align*}
        \norm{\vz_{t+1}}^2
        =&
        \norm{(\mI - \beta_t \mA + \alpha_t \beta_t \mA^2)\vz_t}^2
        +
        \alpha_t^2 \beta_t^2 \nu^2 \norm{\mA \vone}^2
        + 
        \beta_t^2 \nu^2 \norm{\vone}^2\\
        &+
        2 s_{i(t)} \alpha_t \beta_t \nu \langle (\mI - \beta_t \mA + \alpha_t \beta_t \mA^2)\vz_t, \mA \vone \rangle
        -
        2 s_{j(t)} \beta_t \nu \langle (\mI - \beta_t \mA + \alpha_t \beta_t \mA^2)\vz_t, \vone \rangle\\
        &-
        2 s_{i(t)} s_{j(t)} \alpha_t \beta_t^2 \nu^2 \langle \mA \vone, \vone \rangle.
    \end{align*}
    For the independent-sample case, since $s_{i(t)}$ and $s_{j(t)}$ are independent mean-zero random variables,
    \begin{align}
    \label{eq:seguscclb-1}
        \E_{i(t),j(t)} [\norm{\vz_{t+1}}^2]
        =
        \norm{(\mI - \beta_t \mA + \alpha_t \beta_t \mA^2)\vz_t}^2
        +
        \alpha_t^2 \beta_t^2 \nu^2 \norm{\mA \vone}^2
        + 
        \beta_t^2 \nu^2 \norm{\vone}^2.
    \end{align}
    In the same-sample case, $s_{i(t)} = s_{j(t)}$ is a mean-zero random variable, so
    \begin{align*}
        \E_{i(t)} [\norm{\vz_{t+1}}^2]
        =
        \norm{(\mI - \beta_t \mA + \alpha_t \beta_t \mA^2)\vz_t}^2
        +
        \alpha_t^2 \beta_t^2 \nu^2 \norm{\mA \vone}^2
        + 
        \beta_t^2 \nu^2 \norm{\vone}^2
        -
        2 \alpha_t \beta_t^2 \nu^2 \langle \mA \vone, \vone \rangle,
    \end{align*}
    but once we realize that $\langle \mA \vone, \vone \rangle = 0$, the expectation becomes identical to \eqref{eq:seguscclb-1}; hence, the rest of the analysis is the same for the two versions.
    
    We now expand and arrange the RHS of \eqref{eq:seguscclb-1}. 
    It is easy to check that
    \begin{align*}
        (\mI - \beta_t \mA + \alpha_t \beta_t \mA^2) \vz_t
        =
        \begin{bmatrix}
            1 - \alpha_t \beta_t L^2 & - \beta_t L\\
            \beta_t L & 1 - \alpha_t \beta_t L^2
        \end{bmatrix}
        \begin{bmatrix}
            x_t\\
            y_t
        \end{bmatrix}
        =
        \begin{bmatrix}
            (1-\alpha_t \beta_t L^2) x_t - \beta_t L y_t\\
            \beta_t L x_t + (1-\alpha_t \beta_t L^2) y_t
        \end{bmatrix}
    \end{align*}
    and hence
    \begin{align*}
    \norm{(\mI - \beta_t \mA + \alpha_t \beta_t \mA^2) \vz_t}^2
    &= 
    \left ( (1-\alpha_t \beta_t L^2)^2 + \beta_t^2 L^2 \right ) \norm{\vz_t}^2\\
    &=
    \left ( 1 - 2\alpha_t \beta_t L^2 + \beta_t^2 L^2 (1+\alpha_t^2 L^2)\right ) \norm{\vz_t}^2.
    \end{align*}
    From this, we get
    \begin{align*}
        \E [\norm{\vz_{t+1}}^2]
        &=
        \norm{\vz_{t}}^2 
        -
        \left ( 2\alpha_t \beta_t L^2 - \beta_t^2 L^2 (1+\alpha_t^2 L^2)\right ) \norm{\vz_{t}}^2 
        +
        2 \alpha_t^2 \beta_t^2 L^2 \nu^2
        + 
        2 \beta_t^2 \nu^2\\
        &=
        \norm{\vz_{t}}^2 
        -
        \left ( 2\alpha_t \beta_t L^2 - \beta_t^2 L^2 (1+\alpha_t^2 L^2)\right ) \norm{\vz_{t}}^2 
        + \beta_t^2 \sigma^2 (1+\alpha_t^2 L^2),
    \end{align*}
    where we used the choice $\nu_2 = \frac{\sigma^2}{2}$ as above.

    The rest of the proof proceeds as follows: we show that, regardless of $t \geq 0$ and the choices of $\alpha_t$ and $\beta_t = \gamma \alpha_t$, the expected value of $\norm{\vz_{t+1}}^2$ given $\vz_t$ can be categorized into only two cases: 
    \begin{enumerate}
        \item $\norm{\vz_t}^2 \leq \E [\norm{\vz_{t+1}}^2]$. That is, the iterate moves away from the equilibrium in expectation.
        \item $\norm{\vz_t}^2 \geq \E [\norm{\vz_{t+1}}^2] \geq \frac{\gamma \sigma^2}{2L^2}$. That is, the expected squared distance shrinks but is lower bounded by a certain constant independent of the stepsizes.
    \end{enumerate}
    Showing this immediately finishes the proof, because there is no way that any $\E[\norm{\vz_t}^2]$ can get smaller than $\min \{ \norm{\vz_0}^2, \frac{\gamma \sigma^2}{2L^2} \}$, and $\norm{\mF \vz} = L\norm{\vz}$ for our example $f$.

    The remaining proof is simple, by noticing that $\norm{\vz_t}^2 \leq \E [\norm{\vz_{t+1}}^2]$ is equivalent to
    \begin{equation}
    \label{eq:seguscclb-2}
        \left ( 2\alpha_t \beta_t L^2 - \beta_t^2 L^2 (1+\alpha_t^2 L^2)\right ) \norm{\vz_{t}}^2 
        \leq \beta_t^2 \sigma^2 (1+\alpha_t^2 L^2).
    \end{equation}
    Hence, if $\alpha_t$, $\beta_t$, and $\vz_t$ satisfies \eqref{eq:seguscclb-2}, we belong to the first category. Otherwise, we are in the second category, for which we need to additionally show $\E [\norm{\vz_{t+1}}^2] \geq \frac{\gamma \sigma^2}{2L^2}$.
    When the inequality \eqref{eq:seguscclb-2} is satisfied with the opposite sign, we must have $ 2\alpha_t \beta_t L^2 - \beta_t^2 L^2 (1+\alpha_t^2 L^2) > 0$ and
    \begin{align*}
        \norm{\vz_t}^2 &\geq \frac{\beta_t^2 \sigma^2 (1+\alpha_t^2 L^2)}{2\alpha_t \beta_t L^2 - \beta_t^2 L^2 (1+\alpha_t^2 L^2)}. %
    \end{align*}
    Also, notice that
    \begin{align*}
        2\alpha_t \beta_t L^2 - \beta_t^2 L^2 (1+\alpha_t^2 L^2) = 1-\left ( (1-\alpha_t \beta_t L^2)^2 + \beta_t^2 L^2 \right ) < 1.
    \end{align*}
    Using $2\alpha_t \beta_t L^2 - \beta_t^2 L^2 (1+\alpha_t^2 L^2) \in (0,1)$ and substituting the lower bound on $\norm{\vz_t}^2$ into the update equation, we find that
    \begin{align*}
        \E [\norm{\vz_{t+1}}^2]
        &=
        \norm{\vz_{t}}^2 
        -
        \left ( 2\alpha_t \beta_t L^2 - \beta_t^2 L^2 (1+\alpha_t^2 L^2)\right ) \norm{\vz_{t}}^2 
        + \beta_t^2 \sigma^2 (1+\alpha_t^2 L^2)\\
        &\geq
        \frac{\beta_t^2 \sigma^2 (1+\alpha_t^2 L^2)}{2\alpha_t \beta_t L^2 - \beta_t^2 L^2 (1+\alpha_t^2 L^2)}
        =
        \frac{1}{L^2 \left ( \frac{2\alpha_t}{\beta_t \sigma^2 (1+\alpha_t^2 L^2)} - 1\right )}
    \end{align*}
    Lastly, substituting $\beta_t = \gamma \alpha_t$ into the RHS gives
    \begin{align*}
        \E [\norm{\vz_{t+1}}^2] 
        \geq 
        \frac{1}{L^2 \left ( \frac{2}{\gamma \sigma^2 (1+\alpha_t^2 L^2)} - 1\right )}
        \geq
        \frac{\gamma \sigma^2}{2 L^2}.
    \end{align*}
    This finishes the proof.
\end{proof}

\begin{rmk}
    We remark that, while \Cref{thm:seguscclb} successfully shows that \segus{} as studied in \cite{Diak21,Gorb22SEG} cannot converge to an optimal point unless the batch size is increased every iteration, it does not contradict the (almost sure) convergence result of independent-sample SEG by \citet{Hsie20}. Indeed, in \citep{Hsie20}, the stepsizes $\{\alpha_t\}_{t\geq 0}$ and $\{\beta_t\}_{t\geq 0}$ are chosen so that they decay to $0$ with a \emph{different} rate and hence the corresponding ratio $\gamma$ approaches $0$, while \Cref{thm:seguscclb} considers the case where $\alpha_t$ and $\beta_t$ differ by a constant factor $\gamma$. 
\end{rmk} 

\subsection{Proof of \sgdarr{} and \segrr{} Lower Bounds} \label{sec:appx:lower-bounds}
\begin{thm}
\label{thm:segrrscsclb}
    Suppose $n \geq 2$ and $L, \mu > 0$ satisfies $L/\mu \geq 2$. 
    There exists a $\mu$-strongly-convex-strongly-concave minimax problem $f(\vz) = \frac{1}{n} \sum_{i=1}^n f_i(\vz)$ consisting of $L$-smooth quadratic $f_i$'s satisfying Assumption~\ref{asmp:bounded-variance} with $(\rho, \sigma) = (0, \sigma)$ and initialization $\vz_0^0$ such that \segrr{} with any constant stepsize $\alpha_k = \alpha > 0$, $\beta_k = \beta > 0$ satisfies
    \begin{equation*}
        \E \left[\norm{\vz^K_0 - \vz^*}^2 \right] 
        = 
        \begin{cases}
        \Omega \left ( 
        \frac{\sigma^2}{L \mu n K}
        \right )    & \text{ if }K \leq L/\mu,\\
        \Omega \left ( 
        \frac{L\sigma^2}{\mu^3 n K^3}
        \right )    & \text{ if }K > L/\mu.\\
        \end{cases}
    \end{equation*}
    where $\vz^*$ is the unique equilibrium point of $f$. For a similar choice of problem $f$ (this time with $(\rho, \sigma) = (1,\sigma)$), \sgdarr{} with any constant stepsize $\alpha_k = \alpha > 0$ satisfies
    \begin{equation*}
        \E \left[\norm{\vz^K_0 - \vz^*}^2 \right] 
        = 
        \begin{cases}
        \Omega \left ( 
        \frac{\sigma^2}{L \mu n K}
        \right )    & \text{ if }K \leq L/\mu,\\
        \Omega \left ( 
        \frac{\sigma^2}{\mu^2n^2K^2}+\frac{L\sigma^2}{\mu^3nK^3}
        \right )    & \text{ if }K > L/\mu.\\
        \end{cases}
    \end{equation*}
\end{thm}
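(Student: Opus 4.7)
The plan is to construct explicit worst-case quadratic minimax problems for which the iterates admit closed-form tracking, and then optimize over the stepsize to obtain the stated lower bounds. We consider separable quadratic examples of the form $f_i(\vx,\vy) = \frac{1}{2}\vz^\top \mA \vz + \inprod{\vs_i}{\vz}$ with $\vz = (\vx,\vy)$ and block-diagonal $\mA$ whose $\vx$-block is positive and $\vy$-block is negative definite, with eigenvalues spanning $[\mu, L]$; the noise vectors $\vs_i$ are chosen to satisfy $\sum_i \vs_i = \zero$ and $\frac{1}{n}\sum_i \norm{\vs_i}^2 = \sigma^2$. Then $\mF_i \vz = \mA\vz + \vs_i$ and $\mF \vz = \mA\vz$, so the unique equilibrium is $\vz^* = \zero$, $\mF$ is $\mu$-strongly monotone, each $f_i$ is $L$-smooth, and \Cref{asmp:bounded-variance} holds with $(\rho,\sigma) = (0, \sigma)$ as required for \segrr{} (with a trivial modification producing the $(1,\sigma)$ instance used for \sgdarr{}).

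For this affine problem, one epoch of \sgdarr{} collapses to the linear recursion $\vz_{i+1} = (\id - \alpha\mA)\vz_i - \alpha\vs_{\tau_k(i+1)}$, and \segrr{} becomes $\vz_{i+1} = (\id - \beta\mA + \alpha\beta\mA^2)\vz_i - \beta(\id - \alpha\mA)\vs_{\tau_k(i+1)}$. In both cases, unrolling across an epoch yields $\vz_0^{k+1} = \mR^n \vz_0^k + \vw_k$ with a deterministic contraction matrix $\mR$ and a permutation-dependent noise $\vw_k = -\sum_{i=0}^{n-1} \mR^{n-1-i} \mC \vs_{\tau_k(i+1)}$ for a method-dependent matrix $\mC$. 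Because the noises are zero-mean and independent across epochs, we obtain the exact scalar recurrence $\expt \norm{\vz_0^{k+1}}^2 = \lambda^{2n}\expt\norm{\vz_0^k}^2 + \expt\norm{\vw_k}^2$ on each eigencomponent, where $\lambda$ is the corresponding eigenvalue of $\mR$.

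The core combinatorial step is to evaluate $\expt_{\tau_k}\norm{\vw_k}^2$ sharply. Using $\sum_i \vs_i = \zero$, a direct calculation gives $\expt_\tau[\vs_{\tau(i+1)}^\top \vs_{\tau(j+1)}] = -\sigma^2/(n-1)$ for $i\neq j$ and $\sigma^2$ for $i=j$; substituting yields, for the scalar eigencomponent with contraction rate $\lambda$, the exact formula
\begin{align*}
\expt \norm{\vw_k}^2 \;=\; \frac{c^2\sigma^2}{n-1}\left[n\sum_{i=0}^{n-1}\lambda^{2i} - \Bigl(\sum_{i=0}^{n-1}\lambda^{i}\Bigr)^{\!2}\right],
\end{align*}
with $c = \alpha$ for \sgdarr{} and $c = \beta(1-\alpha\mu)$ for \segrr{}. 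A Taylor expansion in the small quantity $1-\lambda$ shows the bracket equals $\Theta((1-\lambda)^2 n^4)$, so that $\expt\norm{\vw_k}^2 = \Theta(c^2(1-\lambda)^2 \sigma^2 n^3)$. Unrolling the epoch recurrence and lower bounding the geometric sum $\sum_{j=0}^{K-1}\lambda^{2nj}$ produces an explicit lower bound for $\expt\norm{\vz_0^K - \vz^*}^2$ in terms of $\lambda$, $n$, $K$, $\sigma$, and the problem parameters.

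Finally, the stated rates follow by minimizing the resulting lower bound over admissible stepsizes. In the short-time regime $K \leq L/\mu$, the smoothness bound $\alpha \leq O(1/L)$ limits the achievable contraction; balancing the residual exponential term against the noise accumulation yields $\Omega(\sigma^2/(L\mu nK))$. In the long-time regime $K > L/\mu$, the optimal stepsize drives $(1-\lambda)nK \asymp 1$, at which point the accumulated noise is of order $\sigma^2/(\mu^2 nK^3)$, sharpened to $L\sigma^2/(\mu^3 nK^3)$ by choosing the noisy direction to align with the $L$-eigenvalue of $\mA$. The main obstacle is the tight bookkeeping of the shuffling-induced noise correlations: a naive bound that ignores the $\sum_i \vs_i = \zero$ constraint would only yield $\Theta(c^2\sigma^2 n)$, producing merely the weaker $\Omega(1/(nK^2))$ rate; the correct $\Theta((1-\lambda)^2 n^3)$ scaling is precisely what transmits through the geometric sum to produce the critical $K^3$ denominator that establishes a provable gap from \segffa{}'s $\tilde{\gO}(1/(nK^4))$ rate.
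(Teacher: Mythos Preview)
Your SEG-RR plan is essentially the paper's approach in different clothing. The paper uses the same affine construction (shared quadratic part, $\pm\sigma$ linear terms summing to zero) and the same epoch-level linear recursion; where you derive the exact second moment $\frac{c^2\sigma^2}{n-1}\bigl[n\sum\lambda^{2i}-(\sum\lambda^i)^2\bigr]$ and Taylor-expand, the paper invokes Lemma~1 of Safran--Shamir, which is precisely a lower bound on that same quantity. Two points where your sketch needs more care than you indicate: (i) your Taylor scaling $\Theta((1-\lambda)^2 n^4)$ is only valid for $(1-\lambda)n\ll 1$; when $\beta\gtrsim 1/(nL)$ you need the other branch $\Theta(1/(1-\lambda))$, which the paper handles as a separate case, and (ii) you must explicitly cover \emph{all} stepsizes, not just the ``optimal'' one. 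The paper uses three coordinates with curvatures $L$, $L/2$, and $\mu$ and four stepsize regimes: $\alpha>1/L$ forces divergence on the $L$-coordinate; $\beta\le 1/(\mu nK)$ gives slow contraction on the $\mu$-coordinate; and the remaining two regimes give the noise floor on the $L/2$-coordinate. Your phrase ``the smoothness bound $\alpha\le O(1/L)$ limits the achievable contraction'' hides this structure, and putting the noise on the $L$-eigenvalue (rather than $L/2$) would let $(1-\alpha L)$ kill the bound near $\alpha=1/L$.

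For \sgdarr{} there is a genuine gap. Your construction has a \emph{shared} Hessian $\mA$ across all components; this is exactly what makes $\mR$ deterministic and lets your linear-recursion analysis go through. But with a shared Hessian the noise formula gives only the $\Omega\bigl(L\sigma^2/(\mu^3 nK^3)\bigr)$ term---you will \emph{not} obtain the additional $\Omega\bigl(\sigma^2/(\mu^2 n^2K^2)\bigr)$ term in the statement. That term comes from a coordinate whose curvature \emph{differs across components} (in the paper, curvature $L$ for half the $f_i$ and $0$ for the rest), which is precisely what forces $(\rho,\sigma)=(1,\sigma)$ rather than $(0,\sigma)$. So your ``trivial modification producing the $(1,\sigma)$ instance'' is not trivial: heterogeneous Hessians make the per-epoch map permutation-dependent, your factorization $\vz_0^{k+1}=\mR^n\vz_0^k+\vw_k$ with deterministic $\mR$ fails, and the whole second-moment computation must be redone. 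The paper sidesteps this by directly invoking Theorem~2 of Safran--Shamir for \sgdrr{} on the $\vx$-block; you would need either to cite that result or to reproduce its (substantially different) analysis.
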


\begin{rmk}
In \Cref{thm:segrrscsclb}, we adopt techniques from the existing lower bounds for $\sgdrr$ to prove lower bounds for the minimax algorithms $\sgdarr$ and \segrr{}.
In the literature, there are two types of lower bounds for $\sgdrr$ when $K \gtrsim L/\mu$: namely, $\Omega (\frac{1}{n^2K^2}+\frac{1}{nK^3})$ bounds for strongly convex \emph{quadratic} functions \citep{Safr20,Safr21} and $\Omega (\frac{1}{nK^2})$ bounds for strongly convex \emph{non-quadratic} functions \citep{Rajp20,Yun22,Cha23}. 
Upper bounds that match the lower bounds in $n$ and $K$ are also known, indicating that $\sgdrr$ is one of the rare examples of minimization algorithms whose tight convergence rates for quadratic vs.\@ non-quadratic functions differ, within the narrow scope of strongly convex and smooth functions. 
While it is tempting to aim for a tighter $\Omega (\frac{1}{nK^2})$ lower bound for our algorithms of interest, we note that the existing $\Omega (\frac{1}{nK^2})$ bounds for $\sgdrr$ are proven for piecewise-quadratic functions whose Hessian is \emph{discontinuous}. 
Since the discontinuous Hessian violates our Assumption~\ref{asmp:smoothness}, we instead adhere to the quadratic case to prove lower bounds $\Omega (\frac{1}{nK^3})$ for both $\sgdarr$ and \segrr{} (when $K\geq L/\mu$).
These bounds may not be the tightest possible (since they are restricted to quadratics), but they still suffice to demonstrate that $\segffa$ is provably superior to both $\sgdarr$ and \segrr{}.
\end{rmk}

\subsubsection{Existing Lower Bound for \sgdrr{}}
For the proof of lower bounds for $\sgdarr$ and \segrr{}, we utilize the results and techniques from the lower bounds proven for $\sgdrr$; thus, it would be profitable to summarize the existing result.

In case of $\sgdrr$, it is known from Theorem~2 of \citet{Safr21} that there exists a minimization problem $g(\vx)$ such that $\sgdrr$ satisfies a lower bound of $\Omega( \frac{1}{n^2K^2} + \frac{1}{nK^3})$ for large enough values of $K$. We rewrite the theorem in a version in accordance with our notation and assumptions:
\begin{thm}[Theorem~2 of \citet{Safr21}]
\label{thm:sgdrrlb}
For any $n \geq 2$ and $L, \mu > 0$ satisfying $L/\mu \geq 2$, there exists a $\mu$-strongly convex minimization problem $g(\vx) = \frac{1}{n} \sum_{i=1}^n g_i(\vx)$ consisting of $L$-smooth quadratic $g_i$'s satisfying Assumption~\ref{asmp:bounded-variance} with $(\rho, \sigma) = (1,\sigma)$ such that $\sgdrr$ using any constant stepsize $\alpha_k = \alpha > 0$ satisfies
\begin{equation*}
    \E \left[\norm{\vx^K_0 - \vx^*}^2 \right]
    = \Omega \left ( 
    \frac{\sigma^2}{L \mu n K} \cdot \min \left \{ 1, \frac{L}{\mu n K} + \frac{L^2}{\mu^2 K^2} \right \}
    \right ).
\end{equation*}
\end{thm}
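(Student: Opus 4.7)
The plan is to reduce both minimax lower bounds to the \sgdrr{} minimization lower bound of Theorem~\ref{thm:sgdrrlb}, via a product-type construction that decouples the $x$- and $y$-dynamics. Given a $\mu$-strongly convex quadratic worst-case $g(x) = \tfrac{1}{n}\sum_i g_i(x)$ with minimizer $x^*_g$ (from Theorem~\ref{thm:sgdrrlb} for the \sgdarr{} case, or a tailored variant for \segrr{} below), I consider
\[
f(x,y) \coloneqq g(x) - \tfrac{\mu}{2}\|y\|^2, \qquad f_i(x,y) \coloneqq g_i(x) - \tfrac{\mu}{2}\|y\|^2,
\]
which is $\mu$-SCSC with $L$-smooth quadratic $f_i$'s, saddle point $\vz^*=(x^*_g,\zero)$, and $\mF_i(x,y) = (\nabla g_i(x),\,\mu y)$. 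All randomness lives in the $x$-coordinate, and since $\|\mF\vz\|\ge\|\nabla g(x)\|$, the variance constants of the $g_i$'s carry over to the $f_i$'s with identical $(\rho,\sigma)$.

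For \sgdarr{} on $f$, the $x$-iterates run \sgdrr{} on $g$ verbatim, while $y^k_{i+1} = (1-\mu\alpha)\,y^k_i$ is deterministic; initializing $y^0_0 = \zero$ forces $y^K_0 = \zero$ for every realization, so $\E\bigl[\|\vz^K_0 - \vz^*\|^2\bigr] = \E\bigl[\|x^K_0 - x^*_g\|^2\bigr]$. Applying Theorem~\ref{thm:sgdrrlb} (which already matches the hypothesis $(\rho,\sigma)=(1,\sigma)$) and expanding its inner $\min\{1,\,L/(\mu nK)+L^2/(\mu^2K^2)\}$ into its two regimes yields the claimed \sgdarr{} bounds immediately.

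For \segrr{} the assumption $(\rho,\sigma)=(0,\sigma)$ forces uniform variance, so I take $g_i(x) = \tfrac12 x^\top A x - b_i^\top x$ with a \emph{common} $\mu$-strongly convex, $L$-smooth Hessian $A$ and shifts $b_i$ with $\tfrac1n\sum_i b_i = \zero$, patterned after Safran-Shamir's construction. Under the same product reduction, the $x$-iterates run \segrr{} on $g$, and a direct algebraic calculation gives
\[
x^k_{i+1} \;=\; \bigl(I - \beta A + \alpha\beta A^2\bigr)\,x^k_i \;+\; \beta(I - \alpha A)\,b_{\tau_k(i+1)},
\]
which is formally a \sgdrr{} iteration on the same quadratic with iteration matrix $I - \beta A + \alpha\beta A^2$ in place of $I - \alpha A$ and with effective shifts $(I-\alpha A)\,b_i$ in place of $b_i$. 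Both modifications are small perturbations that preserve the $\mu/L$ spectral spread and the zero-mean shift structure; replaying the Safran-Shamir bias-accumulation analysis on this perturbed linear recurrence then yields the $\Omega(\sigma^2/(L\mu nK))$ noise floor for $K\le L/\mu$ and the $\Omega(L\sigma^2/(\mu^3 n K^3))$ shuffling bound for $K>L/\mu$.

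\emph{Main obstacle.} The delicate step is verifying that the modified linear recurrence above retains the Safran-Shamir bias mechanism, since the original proof is tailored to the $I-\alpha A$ iteration matrix and shifts $\alpha b_i$. Concretely, I will need (i) the per-epoch iteration matrix to still have principal eigenvalue $1 - \Theta(\mu\beta n)$, and (ii) the per-epoch permutation-dependent noise term to retain the $\Theta(\alpha^3)$-scale second moment that drives the $1/K^3$ floor. Both reduce to continuous-perturbation estimates valid for the small step sizes where the lower bound is binding, so the rates transfer with only constants changing, absorbed in $\Omega(\cdot)$. An alternative robustness route, if the perturbation analysis becomes unwieldy, is to slightly tilt $A$ into block-diagonal form so that different permutations produce genuinely different epoch iteration matrices, which restores the full stochasticity exploited by Safran-Shamir while still keeping $\rho=0$.
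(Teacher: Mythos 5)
You have proved the wrong statement. The statement in question is \Cref{thm:sgdrrlb} itself, i.e.\ the lower bound for \sgdrr{} on a strongly convex \emph{minimization} problem, which is Theorem~2 of \citet{Safr21}; in the paper this is an imported result whose ``proof'' is simply the citation (the paper only restates it in its own notation and records the worst-case components $g_i$ in \eqref{eqn:sgdrrlb_gi}). Your proposal instead \emph{assumes} \Cref{thm:sgdrrlb} in its very first sentence and uses it as a black box to derive the minimax lower bounds for \sgdarr{} and \segrr{} --- that is, you have sketched a proof of \Cref{thm:segrrscsclb}, not of the stated theorem. Nothing in your argument establishes the $\Omega\bigl(\tfrac{\sigma^2}{L\mu nK}\min\{1,\tfrac{L}{\mu nK}+\tfrac{L^2}{\mu^2K^2}\}\bigr)$ bound for \sgdrr{}; to actually prove the statement you would need to reproduce the Safran--Shamir argument (the construction \eqref{eqn:sgdrrlb_gi} together with the variance lower bound on the permutation-dependent sum, in the spirit of \Cref{lem:safranlemma1}, and the accompanying stepsize case analysis), none of which appears in your write-up.

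For what it is worth, the material you did write tracks the paper's proof of \Cref{thm:segrrscsclb} reasonably closely in its first half: the product construction $f_i(\vx,y)=g_i(\vx)-\tfrac{\mu}{2}y^2$ and the observation that the $\vx$-iterates of \sgdarr{} coincide with \sgdrr{} on $g$ is exactly what the paper does. Your \segrr{} half, however, stops at an acknowledged obstacle (``replaying the Safran--Shamir bias-accumulation analysis on the perturbed recurrence'') precisely where the real work lies; the paper resolves this by choosing an explicit diagonal worst case, computing the epoch map in closed form, invoking Lemma~1 of \citet{Safr20} to lower bound $\E[\Phi^2]$, and splitting into four stepsize regimes. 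So even read as a proof of \Cref{thm:segrrscsclb}, the proposal is incomplete; read as a proof of the actual statement, it is not responsive.
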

The statement is equivalent to saying that for $\sgdrr$ with constant stepsize $\alpha > 0$, the bound $\Omega ( \frac{\sigma^2}{L \mu n K})$ holds for $K \lesssim L/\mu$ and $\Omega (\frac{\sigma^2}{\mu^2n^2K^2}+\frac{L\sigma^2}{\mu^3nK^3})$ for $K \gtrsim L/\mu$.

The function $g = \frac{1}{n}\sum_{i=1}^n g_i$ used in the theorem is defined by the following component functions:
\begin{equation}
\label{eqn:sgdrrlb_gi}
    g_i(\vx) = g_i(x_1, x_2, x_3)
    \coloneqq
    \frac{\mu}{2} x_1^2 + \frac{L}{2} x_2^2 + 
    \begin{cases}
        \frac{\sigma}{2} x_2 + \frac{L}{2} x_3^2 + \frac{\sigma}{2} x_3 & i \leq \frac{n}{2},\\
        - \frac{\sigma}{2} x_2 - \frac{\sigma}{2} x_3 & i > \frac{n}{2},
    \end{cases}
\end{equation}
thus making the objective function 
\begin{equation*}
\label{eqn:sgdrrlb_g}
    g(x_1, x_2, x_3) 
    \coloneqq
    \frac{\mu}{2} x_1^2 + \frac{L}{2} x_2^2 + 
    \frac{L}{4} x_3^2.
\end{equation*}
One can notice that the linear terms in $g_i$~\eqref{eqn:sgdrrlb_gi} change signs depending on $i \leq \frac{n}{2}$ or not, and handling these sign flips is the key to the proof of the lower bound.

\subsubsection{Proof of Lower Bound for \sgdarr{}}
For the $\sgdarr$ lower bound, we consider the following minimax optimization problem:
\begin{equation}
\begin{aligned}
    f(\vx,y) &= \frac{1}{n}\sum_{i=1}^n f_i(\vx,y),\text{ where }\vx \in \R^3,~y\in\R,\\
    f_i(\vx,y) &= g_i(\vx) - \frac{\mu}{2}y^2,
\end{aligned} 
\label{eqn:sgdarrlb}
\end{equation}
where $g_i$'s are from \eqref{eqn:sgdrrlb_gi}.
We need to first check if the problem instance satisfies the assumptions listed in the theorem statement. Since $f(\vx,y) = g(\vx) - \frac{\mu}{2}y^2$ and $g$ is a $\mu$-strongly convex function, $f$ is $\mu$-strongly-convex-strongly-concave as claimed. Also, it is easy to check from the definition of $g_i$ that each component $f_i(\vx,y)$ is $L$-smooth quadratic. 

Lastly, to verify Assumption~\ref{asmp:bounded-variance}, we first define $s_1, \dots, s_n$ as $s_i = 1$ for $i \leq \frac{n}{2}$ and $s_i = 0$ for $i > \frac{n}{2}$. Using this notation, The function $g_i$ can be compactly written as the following:
\begin{equation*}
    g_i(x_1, x_2, x_3) = \frac{\mu}{2} x_1^2 + \frac{L}{2} x_2^2 + \frac{\sigma}{2} (2s_i-1) x_2 + \frac{L}{2} s_i x_3^2 +\frac{\sigma}{2} (2s_i-1) x_3.
\end{equation*}
Therefore, the saddle gradient operators $\mF_i$ of $f_i$ and $\mF$ of $f$ evaluate to
\begin{equation*}
\mF_i \vz \coloneqq
\begin{bmatrix}
    \nabla g_i(\vx)\\
    \mu y
\end{bmatrix}
=
\begin{bmatrix}
    \mu x_1\\
    L x_2 + \frac{\sigma}{2} (2s_i - 1)\\
    L s_i x_3 + \frac{\sigma}{2} (2s_i - 1)\\
    \mu y
\end{bmatrix},
\quad
\mF \vz =
\begin{bmatrix}
    \mu x_1\\
    L x_2\\
    \frac{L}{2} x_3\\
    \mu y
\end{bmatrix},
\end{equation*}
which in turn yields
\begin{equation*}
    \norm{\mF_i \vz - \mF \vz}^2 = \frac{\sigma^2}{4} + \left ( \frac{L}{2} x_3 + \frac{\sigma}{2} \right )^2 
    \leq 
    \left ( \frac{L}{2} |x_3| + \sigma \right )^2
    \leq
    \left ( \norm{\mF \vz} + \sigma \right )^2
\end{equation*}
for all $i = 1, \dots, n$. This confirms that the function $f = \frac{1}{n}\sum_{i} f_i$ satisfies Assumption~\ref{asmp:bounded-variance} with $(\rho,\sigma) = (1,\sigma)$.

If we run $\sgdarr$ on this problem, the updates on $\vx$ done by $\sgdarr$ is exactly identical to what $\sgdrr$ would perform for the minimization problem $g(\vx) = \frac{1}{n}\sum_i g_i(\vx)$ with the same choices of random permutations. Therefore, after $K$ epochs of $\sgdarr$, it follows from \Cref{thm:sgdrrlb} that
\begin{equation*}
    \E \left[ \norm{\vz_0^K - \vz^*}^2 \right] \geq 
    \E \left[ \norm{\vx_0^K - \vx^*}^2 \right] 
    =\Omega \left ( 
    \frac{\sigma^2}{L \mu n K} \cdot \min \left \{ 1, \frac{L}{\mu n K} + \frac{L^2}{\mu^2 K^2} \right \}
    \right ),
\end{equation*}
which is in fact a tighter lower bound for $\sgdarr$ than what is stated in \Cref{thm:segrrscsclb}. This finishes the proof.

\subsubsection{Proof of Lower Bound for \segrr{}}
In this subsection, we prove the lower bound for \segrr{}. We will first define a new problem instance $f$ to be used here, and verify that the assumptions in the theorem statement are indeed satisfied by this new $f$. We will then spell out the update equation of \segrr{} for this example, which will serve as a basis for the case analysis that follows: we will divide the choices of stepsizes $\alpha, \beta > 0$ to four regimes and prove a lower bound for each of them. Combining the regimes will result in the desired lower bound.

For \segrr{}, we use a slightly different problem from \eqref{eqn:sgdarrlb}. This time, we consider
\begin{equation}
\begin{aligned}
    f(\vx,y) &= \frac{1}{n}\sum_{i=1}^n f_i(\vx,y),\text{ where }\vx \in \R^2,~y\in\R,\\
    f_i(\vx,y) &= \frac{L}{2} x_1^2 + \frac{L}{4} x_2^2 + \sigma (2s_i-1) x_2 - \frac{\mu}{2} y^2,
\end{aligned} 
\label{eqn:segrrlb}
\end{equation}
where $s_i = 1$ for $i \leq \frac{n}{2}$ and $s_i = 0$ for $i > \frac{n}{2}$, as defined above.

We first check if the problem~\eqref{eqn:segrrlb} satisfies the assumptions in the theorem statement. Since 
\begin{equation*}
    f(\vx,y) = \frac{L}{2} x_1^2 + \frac{L}{4} x_2^2 - \frac{\mu}{2}y^2
\end{equation*}
and $L/2 \geq \mu$ by assumption, $f$ is $\mu$-strongly-convex-strongly-concave. Also, it is straightforward to see that each $f_i$ is an $L$-smooth quadratic function.
It is left to check Assumption~\ref{asmp:bounded-variance}. 
The saddle gradient operators $\mF_i$ of $f_i$ and $\mF$ of $f$ evaluate to
\begin{equation*}
\mF_i \vz =
\begin{bmatrix}
    L x_1\\
    \frac{L}{2} x_2 + \sigma (2s_i - 1)\\
    \mu y
\end{bmatrix},
\quad
\mF \vz =
\begin{bmatrix}
    L x_1\\
    \frac{L}{2} x_2\\
    \mu y
\end{bmatrix},
\end{equation*}
which in turn yields
\begin{equation*}
    \norm{\mF_i \vz - \mF \vz}^2 
    = \sigma^2,
\end{equation*}
for all $i = 1, \dots, n$. This confirms that the function $f = \frac{1}{n}\sum_{i} f_i$ satisfies Assumption~\ref{asmp:bounded-variance} with $(\rho,\sigma) = (0,\sigma)$, as required by the theorem.%

For $k \geq 0$, the $(k+1)$-th epoch of \segrr{} starts at $\vz_0^k = (\vx_0^k, y_0^k)$ and the algorithm chooses a random permutation $\tau_k$. The algorithm then goes through a series of updates
\begin{align*}
\vw_{i}^k &= \vz_{i}^k - \alpha \mF_{\tau_k(i+1)} \vz_{i}^k,  \\
\vz_{i+1}^{k} &= \vz_{i}^k - \beta \mF_{\tau_k(i+1)} \vw_{i}^k,
\end{align*}
for $i = 0, \dots, n-1$.
For our example $f$~\eqref{eqn:segrrlb}, it can be checked that a single iteration by \segrr{} reads
\begin{align*}
\vz_{i+1}^k = 
\begin{bmatrix}
    x^k_{i+1,1}\\
    x^k_{i+1,2}\\
    y^k_{i+1}
\end{bmatrix}
=
\begin{bmatrix}
    (1-\beta L + \alpha \beta L^2) x^k_{i,1}\\
    (1-\frac{\beta L}{2} + \frac{\alpha \beta L^2}{4}) x^k_{i,2} - \beta \sigma (1-\frac{\alpha L}{2}) (2 s_{\tau_k(i+1)}-1)\\
    (1-\beta \mu + \alpha \beta \mu^2) y^k_{i}
\end{bmatrix}.
\end{align*}
Aggregating the \segrr{} updates over an entire epoch ($i = 0, \dots, n-1$) results in
\begin{align*}
    x^{k+1}_{0,1} &= (1 - \beta L + \alpha \beta L^2)^n x^k_{0,1},\\
    x^{k+1}_{0,2} &= \left (1 - \frac{\beta L}{2} + \frac{\alpha \beta L^2}{4} \right )^n x^k_{0,2}
    - \beta \sigma \left ( 1-\frac{\alpha L}{2} \right ) \underbrace{\sum_{i=1}^n (2s_{\tau_k(i)}-1) \left (1 - \frac{\beta L}{2} + \frac{\alpha \beta L^2}{4} \right )^{n-i}}_{=:\,\Phi},\\
    y^{k+1}_0 &= (1 - \beta \mu + \alpha \beta \mu^2)^n y^k_{0}.
\end{align*}
We will now square both sides of these equations above and take expectations over $\tau_k$. In doing so, there is a useful identity:
\begin{align*}
    \E[\Phi] &= \sum_{i=1}^n \E[ 2 s_{\tau_k(i)} - 1 ] \left (1 - \frac{\beta L}{2} + \frac{\alpha \beta L^2}{4} \right )^{n-i} = 0.
\end{align*}
Also, it is worth mentioning that $\tau_k$ is independent of $\vz^k_0 = (x^k_{0,1}, x^k_{0,2}, y^k_0)$. Using these facts, we can arrange the terms to obtain
\begin{align}
    \label{eqn:segrrlb_epoch1}
    (x^{k+1}_{0,1})^2 &= (1 - \beta L + \alpha \beta L^2)^{2n} (x^k_{0,1})^2,\\
    \label{eqn:segrrlb_epoch2}
    \E[(x^{k+1}_{0,2})^2] &= \left (1 - \frac{\beta L}{2} + \frac{\alpha \beta L^2}{4} \right )^{2n} \E[(x^k_{0,2})^2] + \beta^2\sigma^2 \left ( 1-\frac{\alpha L}{2} \right )^2 \E[\Phi^2],\\
    \label{eqn:segrrlb_epoch4}
    (y^{k+1}_0)^2 &= (1 - \beta \mu + \alpha \beta \mu^2)^{2n} (y^k_0)^2.
\end{align}
Based on these three per-epoch update equations above, we now divide the choices of \segrr{} stepsizes $\alpha, \beta > 0$ into the following four cases and handle them separately:
\begin{enumerate}
    \item $\alpha > \frac{1}{L}$, in which case we show that \segrr{} makes $(x^{k+1}_{0,1})^2 > (x^{k}_{0,1})^2$ hold deterministically, so that if we initialize at $x^0_{0,1} = \frac{\sigma}{\sqrt{L\mu}}$ then we have
    \begin{equation*}
        \E \left[\norm{\vz^{K}_0}^2 \right] \geq (x^{K}_{0,1})^2 > (x^0_{0,1})^2 = \frac{\sigma^2}{L\mu}.
    \end{equation*}
    \item $\alpha \leq \frac{1}{L}$ and $\beta \leq \frac{1}{\mu n K}$, in which case we show that \segrr{} initialized at $y^0_0 = \frac{\sigma}{\sqrt{L\mu}}$ suffers 
    \begin{equation*}
        \E \left[\norm{\vz^{K}_0}^2 \right] = \Omega \left ( \frac{\sigma^2}{L\mu} \right ),
    \end{equation*}
    \item $\alpha \leq \frac{1}{L}$ and $\frac{1}{\mu n K} < \beta < \frac{1}{nL}$, in which case we show that \segrr{} initialized at $x^0_{0,2}=0$ suffers
    \begin{equation*}
        \E \left[\norm{\vz^{K}_0}^2 \right] = \Omega \left ( \frac{L \sigma^2}{\mu^3 n K^3} \right ),
    \end{equation*}
    \item $\alpha \leq \frac{1}{L}$, $\beta > \frac{1}{\mu n K}$, and $\beta \geq \frac{1}{nL}$ in which case we show that \segrr{} initialized at $x^0_{0,2} = 0$ suffers
    \begin{equation*}
        \E \left[\norm{\vz^{K}_0}^2 \right] = \Omega \left ( \frac{\sigma^2}{L \mu n K} \right ).
    \end{equation*}
\end{enumerate}
Notice that the third case $\frac{1}{\mu n K} < \beta < \frac{1}{nL}$ only makes sense when $K > L/\mu$; otherwise, the third case just disappears. Hence, for the ``large epoch'' regime where $K > L/\mu$, the third case achieves the minimum error possible, so it holds that
\begin{equation*}
    \E \left[\norm{\vz^{K}_0}^2 \right] = \Omega \left ( \frac{L \sigma^2}{\mu^3 n K^3} \right ).
\end{equation*}
For the ``small epoch'' regime ($K \leq L/\mu$), the third case does not exist and the fourth case achieves the minimum, so
\begin{equation*}
    \E \left[\norm{\vz^{K}_0}^2 \right] = \Omega \left ( \frac{\sigma^2}{L \mu n K} \right ).
\end{equation*}
Combining the two cases yields the desired lower bound in the theorem statement. 
It now remains to carry out the case analysis. 

\paragraph{Case 1: $\alpha > \frac{1}{L}$.}
For this case, we use \eqref{eqn:segrrlb_epoch1} to prove divergence. Notice from $\alpha > \frac{1}{L}$ that
\begin{equation*}
    1-\beta L + \alpha \beta L^2
    = 1 + \beta L (\alpha L - 1) > 1,
\end{equation*}
regardless of $\beta > 0$. Hence, from \eqref{eqn:segrrlb_epoch1}, we get
\begin{equation*}
    \E \left[\norm{\vz^{K}_0}^2 \right] \geq (x^{K}_{0,1})^2 > (x^0_{0,1})^2.
\end{equation*}
If we initialize at $x^0_{0,1} = \frac{\sigma}{\sqrt{L\mu}}$, then this proves
\begin{equation*}
    \E \left[\norm{\vz^{K}_0}^2 \right] \geq \frac{\sigma^2}{L\mu}.
\end{equation*}

\paragraph{Case 2: $\alpha \leq \frac{1}{L}$ and $\beta \leq \frac{1}{\mu n K}$.}
For this case, we employ \eqref{eqn:segrrlb_epoch4} to show that the ``contraction rate'' is too small to make enough ``progress.''
Notice from our stepsizes that
\begin{equation*}
    1-\beta\mu + \alpha \beta \mu^2
    \geq
    1-\beta\mu
    \geq
    1-\frac{1}{nK} \geq 0.
\end{equation*}
Applying this inequality to \eqref{eqn:segrrlb_epoch4}, we have
\begin{align*}
    (y^{k+1}_0)^2 
    \geq 
    \left (1 - \frac{1}{nK} \right )^{2n} (y^k_0)^2,
\end{align*}
which in turn means that the progress over $K$ epoch is bounded from below by
\begin{equation*}
    (y^{K}_0)^2 
    \geq 
    \left (1 - \frac{1}{nK} \right)^{2nK} (y^0_0)^2
    \geq
    \frac{(y^0_0)^2}{16},
\end{equation*}
where we used our assumption that $n \geq 2$ and $K \geq 1$. Hence, if our initialization was given as $y^0_0 = \frac{\sigma}{\sqrt{L\mu}}$, then this proves
\begin{equation*}
    \E \left[\norm{\vz^{K}_0}^2 \right] \geq (y^{K}_0)^2 \geq \frac{(y^0_0)^2}{16} = \Omega \left ( \frac{\sigma^2}{L\mu} \right ).
\end{equation*}

\paragraph{Case 3: $\alpha \leq \frac{1}{L}$ and $\frac{1}{\mu n K} < \beta < \frac{1}{nL}$.}
For stepsizes in this interval, we use \eqref{eqn:segrrlb_epoch2} to derive the desired bound.
Here, it is important to characterize a lower bound on the quantity
\begin{equation*}
    \E[\Phi^2] \coloneqq
    \E \left[ \left ( \sum_{i=1}^n (2s_{\tau_k(i)}-1) \left (1 - \frac{\beta L}{2} + \frac{\alpha \beta L^2}{4} \right )^{n-i} \right )^2\right].
\end{equation*}
To this end, we can use a lemma from \citet{Safr20}, stated below:
\begin{lem}[Lemma 1 of \citet{Safr20}]
\label{lem:safranlemma1}
Let $\pi_1, \dots, \pi_n$ (for even $n$) be a random permutation of $(1, 1, \dots, 1, -1, -1, \dots, -1)$ where both $1$ and $-1$ appear exactly $n/2$ times. Then there is a numerical constant $c > 0$ such that for any $\nu > 0$,
\begin{equation*}
    \E \left[ \left ( \sum_{i=1}^n \pi_i (1-\nu)^{n-i}\right )^2 \right]
    \geq
    c \cdot \min \left \{ 1 + \frac{1}{\nu}, n^3 \nu^2 \right \}.
\end{equation*}
\end{lem}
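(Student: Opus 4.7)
The overall plan is to first turn the expectation into a clean deterministic sum, and then lower bound this sum in two regimes separated by whether $n\nu$ is small or not. The starting point is the observation that $\pi_1,\dots,\pi_n$ is a uniformly random permutation of $n/2$ copies of $+1$ and $n/2$ copies of $-1$, which forces $\sum_i \pi_i = 0$ deterministically and yields
\[
\E[\pi_i]=0,\qquad \E[\pi_i^2]=1,\qquad \E[\pi_i\pi_j]=-\frac{1}{n-1}\ \text{ for } i\ne j,
\]
the last identity being read off from $0 = \E[(\sum_i\pi_i)^2]$. Writing $a_i := (1-\nu)^{n-i}$ and $r := 1-\nu$, inserting these moments into $\E\bigl[\bigl(\sum_i a_i\pi_i\bigr)^2\bigr]$ and using the elementary identity $\sum_{i<j}(r^i-r^j)^2 = n\sum_i r^{2i} - (\sum_i r^i)^2$ gives the exact closed form
\[
\E[S^2] \;=\; \frac{1}{n-1}\sum_{0\le i<j\le n-1}(r^i-r^j)^2,
\]
where $S := \sum_i \pi_i(1-\nu)^{n-i}$. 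The advantage of this pair-sum form is that every term is non-negative, so lower bounds come for free by discarding terms.

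For the \emph{small-$n\nu$ regime} ($n\nu \le 1$, so $r\in[0,1)$), I would keep all pairs. Bernoulli-type inequalities give $r^i \ge (1-\nu)^n \ge 1-n\nu$ for every $i\le n-1$, together with $1-r^{j-i} \ge (j-i)\nu\bigl(1-(j-i)\nu/2\bigr)$, so that $(r^i-r^j)^2 = r^{2i}(1-r^{j-i})^2 \ge c_0\,(j-i)^2\nu^2$ uniformly in $(i,j)$. Using $\sum_{0\le i<j\le n-1}(j-i)^2 = \Theta(n^4)$ and dividing by $n-1$ then yields $\E[S^2] = \Omega(n^3\nu^2)$, which is precisely the second term inside the minimum.

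For the \emph{complementary regime} ($n\nu \ge 1$) I would evaluate the geometric sums explicitly. Setting $p := 1-(1-\nu)^n$, a direct manipulation of $nB-A^2$ with $A=\sum_{k=0}^{n-1}r^k$ and $B=\sum_{k=0}^{n-1}r^{2k}$ yields the compact identity
\[
\E[S^2] \;=\; \frac{p\bigl[\,2n\nu - p\bigl(2+(n-1)\nu\bigr)\bigr]}{(n-1)\,\nu^2(2-\nu)},
\]
valid for $\nu\in(0,2)$, with the point $\nu=2$ handled by a limit (or equivalently by direct evaluation at $r=-1$, giving $\E[S^2]=n^2/(n-1)$). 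When $n\nu\ge 1$ and $\nu\le 1$, $p$ is bounded below by $1-e^{-1}$ and the bracketed factor is $\Theta(n\nu)$, so the formula gives $\E[S^2] = \Omega(1/\nu)$. For $\nu\ge 1$, where the $\min$ in the lemma is at most $2$, I would already lower bound $\E[S^2]$ by the single pair $(0,n-1)$, getting $\E[S^2] \ge (1-r^{n-1})^2/(n-1)$, which is bounded below by a universal positive constant for $n\ge 2$; the regime $\nu\ge 2$ is easier still, since $|r|\ge 1$ then makes the full pair sum grow in $n$.

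The main technical obstacle will be stitching the regimes together with a \emph{single} absolute constant $c$ rather than regime-dependent ones. I would handle this by fixing a transition threshold inside the overlap (e.g.\ $n\nu \in [1/2,\,2]$) and verifying from the closed-form identity above that both regime-specific estimates are continuous and bounded away from zero across it; standard compactness reasoning on the remaining bounded window then produces a uniform constant. A secondary subtlety is that, unlike in the $r\in(0,1)$ case, the Bernoulli-style term-by-term bound in the small-$n\nu$ argument must be justified only when $r>0$, so I would verify that the exact formula above handles the transition through $\nu=1$ cleanly, rather than relying on the term-by-term bound past that point.
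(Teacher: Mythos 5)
The paper does not prove this lemma at all: it is imported verbatim as Lemma~1 of Safran and Shamir (2020), so there is no in-paper proof to compare against. Your blind attempt is therefore a genuine self-contained derivation, and its skeleton is sound: the exchangeable-pair covariance $\E[\pi_i\pi_j]=-\tfrac{1}{n-1}$, the resulting identity $\E[S^2]=\tfrac{1}{n-1}\sum_{0\le i<j\le n-1}(r^i-r^j)^2$ with $r=1-\nu$, and the case split at $n\nu\approx 1$ (which is exactly where the two terms of the $\min$ cross) are all correct, and the small-$n\nu$ regime goes through essentially as you describe, since $r^{2i}\ge(1-1/n)^{2n}\ge 1/16$ and $1-r^{j-i}\ge (j-i)\nu/2$ there. (Minor point: the bound $r^i\ge 1-n\nu$ you quote is vacuous at the boundary $n\nu=1$; use $(1-1/n)^{2n}\ge 1/16$ instead, or shrink the regime to $n\nu\le 1/2$ as you suggest.)

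There are, however, two concrete gaps. First, in the regime $n\nu\ge 1$, $\nu\le 1$, the assertion that the bracket $2n\nu-p\bigl(2+(n-1)\nu\bigr)$ is $\Theta(n\nu)$ is true but is the crux, and the facts you cite ($p\ge 1-e^{-1}$, $p\le 1$) do not establish the lower-bound half near $n\nu=1$: crudely bounding $p\le 1$ gives only $2n\nu-2-(n-1)\nu$, which is negative for $n\nu$ close to $1$. You need something like the monotonicity of $x\mapsto x-2+(2+x)e^{-x}$ on $[1,\infty)$ (with its value $3/e-1>0$ at $x=1$), or a direct count of $\Omega(n^2)$ well-separated pairs each contributing $\Omega(1)$ to the pair sum. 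Second, and more seriously, the single-pair bound for $\nu\ge 1$ gives $\E[S^2]\ge (1-r^{n-1})^2/(n-1)\ge 1/(n-1)$, which is \emph{not} bounded below by a universal constant as $n\to\infty$, whereas the lemma demands $\E[S^2]\ge c(1+1/\nu)\ge c$ there. This step as written fails. The repair is easy from your own closed form: for $r\in(-1,0]$ one has $B=\sum_{k=0}^{n-1}r^{2k}\ge 1$ and $0\le A=\tfrac{1-r^n}{1-r}\le 1$ (using that $n$ is even and $1-r=\nu\ge1$), hence $\E[S^2]=\tfrac{nB-A^2}{n-1}\ge 1$; the case $r\le -1$ follows from the adjacent pairs alone. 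With those two repairs the argument is complete.
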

One can notice that Lemma~\ref{lem:safranlemma1} is directly applicable to $\E[\Phi^2]$, with $\nu \leftarrow \frac{\beta L}{2} - \frac{\alpha\beta L^2}{4}$. Since 
\begin{equation*}
    \nu = \frac{\beta L}{2} - \frac{\alpha\beta L^2}{4} \leq \frac{\beta L}{2} \leq \frac{1}{2n},
\end{equation*}
we have $n^3 \nu^2 \leq \frac{1}{8\nu}$, thereby
\begin{align*}
    \min \left \{ 1 + \frac{1}{\nu}, n^3 \nu^2 \right \}
    \geq \min \left \{ \frac{1}{\nu}, n^3 \nu^2 \right \}
    = n^3 \nu^2.
\end{align*}
Therefore, Lemma~\ref{lem:safranlemma1} gives
\begin{equation}
\label{eqn:segrrlb_phi2lb}
    \E[\Phi^2] 
    \geq c n^3 \left (\frac{\beta L}{2} - \frac{\alpha\beta L^2}{4} \right )^2 
    = \frac{c \beta^2 n^3 L^2}{4} \left ( 1- \frac{\alpha L}{2} \right )^2 
    \geq \frac{c \beta^2 n^3 L^2}{16}, 
\end{equation}
where the last inequality used $\alpha \leq \frac{1}{L}$.
Applying \eqref{eqn:segrrlb_phi2lb} to \eqref{eqn:segrrlb_epoch2} and also using $( 1- \frac{\alpha L}{2} )^2 \geq \frac{1}{4}$,
\begin{align*}
    \E[(x^{k+1}_{0,2})^2] 
    &\geq \left (1 - \frac{\beta L}{2} + \frac{\alpha \beta L^2}{4} \right )^{2n} \E[(x^k_{0,2})^2] +
    \frac{c \beta^4 n^3 L^2 \sigma^2}{64}.
\end{align*}
Unrolling the inequality for $k = 0, \dots, K-1$ gives
\begin{align*}
    \E \left[ (x^{K}_{0,2})^2 \right]
    &\geq
    \left (1 - \frac{\beta L}{2} + \frac{\alpha \beta L^2}{4} \right )^{2nK} (x^0_{0,2})^2
    + 
    \frac{c \beta^4 n^3 L^2 \sigma^2}{64}
    \sum_{j=0}^{K-1}
    \left (1 - \frac{\beta L}{2} + \frac{\alpha \beta L^2}{4} \right )^{2nj} \\
    &=
    \left (1 - \frac{\beta L}{2} + \frac{\alpha \beta L^2}{4} \right )^{2nK} (x^0_{0,2})^2
    + 
    \frac{c \beta^4 n^3 L^2 \sigma^2}{64} \cdot
    \frac{ 1 - \left (1 - \frac{\beta L}{2} + \frac{\alpha \beta L^2}{4} \right )^{2nK} }
    { 1- \left (1 - \frac{\beta L}{2} + \frac{\alpha \beta L^2}{4} \right )^{2n} }.
\end{align*}
Now note that our initialization $x^0_{0,2}$ can be set to zero, which eliminates the need to think about the first term in the RHS. It is now left to bound the second term. First, by the stepsize range $\alpha \leq \frac{1}{L}$, $\beta > \frac{1}{\mu n K}$ and our assumption $L/\mu \geq 2$, we have
\begin{align*}
    \left (1 - \frac{\beta L}{2} + \frac{\alpha \beta L^2}{4} \right )^{2nK} 
    \leq 
    \left (1 - \frac{\beta L}{4}\right )^{2nK} 
    \leq 
    \left (1 - \frac{L}{4\mu n K}\right )^{2nK} 
    \leq
    e^{- \frac{L}{2\mu}}
    \leq
    e^{-1}.
\end{align*}
Next, by Bernoulli's inequality
\begin{align*}
    \left (1 - \frac{\beta L}{2} + \frac{\alpha \beta L^2}{4} \right )^{2n} 
    \geq
    \left (1 - \frac{\beta L}{2} \right )^{2n} 
    \geq
    1 - \beta n L 
    > 0.
\end{align*}
Plugging in the two inequalities to above, we obtain
\begin{align*}
    \E \left[ (x^{K}_{0,2})^2 \right]
    &\geq
    \frac{c \beta^4 n^3 L^2 \sigma^2}{64} \cdot
    \frac{ 1 - \left (1 - \frac{\beta L}{2} + \frac{\alpha \beta L^2}{4} \right )^{2nK} }
    { 1- \left (1 - \frac{\beta L}{2} + \frac{\alpha \beta L^2}{4} \right )^{2n} }\\
    &\geq
    \frac{c \beta^4 n^3 L^2 \sigma^2}{64} \cdot
    \frac{1-e^{-1}}{1 - (1-\beta n L)}
    = c' \beta^3 n^2 L \sigma^2
\end{align*}
for a numerical constant $c' > 0$. Plugging in the lower bound $\beta > \frac{1}{\mu n K}$ yields
\begin{equation*}
    \E \left[\norm{\vz^{K}_0}^2 \right] 
    \geq 
    \E \left[ (x^{K}_{0,2})^2 \right]
    =
    \Omega \left (
    \frac{L \sigma^2}{\mu^3 n K^3}
    \right ).
\end{equation*}

\paragraph{Case 4: $\alpha \leq \frac{1}{L}$, $\beta > \frac{1}{\mu n K}$, and $\beta \geq \frac{1}{nL}$.}
We again use \eqref{eqn:segrrlb_epoch2}. By noticing that the initialization $x^0_{0,2} = 0$, we can unroll \eqref{eqn:segrrlb_epoch2} for $k = 0, \dots, K-1$ to get
\begin{align}
\label{eqn:segrrlb_phi2lb3}
    \E \left[ (x^{K}_{0,2})^2 \right]
    \geq
    \frac{\beta^2 \sigma^2}{4} \E[\Phi^2]
    \sum_{j=0}^{K-1}
    \left (1 - \frac{\beta L}{2} + \frac{\alpha \beta L^2}{4} \right )^{2nj}
    \geq
    \frac{\beta^2 \sigma^2}{4} \E[\Phi^2],
\end{align}
where the last inequality holds regardless of $\beta$ because each summand with $j \geq 1$ is nonnegative. We then invoke Lemma~\ref{lem:safranlemma1} to lower bound $\E[\Phi^2]$, again with $\nu \leftarrow \frac{\beta L}{2} - \frac{\alpha\beta L^2}{4}$. 
Since 
\begin{equation*}
    \nu = \frac{\beta L}{2} - \frac{\alpha\beta L^2}{4} \geq \frac{\beta L}{4} \geq \frac{1}{4n},
\end{equation*}
we have $n^3 \nu^2 \geq \frac{1}{64\nu}$, thereby
\begin{align*}
    \min \left \{ 1 + \frac{1}{\nu}, n^3 \nu^2 \right \}
    \geq \min \left \{ \frac{1}{\nu}, n^3 \nu^2 \right \}
    \geq \frac{1}{64\nu}.
\end{align*}
Therefore, Lemma~\ref{lem:safranlemma1} gives
\begin{equation}
\label{eqn:segrrlb_phi2lb2}
    \E[\Phi^2] 
    \geq \frac{c}{64 \nu} 
    = \frac{c}{32 \beta L}\cdot \frac{1}{1 - \frac{\alpha L}{2}}
    \geq \frac{c}{32 \beta L}.
\end{equation}
Combining \eqref{eqn:segrrlb_phi2lb2} with \eqref{eqn:segrrlb_phi2lb3} gives
\begin{equation*}
    \E \left[ (x^{K}_{0,2})^2 \right]
    \geq
    \frac{c \beta \sigma^2}{128 L}
    =
    \Omega \left ( \frac{\sigma^2}{L \mu n K} \right ),
\end{equation*}
where the last step used $\beta > \frac{1}{\mu n K}$. This finishes the case analysis, hence the proof of \Cref{thm:segrrscsclb}.

\section{Additional Experiments}
\allowdisplaybreaks[0]
\label{appx:experiments}
 
To evaluate our algorithm \segffa{} as well as other baseline algorithms, we conduct numerical experiments on monotone and strongly monotone problems. Specifically, as we have mentioned in \Cref{sec:experiments-shortlist}, we consider random quadratic problems of the form \[ %
\min_{\vx \in \sR^{d_x}} \max_{\vy \in \sR^{d_y}} \; 
\frac{1}{n} 
\sum_{i=1}^{n} \ 
\begin{bmatrix}
    \vx \\ \vy
\end{bmatrix}^\top\! 
\begin{bmatrix}
    \mA_i & \mB_i \\ \mB_i^\top & -\mC_i
\end{bmatrix}
\begin{bmatrix}
    \vx \\ \vy
\end{bmatrix} - \vt_i^\top 
\begin{bmatrix}
    \vx \\ \vy
\end{bmatrix}.
 \]  
We choose $d_x = d_y = 20$ and $n=40$ for all the experiments. %
Numerical computations are done using \texttt{NumPy} \citep{NumPy} and \texttt{SciPy} \citep{SciPy}, and the plots are drawn using \texttt{Matplotlib} \citep{Matplotlib}. 
  
\subsection{Problem Constructions for Experiments in \titleref{Section}{sec:experiments-shortlist}} \label{appx:problem-sampling}

For an experiment for the monotone case, the random components are sampled as follows. We choose $\mB_i$ so that each element is an i.i.d.\ sample from a uniform distribution over the interval $[0, 1]$, and $\vt_i$ so that each element is an i.i.d.\ sample from a standard normal distribution. We chose $\mA_i$ to be diagonal matrices in the following procedure: for each $j = 1,\dots, 20$ we randomly chose a subset $\mathcal{I}_j$ of $\frac{n}{2} = 20$ indices from $[n] = \{1, \dots, 40\}$, and set the $(j, j)$-entry of $\mA_i$ to be \[
(\mA_i)_{j, j} = \begin{cases}
    2 & \text{ if $i \in \mathcal{I}_j$} \\
    -2 & \text{ otherwise}
\end{cases}.
\] We repeat the exact same procedure for $\mC_i$ as well. Notice that $\sum_{i=1}^n \mA_i = \sum_{i=1}^n \mC_i = \zero$ by design. Hence, each of the component functions will be a nonconvex-nonconcave quadratic function in general, but the objective function itself becomes a convex-concave function. 

For the experiment in the strongly monotone case, we sample $\mB_i$ and $\vt_i$ in the same way as in the monotone case, but we use different choices of $\mA_i$ and $\mC_i$ to ensure the objective function to be strongly-convex-strongly-concave. 
In particular, for each $i = 1, \dots, n$, we sample $\mA_i$ by computing $\mA_i = \mQ_i \mD_i \mQ_i^\top$, where $\mD_i$ is a random diagonal matrix whose diagonal entries are i.i.d.\ samples from a uniform distribution over the interval $[\frac{1}{2}, 1]$, and $\mQ_i$ is a random orthogonal matrix obtained by computing a {\it QR} decomposition of a $20\times 20$ random matrix whose elements are i.i.d.\ samples from a standard normal distribution. We sample $\mC_i$ by the exact same method. 

\subsection{Monotone Case \& Ablation Study on the Anchoring Step} \label{appx:monotone-experiment}

In \Cref{sec:experiments-shortlist}, we compared the empirical performance of various SEGs, namely \segffa, \segff, \segrr, and \segus. Here, as an ablation study on the anchoring technique, we additionally compare \emph{SEG-RRA} and \emph{SEG-USA}, which are each \segrr{} and \segus{} with an additional anchoring step,
respectively. 
For these two methods, we take the anchoring step after every $n$ iterations. 
We ran those methods on the same $5$ random instances used in \Cref{sec:experiments-shortlist}. For both SEG-RRA and SEG-USA, we ran the method with two different stepsize choices, namely $\alpha_k = \beta_k = \eta_k$ (inspired by the stepsize used in deterministic EG) and $\alpha_k = \nicefrac{\beta_k}2 = \nicefrac{\eta_k}2$ (the stepsize used for \segffa{}) where we again set $\eta_k = \nicefrac{\eta_0}{(1+k/10)^{0.34}}$ with $\eta_0 = \min\{0.01, \frac{1}{L}\}$. %

The results are plotted in \Cref{fig:monotone-result}.
As SEG-RRA and SEG-USA are designed to take one pass per epoch, for those methods, we compute the ratio $\frac{\norm{\mF \vz_0^t}^2}{\norm{\mF \vz_0^0}^2}$ where $t$ denotes the number of passes, and plot the geometric mean over the $5$ runs. 

From the performance of SEG-RRA with $\alpha_k = \beta_k$ and the two variants of SEG-USA, it is possible to observe that adding the anchoring step does improve the performance of the method up to a certain level, but it alone does not fully resolve the nonconvergence issue. 
On the other hand, quite interestingly, SEG-RRA with $\alpha_k = \nicefrac{\beta_k}2$ shows a hint of convergence. While its performance is slightly worse compared to \segffa, it is nonetheless still notable as it is the only other method from \segffa{} that seems to be capable of converging to an optimum. 

We conjecture that this intriguing performance of SEG-RRA with $\alpha_k = \nicefrac{\beta_k}2$ is because it achieves an \emph{``expected''} second order matching to the (deterministic) EG. Indeed, following the notations of \Cref{prop:appx:unravelling-m}, one can deduce from \Cref{prop:appx:unravelling-m} that using SEG-RRA with $\alpha  = \nicefrac{\beta}{2}$ will result in an epoch-level update of \begin{equation} \label{eqn:seg-rra}
        \vz^\sharp =  \vz_0 - \frac{\beta}{2} \sum_{j=0}^{n-1} \mT_j \vz_0 + \frac{\beta^2}{4} \sum_{j=0}^{n-1} D\mT_j(\vz_0) \mT_j \vz_0 +  \frac{\beta^2}{2} \sum_{0\leq i < j \leq n-1}  D\mT_j(\vz_0) \mT_i \vz_0 + \frac{\veps_n}{2}  
    \end{equation}
    with $\veps_n = o\left(\beta^2\right)$. Here, notice that $(\mT_0, \mT_1, \dots, \mT_{n-1}) = (\mF_{\tau(1)}, \mF_{\tau(2)}, \dots, \mF_{\tau(n)})$ for some randomly chosen permutation $\tau \in \gS_n$. Now, observe that for any two distinct $i, j \in [n]$, there are exactly $\frac{n!}{2}$ permutations in $\gS_n$ such that $i$~comes before $j$ in the sequence $\tau(1), \tau(2), \dots, \tau(n)$, and also exactly $\frac{n!}{2}$ permutations such that $j$ comes before $i$. 
    Thus, in taking the expectation over the randomness of choosing the permutation $\tau$, we get \begin{align*}
       \expt_\tau \left[ \sum_{0\leq i < j \leq n-1}  D\mT_j(\vz_0) \mT_i \vz_0 \right] &= \expt_\tau \left[ \sum_{1\leq i < j \leq n }  D\mF_{\tau(j)}(\vz_0) \mF_{\tau(i)} \vz_0 \right] \\
       &=   \frac{1}{n!} \sum_{\tau \in \gS_n} \sum_{1\leq i < j \leq n}  D\mF_{\tau(j)}(\vz_0) \mF_{\tau(i)} \vz_0 \\
     &=   \frac{1}{2} \sum_{i \neq j} D\mF_{j}(\vz_0) \mF_{i} \vz_0,
    \end{align*}
      {where in getting the third line we have used the previously made observation that for any fixed $i$ and $j$ with $i \neq j$, the term $ D\mF_{j}(\vz_0) \mF_{i} \vz_0 $ appears exactly $\frac{n!}2$ times in the sum on the second line. }
      Hence, taking the expectation with respect to the random permutation on \eqref{eqn:seg-rra} we get \begin{align*}
          \expt_{\tau} \left[\vz^\sharp\right] &=  \vz_0 - \frac{n \beta}{2} \mF \vz_0 + \frac{\beta^2}{4} \sum_{j=1}^{n} D\mF_j(\vz_0) \mF_j \vz_0 +  \frac{\beta^2}{4} \sum_{i \neq j} D\mF_{j}(\vz_0) \mF_{i} \vz_0 + \frac{1}{2}  \expt_{\tau}\left[\veps_n\right] \\ 
          &=  \vz_0 - \frac{n \beta}{2} \mF \vz_0 + \frac{\beta^2}{4} \sum_{j=1}^{n} \sum_{i=1}^{n} D\mF_j(\vz_0) \mF_i \vz_0   + \frac{1}{2}  \expt_{\tau}\left[\veps_n\right] \\ 
          &=  \vz_0 - \frac{n \beta}{2} \mF \vz_0 + \frac{n^2 \beta^2}{4}  D\mF (\vz_0) \mF  \vz_0  + \frac{1}{2}  \expt_{\tau}\left[\veps_n\right].
      \end{align*}
      Comparing this to \eqref{eqn:egplus} when $\eta_1 = \eta_2 = \nicefrac{n\beta}{2}$, we indeed see that the update rule of SEG-RRA with $\alpha = \nicefrac{\beta}2$ achieves a second-order matching \emph{on expectation} to the (deterministic) EG update with stepsize $\nicefrac{n\beta}{2}$.  

      We also conjecture that the relatively worse performance of SEG-RRA with $\alpha = \nicefrac{\beta}2$ compared to \segffa{} is because the error over an epoch is $O(\eta^3)$ only on expectation, and thus the actual error occurring in each epoch can be larger than $O(\eta^3)$. 
       Unfortunately, our convergence analysis on \segffa{} relies on the error over an epoch being $O(\eta^3)$ \emph{deterministically}  (\textit{cf.}\@~\Cref{thm:ffa-cubic-error}), hence cannot be directly applied to SEG-RRA with $\alpha = \nicefrac{\beta}2$.
      We leave the search for a theoretical explanation on this alluring performance of SEG-RRA with $\alpha = \nicefrac{\beta}2$ as a stimulating direction for future work. 

\begin{figure}[tp]
    \centering
\includegraphics[height=6cm]{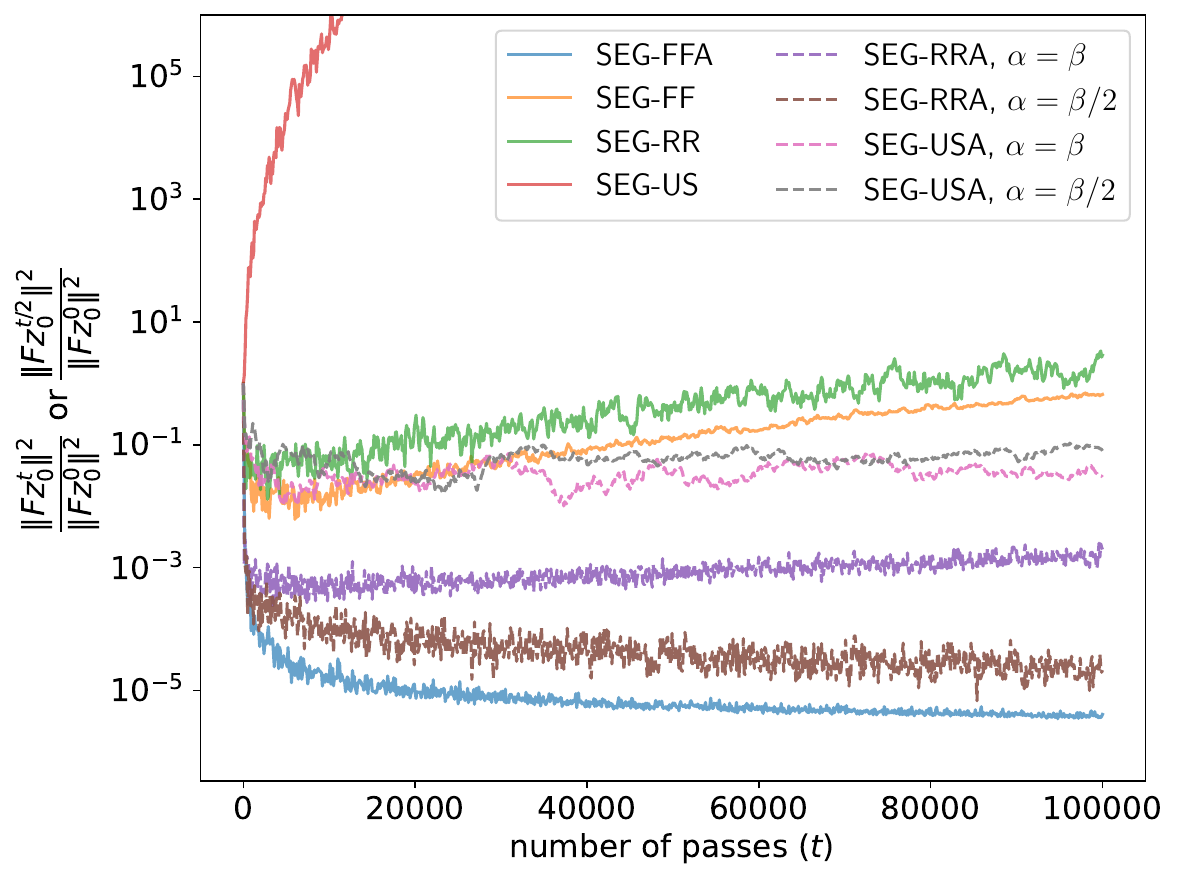}
    \caption{Experimental results in the monotone example, comparing the performance of SEG-RRA and SEG-USA with the results displayed in \Cref{fig:cc-shortlist}. Because \segffa{} and \segff{} use two passes per epoch, for those two methods, we plot $\nicefrac{\|\mF \vz_0^{t/2}\|^2}{\|\mF \vz_0^0\|^2}$.   }
    \label{fig:monotone-result}
\end{figure}

\begin{figure}[tp]
    \centering
\includegraphics[height=6cm]{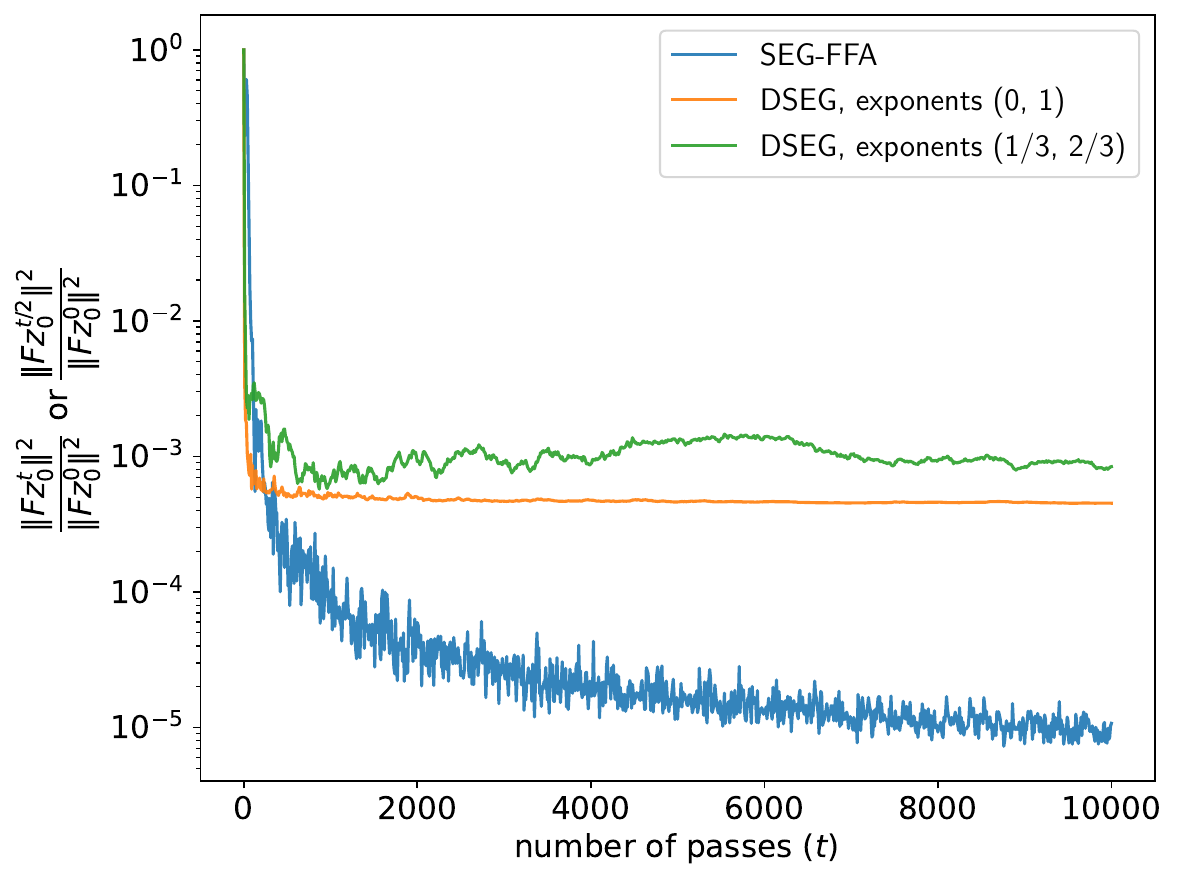}
    \caption{Experimental results in the monotone example, comparing \segffa{} and the methods proposed by \citet{Hsie20}. By the same reason as in \Cref{fig:monotone-result}, we plot $\nicefrac{\|\mF \vz_0^{t/2}\|^2}{\|\mF \vz_0^0\|^2}$ for \segffa{} only.  }
    \label{fig:hsieh-result}
\end{figure}

\subsection{Monotone Case: Comparison with \texorpdfstring{\citet{Hsie20}}{Hsieh et al.}} \label{appx:experiment-hsieh}

Let us also compare the performance of \segffa{} with the \emph{independent-sample} {double stepsize} SEG (\emph{DSEG}) by \citet{Hsie20}. Writing in terms of the finite-sum structure, the update rule of DSEG can be written as \begin{align*}
    \vw^k &\gets \vz^k - \eta_{1, k} \mF_{i(1, k)} \vz^k \\
    \vz^{k+1} &\gets \vz^k - \eta_{2, k} \mF_{i(2, k)} \vw^k 
\end{align*} where $i(1, k)$ and $i(2, k)$ are random indices that are independently drawn from $[n]$ for each $k$. The stepsizes are chosen in the form of $\eta_{1, k} = \Theta(\nicefrac{1}{k^{r_1}})$ and $\eta_{2, k} = \Theta(\nicefrac{1}{k^{r_2}})$, where setting $r_1 \leq r_2$ is the key point of DSEG. Two choices of the exponent pair $(r_1, r_2)$ proposed in \citep{Hsie20} are $(1/3, 2/3)$ for general monotone problems and $(0, 1)$ exclusively for the case when $\mF$ is affine. 

We again use the same component functions as in the previous experiment. The setup for running \segffa{} are kept the same. For DSEG, we use the default choices suggested by \citet{Hsie20}, namely $\eta_{1, k} = \nicefrac{\gamma_0}{(k+19)^{r_1}}$ and $\eta_{2, k} = \nicefrac{\eta_0}{(k+19)^{r_2}}$, where $(\gamma_0, \eta_0) = (1, 0.1)$ for the bilinear case with $(r_1, r_2) = (0, 1)$ and $(\gamma_0, \eta_0) = (0.1, 0.05)$ for the general case with $(r_1, r_2) = (1/3, 2/3)$.

The results are displayed in \Cref{fig:hsieh-result}, where the details on how the plots are drawn are the same as \Cref{fig:monotone-result}. Here we can clearly see that \segffa{} outperforms both versions of DSEG. 
 
\subsection{Strongly Monotone Case Again, with Various Stepsizes} \label{appx:ssec-scsc-longlist}
We also ran the experiment on strongly monotone problems described in \Cref{sec:experiments-shortlist}, but with changing the stepsizes. We tested six different values of $\eta_k$; we have tested with $\eta_k = a \times 10^{b}$ where $a \in \{1, 2, 5\}$ and $b \in \{-4, -3\}$. Notice that the case $\eta_k = 10^{-3}$ is exactly the experiment conducted in \Cref{sec:experiments-shortlist}. 

    \begin{figure}
    \centering 
    \begin{subfigure}[t]{0.49\textwidth}\centering 
    \includegraphics[width=\textwidth]{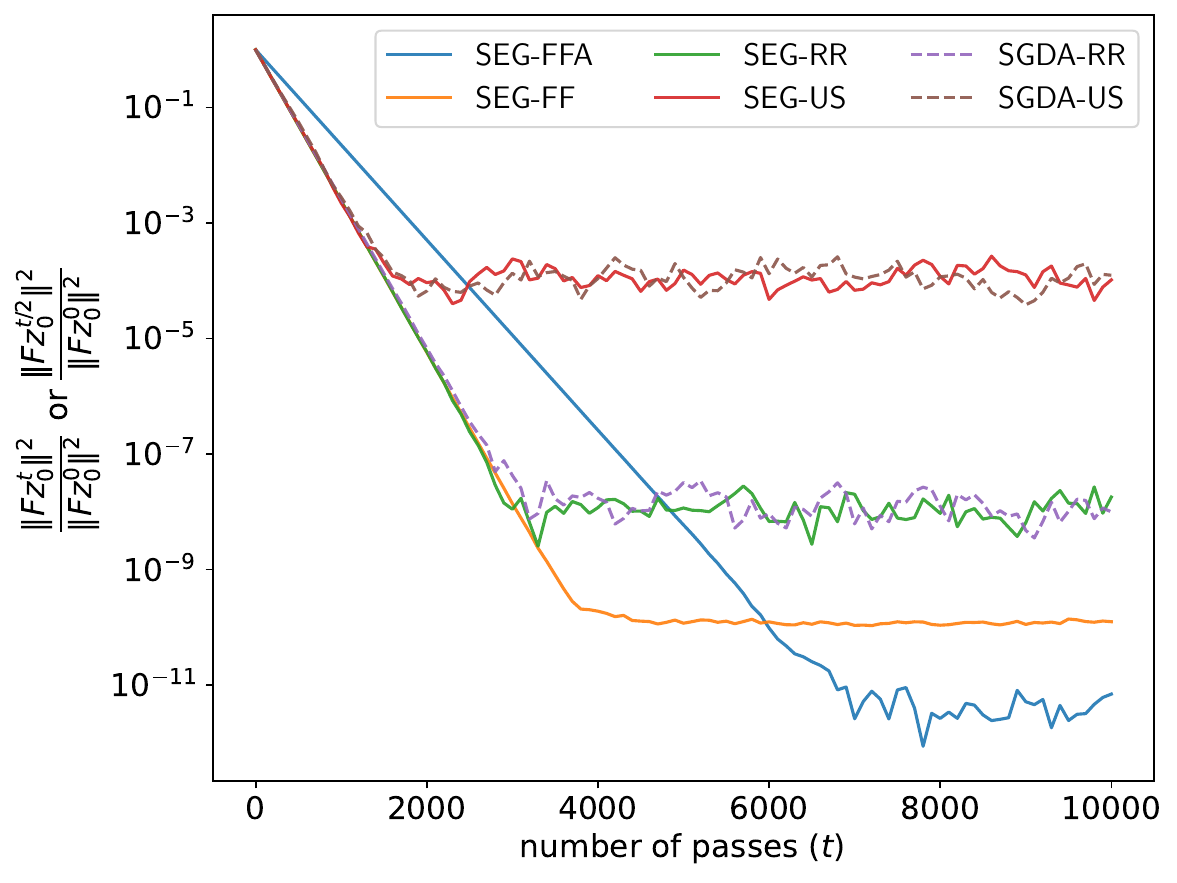}
    \caption{$\eta_k = 0.0001$}
    \label{subfig:1e-4}
    \end{subfigure} 
    \begin{subfigure}[t]{0.49\textwidth}\centering 
    \includegraphics[width=\textwidth]{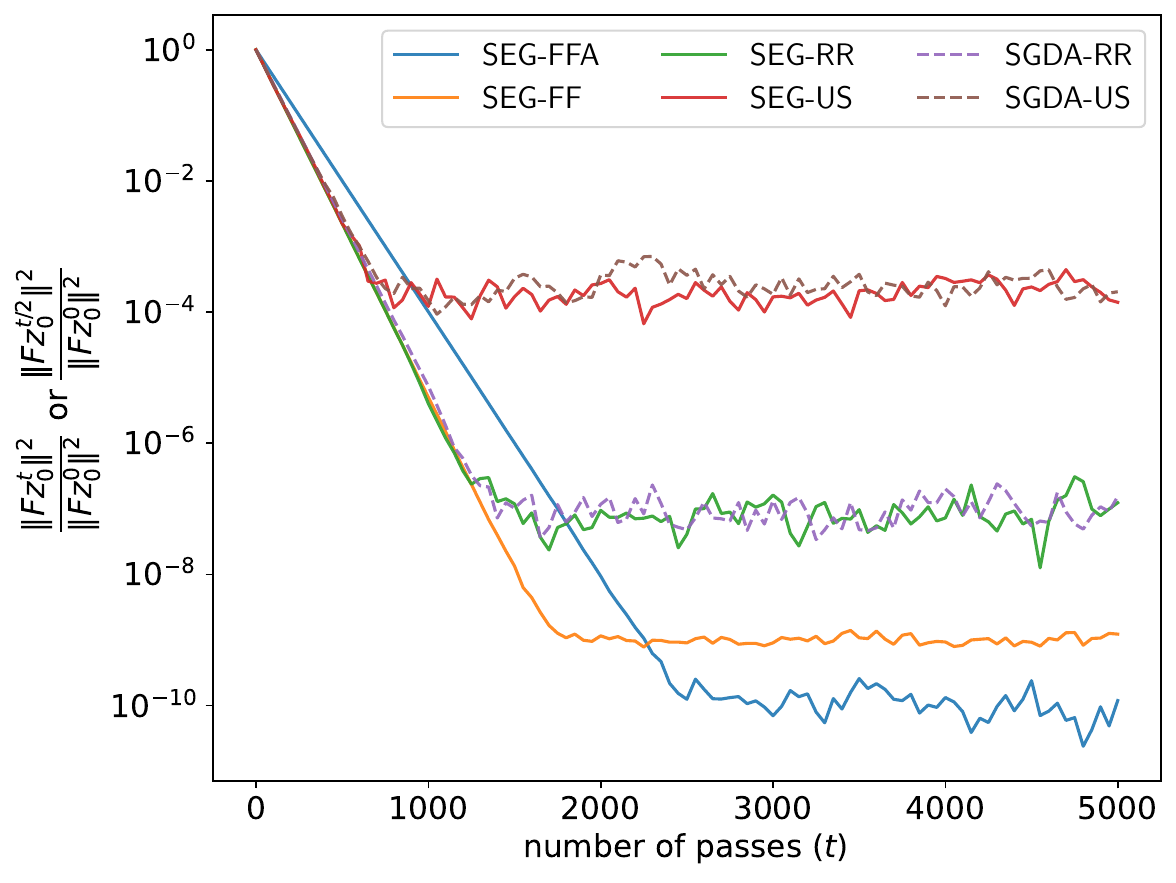}
    \caption{$\eta_k = 0.0002$}
    \label{subfig:2e-4}
    \end{subfigure} 
    \\[12.5pt]  
    \begin{subfigure}[t]{0.49\textwidth}\centering 
    \includegraphics[width=\textwidth]{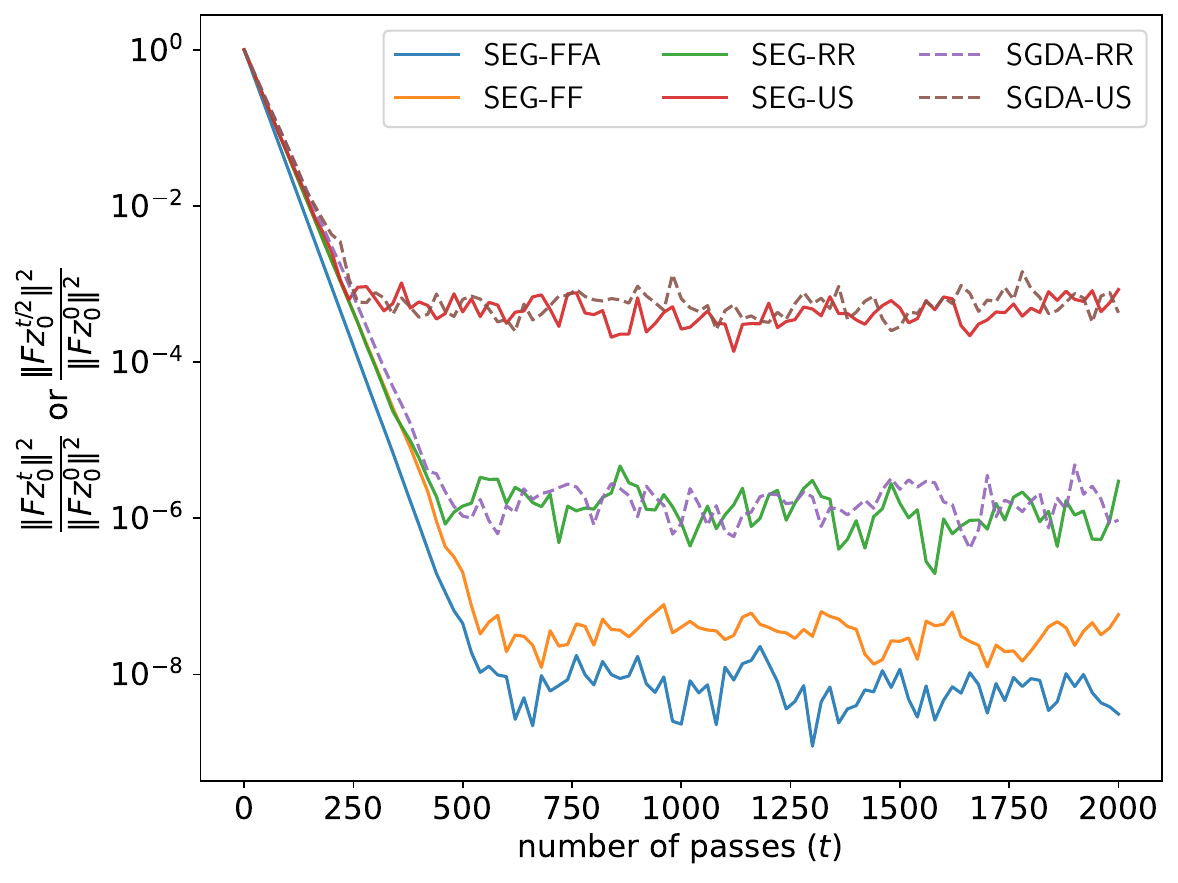}
    \caption{$\eta_k = 0.0005$}
    \label{subfig:5e-4}
    \end{subfigure} 
    \begin{subfigure}[t]{0.49\textwidth}\centering 
    \includegraphics[width=\textwidth]{figure/scsc/geom_mean_scsc1e-3}
    \caption{$\eta_k = 0.001$}
    \label{subfig:1e-3}
    \end{subfigure} 
    \\[12.5pt]
    \begin{subfigure}[t]{0.49\textwidth}\centering 
    \includegraphics[width=\textwidth]{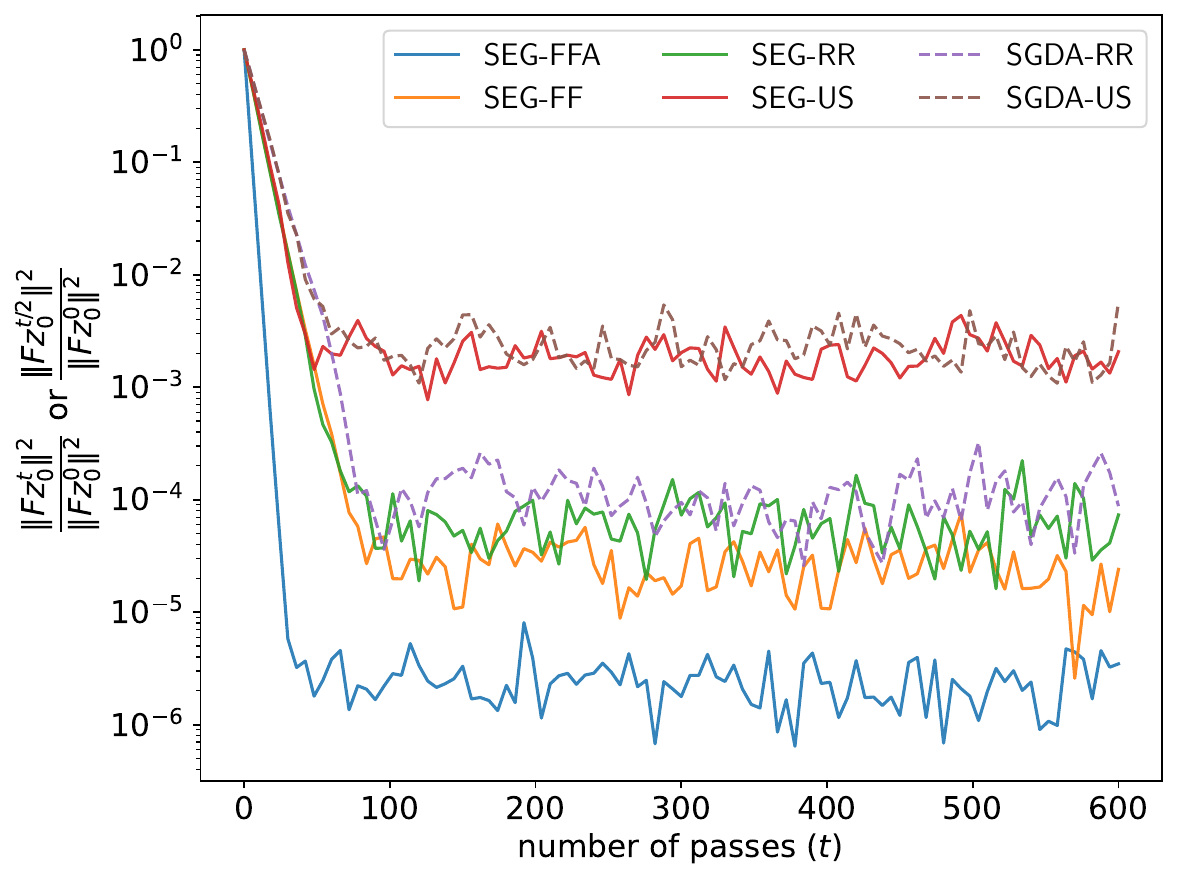}
    \caption{$\eta_k = 0.002$}
    \label{subfig:2e-3}
    \end{subfigure} 
    \begin{subfigure}[t]{0.49\textwidth}\centering 
    \includegraphics[width=\textwidth]{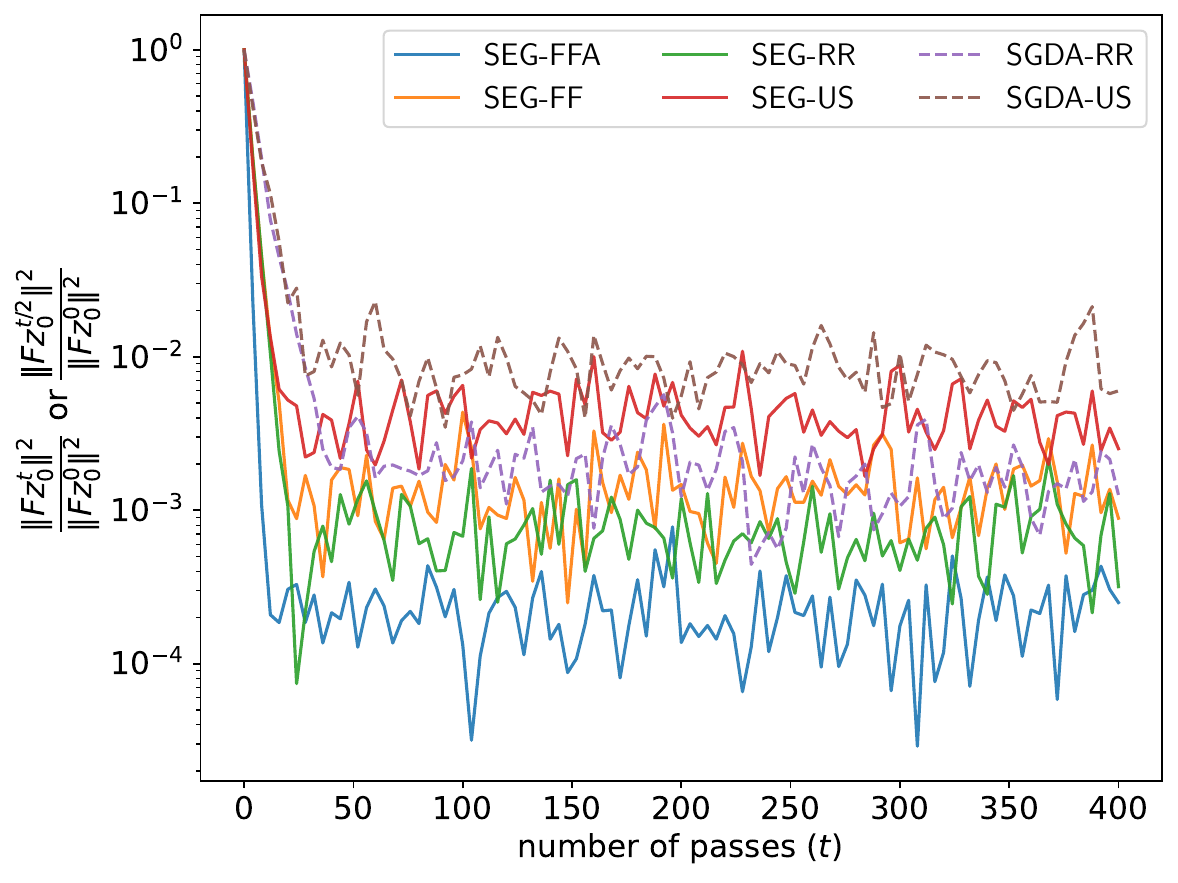}
    \caption{$\eta_k = 0.005$}
    \label{subfig:5e-3}
    \end{subfigure} 
    \caption{Experimental results on the strongly monotone problems with different stepsizes. Notice that \Cref{subfig:1e-3} is exactly the plot that is included in \Cref{sec:experiments-shortlist}. The only difference between the experiments conducted is the choice of the stepsize. }
    \label{fig:scsc-longlist}
    \end{figure}

The results are plotted in \Cref{fig:scsc-longlist}. The overall details are the same as described in \Cref{sec:experiments-shortlist}, as the only difference is the stepsize choice. We can observe that, while the initial speed of convergence may not be the fastest depending on the stepsize, \segffa{} is always the method that eventually finds the point with the smallest gradient.  
In other words, as predicted by our theoretical analyses, the supremacy of \segffa{} is in general not affected by the choice of the stepsize, as long as the chosen stepsize is reasonably small.

\newpage
\section*{NeurIPS Paper Checklist}

\begin{enumerate}

\item {\bf Claims}
    \item[] Question: Do the main claims made in the abstract and introduction accurately reflect the paper's contributions and scope?
    \item[] Answer: \answerYes{} %
    \item[] Justification: The abstract and the introduction well summarizes our theoretical results and the problem settings we are considering. %
    \item[] Guidelines:
    \begin{itemize}
        \item The answer NA means that the abstract and introduction do not include the claims made in the paper.
        \item The abstract and/or introduction should clearly state the claims made, including the contributions made in the paper and important assumptions and limitations. A No or NA answer to this question will not be perceived well by the reviewers. 
        \item The claims made should match theoretical and experimental results, and reflect how much the results can be expected to generalize to other settings. 
        \item It is fine to include aspirational goals as motivation as long as it is clear that these goals are not attained by the paper. 
    \end{itemize}

\item {\bf Limitations}
    \item[] Question: Does the paper discuss the limitations of the work performed by the authors?
    \item[] Answer: \answerYes{} %
    \item[] Justification: While we do not have a separate "Limitations" sections, in \Cref{sec:problem-settings} we thoroughly discuss about the assumptions we have imposed. The paper is highly theoretical, hence the other factors listed in the guidelines below are either not applicable to this paper, or apparent from the statements of the theorems/lemmata/propositions and the discussions that follow. %
    \item[] Guidelines:
    \begin{itemize}
        \item The answer NA means that the paper has no limitation while the answer No means that the paper has limitations, but those are not discussed in the paper. 
        \item The authors are encouraged to create a separate "Limitations" section in their paper.
        \item The paper should point out any strong assumptions and how robust the results are to violations of these assumptions (e.g., independence assumptions, noiseless settings, model well-specification, asymptotic approximations only holding locally). The authors should reflect on how these assumptions might be violated in practice and what the implications would be.
        \item The authors should reflect on the scope of the claims made, e.g., if the approach was only tested on a few datasets or with a few runs. In general, empirical results often depend on implicit assumptions, which should be articulated.
        \item The authors should reflect on the factors that influence the performance of the approach. For example, a facial recognition algorithm may perform poorly when image resolution is low or images are taken in low lighting. Or a speech-to-text system might not be used reliably to provide closed captions for online lectures because it fails to handle technical jargon.
        \item The authors should discuss the computational efficiency of the proposed algorithms and how they scale with dataset size.
        \item If applicable, the authors should discuss possible limitations of their approach to address problems of privacy and fairness.
        \item While the authors might fear that complete honesty about limitations might be used by reviewers as grounds for rejection, a worse outcome might be that reviewers discover limitations that aren't acknowledged in the paper. The authors should use their best judgment and recognize that individual actions in favor of transparency play an important role in developing norms that preserve the integrity of the community. Reviewers will be specifically instructed to not penalize honesty concerning limitations.
    \end{itemize}

\item {\bf Theory Assumptions and Proofs}
    \item[] Question: For each theoretical result, does the paper provide the full set of assumptions and a complete (and correct) proof?
    \item[] Answer: \answerYes{} %
    \item[] Justification: \Cref{sec:problem-settings} is devoted for the discussions on the assumptions. Full proofs of the theorems/lemmata/propositions can be found in the appendices. %
    \item[] Guidelines:
    \begin{itemize}
        \item The answer NA means that the paper does not include theoretical results. 
        \item All the theorems, formulas, and proofs in the paper should be numbered and cross-referenced.
        \item All assumptions should be clearly stated or referenced in the statement of any theorems.
        \item The proofs can either appear in the main paper or the supplemental material, but if they appear in the supplemental material, the authors are encouraged to provide a short proof sketch to provide intuition. 
        \item Inversely, any informal proof provided in the core of the paper should be complemented by formal proofs provided in appendix or supplemental material.
        \item Theorems and Lemmas that the proof relies upon should be properly referenced. 
    \end{itemize}

    \item {\bf Experimental Result Reproducibility}
    \item[] Question: Does the paper fully disclose all the information needed to reproduce the main experimental results of the paper to the extent that it affects the main claims and/or conclusions of the paper (regardless of whether the code and data are provided or not)?
    \item[] Answer: \answerYes{} %
    \item[] Justification: In \Cref{appx:experiments}, we provide full explanations on how the experiments have been conducted. We have also submitted the exact code that we used for our experiments as a supplemental material. %
    \item[] Guidelines:
    \begin{itemize}
        \item The answer NA means that the paper does not include experiments.
        \item If the paper includes experiments, a No answer to this question will not be perceived well by the reviewers: Making the paper reproducible is important, regardless of whether the code and data are provided or not.
        \item If the contribution is a dataset and/or model, the authors should describe the steps taken to make their results reproducible or verifiable. 
        \item Depending on the contribution, reproducibility can be accomplished in various ways. For example, if the contribution is a novel architecture, describing the architecture fully might suffice, or if the contribution is a specific model and empirical evaluation, it may be necessary to either make it possible for others to replicate the model with the same dataset, or provide access to the model. In general. releasing code and data is often one good way to accomplish this, but reproducibility can also be provided via detailed instructions for how to replicate the results, access to a hosted model (e.g., in the case of a large language model), releasing of a model checkpoint, or other means that are appropriate to the research performed.
        \item While NeurIPS does not require releasing code, the conference does require all submissions to provide some reasonable avenue for reproducibility, which may depend on the nature of the contribution. For example
        \begin{enumerate}
            \item If the contribution is primarily a new algorithm, the paper should make it clear how to reproduce that algorithm.
            \item If the contribution is primarily a new model architecture, the paper should describe the architecture clearly and fully.
            \item If the contribution is a new model (e.g., a large language model), then there should either be a way to access this model for reproducing the results or a way to reproduce the model (e.g., with an open-source dataset or instructions for how to construct the dataset).
            \item We recognize that reproducibility may be tricky in some cases, in which case authors are welcome to describe the particular way they provide for reproducibility. In the case of closed-source models, it may be that access to the model is limited in some way (e.g., to registered users), but it should be possible for other researchers to have some path to reproducing or verifying the results.
        \end{enumerate}
    \end{itemize}

\item {\bf Open access to data and code}
    \item[] Question: Does the paper provide open access to the data and code, with sufficient instructions to faithfully reproduce the main experimental results, as described in supplemental material?
    \item[] Answer: \answerYes{} %
    \item[] Justification: We have submitted the exact code that we used for our experiments as a supplemental material, so that it becomes revealed to the public once our paper gets accepted. %
    \item[] Guidelines:
    \begin{itemize}
        \item The answer NA means that paper does not include experiments requiring code.
        \item Please see the NeurIPS code and data submission guidelines (\url{https://nips.cc/public/guides/CodeSubmissionPolicy}) for more details.
        \item While we encourage the release of code and data, we understand that this might not be possible, so “No” is an acceptable answer. Papers cannot be rejected simply for not including code, unless this is central to the contribution (e.g., for a new open-source benchmark).
        \item The instructions should contain the exact command and environment needed to run to reproduce the results. See the NeurIPS code and data submission guidelines (\url{https://nips.cc/public/guides/CodeSubmissionPolicy}) for more details.
        \item The authors should provide instructions on data access and preparation, including how to access the raw data, preprocessed data, intermediate data, and generated data, etc.
        \item The authors should provide scripts to reproduce all experimental results for the new proposed method and baselines. If only a subset of experiments are reproducible, they should state which ones are omitted from the script and why.
        \item At submission time, to preserve anonymity, the authors should release anonymized versions (if applicable).
        \item Providing as much information as possible in supplemental material (appended to the paper) is recommended, but including URLs to data and code is permitted.
    \end{itemize}

\item {\bf Experimental Setting/Details}
    \item[] Question: Does the paper specify all the training and test details (e.g., data splits, hyperparameters, how they were chosen, type of optimizer, etc.) necessary to understand the results?
    \item[] Answer: \answerYes{} %
    \item[] Justification: The overall settings are discussed in \Cref{appx:experiments}. The code we submit along with the paper is an exact copy of the one we used in the reported experiments, so the details not included in the paper shall be found in the code itself. %
    \item[] Guidelines:
    \begin{itemize}
        \item The answer NA means that the paper does not include experiments.
        \item The experimental setting should be presented in the core of the paper to a level of detail that is necessary to appreciate the results and make sense of them.
        \item The full details can be provided either with the code, in appendix, or as supplemental material.
    \end{itemize}

\item {\bf Experiment Statistical Significance}
    \item[] Question: Does the paper report error bars suitably and correctly defined or other appropriate information about the statistical significance of the experiments?
    \item[] Answer:  \answerNo{} %
    \item[] Justification: Our paper is mainly theoretical, and the experiments are to demonstrate that our analyses are correct. Hence, we claim that error bars or information about the statistical significance are not necessary, and rather, the interpretations we made regarding our experiments in the relevant section(s) are enough. %
    \item[] Guidelines:
    \begin{itemize}
        \item The answer NA means that the paper does not include experiments.
        \item The authors should answer "Yes" if the results are accompanied by error bars, confidence intervals, or statistical significance tests, at least for the experiments that support the main claims of the paper.
        \item The factors of variability that the error bars are capturing should be clearly stated (for example, train/test split, initialization, random drawing of some parameter, or overall run with given experimental conditions).
        \item The method for calculating the error bars should be explained (closed form formula, call to a library function, bootstrap, etc.)
        \item The assumptions made should be given (e.g., Normally distributed errors).
        \item It should be clear whether the error bar is the standard deviation or the standard error of the mean.
        \item It is OK to report 1-sigma error bars, but one should state it. The authors should preferably report a 2-sigma error bar than state that they have a 96\% CI, if the hypothesis of Normality of errors is not verified.
        \item For asymmetric distributions, the authors should be careful not to show in tables or figures symmetric error bars that would yield results that are out of range (e.g. negative error rates).
        \item If error bars are reported in tables or plots, The authors should explain in the text how they were calculated and reference the corresponding figures or tables in the text.
    \end{itemize}

\item {\bf Experiments Compute Resources}
    \item[] Question: For each experiment, does the paper provide sufficient information on the computer resources (type of compute workers, memory, time of execution) needed to reproduce the experiments?
    \item[] Answer: \answerNo{} %
    \item[] Justification: The experiments are numerical validations of our theoretical analyses using simple quadratic functions, so they should be executable on any modern computer with a reasonable CPU. %
    \item[] Guidelines:
    \begin{itemize}
        \item The answer NA means that the paper does not include experiments.
        \item The paper should indicate the type of compute workers CPU or GPU, internal cluster, or cloud provider, including relevant memory and storage.
        \item The paper should provide the amount of compute required for each of the individual experimental runs as well as estimate the total compute. 
        \item The paper should disclose whether the full research project required more compute than the experiments reported in the paper (e.g., preliminary or failed experiments that didn't make it into the paper). 
    \end{itemize}
    
\item {\bf Code Of Ethics}
    \item[] Question: Does the research conducted in the paper conform, in every respect, with the NeurIPS Code of Ethics \url{https://neurips.cc/public/EthicsGuidelines}?
    \item[] Answer: \answerYes{} %
    \item[] Justification: We have read through the Code of Ethics, but due to the theoretical nature of the paper, there are no risks regarding ethical issues. %
    \item[] Guidelines:
    \begin{itemize}
        \item The answer NA means that the authors have not reviewed the NeurIPS Code of Ethics.
        \item If the authors answer No, they should explain the special circumstances that require a deviation from the Code of Ethics.
        \item The authors should make sure to preserve anonymity (e.g., if there is a special consideration due to laws or regulations in their jurisdiction).
    \end{itemize}

\item {\bf Broader Impacts}
    \item[] Question: Does the paper discuss both potential positive societal impacts and negative societal impacts of the work performed?
    \item[] Answer: \answerNA{} %
    \item[] Justification: There are no societal impacts of this paper, as it is a theory paper. %
    \item[] Guidelines:
    \begin{itemize}
        \item The answer NA means that there is no societal impact of the work performed.
        \item If the authors answer NA or No, they should explain why their work has no societal impact or why the paper does not address societal impact.
        \item Examples of negative societal impacts include potential malicious or unintended uses (e.g., disinformation, generating fake profiles, surveillance), fairness considerations (e.g., deployment of technologies that could make decisions that unfairly impact specific groups), privacy considerations, and security considerations.
        \item The conference expects that many papers will be foundational research and not tied to particular applications, let alone deployments. However, if there is a direct path to any negative applications, the authors should point it out. For example, it is legitimate to point out that an improvement in the quality of generative models could be used to generate deepfakes for disinformation. On the other hand, it is not needed to point out that a generic algorithm for optimizing neural networks could enable people to train models that generate Deepfakes faster.
        \item The authors should consider possible harms that could arise when the technology is being used as intended and functioning correctly, harms that could arise when the technology is being used as intended but gives incorrect results, and harms following from (intentional or unintentional) misuse of the technology.
        \item If there are negative societal impacts, the authors could also discuss possible mitigation strategies (e.g., gated release of models, providing defenses in addition to attacks, mechanisms for monitoring misuse, mechanisms to monitor how a system learns from feedback over time, improving the efficiency and accessibility of ML).
    \end{itemize}
    
\item {\bf Safeguards}
    \item[] Question: Does the paper describe safeguards that have been put in place for responsible release of data or models that have a high risk for misuse (e.g., pretrained language models, image generators, or scraped datasets)?
    \item[] Answer: \answerNA{} %
    \item[] Justification: This paper is highly theoretical, hence poses no such risks. %
    \item[] Guidelines:
    \begin{itemize}
        \item The answer NA means that the paper poses no such risks.
        \item Released models that have a high risk for misuse or dual-use should be released with necessary safeguards to allow for controlled use of the model, for example by requiring that users adhere to usage guidelines or restrictions to access the model or implementing safety filters. 
        \item Datasets that have been scraped from the Internet could pose safety risks. The authors should describe how they avoided releasing unsafe images.
        \item We recognize that providing effective safeguards is challenging, and many papers do not require this, but we encourage authors to take this into account and make a best faith effort.
    \end{itemize}

\item {\bf Licenses for existing assets}
    \item[] Question: Are the creators or original owners of assets (e.g., code, data, models), used in the paper, properly credited and are the license and terms of use explicitly mentioned and properly respected?
    \item[] Answer: \answerYes{} %
    \item[] Justification: Packages used in the experiments, \texttt{NumPy}, \texttt{SciPy}, and \texttt{Matplotlib}, are cited. No existing data nor models are used. %
    \item[] Guidelines:
    \begin{itemize}
        \item The answer NA means that the paper does not use existing assets.
        \item The authors should cite the original paper that produced the code package or dataset.
        \item The authors should state which version of the asset is used and, if possible, include a URL.
        \item The name of the license (e.g., CC-BY 4.0) should be included for each asset.
        \item For scraped data from a particular source (e.g., website), the copyright and terms of service of that source should be provided.
        \item If assets are released, the license, copyright information, and terms of use in the package should be provided. For popular datasets, \url{paperswithcode.com/datasets} has curated licenses for some datasets. Their licensing guide can help determine the license of a dataset.
        \item For existing datasets that are re-packaged, both the original license and the license of the derived asset (if it has changed) should be provided.
        \item If this information is not available online, the authors are encouraged to reach out to the asset's creators.
    \end{itemize}

\item {\bf New Assets}
    \item[] Question: Are new assets introduced in the paper well documented and is the documentation provided alongside the assets?
    \item[] Answer: \answerNA{} %
    \item[] Justification: Our paper provides novel theoretical results rather than datasets or models, hence this question is not applicable. %
    \item[] Guidelines:
    \begin{itemize}
        \item The answer NA means that the paper does not release new assets.
        \item Researchers should communicate the details of the dataset/code/model as part of their submissions via structured templates. This includes details about training, license, limitations, etc. 
        \item The paper should discuss whether and how consent was obtained from people whose asset is used.
        \item At submission time, remember to anonymize your assets (if applicable). You can either create an anonymized URL or include an anonymized zip file.
    \end{itemize}

\item {\bf Crowdsourcing and Research with Human Subjects}
    \item[] Question: For crowdsourcing experiments and research with human subjects, does the paper include the full text of instructions given to participants and screenshots, if applicable, as well as details about compensation (if any)? 
    \item[] Answer: \answerNA{} %
    \item[] Justification: This paper does not involve crowdsourcing nor research with human subjects. %
    \item[] Guidelines:
    \begin{itemize}
        \item The answer NA means that the paper does not involve crowdsourcing nor research with human subjects.
        \item Including this information in the supplemental material is fine, but if the main contribution of the paper involves human subjects, then as much detail as possible should be included in the main paper. 
        \item According to the NeurIPS Code of Ethics, workers involved in data collection, curation, or other labor should be paid at least the minimum wage in the country of the data collector. 
    \end{itemize}

\item {\bf Institutional Review Board (IRB) Approvals or Equivalent for Research with Human Subjects}
    \item[] Question: Does the paper describe potential risks incurred by study participants, whether such risks were disclosed to the subjects, and whether Institutional Review Board (IRB) approvals (or an equivalent approval/review based on the requirements of your country or institution) were obtained?
    \item[] Answer: \answerNA{} %
    \item[] Justification: This paper does not involve crowdsourcing nor research with human subjects. %
    \item[] Guidelines:
    \begin{itemize}
        \item The answer NA means that the paper does not involve crowdsourcing nor research with human subjects.
        \item Depending on the country in which research is conducted, IRB approval (or equivalent) may be required for any human subjects research. If you obtained IRB approval, you should clearly state this in the paper. 
        \item We recognize that the procedures for this may vary significantly between institutions and locations, and we expect authors to adhere to the NeurIPS Code of Ethics and the guidelines for their institution. 
        \item For initial submissions, do not include any information that would break anonymity (if applicable), such as the institution conducting the review.
    \end{itemize}

\end{enumerate}

\end{document}